\documentclass{jair}

\usepackage{colortbl} 

\usepackage{amsfonts}
\usepackage{amsmath}
\usepackage[linesnumbered,ruled,vlined]{algorithm2e}
\usepackage[bibliography=common,appendix=append]{apxproof}
\usepackage{soul}
\usepackage{pgf,tikz}
\usetikzlibrary{positioning}
\usetikzlibrary{arrows}
\tikzset{block/.style={minimum size=0.8cm,outer sep=0pt,draw,rectangle,node distance=0pt}}

\usepackage{enumitem}

\setcopyright{cc}
\copyrightyear{2026}
\acmYear{2026}
\acmDOI{10.1613/jair.1.xxxxx}

\JAIRAE{Roberta Calegari}
\JAIRTrack{}
\acmVolume{4}
\acmArticle{111}
\acmMonth{8}
\acmYear{2026}

 \newtheoremrep{theorem}{Theorem}
 \newtheoremrep{lemma}[theorem]{Lemma}
 \newtheoremrep{proposition}[theorem]{Proposition}
 \newtheoremrep{example}[theorem]{Example}
 \newtheoremrep{definition}[theorem]{Definition}
 \newtheoremrep{corollary}[theorem]{Corollary}

 \newcommand{\ie}{{i.e.}}
 \newcommand{\fun}[1]{\ensuremath{\mathsf{#1}}}

 \newcommand{\pathAtom}{\ensuremath{p}}
 
 \newcommand{\poly}{\ensuremath{P}}
 \newcommand{\canonRenaming}[1]{\ensuremath{\varPhi_{#1}}}

\newcommand{\vocabulary}{\ensuremath{\mathcal{V}}}
\newcommand{\predicateSet}{\ensuremath{\mathcal{P}}}
\newcommand{\constantSet}{\ensuremath{\mathcal{C}}}
\newcommand{\atomSet}{\ensuremath{A}}
\newcommand{\termSet}{\ensuremath{T}}
 \newcommand{\erule}{\ensuremath{\rho}}
 \newcommand{\body}[1]{\ensuremath{\mathsf{body}(#1)}}
 \newcommand{\head}[1]{\ensuremath{\mathsf{head}(#1)}}
  \newcommand{\trigger}{\fun{t}}	

 \renewcommand{\path}{\ensuremath{p}}

 \newcommand{\ruleset}{\ensuremath{\mathcal{R}}}
 \newcommand{\instance}{\ensuremath{I}}
 \newcommand{\derivation}{\ensuremath{D}}
 \newcommand{\kb}{\ensuremath{\mathcal{K}}}
 \newcommand{\kblong}{\ensuremath{(\instance,\ruleset)}}
 \newcommand{\predicate}{\ensuremath{r}}
 \newcommand{\query}{\ensuremath{q}}
 \newcommand{\maxArity}{\ensuremath{w}}
 \newcommand{\twoWayBinaryPred}{\ensuremath{\mathcal{P}_2^\pm}}
 \newcommand{\interpretation}{\ensuremath{\mathcal{I}}}
 \newcommand{\vars}[1]{\fun{vars}{(#1)}}
 \renewcommand{\terms}[1]{\fun{terms}{(#1)}}
 \newcommand{\atoms}[1]{\fun{atoms}{(#1)}}
 \newcommand{\glb}[1]{\fun{glb}{(#1)}}
 \newcommand{\chasegraph}{\ensuremath{\mathcal{CG}}}
 \newcommand{\term}{\ensuremath{t}}
 \newcommand{\const}{a}
 \newcommand{\chasenull}{\ensuremath{\bot}}
 \newcommand{\cglabel}{\ensuremath{\ell}}
 \newcommand{\aux}{\fun{aux}}
 \newcommand{\type}[1]{\fun{type}{(#1)}}
 \newcommand{\chase}[2]{\fun{chase}(#1,#2)}
 \newcommand{\chasekb}[1]{\fun{chase}(#1)}
 \newcommand{\pathLabel}[1]{\lambda(#1)}

 \newcommand{\atom}{\alpha}
 \newcommand{\vect}[1]{\mathbf{#1}}
 \newcommand{\match}{\pi}
 \newcommand{\transition}[5]{\transitionAtomSet{#1}{#2}{#3}{\chase{#4}{#5}}}
 \newcommand{\transitionAtomSet}[4]{\ensuremath{\mathcal{T}_{#3,#4}(#1,#2)}}

 \newcommand{\proofScheme}{\ensuremath{\mathbb{P}}}
 \newcommand{\psforest}{\ensuremath{\mathcal{F}}}
 \newcommand{\pstransitions}{\ensuremath{\mathcal{T}}}
 \newcommand{\witness}{\ensuremath{\psi}}
 \newcommand{\expansion}{\ensuremath{\mathrm{expansion}}}
 \newcommand{\normVar}{\ensuremath{X}}

 \newcommand{\lang}{\ensuremath{\Lambda}}
 \newcommand{\automaton}{\ensuremath{\mathbb{A}}}
 \newcommand{\alphabet}{\Sigma}
 \newcommand{\letter}{a}
 \renewcommand{\state}{\ensuremath{s}}
 \newcommand{\emptyWord}{\varepsilon}
 \newcommand{\subLanguage}[3]{\ensuremath{\mathcal{L}_{#1}(#2,#3)}}
 \newcommand{\rpqAnswering}{\textsc{RPQ} Answering}

 \newcommand{\turingMachine}{\ensuremath{\mathcal{M}}}
 \newcommand{\tapeLetter}{\gamma}
 \newcommand{\transitionFunction}{\delta}
 \newcommand{\confOf}[1]{c(#1)}
 \newcommand{\inputOf}[1]{i(#1)}
 \newcommand{\outputOf}[1]{o(#1)}
 \newcommand{\pspace}{\textsc{PSpace}}
 \newcommand{\exptime}{\textsc{ExpTime}}
 \newcommand{\ptime}{\textsc{PTime}}
 
 \newcommand{\OMQAshort}{OMQA}
 \newcommand{\node}{\ensuremath \nu}
 \newcommand{\domel}{\ensuremath d}
 
 \newcommand{\ttype}{T}
  \newcommand{\attype}{\atom_{\ttype}}
 \newcommand{\word}{u} 
 \newcommand{\samepart}{\sim}

\begin{document}

\title[Answering Path Queries under Linear and Guarded Existential Rules]{Answering Path Queries under Linear and Guarded Existential Rules}


\author{Jean-Fran\c cois Baget}
\orcid{0000-0001-7221-5770}
\email{baget@lirmm.fr}
\affiliation{
 \institution{LIRMM, Inria, University of Montpellier, CNRS}
 \city{Montpellier}
 \country{France}}
 
\author{Meghyn Bienvenu}
\orcid{0000-0001-6229-8103}
\email{meghyn.bienvenu@labri.fr}
\affiliation{%
  \institution{Univ. Bordeaux, CNRS, Bordeaux INP, LaBRI}
  \city{Talence}
  \country{France}}

\author{Marie-Laure Mugnier}
\orcid{ 0000-0002-0574-3693}
\email{mugnier@lirmm.fr}
\affiliation{%
  \institution{LIRMM, Inria, University of Montpellier, CNRS}
  \city{Montpellier}
  \country{France}}

\author{Michaël Thomazo}
\orcid{0000-0002-1437-6389}
\email{michael.thomazo@inria.fr}
\affiliation{%
  \institution{Inria, DIENS, ENS, PSL University, CNRS}
  \city{Paris}
    \country{France}}

  \begin{abstract}
Ontology-mediated query answering is concerned
with the problem of answering queries over knowledge
bases consisting of a database instance and
an ontology. While most work in the area focuses
on conjunctive queries (CQs), navigational
queries have gained increasing attention. In this
paper, we investigate the complexity of answering
two-way (conjunctive) regular path queries ((C)RPQs)
over knowledge bases whose ontology is given by
a set of guarded existential rules. {We first consider
the subclass of linear existential rules and show that
(C)RPQ answering is NL-complete in data complexity,
which matches the data complexity
of answering RPQs over plain graph databases (i.e.,\ without an ontology). 
In combined complexity, both 
tasks are \textsc{ExpTime}-complete in the general case, but RPQ and CRPQ
answering drop to \textsc{PTime}-complete and \textsc{PSpace}-complete respectively 
if there is a bound on predicate arity. }
For guarded rules, we provide a non-trivial reduction to the linear
case, which allows us to show that the complexity
of (C)RPQ answering is the same as for CQs,
namely 2\textsc{ExpTime}-complete in combined complexity {(\textsc{ExpTime}-complete in the bounded-arity case)}
and \textsc{PTime}-complete in data complexity. 
\end{abstract}

\maketitle

\section{Introduction}
{Over the past two decades, significant research efforts have been devoted to} \emph{ontology-mediated query answering} (\OMQAshort),  
in which
data is enriched with an ontology
that expresses domain knowledge, and queries are answered by taking both the data and the ontology into account (see e.g. \cite{DBLP:journals/jods/PoggiLCGLR08} for a seminal early work and \cite{DBLP:conf/ijcai/XiaoCKLPRZ18} for a short survey).  
Adding an ontological layer on top of data 
allows a user to formulate queries in a more familiar way, closer to their conceptualization of the application domain, thereby abstracting from the actual structure of databases, which may have been designed for independent purposes. On the other hand, it can also provide more complete answers to queries, as answers are based not only on facts explicitly stored in the data but also on those that are consequences of the ontology and the data. Finally, in the context of data integration, it provides a uniform mediating layer that facilitates the integration of heterogeneous data sources. {The \OMQAshort\ approach thus offers several important advantages, which are of relevance in diverse application areas (we refer readers to \cite{DBLP:journals/dint/XiaoDCC19} for an overview of \OMQAshort\ applications)}. However, {the need to take ontological information into account  when computing query answers can make the query answering task} significantly more complex, depending on the expressivity of the ontology {and query languages. }

Two families of ontology languages are
mainly considered in OMQA, namely description logics (DLs) and existential rules (see e.g.\ the survey chapters \cite{DBLP:conf/rweb/OrtizS12,DBLP:conf/rweb/KontchakovRZ13,DBLP:conf/rweb/BienvenuO15} on \OMQAshort~ with DLs and \cite{swim-09-cgl,DBLP:conf/rweb/MugnierT14} for existential rules). These ontology languages allow one to reason in open domains, which means that existing entities are not supposed to be restricted to those explicitly encoded in the data. %
Existential rules, also known as tuple generating dependencies  \cite{alice}, can be seen as an {extension of Datalog (equivalently, function-free Horn rules)} with existentially quantified variables in the rule heads. For instance, the following existential rule expresses that `every person has a parent who is a person': $\forall x.~human(x) \rightarrow \exists y.~hasParent(x,y) \land human(y)$. Applied to a fact like $human(a)$, this rule leads to infer the existence of an infinite number of entites. Existential rules generalize most DLs that are used in the context of data access, often referred to as Horn description logics. More generally, they overcome some well-known limitations of DLs by their unrestricted predicate arity and their ability to express non-tree-shaped relationships between atoms. This added expressivity makes ground fact entailment undecidable (e.g. \cite{beeri-vardi:81}), however many decidable classes of existential rules have been exhibited, which offer different tradeoffs between expressivity and reasoning complexity. We focus here on two central classes {\cite{pods-09-cgl,DBLP:journals/jair/CaliGK13}}: \emph{guarded} existential rules, in which the body of a rule has a guard, i.e., an atom that contains all the variables occurring in the rule body, and \emph{linear} existential rules, a subclass of guarded rules in which the body of a rule is made of a single atom, {as in the above example rule}.

{
As for the query language, most work on OMQA has focused on \emph{conjunctive queries} (CQs), a fundamental query class in relational databases and in the Semantic Web, where this class is also known as {basic graph pattern queries}. However, the increasing popularity of graph databases
and the rise of knowledge graphs
have shifted attention towards navigational queries \cite{DBLP:journals/csur/AnglesABHRV17,DBLP:conf/pods/LibkinMMPV25}, which are able to capture structural patterns of unbounded size.
}
The simplest such queries are \emph{regular path queries} (RPQs) \cite{DBLP:conf/sigmod/CruzMW87,DBLP:journals/siamcomp/MendelzonW95,DBLP:journals/jcss/CalvaneseGLV02}, which ask for paths of potentially arbitrary length {whose label conforms to a given regular language}, hence allowing for a controlled form of recursion over binary predicates.
Many extensions of RPQs have been investigated, including \emph{conjunctive regular path queries} (CRPQs) \cite{DBLP:conf/pods/FlorescuLS98,DBLP:conf/kr/CalvaneseGLV00,DBLP:conf/icdt/CucumidesRV23} which generalize both RPQs and CQs. 
In particular, the core queries of SPARQL~1.1 %
(the W3C standard\footnote{SPARQL 1.1 Query Language
W3C Recommendation:  \url{http://www.w3.org/TR/sparql11-query}} for querying RDF data) roughly correspond to CRPQs.
{The next example illustrates these different types of queries in the context of ontology-mediated query answering with existential rules.
 It will be used as a running example throughout the paper to provide a high-level overview of the key notions. Additional dedicated examples will illustrate the details and subtilities underlying the technical constructions. }

\medskip

\begin{example}[Running Example]
\label{ex-running-general}
Consider data describing relationships in a social network, which uses the 
predicates 
 \fun{follows}($x$,$y$), \fun{isFriendOf}($x$,$y$)
  and \fun{message}($m,x,y$), meaning that message $m$ was sent from $x$ to $y$. 
Asking for all pairs  $(x,y)$ such that $x$ and $y$ both follow some person can be expressed as a CQ:
	$$q_1(x,y) = \exists z. ~\fun{follows}(x,z) \land \fun{follows}(y,z)$$
It can also be expressed as an RPQ:
	$$q_2(x,y) = \fun{follows \cdot follows^{-}}(x,y)$$
where the operator $^-$ denotes the inverse relation. 
Asking for pairs $(x,y)$ such that $y$ is a direct or indirect follower of $x$ can be expressed by an RPQ but not by a CQ:
	$$q_3(x,y) = \fun{follows}^{-} \cdot ( \fun{follows}^{-})^{*}(x,y)$$
In turn, asking for pairs $(x,y)$ such that $x$ and $y$ follow each other can be expressed by a CQ but not by an RPQ:
	$$q_4(x,y) = \fun{follows}(x,y) \land  \fun{follows}(y,x)$$
Now, to retrieve  pairs $(x,y)$ such that $x$ follows $y$ and $y$ directly or indirectly follows $x$ requires the expressivity of a CRPQ, which generalizes $q_4$ by replacing a standard predicate with a path expression:
	$$q_5(x,y) = \fun{follows}(x,y) \land  \fun{follows \cdot follows}^{*}(y,x)$$
Asking whether user $\fun{Alice}$ received a message from someone she follows directly or indirectly can be expressed by a (Boolean) CRPQ:
	$$q_6() = \exists y,m.~\fun{follows \cdot follows}^{*}(Alice,y) \land message(m,y,Alice)$$
Now, let us add ontological knowledge stating that (1) the relation \fun{isFriendOf} is symmetric, (2) it is a specific case of the relation \fun{follows}, and (3) following someone implies sending her a message. This can be expressed using the following three linear rules: 

\begin{quote}
$(\erule_1)$\quad $\forall x,y.~\fun{isFriendOf}(x,y) \rightarrow  \fun{isFriendOf}(y,x)$
\\
$(\erule_2)$\quad $\forall x,y.~\fun{isFriendOf}(x,y) \rightarrow  \fun{follows}(x,y)$ 
\\
$(\erule_3)$\quad $\forall x,y.~\fun{follows}(x,y) \rightarrow  \exists m. ~\fun{message}(m,x,y)$ 
\end{quote}
Assume the data indicates that $\fun{Bob}$ follows $\fun{Alice}$ and they have a common friend $\fun{Carmen}$, which is expressed by the facts $\fun{follows(Bob,Alice)}, \fun{isFriendOf(Carmen,Alice)}$ and $\fun{isFriendOf(Carmen,Bob)}$. None of the preceding queries have an answer on this data alone, while all of them have answers when the rules are taken into account. 
 Indeed, the following facts are inferred from the data and the rules:
\begin{quote}
$ \fun{isFriendOf(Alice,Carmen)}, \fun{isFriendOf(Bob,Carmen)}$	\\
$\fun{follows(Carmen,Alice)}, \fun{follows(Carmen,Bob)}, \fun{follows(Alice,Carmen)}, \fun{follows(Bob,Carmen)}$ \\
$\fun{message(m_0,Bob,Alice)}, \fun{message(m_1,Carmen,Alice)}, \fun{message(m_2,Carmen,Bob)},$\\$\fun{message(m_3,Alice,Carmen)}, \fun{message(m_4,Bob,Carmen)}$
\end{quote}
For instance, the facts $\fun{follows(Alice,Carmen)}$
and $\fun{follows(Carmen,Bob)}$ yield the answer $\fun{(Bob,Alice)}$ to $q_3$; 
 from these two facts and $\fun{message(m_0,Bob,Alice)}$, we conclude that $q_6$ is answered positively.

As an example of a guarded rule, consider the following rule defining the predicate \fun{isPaired}:
\begin{quote}
$(\erule_4)$ \quad  $\forall x,y.~\fun{follows}(x,y) \land  \fun{follows}(y,x) \rightarrow  \fun{isPaired}(x,y)$  
\end{quote}

Finally, the rules below define the predicate \fun{extFollows}, which captures the regular expression $\fun{follows.follows}^{*}$: 
\begin{quote}
$\forall x,y.~\fun{follows}(x,y) \rightarrow  \fun{extFollows}(x,y)$\\
$\forall x,y,z.~\fun{follows}(x,y) \land  \fun{extFollows}(y,z) \rightarrow  \fun{extFollows}(x,z)$
\end{quote}
Using the predicate \fun{extFollows}, the above (C)RPQs can be reformulated as CQs. Note however that the last rule is not guarded, hence does not belong to the existential rule fragments studied in this paper. This shows that navigational features in queries can compensate for certain limitations of the selected ontological fragment. 
\end{example}
\color{black}

{As description logics are well suited to expressing ontological knowledge about graph-like data, such as ABoxes and RDF graphs, it is not surprising that navigational queries have long been investigated for DL ontologies. Such studies cover a variety of DLs,}  
ranging from highly expressive DLs of the $\mathcal{Z}$ family
\cite{DBLP:conf/aaai/CalvaneseEO07,DBLP:conf/ijcai/CalvaneseEO09,DBLP:journals/iandc/CalvaneseEO14}, to Horn DLs like Horn-$\mathcal{SROIQ}$ \cite{DBLP:conf/ijcai/OrtizRS11,DBLP:conf/kr/BienvenuCOS14} and
lightweight DLs of the DL-Lite and $\mathcal{EL}$ families \cite{DBLP:journals/jair/StefanoniMKR14,DBLP:conf/aaai/KostylevRV15,DBLP:journals/jair/BienvenuOS15}. While the main focus of the preceding work was on establishing the complexity of answering navigational queries,
 there has also been some very recent work exploring how to obtain practical algorithms \cite{DBLP:conf/dlog/DragovicO023,DBLP:conf/esws/LohnertAOO25}. 

{By allowing predicates of any arity, in addition to the expression of cyclic relations between entities, existential rules naturally apply not only to graph data but also to data with a more complex hypergraph structure, such as relational databases or graph data enriched with contextual information. 
This flexibility is particularly useful in applications involving heterogeneous data, since it makes it possible to handle each kind of data as it is, without breaking it down into binary facts.  This does not contradict the observation that binary relationships are still 
central in any context, whether they are found directly in certain data or built as part of the ontological modeling. Hence, navigational queries remain highly relevant even when the data and/or ontology includes higher-arity relations.}
However, while there is now an extensive literature on navigational queries in the presence of DL ontologies, 
very little is known about the complexity (or even decidability) of answering such queries under different kinds of existential rules. 

Our paper makes an important contribution towards clarifying the complexity landscape for navigational queries under existential rules by establishing the complexity of answering (C)RPQs in the presence of linear and guarded existential rulesets. To the best of our knowledge, these results (which were first reported in the conference papers \cite{rr-16-bt,DBLP:conf/ijcai/BagetBMT17}) constitute the first and the only complexity results targeting navigational queries with existential rules. 
However, two recent works have established the decidability of (i) RPQ answering for sticky existential rulesets \cite{DBLP:conf/kr/Ostropolski-Nalewaja24}, and (ii) answering an extension of CRPQs for finite clique-width rulesets \cite{DBLP:conf/icdt/0001LOR23} (in both cases without any upper complexity bounds), demonstrating the continued interest in the topic.

\paragraph{Contributions.} 
We obtain tight complexity results for both combined and data complexities over guarded rules and their linear subclass for both RPQ and CRPQ answering. With respect to combined complexity, we furthermore distinguish between bounded and unbounded predicate arity. These results are synthesized in Table~\ref{table-linear-complexity} for linear rules and in Table~\ref{table-guarded-complexity} for guarded rules, alongside existing results for CQs  (the grey background indicates new results). 
In the linear case, we see that CRPQ answering and RPQ answering have the same data and unbounded-arity combined complexities, and these complexities are higher than those of CQ answering. However, for bounded-arity combined complexity, RPQ answering is easier than CQ answering, while CRPQ is more difficult. In the guarded case, we show that RPQ answering and CRPQ answering both have the same complexity as CQ answering, regardless of the complexity measure. 
All these results are proven for existential rules with a single head atom; since arbitrary linear rules (respectively guarded rules) can be polynomially translated into atomic-head linear rules (respectively guarded rules), the results also hold for combined complexity with unbounded predicate arity as well as data complexity. Even though the translation does not preserve bounded predicate arity, we show that for arbitrary linear rules the same upper bounds apply (and we conjecture this is true also for guarded rules). 
It is also worth noting that, while our complexity results for (C)RPQs are in most cases higher than the analogous results for plain graph databases, the \textsc{NL} data complexity of (C)RPQ answering with linear rules coincides with the data complexity of (C)RPQs in the ontology-free setting. 

  \begin{table}
  \centering
 \begin{tabular}{|c|c|c|c|}
  \hline
  &\cellcolor[gray]{.8} {\textbf{RPQ}}&\textbf{CQ}&\cellcolor[gray]{.8}   \textbf{CRPQ}\\
  \hline
  \textbf{Data} &  \cellcolor[gray]{.8} \textsc{NL}-c & \textsc{AC$_0$} &  \cellcolor[gray]{.8}   \textsc{NL}-c \\
  \hline
  \textbf{Combined, bounded arity} &  \cellcolor[gray]{.8}  \textsc{PTime}-c & \textsc{NP}-c & \cellcolor[gray]{.8}   \textsc{PSpace}-c\\
  \hline
  \textbf{Combined, unbounded arity} & \cellcolor[gray]{.8} \textsc{ExpTime}-c & \textsc{PSpace}-c  & \cellcolor[gray]{.8}    \textsc{ExpTime}-c \\
  \hline
 \end{tabular}
 \caption{Landscape of (C)(RP)Q Answering Complexity under Linear Rules}
  \label{table-linear-complexity}
\end{table}

 \begin{table}
 \centering
 \scalebox{0.9}{
 \begin{tabular}{|c|c|c|c|}
  \hline
  &{ \cellcolor[gray]{.8}  \textbf{RPQ}}&\textbf{CQ}&\cellcolor[gray]{.8}  \textbf{CRPQ}\\
  \hline
  \textbf{Data} &  \cellcolor[gray]{.8}  \textsc{PTime}-c&\textsc{PTime}-c&  \cellcolor[gray]{.8}  \textsc{PTime}-c \\
  \hline
  \textbf{Combined, bounded arity} & \cellcolor[gray]{.8}   \textsc{ExpTime}-c  &  \textsc{ExpTime}-c & \cellcolor[gray]{.8}   \textsc{ExpTime}-c\\
  \hline
  \textbf{Combined, unbounded arity} & \cellcolor[gray]{.8}   \textsc{2ExpTime}-c & \textsc{2ExpTime}-c & \cellcolor[gray]{.8}  \textsc{2ExpTime}-c \\
  \hline
 \end{tabular}
}
  \caption{Landscape of (C)(RP)Q Answering Complexity under Guarded Rules}
    \label{table-guarded-complexity}
\end{table}

\paragraph{Paper organization.} After a preliminary section, we first investigate the complexity of query answering for RPQs and linear existential rules. We rely on a forward chaining scheme, known as the \emph{chase} \cite{maier-al:79,beeri-vardi:84} which, starting from a (finite) set of facts, iteratively applies rules until a fixpoint is reached, if any. To get complexity upper bounds, we provide an algorithm that exploits the structure of paths of terms in the chase to guess a path that complies with the language defined by the RPQ (Section \ref{sec-rpq-linear-upper}). We then prove that this algorithm is worst-case optimal: in the bounded arity case by using known lower bounds for DL-Lite, a language in which relevant assertions are specific linear rules \cite{DBLP:journals/jair/BienvenuOS15}, and in the general case by providing a novel reduction from an alternating \pspace\ Turing machine to RPQ answering under linear rules (Section \ref{sec-linear-lower}). We then consider CRPQ answering, still under linear rules (Section \ref{sec-crpq-linear-upper}). We devise an algorithm that exploits additional structural properties of the chase and uses the previous RPQ answering algorithm as an oracle, which allows us to upper-bound the problem complexity.  As the obtained upper bounds match lower bounds coming from previous results, this algorithm is worst-case optimal. Finally, to study the complexity of CRPQ answering under guarded rules, we provide a non-trivial reduction of the guarded case to the linear case 
(Section \ref{sec-crpq-guarded-upper}). This translation involves a double exponential blow-up of the set of rules (while the instance only grows exponentially in the predicate arity). However, a careful analysis of the algorithm provided for CRPQ answering under linear rules shows that it actually runs in 2\textsc{ExpTime} with respect to the input guarded knowledge base (and in \textsc{ExpTime} in the case of bounded-arity rules). {We end with a presentation of closely related work (Section \ref{sec-related-work}) and remaining questions (Section \ref{sec-conclusion}).}

 This article is an extended version of two previously published conference papers \cite{rr-16-bt,DBLP:conf/ijcai/BagetBMT17}. 
 {With respect to these conference papers, we provide a unified presentation, full proofs of the results, detailed examples and a review of related work with the latest results. }

\section{Preliminaries}
We consider logical vocabularies of the form $\vocabulary = (\predicateSet, \constantSet)$, where $\predicateSet$ is a finite set of predicates and $\constantSet$ is an infinite set of constants.
A (standard) 
\emph{atom} $\atom$ has the form $\predicate(\vect{t})$ where $\predicate$ is a predicate of arity $n$ and $\vect{t}$ is a tuple of terms (i.e., variables or constants) with $|\vect{t}|=n$.  
For $1 \leq i \leq |\vect t|$,  we denote by $\atom[i]$  the term at position $i$ in $\atom$.  We denote by $\terms{\atom}$ 
(resp. $\vars{\atom})$ the set of terms 
(resp. variables) in $\atom$ and extend the notations to a set of atoms.  A \emph{ground} atom contains only constants. 
A formula is \emph{atomic} if it is a single atom. It is \emph{closed} if it has no free variable. 
Given a (possibly infinite) set of atoms $\atomSet$ and a {(possibly infinite)} set of terms $\termSet$,   
 $\atomSet_{\mid \termSet}$ denotes the restriction of $\atomSet$  to atoms $\atom$ with $\terms{\atom} \subseteq \termSet$.

An \emph{interpretation} of a vocabulary $\vocabulary = (\predicateSet, \constantSet)$ is denoted by $\interpretation = (\Delta_\interpretation, .^\interpretation)$, where $\Delta_\interpretation$ is the possibly infinite (non-empty) domain of $\interpretation$ and $.^\interpretation$ is the interpretation function, such that $a^\interpretation \in \Delta$ for each $a \in \constantSet$ and $\predicate^\interpretation \subseteq \Delta^n$ for each predicate $\predicate \in \predicateSet$ with arity $n$. An interpretation is a \emph{model} of a closed formula $F$ (resp. a set of closed formulas $F$) if it makes $F$ (resp. every formula in $F$) true. For $F$ a conjunction of atoms, a \emph{match} of $F$ in $\interpretation$ is a mapping $\match$ from $\terms{F}$ to elements of $\Delta_\interpretation$ such that {\it (i)} $\match(\const) = \const^\interpretation$ for each constant $\const$; {\it (ii)} $\match(\vect t) \in r^\interpretation$ for each atom $r(\vect t)$ in $F$. Then, it holds that $\interpretation$ is a model of a closed formula $F$ iff there is a \emph{match} of $F$ in $\interpretation$. 
We consider classical logical entailment: given a set of closed formulas $F$ and a closed formula $f$,  $F \models f$ means that every model of $F$ is a model of $f$. 
In the following, we identify (the existential closure of) a conjunction of atoms with the set of these atoms.  Given two sets of atoms $\atomSet_1$ and $\atomSet_2$,  a \emph{homomorphism} from $\atomSet_2$ to $\atomSet_1$ is a substitution $\match$ of $\vars{\atomSet_2}$ by $\terms{\atomSet_1}$ such that $\match(\atomSet_2) \subseteq \atomSet_1$.  
An \emph{isomorphism} from $\atomSet_2$ to $\atomSet_1$ is a bijective substitution $b$ of $\vars{\atomSet_2}$ by $\vars{\atomSet_1}$ such that $b(\atomSet_2) = \atomSet_1$.

Given two existentially closed conjunctions of atoms $\atomSet_1$ and $\atomSet_2$, it holds that $\atomSet_1 \models \atomSet_2$ iff there is a homomorphism from (the set of atoms in) $\atomSet_2$ to (the set of atoms in) $\atomSet_1$.
Hence, $\atomSet_1$ and $\atomSet_2$ are logically equivalent if and only if they are homomorphically equivalent. It is worth noticing that equivalent sets of atoms are not necessarily isomorphic. 

\subsection{Existential Rule Knowledge Bases}
\label{sec-ER}

 An \emph{instance} is a finite set of ground atoms. 
 An \emph{extended instance} is a finite set of {arbitrary} atoms, logically translated into an existentially-closed conjunction of atoms. 
  An \emph{existential rule} $\erule$ (or simply \emph{rule}) is of  the form 
 $\forall\vect{x}\forall\vect{y} ~[~B(\vect{x},\vect{y}) \rightarrow \exists \vect{z} ~H(\vect{x},\vect{z})~]$,  
 where $B$ and $H$ are non-empty conjunctions of atoms on variables, respectively called the \emph{body} and  the \emph{head} of $\erule$, and $\vect{x}, \vect{y}$ and $\vect{z}$ are pairwise disjoint. We also denote the body and the head of a rule $\erule$ by $\body{\erule}$ and $\head{\erule}$, respectively. 
 We make the common assumption that rules do not contain constants, which simplifies technical tools. The variables of $\vect x$ (resp. $\vect z$)  are called \emph{frontier variables}  (resp. \emph{existential variables}).  For brevity, we denote by $B \rightarrow H$ a rule with body $B$ and head $H$ and in our examples universal quantifiers {are implicit}.

A \emph{knowledge base} (KB) is of the form $\kb = \kblong$, where $\instance$ is an instance and $\ruleset$ a set of existential rules.  In the following, we assume that distinct rules in $\ruleset$ have disjoint sets of variables, even if we reuse variables in examples for the sake of simplicity.
 
 A rule $\erule = B \rightarrow H$  is \emph{applicable} to a set of atoms $\atomSet$ if there is a homomorphism $\match$ from $B$ to $\atomSet$. The pair $(\erule,\match)$ is called a \emph{trigger} on $\atomSet$. The application of $\erule$ according to $\match$ (or: the application of the trigger $(\erule,\match)$) produces a set of atoms obtained from $\head{\erule}$ by replacing each frontier variable $x$ with $\match(x)$ and each existential variable with a fresh variable, usually called a \emph{null}. We denote by $\match^{\mathrm{safe}}$ this extension of $\match$ that ``safely'' renames existential variables, so that distinct applications of the same rule 
 produce disjoint sets of nulls. The resulting set of atoms  is $\atomSet\cup \match^{\mathrm{safe}}(H)$.
 An atom $\atom \in \match^{\mathrm{safe}}(H)$ is said to be \emph{(directly) generated} by the trigger $(\erule,\match)$ if  $\atom \not \in \atomSet$. 
 {Observe that an atom $\atom$ that is produced by a trigger  $(\erule,\match)$ without being ``generated'' by that trigger necessarily comes from an atom in $\head{\erule}$ that does not contain any existential variable. }

The fundamental tool for reasoning on existential rules is a forward chaining procedure known as the \emph{chase}. Briefly, the chase enriches a given instance by applying rules until a fixpoint is reached.
This process may be infinite, as for instance with $\ruleset =\{ h(x) \rightarrow \exists z~p(x,z) \land h(z) \}$ (``every human has a parent who is a human'') and $\instance = \{h(a)\}$.  
 Several variants of the chase are known (see e.g., \cite{DBLP:journals/fuin/GrahneO18}). We consider here the simplest variant, called the \emph{oblivious chase} \cite{DBLP:journals/jair/CaliGK13}. 

Formally, an $\ruleset$-\emph{derivation} from an (extended) instance $\instance$ is a possibly infinite sequence of (extended) instances 
and triggers $\derivation = \instance_0 (= \instance)~(\erule_1,\match_1)~\instance_1
\ldots ~(\erule_n,\match_n)~\instance_n, \dots $, where, for all $i \geq 1$,  $\instance_i$ results from the application of the trigger $(\erule_i,\match_i)$ on $\instance_{i-1}$, with $ \erule_i \in \ruleset$, 
and no trigger appears twice in $\derivation$. It is simply called a derivation when $\ruleset$ is clear from the context; {and it is written  $\instance_0 (= \instance)~\instance_1
\ldots ~\instance_n, \dots $ when the triggers are not needed. }
 The \emph{result} of $\derivation$ is the set of atoms obtained {along the sequence},  i.e., $\bigcup_{i \geq 0} \instance_i$, and we denote it by 
 $atoms(\derivation)$. 
The derivation $\derivation$ is \emph{fair} if, for any $\instance_i$ and trigger $(\erule,\match)$ on $\instance_i$, 
there is $\instance_j$ in $\derivation$ such that $\instance_j$ results from the application of $(\erule,\match)$. 
An (oblivious) \emph{chase sequence} of $\kb = \kblong$  is a fair $\ruleset$-derivation from $\instance$. 
 
It is known that oblivious chase sequences on a {given} KB are either all finite or all infinite, and that they result in isomorphic sets of atoms \cite{DBLP:journals/jair/CaliGK13}. Hence, given a KB $\kb = (\instance,\ruleset)$, we  denote by chase$(\kb)$, or chase$(\instance, \ruleset)$, the result of any oblivious chase sequence of $\kb$.  Furthermore, the name chase is classically used to denote both the forward chaining process and its result.

The (result of the) chase of $\kb$ can be seen as a logical interpretation, which is a model of $\kb$. This model $\interpretation_\kb = (\Delta_{\interpretation_\kb}, .^{\interpretation_\kb})$ has for domain $\Delta_{\interpretation_\kb} = \terms{\text{chase}(\kb)}$ and its interpretation function $.^{\interpretation_\kb}$ is defined by the atoms in chase$(\kb)$ (i.e., for each constant $c$, $c^{\interpretation_\kb} = c$ holds, and, for each predicate $p$, $p^{\interpretation_\kb}$ is the set of  tuples $(t_1,\ldots t_k)$ such that  $p(t_1,\ldots t_k) \in \text{chase}(\kb)$). Furthermore, $\interpretation_\kb$ has the fundamental property of being a \emph{universal} model of $\kb$,  i.e., it homomorphically maps to any other model of $\kb$  \cite{DBLP:journals/tcs/FaginKMP05,DBLP:conf/pods/DeutschNR08}.
 It follows that it can act as a representative of all models of $\kb$ to check entailment of conjunctive queries, and the more general conjunctive regular path queries, as explained next (see Section \ref{sec-queries}).

We will consider two specific classes of existential rules, namely linear and guarded.  
 A rule is \emph{linear} if its body is atomic. 
 A set of atoms $\atomSet$ is \emph{guarded} if it contains an atom $\atom$, called a guard, such that $\terms{\atom} = \terms{\atomSet}$. Since $\atomSet$ may contain several guards, we use the notation $(\atomSet,\atom)$ to specify that $\atom$ is the considered guard.	
 A rule is \emph{guarded} if its body is guarded. 
Note that a linear rule is trivially guarded. 

  \paragraph{Single-head translation}\label{singleheadtransformation} In the following, we will make the assumption that the head of an existential rule is atomic. It is well-known that any existential rule $\erule$ on a vocabulary $\vocabulary$ can be decomposed into a set of atomic-head rules by adding a fresh predicate $p_\erule$, whose arity is the number of variables in the head of $\erule$. E.g., the rule $\erule = r(x,y) \rightarrow \exists z ~q(x,z) \wedge q(y,z)$ can be decomposed into three atomic-head rules: $r(x,y) \rightarrow \exists z~p_\erule(x,y,z)$;  $p_\erule(x,y,z) \rightarrow q(x,z)$; $p_\erule(x,y,z) \rightarrow q(y,z)$. This polynomial translation preserves entailment of formulas on $\vocabulary$.  Moreover, the decomposition of a guarded (resp. linear) rule yields a set of guarded (resp. linear) rules. 
  Hence, the assumption that rules have an atomic head can be made without loss of generality regarding combined complexity with unbounded predicate arity, as well as data complexity. 

\subsection{Properties of Linear Rules}\label{sec-prelim-linear}

A well-known property of linear rules is that chasing independently each atom of an instance yields a result equivalent to the chase of 
this instance, i.e., $\cup_{\atom \in \instance} \chase{\{\atom\}}{\ruleset}$ and $\chase{\instance}{\ruleset}$  are homomorphically equivalent. 
 Indeed, the key property of linear rules is that  a pair $(\erule,\match)$ is a trigger on a set of atoms $\atomSet$ iff there is $\atom \in \atomSet$ such that $(\erule,\match)$ is a trigger on $\{\atom\}$. Hence, the sequence of triggers in a derivation from $\instance$ can be split into sequences of triggers that each define a derivation from an atom in $\instance$. It follows that $\chase{\instance}{\ruleset}$ is isomorphic to a subset of $\cup_{\atom \in \instance} \chase{\{\atom\}}{\ruleset}$. In the other direction, the sequence of triggers in a derivation from any atom $\atom \in \instance$ defines a derivation from $\instance$. It follows that 
$ \chase{\{\atom\}}{\ruleset}$ is isomorphic to a subset of $\chase{\instance}{\ruleset}$. However, while $\cup_{\atom \in \instance} \chase{\{\atom\}}{\ruleset}$ and $\chase{\instance}{\ruleset}$ are homomorphically equivalent, they are not necessarily isomorphic, as illustrated by the next example. 

 \begin{example} Let $\instance= \{p(a,b),q(a,b)\}$ and $\ruleset = \{\erule_1,\erule_2\}$ with $\erule_1 = p(x,y) \rightarrow q(x,y)$ and $\erule_2 = q(x,y) \rightarrow \exists z~r(y,z)$. Then $\chase{\instance}{\ruleset} = \{ p(a,b), q(a,b), r(b,z_0)\}$, where $z_0$ is a null, by the sequence of triggers $\trigger_1 = (\erule_1, \{ x \mapsto a, y \mapsto b\})$, $\trigger_2 =(\erule_2, \{ x \mapsto a, y \mapsto b\})$ (or the reverse sequence). Trigger  $\trigger_1$ does not generate any atom because it produces $q(a,b)$ and $q(a,b) \in \instance$. Let  $\atom_1 = p(a,b) \in \instance$: $\chase{\{\atom_1\}}{\ruleset} = \{p(a,b), q(a,b), r(b,z_0)\}$ by the same sequence of triggers. Now, $\trigger_1$  \emph{generates} the atom $q(a,b)$. Let $\atom_2 = q(a,b) \in \instance$: $\chase{\{\atom_2\}}{ \ruleset} = \{q(a,b), q(b,z_1)\}$ by $\trigger_2$.
 The union of the two chases  is:
 $$\{ p(a,b), q(a,b), r(b,z_0), r(b,z_1)\}$$
 which maps to $\chase{\instance}{\ruleset}$ by the homomorphism $\match = \{ z_1 \mapsto z_0 \}$, but is not isomorphic to it.

 \end{example}
 
 Let us say that an atom $\atom_j$ is \emph{generated} from an atom $\atom_i$ in $\chase{\instance}{\ruleset}$ if \emph{(i)} $\atom_j$ is directly generated by a trigger on $\atom_i$, or \emph{(ii)} $\atom_j$ is directly generated by a trigger on an atom itself generated from $\atom_i$. In the rest of this paper, we will specifically rely on the following two immediate properties. 
 First, atoms generated from distinct (ground) atoms $\atom$ and $\atom'$ in $\instance$ do not share any null (Proposition \ref{prop-distinct-nulls}). Second, the set of atoms $A_\atom$ generated from an atom $\atom \in \instance$ forms a subset of $\chase{\{\atom\}}{\ruleset}$, up to the choice of nulls (Proposition \ref{cor-atomic-chase}). 
We will in fact only use a corollary of Proposition \ref{cor-atomic-chase}: every finite set of atoms that are all generated from an atom $\atom \in \instance$ forms a subset of $\chase{\{\atom\}}{\ruleset}$. 

\begin{proposition} 
\label{prop-distinct-nulls}
Given any $\kb = \kblong$, let $\atom_i$ and $\atom_j$ be generated from distinct atoms in $\instance$ by $\chase{\instance}{\ruleset}$. Then, $\vars{\atom_i} \cap \vars{\atom_j} = \emptyset$. 
\end{proposition}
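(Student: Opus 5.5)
Since $\instance$ is ground, the atoms of $\instance$ contain no variables, so the variables of any chase atom are nulls created by rule applications. The plan is to assign to every atom of $\chase{\instance}{\ruleset}$ outside $\instance$ a unique \emph{origin}: the atom of $\instance$ it is generated from. We then prove by induction along a chase sequence $\derivation = \instance_0~(\erule_1,\match_1)~\instance_1 \ldots$ the invariant that every null of $\instance_k$ occurs only in atoms sharing a single origin.

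First we make the origin well defined. Rules are linear and atomic-head, so an atom $\atom'$ directly generated by a trigger $(\erule_k,\match_k)$ is obtained from the unique body atom image $\match_k(\body{\erule_k}) = \atom$, which belongs to $\instance_{k-1}$. If $\atom \in \instance$, the origin of $\atom'$ is $\atom$. Otherwise the origin of $\atom'$ is the origin of $\atom$. This follows the recursive definition of ``generated from'' given just before the proposition. One subtlety is that an atom may be produced several times. We fix the first trigger that generates it; with the oblivious chase and fresh nulls, any atom containing a null is generated exactly once, namely when that null is created. The proposition only concerns variables, so atoms with no null are harmless and can be ignored.

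Next we prove the invariant: for every null $z$, all atoms of $\instance_k$ containing $z$ have the same origin. Consider step $k$ with trigger on body atom $\atom$ producing $\atom' = \match_k^{\mathrm{safe}}(\head{\erule_k})$. The nulls of $\atom'$ are of two kinds. Fresh existential nulls occur in no earlier atom, so they satisfy the invariant trivially. Frontier terms are images of frontier variables, hence terms of $\atom$. A frontier null $z$ occurs in $\atom$, so $\atom \notin \instance$ because $\instance$ is ground. Then $\atom'$ inherits the origin of $\atom$, which by the induction hypothesis is the origin shared by all atoms containing $z$. The invariant is therefore preserved, and it passes to the union $\bigcup_k \instance_k$.

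Finally, suppose $\atom_i$ and $\atom_j$ have distinct origins and share a variable $z$. Then $z$ is a null, so both atoms lie outside $\instance$, and the invariant is contradicted. The main obstacle is the bookkeeping of ``generated from'' when the same atom could arise from triggers on atoms with different origins. I would handle it by the observation above: such an atom has no nulls, because every null-containing atom appears exactly when its freshest null is created, or is a copy through frontier terms, and in the latter case its origin is forced by the invariant.
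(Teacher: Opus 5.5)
Your proof is correct and uses the same idea as the paper. The paper takes the atom $\atom_k$ in which the shared null was introduced and observes that $\atom_i$ and $\atom_j$ are each equal to, or generated from, $\atom_k$, so they share its origin; your invariant along the chase sequence makes explicit the induction the paper leaves implicit, namely that a null only propagates through frontier terms of the single body atom. Your ``main obstacle'' is not a real issue, since in a fixed derivation each atom outside $\instance$ is directly generated exactly once by definition. Your remark that a null-containing atom is generated ``when that null is created'' is inaccurate, because frontier copying can produce it later, but nothing in your argument depends on it.
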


\begin{proof} 
If two (distinct) atoms $\atom_i$ and $\atom_j$ share a ({necessarily} fresh) variable, then they are generated from the same ground atom from $\instance$. Indeed, let $\alpha_k$ be the atom in which this variable is introduced. If $\atom_k = \atom_i$ or $\atom_k = \atom_j$, then one atom is generated from the other. Otherwise, $\atom_i$ and $\atom_j$ are both generated from $\atom_k$. In all the cases, $\atom_i$ and $\atom_j$ are generated from the same atom $\atom \in \instance$ from which $\atom_k$ is generated. 
\end{proof}

\begin{proposition} 
\label{cor-atomic-chase}
For any KB $\kb = \kblong$ with $\ruleset$ a linear ruleset, atom $\atom \in \instance$ and set $\atomSet_{\atom}$ of all the atoms generated from $\atom$ in  $\chase{\instance}{\ruleset}$, there is an isomorphism from $\{\atom\} \cup \atomSet_{\atom}$ to a subset of $\chase{\{\atom\}}{\ruleset}$. 
\end{proposition}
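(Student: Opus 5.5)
We fix a chase sequence $\derivation = \instance_0~(\erule_1,\match_1)~\instance_1 \ldots$ of $\kb$ and extract from it a derivation from $\{\atom\}$ alone. Let $\atom_{k_1}, \atom_{k_2}, \ldots$ enumerate, in order of generation, the atoms of $\atomSet_\atom$, each directly generated by a trigger $(\erule_{k_i},\match_{k_i})$. By definition of ``generated from'', the unique body atom of $\erule_{k_i}$ (the rules are linear) is mapped by $\match_{k_i}$ either to $\atom$ or to some $\atom_{k_j}$ with $j<i$. Hence the subsequence of these triggers, in this order, has all its triggers applicable to the set of atoms accumulated so far starting from $\{\atom\}$. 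The plan is to show by induction on $i$ that this subsequence defines an $\ruleset$-derivation $\derivation'$ from $\{\atom\}$, and that the bijection $b$ sending each null of $\atomSet_\atom$ to the corresponding fresh null created in $\derivation'$ is an isomorphism from $\{\atom\}\cup\atomSet_\atom$ onto $\fun{atoms}(\derivation')$.

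The inductive step proceeds as follows. Assume $\derivation'_{i-1}$ is built with $b$ defined on the nulls of $\atom_{k_1},\ldots,\atom_{k_{i-1}}$ and $b(\{\atom,\atom_{k_1},\ldots,\atom_{k_{i-1}}\}) = \fun{atoms}(\derivation'_{i-1})$. The trigger $(\erule_{k_i}, b\circ\match_{k_i})$ is then a trigger on $\fun{atoms}(\derivation'_{i-1})$. Constants are fixed by $b$, and every term of the body image is either a constant of $\atom$ or a null already in $\atomSet_\atom$. No trigger repeats, because distinct triggers of $\derivation$ yield distinct triggers under the injective $b$. Applying it creates fresh nulls, and we extend $b$ to send the existential nulls of $\atom_{k_i}$ to them. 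The image of $\head{\erule_{k_i}}$ is then exactly $b(\atom_{k_i})$, since the head is atomic.

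The main obstacle is that, to obtain a subset of $\chase{\{\atom\}}{\ruleset}$, we need $\derivation'$ to be a prefix of some fair derivation. The triggers of $\derivation'$ may also produce atoms that already exist rather than ``generating'' them, but this causes no harm, since the oblivious chase applies each trigger once regardless. We therefore extend $\derivation'$ to a fair derivation by appending, in a dovetailing fashion, all remaining triggers. Its result is a chase of $\{\atom\}$, unique up to isomorphism by the result of \cite{DBLP:journals/jair/CaliGK13} recalled earlier, and it contains $\fun{atoms}(\derivation') = b(\{\atom\}\cup\atomSet_\atom)$. One further point needs checking: $b$ must be injective on terms. This holds because nulls of $\atomSet_\atom$ are introduced by distinct triggers and are mapped to distinct fresh nulls, while constants stay fixed and no null is sent to a constant. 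Proposition~\ref{prop-distinct-nulls} is not even needed here, since $\atomSet_\atom$ contains only nulls created within this trigger subsequence and constants of $\atom$.
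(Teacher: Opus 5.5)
Your construction of $\derivation'$ and of the injective null renaming $b$ is the same as the paper's (its substitutions $\phi_i$), and for finite $\atomSet_{\atom}$ your argument is complete. The gap is in the last step when $\atomSet_{\atom}$ is infinite, which the statement allows. In that case $\derivation'$ is an infinite sequence, so nothing can be appended after it. Moreover, $\derivation'$ is in general not fair, so it is not a prefix of any fair derivation.

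For example, take $\instance=\{p(a,b),q(a,b)\}$, the rules $p(x,y)\rightarrow q(x,y)$ and $p(x,y)\rightarrow \exists z\, p(y,z)$, and $\atom=p(a,b)$. The trigger of the first rule on $p(a,b)$ only produces $q(a,b)\in\instance$, so it generates nothing and does not occur in $\derivation'$, although it is applicable to $\{\atom\}$. Meanwhile $\derivation'$ is infinite, since it builds $p(b,z_1), q(b,z_1), p(z_1,z_2),\ldots$. That trigger therefore has to be inserted \emph{inside} $\derivation'$, and your claim that $\derivation'$ is a prefix of a fair derivation fails here.

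You can repair this in one of two ways.

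\emph{Interleave.} Merge $\derivation'$ with a fair schedule of all other triggers. Each trigger of $\derivation'$ keeps the fresh nulls it has in $\derivation'$, and all other triggers use nulls outside $\fun{atoms}(\derivation')$. If the schedule has already applied a trigger of $\derivation'$, skip it at its own turn. Each trigger of $\derivation'$ is still applicable when reached, because its body image was produced by an earlier trigger of $\derivation'$. The merged derivation is fair and contains $\fun{atoms}(\derivation')$, and your appeal to uniqueness of the oblivious chase up to isomorphism then finishes the proof.

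\emph{Use a fixed chase (the paper's route).} The paper avoids this bookkeeping. It fixes one chase sequence of $\{\atom\}$. At step $i$, the transported trigger $(\erule_{k_i}, b\circ\match_{k_i})$ is a trigger on an atom of that chase, so by fairness it is applied somewhere in it. The paper then extends $b$ by sending the nulls of $\atom_{k_i}$ to the nulls introduced by that application. Your injectivity argument carries over unchanged: distinct triggers have distinct images under the injective $b$, and distinct applications introduce disjoint nulls. The union of the successive extensions is the required isomorphism, with no new fair derivation and no uniqueness result needed.
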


\begin{proof} Let $D_{\atom} = \instance_0 (= \atom)~(\erule_1,\match_1)~\instance_1
\ldots ~(\erule_n,\match_n)~\instance_n, \dots $ be the (possibly infinite) derivation resulting in $\{\atom\} \cup \atomSet_{\atom}$, which is extracted from the derivation associated with $\chase{\instance}{\ruleset}$. 
We inductively build a sequence of substitutions $\phi_0, \ldots, \phi_n, \ldots$, such that for all $i \geq 0$,   $\phi_i$ is an isomorphism from $\instance_i$ to a subset of $\chase{\instance}{\ruleset}$.
For $i=0$, $\phi_i$ is the empty substitution (i.e., $\phi_0(\atom) = \atom$).
 For $i > 0$, let $\atom_i = \instance_i \setminus \instance_{i-1}$, and let $t_{i}=(\erule_i,\match_i)$ 
be the trigger that produces $\atom_{i}$. Let $\atom_{j_i}$ be the atom on which $t_{i}$ is applied. If $\atom_{i}$ does not contain a new null,  $\phi_i = \phi_{i-1}$. 
Otherwise, let $\atom' = \phi_{i-1}(\atom_{j_i})$:
the pair $t' = (\erule_i, \phi_{i-1} \circ \match_i)$
is a trigger on $\atom'$ and it necessarily \emph{generates} an atom since a new null is created. Then, $\phi_i$ is obtained by extending $\phi_{i-1}$ in the obvious way to bijectively map the nulls introduced by $t_{i}$ to those introduced by $t'$. By construction,  
$\phi_{i}$ is an isomorphism from $\instance_i$ to a subset of $\chase{\{\atom\}}{\ruleset}$. 
Furthermore, $\cup_{ i=0}^{\infty} \phi_i$ is an injective substitution of $\vars{\{\atom\} \cup \atomSet_{\atom}}$ by $\vars{\chase{\{\atom\}}{\ruleset}}$, i.e., an isomorphism from $\{\atom\} \cup \atomSet_{\atom}$ to a subset of $\chase{\{\atom\}}{\ruleset}$. 
\end{proof}

\subsection{Queries}
\label{sec-queries}

 We now define the classes of queries we consider: conjunctive queries, regular path queries and their common generalisation, namely conjunctive regular path queries. In short, a conjunctive query is an existentially quantified conjunction of standard atoms; a regular path query is a single ``path atom'', i.e., an atom on a binary predicate corresponding to a regular language; and a conjunctive regular path query is an existentially quantified conjunction of standard and path atoms.

A regular language can be represented by a regular expression or by a non-deterministic finite automaton (NFA).  Let $\alphabet$ be a finite set of symbols. A regular expression $\mathcal{E}$ over $\alphabet$ is defined by the grammar:
 $\mathcal{E} \rightarrow \emptyWord \mid \letter \mid \mathcal{E}\cdot\mathcal{E} \mid \mathcal{E} + \mathcal{E} \mid \mathcal{E}^*$,
 where $\letter \in \alphabet$ and $\emptyWord$ denotes the empty word. 
 An NFA over $\alphabet$ is a tuple $\automaton = (S,\alphabet,\delta,s_0,F)$, where $S$ is a finite set of states,  $\delta \subseteq S \times \alphabet \times S$ is the transition relation, $s_0 \in S$ is the initial state and $F \subseteq S$ is the set of final states. For convenience, we sometimes use the notation $s' \in \delta(s,a)$ to denote that $(s,a,s') \in \delta$.
{Since NFAs are exponentially more succinct than regular expressions \cite{DBLP:journals/jcss/EhrenfeuchtZ76}, the complexity proofs should use the NFA representation for upper bounds and the regular expression representation for lower bounds, so that the results hold regardless of the representation. We indeed use the NFA representation for upper bounds. For lower bounds, we mostly rely on previous results, which indeed use the regular expression representation; for  our only proof of a lower bound (Section \ref{sec-linear-lower}), the query is of the form $p^*$, hence the choice of a representation does not matter. }

We denote by $\mathcal L(\mathcal{E})$  the language defined by the regular expression $\mathcal{E}$. Simarly, we denote by $\mathcal L(\automaton)$ the language recognized by the automaton $\automaton$. Furthermore, we denote by  $\subLanguage{\automaton}{s_i}{s_{j}}$ the set of words that 
take $\automaton$ to the state $s_j$ when they are read from the state $s_i$. When $s_i = s_0$ and $s_j \in F$, $\subLanguage{\automaton}{s_i}{s_{j}} \subseteq \mathcal L(\automaton)$. 

 We denote by $\predicateSet_2$ the subset of binary predicates in the considered vocabulary $\vocabulary = (\predicateSet, \constantSet)$. Given a binary predicate $r$, we also consider its inverse predicate $r^-$ (i.e., for all terms $\term_1$ and $\term_2$, $r(\term_1,\term_2)$ holds iff $r^-(\term_2,\term_1)$ holds). We set  $\predicateSet_2^\pm = \predicateSet_2 \cup \{r^- \mid r \in \predicateSet_2\}$.  
 A \emph{path predicate} $\lang$ is a binary predicate given by an NFA or a regular expression defining a regular language $L(\lang)$ over $\predicateSet_2^\pm$. A \emph{path atom} is an atom built on a path predicate, i.e., it  takes the form $\lang(\term_1,\term_2)$, where $\lang$ is a path predicate and $\term_1, \term_2$ are terms. Note that standard binary predicates are a special case of path predicates. We assume without loss of generality that automata associated with distinct path predicates have disjoint sets of states. 

 A \emph{conjunctive (two-way) regular path query} (CRPQ)\footnote{As we only consider the two-way variant, we will use the abbreviation (C)RPQ instead of the more usual (C)2RPQ. } has the form $q(\mathbf{x}) = \exists \mathbf{y} B$, where $\mathbf{x}$ and $\mathbf{y}$ are disjoint tuples of variables, and $B$ is a conjunction of standard and path atoms with $\terms{B} = \mathbf{x} \cup \mathbf{y}$.  The free variables of  $q$, i.e., the $\mathbf{x}$ variables, are called \emph{answer variables}. A CRPQ is \emph{Boolean} if it is a closed formula, i.e., $\mathbf{x} = \emptyset$. 
 A \emph{conjunctive query} (CQ) is a CRPQ $q(\mathbf{x}) = \exists \mathbf{y} B$ where $B$ contains only standard atoms. 
 A \emph{regular path query} (RPQ) is a CRPQ of the form $q(x_1,x_2) = \lang(x_1,x_2)$, where $\lang(x_1,x_2)$ is a path atom.  Hence, an RPQ contains exactly two variables, which are answer variables.

 Given an interpretation $\interpretation = (\Delta, .^{\interpretation})$, we extend the set of interpreted symbols as follows. First, for every $\predicate \in \predicateSet_2$, we set $({\predicate^-})^\interpretation = \{ (e_2,e_1) | (e_1,e_2) \in r^\interpretation \}$.  
Second, we call \emph{path} (from $e_0$ to $e_n$)  in $\interpretation$ a (finite) sequence $e_0 r_1 e_1 \ldots r_n e_n$, with $n \geq 0$ such that $e_0 \in \Delta_\interpretation$ and for every $1 \leq i \leq n$, $e_i \in \Delta_\interpretation$, $r_i \in  \predicateSet_2^\pm$ and $(e_{i-1}, e_i) \in r_i^\interpretation$. The \emph{label} $\lambda(p)$ of a path $p = e_0 r_1 e_1 \ldots r_n e_n$ is the word $r_1 \ldots r_n$. If $n = 0$,  $\lambda(p)$ is the empty word  $\emptyWord$. 
Then, for any path predicate $\lang$, we set:
\begin{center}
$\lang^\interpretation = \{(e_0,e_n) |$ there is a path $p$ from $e_0$ to $e_n$  in $\interpretation$ such that $\lambda(p) \in  \mathcal L(\lang)\}$
\end{center}

The notion of model of a Boolean CRPQ is the classical logical notion, up to the above extensions of the set of interpreted symbols. The notion of match in an interpretation is extended in the same way, i.e., a \emph{match} of a CRPQ $q$ in an interpretation $\interpretation$ is a mapping $\match$ from $\terms{q}$ to elements of $\Delta_\interpretation$ such that {\it (i)} $\match(\const) = \const^\interpretation$ for each constant $\const$; {\it (ii)} $\match(\vect t) \in r^\interpretation$ for each (standard or path) atom $r(\vect t)$ in $q$. It follows that an interpretation $\interpretation$ is a model of a Boolean CRPQ $q$ iff there is a \emph{match} of $q$ in $\interpretation$. 

Given a CRPQ $q(\mathbf{x}) = \exists \mathbf{y} B$ with $\mathbf{x} = (x_1 \ldots x_k)$, a tuple of constants $\textbf{a} =(a _1 \ldots a_k)$ is an \emph{answer} to $q$ in $\interpretation$ if there is a match $\match$ of $q$ in $\interpretation$ such that $\match(x_i) = a_i^{\interpretation}$ for $1 \leq i \leq k$. Equivalently, $\interpretation$ is a model of the Boolean CRPQ $q()$ obtained from $q(\mathbf{x})$ by substituting each $x_i$ with $a_i$. 
 
We can now define the notion of a certain answer to a CRPQ on a KB. Given a CRPQ $q(\mathbf{x}) = \exists \mathbf{y} B$ and a KB $\kb = (\instance,\ruleset)$, a tuple of constants $\vect{a} = s(\vect x)$, where $s$ denotes a substitution, is a \emph{certain answer} to a CRPQ $q(\mathbf{x})$ over $\kb$ if  $\instance \cup \ruleset \models q(\vect a)$, where $q(\vect a) = \exists \vect y .s(B)$. In other words, $\vect{a}$ is an answer to $q$ in each model of $\kb$. 
Concerning the specific case of $q$ being a Boolean CRPQ, we have that the empty tuple $()$ is a certain answer to $q$ on  $\kb$ if $\instance \cup \ruleset \models q$. 

Although certain answers are defined with respect to all models of the KB, it is actually sufficient to consider answers in the unique model defined by the chase of the KB. This immediately follows from two properties: 
 \emph{(i)} the chase of a KB $\kb$ is a universal model of $\kb$, 
and \emph{(ii)} CRPQs are closed under homomorphism, which means that for any Boolean CRPQ $q$ and model $\interpretation$ of $q$, any interpretation $\interpretation'$ to which $\interpretation$ homomorphically maps is also a model of $q$. 

\begin{proposition}
For any KB $\kb$ and CRPQ $q(\mathbf{x})$, a tuple of constants $(\vect a)$ is a certain answer to $q$ over $\kb$ if and only if there is a match $\match$ of $q$ in chase($\kb$) such that $\match(\mathbf{x}) =  (\vect a)$. 
\end{proposition}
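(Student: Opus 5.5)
The plan is to reduce the statement to the two properties recalled just before it. First, $\chasekb{\kb}$, viewed as the interpretation $\interpretation_\kb$, is a model of $\kb$. Second, it is universal: it maps homomorphically to every model of $\kb$, with constants mapped to their own interpretations. Throughout, $q(\vect a)$ denotes the Boolean CRPQ $\exists \vect y .s(B)$ obtained by substituting the answer variables. A match of $q(\vect a)$ is then exactly a match $\match$ of $q$ with $\match(\mathbf{x}) = \vect a$, so it suffices to prove that $\instance \cup \ruleset \models q(\vect a)$ iff $\interpretation_\kb$ is a model of $q(\vect a)$.

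For the ``only if'' direction, assume $\vect a$ is a certain answer. Then $q(\vect a)$ holds in every model of $\kb$, and in particular in the model $\interpretation_\kb$. Hence there is a match of $q(\vect a)$ in $\interpretation_\kb$, that is, a match $\match$ of $q$ in $\chasekb{\kb}$ with $\match(\mathbf{x}) = \vect a$; this uses $c^{\interpretation_\kb} = c$ for constants.

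For the ``if'' direction, take any model $\interpretation$ of $\kb$, and let $h$ be a homomorphism from $\interpretation_\kb$ to $\interpretation$ given by universality. I claim $h \circ \match$ is a match of $q(\vect a)$ in $\interpretation$; this is the closure of CRPQs under homomorphisms, and it is where the real work lies. I check the conditions one by one.
\begin{enumerate}
\item Constants: $h(c) = c^\interpretation$.
\item Standard atoms $r(\vect t)$: these are preserved by the definition of a homomorphism.
\item Path atoms $\lang(t_1,t_2)$: fix a witnessing path $e_0 r_1 e_1 \ldots r_n e_n$ in $\interpretation_\kb$ with label in $\mathcal L(\lang)$. Each step $(e_{i-1},e_i) \in r_i^{\interpretation_\kb}$ is sent to $(h(e_{i-1}),h(e_i)) \in r_i^{\interpretation}$. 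This holds for $r_i \in \predicateSet_2$ directly, and for inverse letters $r^-$ because $(e_{i-1},e_i) \in (r^-)^{\interpretation_\kb}$ means $(e_i,e_{i-1}) \in r^{\interpretation_\kb}$. So $h(e_0) r_1 h(e_1) \ldots r_n h(e_n)$ is a path in $\interpretation$ with the same label, which gives $(h(e_0),h(e_n)) \in \lang^\interpretation$. The case $n=0$, where the label is $\emptyWord$, is trivial.
\end{enumerate}
Therefore every model of $\kb$ satisfies $q(\vect a)$, so $\vect a$ is a certain answer.

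The only delicate step is this preservation of path atoms under homomorphisms, in particular the handling of inverse predicates. It holds because paths are defined step by step from binary relations. Universality itself I would take as given from the cited results; the oblivious chase result is isomorphic across fair sequences, so it is well defined.
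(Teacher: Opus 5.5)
Your proof is correct and follows the paper's own route. The paper justifies the statement in one line, from the fact that the chase is a universal model of $\kb$ and the fact that CRPQs are closed under homomorphism. You use the same two facts and additionally spell out the homomorphism-closure step for path atoms, including inverse letters and the empty word.
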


In the following, we will identify the chase with the interpretation naturally associated with it. In particular, a \emph{path of terms} in the chase is the natural translation of a path in the associated interpretation. We denote a path of terms by $\path = \term_0 r_1 \term_1 \cdots r_n \term_n$, where the $\term_i$ are terms, and the $r_i$ elements of  $\predicateSet_2^\pm$, and simply by $\term_0 \cdots \term_n$ when only the extremities are needed.

Finally, we point out that, whenever there is a match of a CRPQ in the possibly infinite chase, this match can be found in a finite portion of the chase. 

\begin{proposition}
For any KB $\kb=\kblong$ and CRPQ $q$, if there a match $\match$ of $\query$ in $\chasekb{\kb}$, then for any chase sequence $\derivation = \instance_0 (= \instance) ~\instance_1
\ldots ~\instance_n, \dots $ of  $\kb$, there is $\instance_i$ in $\derivation$ such that $\match$ is a match of $q$ in $\instance_i$. 
\end{proposition}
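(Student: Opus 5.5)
The plan is a finiteness argument: a match of a CRPQ only ever uses finitely many atoms of $\chasekb{\kb}$, and every atom of the chase already appears at some finite stage of any chase sequence. Throughout, I identify $\chasekb{\kb}$ with $atoms(\derivation) = \bigcup_{i \geq 0} \instance_i$ for the given sequence $\derivation$. Different oblivious chase sequences yield isomorphic results, and an isomorphism that fixes constants carries a match in one result to a match in the other. So I may assume without loss of generality that $\match$ is a match into $atoms(\derivation)$ itself.

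\textbf{Step 1: collect a finite set of witnessing atoms.} Let $q = \exists \vect y\, B$. For each standard atom $r(\vect t)$ in $B$, the match $\match$ gives an atom $r(\match(\vect t)) \in atoms(\derivation)$; I put it in a set $W$. For each path atom $\lang(t_1,t_2)$ in $B$, the definition of $\lang^\interpretation$ gives a \emph{finite} path $p = e_0 r_1 e_1 \cdots r_n e_n$ from $\match(t_1)$ to $\match(t_2)$ with $\lambda(p) \in \mathcal L(\lang)$. Each step $(e_{k-1},e_k) \in r_k^\interpretation$ is witnessed by one atom of the chase: $r_k(e_{k-1},e_k)$ if $r_k \in \predicateSet_2$, or $r(e_k,e_{k-1})$ if $r_k = r^-$. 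I add these $n$ atoms to $W$; if $n = 0$ nothing is added. Since $B$ is finite and each path is finite, $W$ is a finite subset of $atoms(\derivation)$.

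\textbf{Step 2: find a stage containing $W$.} Each atom of $W$ belongs to some $\instance_{j}$. Let $i$ be the maximum of these indices; it is finite because $W$ is finite. The derivation is monotone, $\instance_{j} \subseteq \instance_{j+1}$, since each step is $\instance_{j-1} \cup \match_j^{\mathrm{safe}}(H)$. Hence $W \subseteq \instance_i$.

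\textbf{Step 3: check that $\match$ is a match in $\instance_i$.} Regard $\instance_i$ as an interpretation whose domain is its set of terms. This domain contains every image term $\match(t)$, and every element of every witnessing path, because each such term occurs in an atom of $W$. The one exception is a path atom witnessed by an empty path; there $\match(t_1) = \match(t_2)$. If that term occurs in no atom of $W$, I add to $W$ any chase atom containing it before taking the maximum in Step 2. Constants are interpreted as themselves. Then each standard atom of $q$ holds in $\instance_i$ by construction, and each path atom holds because its witnessing path exists in $\instance_i$ with the same label. So $\match$ is a match of $q$ in $\instance_i$.

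There is no real obstacle here. The only delicate points are bookkeeping. The first is making sure the domain of $\instance_i$ contains all terms used by $\match$, including in the empty-path case. The second is the reduction from "a match into some chase result" to "a match into $atoms(\derivation)$ for this particular sequence", which relies on the isomorphism between results of oblivious chase sequences.
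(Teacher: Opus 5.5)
Your proof is correct and is essentially the paper's argument: the paper assigns to each standard atom of $q$ the first stage containing its image, and to each path atom the first stage containing all atoms of some witnessing path. It then takes the maximum of these ranks, relying (as you do) on the finiteness of the query and of paths and on the monotonicity of the derivation. Your extra bookkeeping just makes explicit details the paper's sketch leaves implicit: the empty-path case, where the matched term must still lie in the domain of $\instance_i$, and the identification of $\chasekb{\kb}$ with the result of the given sequence $\derivation$.
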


\begin{proof}(Sketch) Let $\match$ be a match of $\query$ in chase($\kb$). Let $\derivation = \instance_0 (= \instance) ~\instance_1
\ldots ~\instance_n, \dots $  be a chase sequence of $\kb$. Given any standard atom $\atom$ in $\query$, let rank($\atom$) be the smallest $i$ such that $\match(\atom) \in \instance_i$. Given any path atom $\atom = \lang(t,t')$ in $\query$, let rank($\atom$) be the smallest $i$ such that $\instance_i$ contains all the (standard) atoms of a path $\term_0 (= \match(t)) r_1 \term_1 \cdots r_n \term_n (=\match(t'))$.  Let $j$ be the maximal value of rank($\atom$) among all standard and path atoms $\atom$ in $\query$. Then, $\match$ is a match of $\query$ in $\instance_j$. 
\end{proof}

Hence, given a Boolean CRPQ $\query$ and a KB $\kb = \kblong$, it holds that $\kb \models \query$ if and only if there a finite $\ruleset$-derivation from $\instance$  that results in a set of atoms $\instance_n$ such that $\instance_n \models \query$.

\subsection{Query Answering Problem}\label{subsection:QA}

Let $\mathbf{C} \in \{\text{CQ, RPQ, CRPQ}\}$ denote a class of queries. The  \emph{$\mathbf{C}$ answering problem} asks, given a KB $\kb= \kblong$, a query $\query$ 
from the class $\mathbf{C}$ and a tuple of constants $\vect a$, whether $\vect a$ is an answer to $\query$ over $\kb$. Since ground atom entailment from an existential rule KB is undecidable in general 
(see \cite{beeri-vardi:81,chandra-lewis-makowsky:81} for the first undecidability results about equivalent problems on tuple-generating dependencies), the $\mathbf{C}$ answering problem is undecidable for any query class $\mathbf{C}$. 

We will study the complexity of (C)RPQ answering in the case of linear and guarded rules, according to two complexity measures: \emph{combined} complexity, where $\kb$ and $\query$ are both part of the problem input, and \emph{data} complexity, where $\query$ and $\ruleset$ are fixed, and only $\instance$  is part of the input. This second complexity measure is specially relevant when it can be assumed that the sizes of the query and the set of rules are small compared to the size of the instance. With respect to combined complexity, we will in turn distinguish between two cases, depending on whether the predicate arity is \emph{bounded} or \emph{unbounded}.

We will refer to standard complexity classes: \textsc{NL} (problems solvable in {non-deterministic} logarithmic space), {also called \textsc{NLogSpace}}, \textsc{PTime} (problems solvable in deterministic polynomial time), \textsc{NP}
(problems solvable in non-deterministic polynomial time), \textsc{PSpace} (problems solvable in polynomial space), \textsc{ExpTime} (problems solvable in deterministic exponential time)
and 2\textsc{ExpTime} (problems solvable in deterministic doubly exponential time). In Table \ref{table-linear-complexity}, we furthermore mention  \textsc{AC$_0$}, a  class from circuit complexity, which is a strict subclass of \textsc{NL}. 
{To sum up, these classes are ordered as follows: 
$ \textsc{AC$_0$} \subset  \textsc{NL}  \subseteq \textsc{PTime} \subseteq  \textsc{NP}  \subseteq \textsc{PSpace}  \subseteq \textsc{ExpTime} \subseteq 2\textsc{ExpTime}$.
}

The complexities of CQ answering under linear or guarded rules are recalled in Tables \ref{table-linear-complexity} and \ref{table-guarded-complexity}. 
It is well known that CQ evaluation (or CQ answering with $\ruleset = \emptyset$) is in AC$_0$ for data complexity and NP-complete for combined complexity, regardless of the assumption on predicate arity. 
Under linear rules, CQ answering remains in AC$_0$ for data complexity \cite{pods-09-cgl} and NP-complete for combined complexity with bounded arity (follows from \cite{jcss-84-jk,DBLP:conf/kr/GottlobS12})\footnote
{\cite{jcss-84-jk} proves that inclusion dependencies (a subclass of linear rules) of bounded predicate arity enjoy the Polynomial Witness Property and \cite{DBLP:conf/kr/GottlobS12} points out that the proof extends to linear rules. This implies the NP membership. NP-hardness follows from the complexity of CQ evaluation.}  
 but it becomes \textsc{PSpace}-complete for combined complexity with unbounded arity \cite{swim-09-cgl}. Under guarded rules, CQ answering is  \textsc{PTime}-complete for data complexity \cite{pods-09-cgl}, \textsc{ExpTime}-complete for combined complexity with bounded arity \cite{DBLP:journals/jair/CaliGK13} and 2\textsc{ExpTime}-complete for combined complexity with unbounded arity \cite{DBLP:journals/jair/CaliGK13}. 

{For the sake of comparison, let us further recall the complexity\footnote{The reason that we do not provide references for the complexity results for graph databases is that they are considered folklore in the area (and explicitly noted as such in the survey chapter \cite{DBLP:conf/rweb/Figueira21}).  } of answering (C)RPQs over graph databases, in the absence of any ontology. 
RPQ evaluation is known to be \textsc{NL}-complete in both data and combined complexity. For CRPQs, the data complexity remains \textsc{NL}-complete, but the combined complexity rises to \textsc{NP}-complete (matching that of CQ evaluation in databases). } 
To prove the results about CRPQ answering, it will be convenient to consider the following problem, called \emph{CRPQ Entailment}, which asks, given a KB $\kb= \kblong$ and a Boolean CRPQ $\query$, if $\instance \cup \ruleset \models \query$. Both problems are linearly reducible one to the other, since, given a Boolean CRPQ $\query$, it holds that $\instance \cup \ruleset \models \query$ if and only if $()$ is a certain answer to $\query$, and, in turn, a tuple of constants $(\vect a)$ is a certain answer to a CRPQ $\query(\vect x)$  if and only if $\instance \cup \ruleset \models \query_{(\vect x \mapsto \vect a)}$, where  $\query_{(\vect x \mapsto \vect a)}$ is the Boolean CRPQ obtained from $\query(\vect x)$ by substituting each $x_i$ in $\vect x$ with $a_i$ in $\vect a$.


\section{RPQ Answering under Linear Rules: Upper Bound}
\label{sec-rpq-linear-upper}

We consider the problem of computing the certain answers to a regular path query in the presence of a set of linear rules and present an algorithm that is inspired by a related algorithm for DL-Lite ontologies \cite{DBLP:journals/jair/BienvenuOS15} (see Section \ref{sec-related-work} for discussion). 
The algorithm takes as input an RPQ whose regular language is given by an NFA and works roughly as follows: 
\begin{itemize}
\item a path in the chase is guessed step by step, keeping in memory only the current constant of the instance and current state of the automaton;
\item when a path passes through nulls in the chase, these terms are not guessed, instead the state of the automaton when the path returns to constants of the instance is guessed.
\end{itemize}
The first item corresponds to a standard algorithm for RPQ answering over bare data. The real difficulty lies in the second item, which requires a characterization of when it is possible to reach an automaton state through a path involving null elements.

\medskip

The following proposition makes a first step towards such a characterization by observing that every `anonymous' path in the chase can be localized within the chase of a single ground atom. It exploits the fact that for linear rules, each atom can be `chased' independently (see Section \ref{sec-prelim-linear}). 

\begin{proposition}
\label{ref-prop-atom-loop}
 Let $\ruleset$ be a set of linear rules and $\instance$ be an instance. Then for every path $p = \domel_0 r_1 \domel_1 \cdots r_n \domel_n$ in $\chase{\instance}{\ruleset}$ such that $\domel_0, \domel_n  \in \terms{\instance}$ and $\domel_i \not \in \terms{\instance}$ for $0 <i  <n$, there exists an atom $\atom \in \instance$ with $\domel_0,\domel_n \in \terms{\atom}$ such that there is a path $p'$ in $\chase{\{\atom\}}{\ruleset}$ from $\domel_0$ to $\domel_n$ with 
$\lambda(p')=\lambda(p)$.
\end{proposition}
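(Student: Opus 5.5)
We distinguish the case $n=1$ from $n \geq 2$. When $n=1$, the path consists of a single atom $\atom' \in \chase{\instance}{\ruleset}$ with predicate $r_1$ (or its inverse) on the two terms $\domel_0,\domel_1$ of $\instance$. Two subcases arise. If $\atom' \in \instance$, we take $\atom = \atom'$; then $p' = p$ is a path in $\chase{\{\atom\}}{\ruleset}$, since that chase contains $\atom$. Otherwise $\atom'$ is generated, and we follow its generation backwards. Each generated atom is produced by a trigger on a single atom, because rules are linear. This gives a finite ancestor chain $\atom'=\beta_k,\beta_{k-1},\dots,\beta_0$ with $\beta_0 \in \instance$, where each $\beta_{i}$ is directly generated by a trigger on $\beta_{i-1}$. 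Since rules contain no constants and each frontier term of a produced atom comes from the body atom, every constant occurring in $\beta_i$ also occurs in $\beta_{i-1}$. By induction, $\domel_0,\domel_1 \in \terms{\beta_0}$. Setting $\atom=\beta_0$, the atom $\atom'$ is generated from $\atom$, and Proposition~\ref{cor-atomic-chase} gives an isomorphism $\phi$ from $\{\atom\}\cup\atomSet_\atom$ into $\chase{\{\atom\}}{\ruleset}$. This isomorphism is the identity on constants, so $\phi(\atom')$ yields the required one-step path.

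For $n \geq 2$, every intermediate term $\domel_1,\dots,\domel_{n-1}$ is a null. Every atom $\atom_i$ used for the step $\domel_{i-1} r_i \domel_i$ contains at least one null, so it is not in $\instance$ and is generated from some atom of $\instance$. We use Proposition~\ref{prop-distinct-nulls}: consecutive atoms $\atom_i,\atom_{i+1}$ share the null $\domel_i$, so they must be generated from the same atom of $\instance$. By chaining along the path, all of $\atom_1,\dots,\atom_n$ are generated from one common $\atom \in \instance$. The constants $\domel_0 \in \terms{\atom_1}$ and $\domel_n \in \terms{\atom_n}$ then lie in $\terms{\atom}$, by the constant-propagation observation above applied to the ancestor chains of $\atom_1$ and $\atom_n$. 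Finally, we apply Proposition~\ref{cor-atomic-chase}, or rather its corollary for finite sets, to $\{\atom_1,\dots,\atom_n\}$. This gives an injective substitution $\phi$, fixing constants, that maps these atoms into $\chase{\{\atom\}}{\ruleset}$. Then $p'=\phi(\domel_0) r_1 \phi(\domel_1)\cdots r_n \phi(\domel_n)$ is a path from $\domel_0$ to $\domel_n$ with the same label.

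The main obstacle is the constant-propagation lemma: every constant appearing in an atom generated from $\atom$ belongs to $\terms{\atom}$. It rests on two facts. First, rules have no constants. Second, under the atomic-head assumption, a generated atom's terms are either fresh nulls or images of frontier variables, and these images lie in the terms of the single body atom. We would state this as a small auxiliary claim proved by induction on the generation depth. A minor subtlety is that in the $n=1$ case an atom produced but not generated may already be present; choosing the atom's first occurrence in the derivation, where it is either in $\instance$ or generated, resolves this.
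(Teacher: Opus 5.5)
Your proof is correct and follows the paper's argument. Consecutive atoms of the path share a null, so by Proposition~\ref{prop-distinct-nulls} they are all generated from a single $\atom \in \instance$, and Proposition~\ref{cor-atomic-chase} then transfers the path into $\chase{\{\atom\}}{\ruleset}$. Your separate treatment of the case $n=1$ and your explicit constant-propagation argument that $\domel_0,\domel_n \in \terms{\atom}$ supply details the paper's short proof leaves implicit.
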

\begin{proof}
Let $A_p$ be the set of atoms in $p$. Any consecutive atoms in $A_p$, i.e., $r_i(\domel_{i-1}, \domel_{i})$ and $r_{i+1}(\domel_{i}, \domel_{i+1})$, $0< i < n$, share a null, hence, by Proposition \ref{prop-distinct-nulls}, there is a unique atom $\atom \in \instance$ from which all the atoms in $A_p$ are generated in $\chase{\instance}{\ruleset}$. 
From Proposition \ref{cor-atomic-chase}, $A_p$ is isomorphic to a subset of $\chase{\{\atom\}}{\ruleset}$, which yields the path $p'$. 
\end{proof}

We can therefore concentrate on identifying the paths that occur within the chase of a single atom. In order to abstract from the particular terms occurring in an atom, we introduce the following notion of type: 

\begin{definition}[Type of an atom]
\label{def-type}
A \emph{type} is a pair $\ttype=(\predicate,\samepart_{\ttype})$ where $\predicate$ is a predicate of arity $k$ and $\samepart_{\ttype}$ is an equivalence relation on $\{1,\ldots,k\}$. The type of an atom $\atom$, denoted by $\type{\atom}$, is the pair $T=(\predicate,\samepart_{\ttype})$ where $\predicate$ is the predicate of $\atom$ and $i \samepart_{\ttype} j$ %
iff the $i^\mathrm{th}$ and the $j^\mathrm{th}$ arguments of $\atom$ are equal. 
\end{definition}

For instance, the atoms $\predicate(x,y)$, $\predicate(y,a)$ and $\predicate(a,b)$, where $a$ and $b$ are constants, have the same type 
 $\ttype = (\predicate, \{\{1\},\{2\}\})$, while the atom
 $\predicate(x,x)$ has type $(\predicate, \{\{1,2\}\})$; here, $\samepart_{\ttype}$ is given by its set of equivalence classes. 
 
When atoms $\atom_1$ and $\atom_2$ have the same type, there exists a bijective mapping $\theta_{12}$ from $\terms{\atom_1}$ to $\terms{\atom_2}$ such that $\atom_2 = \theta_{12}(\atom_1)$. We call this mapping the \emph{natural mapping} from $\atom_1$ to $\atom_2$ and observe that $\theta_{21} = \theta_{12}^{-1}$. With this notion in hand, we can formalize the idea that atoms with the same type behave the same regarding rule applications.  

\begin{proposition} \label{prop-natural-mapping}
  Let $\atom_1$ and $\atom_2$ be two atoms of the same type, and let  $\theta_{12}$ be the natural mapping from $\atom_1$ to $\atom_2$. If $\erule \in \ruleset$ is applicable to $\atom_1$ by $\match$ creating $\atom'_1$, then $\erule$ is applicable to $\atom_2$ by $\theta_{12}\circ\match$ creating $\atom'_2$, where $\atom_1'$ and $\atom_2'$ have the same type, and the natural mapping $\theta'_{12}$ from $\atom'_1$ to $\atom'_2$ coincides with $\theta_{12}$ on the terms on which they are both defined. 
 \end{proposition}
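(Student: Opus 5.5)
The argument is a direct unfolding of the definitions of trigger, type and natural mapping, in two steps.

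\emph{Applicability.} Since $\erule$ is linear, its body is a single atom $\body{\erule} = \predicate(\vect{u})$, and $\match$ is a homomorphism with $\match(\body{\erule}) = \atom_1$. Because $\atom_1$ and $\atom_2$ share the predicate $\predicate$, and $\theta_{12}$ is a bijection on terms with $\theta_{12}(\atom_1) = \atom_2$, we get $(\theta_{12}\circ\match)(\body{\erule}) = \theta_{12}(\atom_1) = \atom_2$. Rules contain no constants, so $\theta_{12}\circ\match$ is a legitimate substitution of the body variables. Hence $(\erule, \theta_{12}\circ\match)$ is a trigger on $\{\atom_2\}$. Only the body's terms matter here, so repeated variables in the body cause no difficulty: equal positions in $\atom_1$ remain equal in $\atom_2$ by definition of equal types.

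\emph{Same type of the produced atoms.} Let $\head{\erule} = s(\vect{v})$, which is atomic by the single-head assumption. Then $\atom'_1 = \match^{\mathrm{safe}}(s(\vect v))$ and $\atom'_2 = (\theta_{12}\circ\match)^{\mathrm{safe}}(s(\vect v))$. Define a mapping $\theta'$ on $\terms{\atom'_1}$ in two parts. For a frontier variable $x$, set $\theta'(\match(x)) = \theta_{12}(\match(x))$. For an existential variable $z$, set $\theta'$ to send the fresh null created for $z$ in the first application to the fresh null created for $z$ in the second.

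We must check that $\theta'$ is well defined and bijective, so that $\theta'(\atom'_1) = \atom'_2$. This amounts to showing that for head positions $i, j$:
\begin{itemize}
\item if both hold frontier variables, then $\atom'_1[i] = \atom'_1[j]$ iff $\match(v_i) = \match(v_j)$, iff $\theta_{12}(\match(v_i)) = \theta_{12}(\match(v_j))$, by injectivity of $\theta_{12}$;
\item if both hold existential variables, then equality holds iff $v_i = v_j$ in both atoms, since fresh nulls are pairwise distinct;
\item if one holds a frontier variable and the other an existential variable, then the two positions are never equal in either atom, because fresh nulls occur nowhere else.
\end{itemize}
Thus $\samepart$ coincides for $\atom'_1$ and $\atom'_2$, and their predicates are both $s$, so they have the same type. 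By uniqueness of the natural mapping (it is determined position-wise), $\theta'_{12} = \theta'$.

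\emph{Coincidence with $\theta_{12}$.} The terms on which both $\theta'_{12}$ and $\theta_{12}$ are defined lie in $\terms{\atom'_1}\cap\terms{\atom_1}$. Fresh nulls are not in $\terms{\atom_1}$, so any such term is of the form $\match(x)$ for a frontier variable $x$. On these terms $\theta'_{12}$ agrees with $\theta_{12}$ by construction.

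The main obstacle is the well-definedness and bijectivity check in the second step, specifically the mixed frontier/existential case. It relies on the freshness of nulls introduced by $\match^{\mathrm{safe}}$ and on the absence of constants in rules; the rest is routine bookkeeping.
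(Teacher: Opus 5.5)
Your proof is correct. The paper states this proposition without proof and treats it as immediate from the definitions; your unfolding is exactly that verification. Applicability follows from $\theta_{12}(\atom_1)=\atom_2$, and $\atom'_1,\atom'_2$ have the same type by injectivity of $\theta_{12}$ on frontier images together with the freshness of the nulls introduced by the safe renaming. You are also right that the constant-free assumption is genuinely needed: a constant in the body or head could break applicability or the equality pattern.
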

 
A corollary of this proposition is that $\chase{\{\atom_1\}}{\ruleset}$ and $\chase{\{\atom_2\}}{\ruleset}$ will contain the same kinds of paths, as formalized next. 

\begin{corollary}\label{cor-types-same-paths}
 Let $\atom$ and $\atom'$ be atoms with $\type{\atom}=\type{\atom'}$. Then 
there is a path in $\chase{\{\atom\}}{\ruleset}$ from term $\atom[i]$ to term $\atom[j]$ with label $l$
 if and only if there is a path in $\chase{\{\atom'\}}{\ruleset}$ from $\atom'[i]$ to $\atom'[j]$ with label $l$. 
\end{corollary}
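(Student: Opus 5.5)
By symmetry (since $\theta_{21}=\theta_{12}^{-1}$), it suffices to prove one direction. The plan is to build an isomorphism from $\chase{\{\atom\}}{\ruleset}$ onto $\chase{\{\atom'\}}{\ruleset}$ that extends the natural mapping $\theta$ from $\atom$ to $\atom'$. Such an isomorphism carries each atom $r(t,t')$ to $r(\Theta(t),\Theta(t'))$, so it transports paths to paths with the same label. Because $\Theta$ extends $\theta$ and $\theta(\atom[i])=\atom'[i]$, a path from $\atom[i]$ to $\atom[j]$ is sent to a path from $\atom'[i]$ to $\atom'[j]$.

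To construct $\Theta$, fix a chase sequence $\derivation = \instance_0(=\{\atom\})\,(\erule_1,\match_1)\,\instance_1 \ldots$ of $(\{\atom\},\ruleset)$. By induction on $k$, I build a derivation $\derivation'$ from $\{\atom'\}$ together with substitutions $\Theta_k$, such that $\Theta_k$ is a bijection from $\terms{\instance_k}$ to $\terms{\instance'_k}$, is an isomorphism with $\Theta_k(\instance_k)=\instance'_k$, and extends $\theta=\Theta_0$. The construction mirrors the proof of Proposition~\ref{cor-atomic-chase}.

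\begin{itemize}
\item \emph{Step $k$.} The trigger $(\erule_k,\match_k)$ applies to a single atom $\beta\in\instance_{k-1}$, since the rules are linear. The atom $\Theta_{k-1}(\beta)\in\instance'_{k-1}$ has the same type as $\beta$, and the natural mapping between them is the restriction of $\Theta_{k-1}$. By Proposition~\ref{prop-natural-mapping}, $(\erule_k,\Theta_{k-1}\circ\match_k)$ is then a trigger on $\instance'_{k-1}$. Its produced atom has the same type as the atom produced in $\derivation$, and its natural mapping agrees with $\Theta_{k-1}$ on frontier terms. I therefore let $\Theta_k$ extend $\Theta_{k-1}$ by sending the fresh nulls of step $k$ bijectively to the fresh nulls of the new trigger.
\item \emph{No repeated triggers.} Distinct triggers in $\derivation$ are mapped to distinct triggers, because $\Theta_{k-1}$ is injective.
\item \emph{Limit.} $\Theta=\bigcup_k\Theta_k$ is an isomorphism from $\mathit{atoms}(\derivation)$ onto $\mathit{atoms}(\derivation')$.
\end{itemize}

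The main obstacle is showing that $\derivation'$ is fair, so that $\mathit{atoms}(\derivation')$ really is (isomorphic to) $\chase{\{\atom'\}}{\ruleset}$. Take any trigger $(\erule,\match')$ on some $\instance'_i$. Pulling it back along $\Theta^{-1}$ gives the trigger $(\erule,\Theta^{-1}\circ\match')$ on $\instance_i$. By fairness of $\derivation$, this trigger is applied at some step $j$, and by construction its image $(\erule,\match')$ is applied at step $j$ of $\derivation'$. Since oblivious chase results are unique up to isomorphism, $\derivation'$ yields $\chase{\{\atom'\}}{\ruleset}$.

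Alternatively, one can avoid the fairness argument altogether. Take a path $p$ from $\atom[i]$ to $\atom[j]$; by the finiteness proposition it lies in some finite $\instance_n$. The finite derivation up to $n$ transfers, by the same step-by-step argument, to a finite $\ruleset$-derivation from $\{\atom'\}$. Any finite derivation embeds, via the argument of Proposition~\ref{cor-atomic-chase}, into the chase. This gives the path $\Theta_n(p)$ in $\chase{\{\atom'\}}{\ruleset}$ with label $\lambda(p)$, which is the cleaner route I would actually follow.
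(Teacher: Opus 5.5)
Your proposal is correct and is essentially the paper's argument: the paper also uses Proposition~\ref{prop-natural-mapping} to obtain a bijection $\theta$ on terms with $\theta(\chase{\{\atom\}}{\ruleset}) = \chase{\{\atom'\}}{\ruleset}$ (implicitly extending the natural mapping from $\atom$ to $\atom'$), and then transports paths along it with their labels preserved. You make explicit the step-by-step construction and the fairness check that the paper leaves implicit; your finite-prefix variant is equally fine, provided the embedding of the transferred derivation into $\chase{\{\atom'\}}{\ruleset}$ is the identity on $\terms{\atom'}$ so that the endpoints $\atom'[i]$ and $\atom'[j]$ are preserved.
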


\begin{proof} From Proposition \ref{prop-natural-mapping}, there is a bijective mapping $\theta$ from  the terms in $\chase{\{\atom\}}{\ruleset}$ to the terms in $\chase{\{\atom'\}}{\ruleset}$, such that $\theta(\chase{\{\atom\}}{\ruleset}) = \chase{\{\atom'\}}{\ruleset}$. Hence, every path $p$ in $\chase{\{\atom\}}{\ruleset}$ from term $\atom[i]$ to term $\atom[j]$ is mapped by $\theta$
to a path $p'$ in $\chase{\{\atom'\}}{\ruleset}$ from $\atom'[i]$ to $\atom'[j]$ with the same label as $p$. 
\end{proof}

The following proposition shows that it is possible to take paths in the chase of a single atom and reproduce them in a larger chase that contains an atom of the same type. 

\begin{proposition}
\label{proposition-linear-chase}
 Let $\atom$, $\atom'$, $\beta$ be atoms such that $\type{\atom}=\type{\atom'}$ and $\atom \in \chase{\{\beta\}}{\ruleset}$. 
 If there is a path in $\chase{\{\atom'\}}{\ruleset}$ from term $\atom'[i]$ to term $\atom'[j]$ with label $l$,
 then there is a path in $\chase{\{\beta\}}{\ruleset}$ from $\atom[i]$ to $\atom[j]$ with label $l$.
\end{proposition}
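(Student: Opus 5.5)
By Corollary~\ref{cor-types-same-paths}, a path in $\chase{\{\atom'\}}{\ruleset}$ from $\atom'[i]$ to $\atom'[j]$ with label $l$ yields a path in $\chase{\{\atom\}}{\ruleset}$ from $\atom[i]$ to $\atom[j]$ with the same label. So I will reduce the statement to showing that $\chase{\{\atom\}}{\ruleset}$ embeds into $\chase{\{\beta\}}{\ruleset}$ by a substitution that fixes $\terms{\atom}$. Images of paths under such a map are paths with the same label, and their extremities $\atom[i],\atom[j]$ are unchanged.

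To build the embedding, fix a chase sequence $\derivation_\atom = \{\atom\} = \instance_0~(\erule_1,\match_1)~\instance_1 \ldots$ of $(\{\atom\},\ruleset)$. By induction on $n$, I construct substitutions $\phi_n$ that map the nulls of $\instance_n$ not occurring in $\atom$ to terms of $\chase{\{\beta\}}{\ruleset}$, are the identity on $\terms{\atom}$, and satisfy $\phi_n(\instance_n)\subseteq \chase{\{\beta\}}{\ruleset}$. The base case holds because $\atom\in\chase{\{\beta\}}{\ruleset}$. For the step, the trigger $(\erule_n,\match_n)$ applies to a single atom $\gamma\in\instance_{n-1}$, since the rules are linear. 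Then $\phi_{n-1}(\gamma)\in\chase{\{\beta\}}{\ruleset}$, so $(\erule_n,\phi_{n-1}\circ\match_n)$ is a trigger on $\chase{\{\beta\}}{\ruleset}$. Because the chase is a fixpoint (fairness of the oblivious chase), this trigger has been applied there and produced the atom $(\phi_{n-1}\circ\match_n)^{\mathrm{safe}}(\head{\erule_n})$ for some fresh nulls. I extend $\phi_{n-1}$ by mapping the nulls created at step $n$ to those nulls. The union $\phi=\bigcup_n\phi_n$ is then a homomorphism from $\chase{\{\atom\}}{\ruleset}$ to $\chase{\{\beta\}}{\ruleset}$ that fixes $\terms{\atom}$. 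Injectivity is not needed, since homomorphic images of paths are still paths with the same label.

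The main obstacle is making sure $\phi$ really fixes the terms of $\atom$ and is well defined. I rely on three facts. The nulls created in $\derivation_\atom$ are fresh, so they never coincide with terms of $\atom$. Each null is created exactly once, so its image is assigned only once. Finally, rules contain no constants, so every term of a produced atom is either a frontier image or a fresh null. A subtle point is that $\atom$ itself may contain nulls of $\chase{\{\beta\}}{\ruleset}$; this causes no difficulty because we treat them as fixed terms and map them by the identity.

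Composing $\phi$ with the path obtained via Corollary~\ref{cor-types-same-paths} gives the required path in $\chase{\{\beta\}}{\ruleset}$ from $\atom[i]$ to $\atom[j]$ with label $l$.
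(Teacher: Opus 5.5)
Your proof is correct and follows the paper's route: you transfer the path to $\chase{\{\atom\}}{\ruleset}$ with Corollary~\ref{cor-types-same-paths}, then embed $\chase{\{\atom\}}{\ruleset}$ into $\chase{\{\beta\}}{\ruleset}$ by induction along a derivation from $\{\atom\}$, sending each fresh null to the null created by the corresponding trigger on the $\beta$ side. Your choice to build only a homomorphism fixing $\terms{\atom}$ is in fact the sound version of the argument. The paper's claim that $\chase{\{\atom\}}{\ruleset}$ is isomorphic to a subset of $\chase{\{\beta\}}{\ruleset}$ can fail: take $\beta=p(a)$, the rules $p(x)\rightarrow\exists y\,q(x,y)$ and $q(x,y)\rightarrow p(x)$, and $\atom=q(a,n)$; then $\chase{\{\atom\}}{\ruleset}=\{q(a,n),p(a),q(a,n')\}$ while $\chase{\{\beta\}}{\ruleset}=\{p(a),q(a,n)\}$, and homomorphic images of paths are all that the statement needs.
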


\begin{proof} Let $p'$ be a path in $\chase{\{\atom'\}}{\ruleset}$ from term $\atom'[i]$ to term $\atom'[j]$ with label $l$. Since $\type{\atom}=\type{\atom'}$, by Corollary \ref{cor-types-same-paths}, there is a path $p$ in $\chase{\{\atom\}}{\ruleset}$ from term $\atom[i]$ to term $\atom[j]$ with label $l$. We then show that $\chase{\{\atom\}}{\ruleset}$ is isomorphic to a subset of $\chase{\{\beta\}}{\ruleset}$, from which we conclude that $\chase{\{\beta\}}{\ruleset}$ contains a path $p''$ from $\atom[i]$ to $\atom[j]$ with label $l$. 
We prove  that there is an isomorphism from the result of any $\ruleset$-derivation from $\atom$ to a subset of $\chase{\{\beta\}}{\ruleset}$, by induction on the length of this derivation. 
At rank 0, the property is true since $ \atom \in \chase{\{\beta\}}{\ruleset}$. Assume it is true until rank $n$. 
Consider a derivation from $\atom$ of length $n+1$: let $A_n$  be the set of atoms obtained after applying the $n^\mathrm{th}$ trigger; let $(\erule, \pi)$ be the $n^\mathrm{th}+1$ trigger, $a = \pi(\body{\erule})$ and $a'$ be the resulting atom. Let $\psi_n$ be the isomorphism from $A_n$ to a subset of $\chase{\{\beta\}}{\ruleset}$. Since $a$ and $\psi_n(a)$ have the same type, $(\erule, \psi_n \circ \pi)$ is a trigger on $\psi_n(a)$ and occurs in the derivation that builds $\chase{\{\beta\}}{\ruleset}$, creating atom $a''$ of the same type as $a'$ (as in Proposition \ref{prop-natural-mapping}). We build the mapping $\psi_{n+1}$ from $A_n \cup \{a'\}$ to $\chase{\{\beta\}}{\ruleset}$ by extending $\psi_{n}$ according to the natural mapping from $a'$ to $a''$.
Since $\psi_{n}$ is an isomorphism from $A_n$ to a subset of $\chase{\{\beta\}}{\ruleset}$, 
$\psi_{n+1}$ is an isomorphism from $A_n \cup \{a'\}$ to a subset of $\chase{\{\beta\}}{\ruleset}$. 
\end{proof}

It now remains to determine which paths are available starting from an atom of a given type.
More specifically, we want to know, 
given a type $T$, positions $j,j'$ in any atom $\atom$ of type $T$, automaton $\automaton$, and states $s_i,s_i'$, whether $\chase{\atom}{\ruleset}$ contains a path $p$ from $\atom[j]$ to $\atom[j']$ whose label $\lambda(p)$ takes $\automaton$ from state $s_i$ to  state $s'_i$. 
To this end, we formalize the notion of a type admitting a loop:

\begin{definition}
We say that a \emph{type $\ttype$ admits an $(s_i,j,s'_i,j')$-loop (w.r.t.\ a linear ruleset $\ruleset$ and NFA $\automaton$)} if there is a path $p$ in $\chase{\{\attype\}}{\ruleset}$ from $\attype[j]$ to $\attype[j']$ such that $\pathLabel{p} \in \subLanguage{\automaton}{s_i}{s_{i'}}$, where $\attype$ is any atom of type $\ttype$. 
\end{definition}
\noindent Any choice of the atom $\attype$ will give the same result, due to Corollary~\ref{cor-types-same-paths}. 
{Before detailing the algorithms, we provide below a high-level overview of how they work using the running example.}

\medskip

\begin{example}[Running example---continued] Let  $\ruleset$ be the following set of linear rules:
\begin{quote}
${(\erule_1)} \quad \fun{isFriendOf}(x,y) \rightarrow  \fun{isFriendOf}(y,x) $
\\
$(\erule_2) \quad \fun{isFriendOf}(x,y) \rightarrow  \fun{follows}(x,y)$
\\
$(\erule_3) \quad \fun{follows}(x,y) \rightarrow  \exists m ~\fun{message}(m,x,y)$
\\
$(\erule_5) \quad \fun{message}(m,x,y) \rightarrow  \fun{sends}(x,m)$ \\
$(\erule_6) \quad\fun{message}(m,x,y) \rightarrow  \fun{receives}(y,m)$ 
\end{quote}
The first three rules were already introduced in Example \ref{ex-running-general} and the two additional rules allow us to reformulate the CRPQ $q_6$  in the form of an RPQ. Recall that $q_6$ asks whether user $\fun{Alice}$ (abbreviated as $\fun{A}$ in the following) received a message from someone she follows directly or indirectly. Instead of the CRPQ $q_6$, we can consider the following RPQ $q'_6$ and check whether $(\fun{A},\fun{A})$ is a certain answer:
$$q'_6(x,y) = 
\fun{follows\cdot follows}^{*} \cdot \fun{sends} \cdot \fun{receives}^-(x,y) $$
We assign to  $q'_6$ the NFA representation $\automaton$ with states $\{s_0, s_1, s_2, s_f\}$ (with $s_0$ initial and $s_f$ final) and four transitions:  $(s_0, \fun{follows}, s_1)$, $(s_1, \fun{follows}, s_1)$, $(s_1, \fun{sends}, s_2)$ and $(s_2, \fun{receives}^-, s_f)$. 

Let us consider again the instance $\instance = \{\fun{follows(B,A)}, \fun{isFriendOf(C,A)}, \fun{isFriendOf(C,B)}\}$. 
Query $q'_6$ can be matched to the following path of terms in $\chase{\instance}{\ruleset}$:
$$(\fun{A}~\fun{follows}~\fun{C}~\fun{follows}~\fun{B}~\fun{sends}~\fun{m_0}~\fun{receives}^-\fun{A})$$
 with $\fun{m_0}$ the null introduced when applying the rule $(\erule_3)$ to $\fun{follows(B,A)}$. 
 The facts $\fun{follows(A,C)}$ and $\fun{follows(C,B)}$, witnessing the subpath $(\fun{A}~\fun{follows}~\fun{C}~\fun{follows}~\fun{B})$,  
 are derived from the facts $\fun{isFriendOf(C,A)}$ and $\fun{isFriendOf(C,B)}$, respectively, using rules $\erule_1$ and $\erule_2$. As these are ground facts, the corresponding subpath $(A~\fun{follows}~C~\fun{follows}~B)$ will be entirely guessed by the algorithm. 
This subpath takes $\automaton$ from state $s_0$ to $s_1$. 
By contrast, the subpath $(B~\fun{sends}~m_0~\fun{receives}^-A)$, whose witnessing atoms contain the null $m_0$, will not be guessed. Instead, we will rely on Proposition \ref{ref-prop-atom-loop} and find a ground fact in $\instance$, here $\atom=\fun{follows}(B,A)$, such that $(B~\fun{sends}~m_0~\fun{receives}^-A)$ occurs in 
 $\chase{\atom}{\ruleset}$. More precisely: the  type $(\fun{follows}, \{\{1\},\{2\}\})$, which is the type of $\atom$, admits a loop from 
 the term in position 1 to the term in position 2 that takes $\automaton$ from state $s_1$ to $s_f$ (in short: an $(s_1,1,s_f,2)$-loop). 

We will use Algorithm 1, detailed later, to compute the set of loops admitted by 
 each of the types associated with $\ruleset$. %
 In particular, we have here that \emph{(i)} type $(\fun{sends}, \{\{1\},\{2\}\})$ admits an $(s_1,1,s_2,2)$-loop and \emph{(ii)} type $(\fun{receives}^-, \{\{1\},\{2\}\})$ admits an $(s_2,2,s_f,1)$-loop, which is directly obtained from the transitions in $\automaton$. From  \emph{(i)} and rule $(\erule_5)$ considered ``from head to body'', we obtain that type $(\fun{message}, \{\{1\},\{2\},\{3\}\})$ admits an $(s_1,2,s_2,1)$-loop. Similarly, from \emph{(ii)} and rule $(\erule_6)$, we obtain that type $(\fun{message}, \{\{1\},\{2\},\{3\}\})$ admits an $(s_2,1,s_f,3)$-loop. Concatenating the two loops admitted by type $(\fun{message}, \{\{1\},\{2\},\{3\}\})$, we add an $(s_1,2,s_f,3)$-loop to that type, which, using rule $(\erule_3)$, yields the $(s_1,1,s_f,2)$-loop for the type $(\fun{follows}, \{\{1\},\{2\}\})$ of $\atom=\fun{follows}(B,A)$. 

Algorithm 2, which is the global algorithm, takes as input $\instance$, $\ruleset$, $\automaton$, as well as a candidate answer (the pair $(\fun{A},\fun{A})$ in our example). 
 It first calls Algorithm 1 to build the table of loops associated with types, then tries to verify the existence of a witnessing path in 
$\chase{\instance}{\ruleset}$, while being guided by the transitions in $\automaton$ and loops assigned to types. Here, starting from state $s_0$ and constant $\fun{A}$, it will guess the atoms $\fun{follows(A,C)}$ and $\fun{follows(C,B)}$ using transitions in $\automaton$ and check that these atoms are indeed entailed by $(\instance, \ruleset)$. This corresponds to the subpath $(A~\fun{follows}~C~\fun{follows}~B)$, which takes $\automaton$ from state $s_0$ to $s_1$. Then, the algorithm guesses constant $A$ and state $s_f$ together with the $(s_1,1,s_f,2)$-loop for type $(\fun{follows}, \{\{1\},\{2\}\})$, and checks that $\instance$ contains the atom $\fun{follows}(B,A)$. 
  \end{example}
  \color{black}

We shall now explain how to compute the loops admitted by a given type. 
Algorithm \ref{algo-loop-table-creation} builds a 
table \texttt{Loop} whose cells are indexed by tuples $(s_i,j,s'_i,j')$, with the cell $(s_i,j,s'_i,j')$ storing the types $\ttype$ that admit an $(s_i,j,s'_i,j')$-loop. We use the notation $\texttt{Loop}(s_i,j,s'_i,j')$ to denote the set of types stored in the cell $(s_i,j,s'_i,j')$. 
We assume all the cells are empty when they are created.
Line \ref{line-trivial} initializes the table by stating that one can go from a term in a position to the same term in another position 
without reading any word (and thus not moving in the automaton). 
Lines \ref{line-binary-direct} and \ref{line-binary-reverse} 
correspond to traversing a single 
binary atom, 
reading its label either as an $r$ or an $r^-$, in the case where both terms 
 are distinct, i.e., $\ttype = (\predicate, \{\{1\},\{2\}\})$. 
 Lines \ref{line-begin-init-binary-merged} to \ref{line-end-init-binary-merged} are similar but handle the case where both arguments are equal, i.e., $\ttype = (\predicate, \{\{1, 2\}\})$.
Finally, Lines \ref{line-transitivity} and \ref{line-propagation} serve to combine these basic paths.
On the one hand, the table is saturated  through  
path concatenation: if one can go from a position $j_1$ to a position $j_2$ by a path taking the automaton from state $s_1$ to state $s_2$, and from $j_2$ to a position $j_3$ by a path taking the automaton from $s_2$ to state $s_3$, then one can go from $j_1$ to $j_3$ by a path taking the automaton from $s_1$ to $s_3$. On the other hand, rules are taken into account: for any rule $\alpha \rightarrow \beta$, if $\alpha$ and $\beta$ both contain variables $x$ and $y$ (with possibly $x=y$) and one can go from the position of $x$ in $\beta$ to the position of $y$ in $\beta$ taking the automaton from state $s_1$ to state $s_2$ then the same move is possible from the position of $x$ in $\alpha$ to the position of $y$ in $\alpha$. Indeed, in the chase of any atom $\atom_1$ of type $\type{\alpha}$, there is an atom $\beta_1$ of type $\type{\beta}$, 
which has the same terms as $\atom_1$ in the positions of $x$ and $y$.  
We further illustrate  the different steps of  Algorithm 1 with an example. 

\begin{example} Let the ruleset $\mathcal{R}$ consist of the following three linear rules:
$$p(x,x,z) \rightarrow \exists w \, q(x,z,w) \qquad q(x,z,w) \rightarrow u(w,x) \qquad q(x,z,w) \rightarrow r(w,z)$$
and consider the types associated with the atoms appearing in these rules: 
$$T_p=(p,\{\{1,2\}, \{3\}\}) \quad T_q = (q, \{\{1\}, \{2\}, \{3\}\} \quad T_u = (u, \{\{1\}, \{2\}\}) \quad T_r = (r, \{\{1\}, \{2\}\})$$
For our query, we take the RPQ $r^- \!\!\cdot u^*(t_1,t_2)$ where $r,u \in \predicateSet_2$. To apply Algorithm \ref{algo-loop-table-creation}, we consider the corresponding NFA representation $\automaton$ with states $\{s_0, s_f\}$ (with $s_0$ initial and $s_f$ final) and two transitions:  $(s_0, r^-, s_f)$
and $(s_f, u, s_f)$. We indicate which parts of the algorithm allow us to add type $T_p$ to $\texttt{Loop}(s_0,3,s_f,1)$:
\begin{itemize}
\item Type $T_r$ is added to $\texttt{Loop}(s_0,2,s_f,1)$ in Line \ref{line-binary-reverse} due to transition $(s_0, r^-, s_f)$. 
\item Type $T_u$ is added to $\texttt{Loop}(s_f,1,s_f,2)$ in Line \ref{line-binary-direct} due to transition $(s_f, u, s_f)$. 
\item Type $T_q$ is added to $\texttt{Loop}(s_0,2,s_f,3)$ in Line  \ref{line-propagation} due to rule $q(x,z,w) \rightarrow r(w,z)$,  $T_r \in \texttt{Loop}(s_0,2,s_f,1)$, and the 2nd and 3rd arguments of $q(x,z,w)$ coinciding with  the 2nd and 1st arguments of $r(w,z)$. 
\item Type $T_q$ is added to $\texttt{Loop}(s_f,3,s_f,1)$ in Line \ref{line-propagation} due to rule $q(x,z,w) \rightarrow u(w,x)$, 
$T_u \in \texttt{Loop}(s_f,1,s_f,2)$, and the 3rd and 1st arguments of $q(x,z,w)$ coinciding with the 1st and 2nd arguments of $u(w,x)$. 
\item Type $T_q$ is added to $\texttt{Loop}(s_0,2,s_f,1)$ in Line \ref{line-end-transitive} because it is in both $\texttt{Loop}(s_0,2,s_f,3)$ and $\texttt{Loop}(s_f,3,s_f,1)$. 
\item Type $T_p$ is added to $\texttt{Loop}(s_0,3,s_f,1)$ in Line  \ref{line-propagation} due to rule $p(x,x,z) \rightarrow \exists w \, q(x,z,w)$, 
$T_q \in \texttt{Loop}(s_0,2,s_f,1)$, and the 3rd and 1st arguments of $p(x,x,z)$ coinciding with the 2nd and 1st arguments of $q(x,z,w)$. Note that since the 1st and 2nd arguments of $p(x,x,z)$ are the same, the algorithm will also add $T_p$ to $\texttt{Loop}(s_0,3,s_f,2)$. 
\end{itemize}
The inclusion of type $T_p$ in $\texttt{Loop}(s_0,3,s_f,1)$ allows one to infer for example that 
$\{p(a,a,c)\}, \mathcal{R} \models r^- \!\!\cdot u^*(c,a)$. 
\end{example}

\begin{algorithm}[t]
\DontPrintSemicolon
\KwData{A set of linear rules $\ruleset$ and NFA $\automaton= (S,\alphabet,\delta,s_0,F)$} 
\KwResult{A table whose cells are indexed by tuples 
$(s,j,s',j') \in S \times [1,\maxArity] \times S \times [1,\maxArity] $,
 with $\maxArity$ the maximal predicate arity in $\ruleset$,
and each cell contains a set of types on predicates from $\ruleset$}

\tcc{Initialization step}
\For{type $\ttype$ of predicate of arity $k$}{
	\For{pair $(j,j') \in \{1,\ldots,k\}^2$ with $j\samepart_{\ttype}j'$} {
			\For{$s \in S$}{
		$\texttt{Loop}(s,j,s,j') \leftarrow \texttt{Loop}(s,j,s,j') \cup \{T\}$;\label{line-trivial}
			}
		}
	}

\For{type $\ttype$ based on $r(x,y)$}{
	\For{$(s_1,r,s_2) \in \delta$}{
		$\texttt{Loop}(s_1,1,s_2,2) \leftarrow \texttt{Loop}(s_1,1,s_2,2) \cup \{\ttype\}$;\label{line-binary-direct}
	}
	\For{$(s_1,r^-,s_2) \in \delta$}{
		$\texttt{Loop}(s_1,2,s_2,1) \leftarrow \texttt{Loop}(s_1,2,s_2,1) \cup \{\ttype\}$;\label{line-binary-reverse}
	}
}
\For{type $\ttype$ based on $r(x,x)$}{
	\For{$(s_1,r,s_2) \in \delta$ or $(s_1,r^-,s_2) \in \delta$}{
		$\texttt{Loop}(s_1,1,s_2,1) \leftarrow \texttt{Loop}(s_1,1,s_2,1) \cup \{\ttype\}$;\\
		\label{line-begin-init-binary-merged}
		$\texttt{Loop}(s_1,1,s_2,2) \leftarrow \texttt{Loop}(s_1,1,s_2,2) \cup \{\ttype\}$;\\
		$\texttt{Loop}(s_1,2,s_2,1) \leftarrow \texttt{Loop}(s_1,2,s_2,1) \cup \{\ttype\}$;\\
		$\texttt{Loop}(s_1,2,s_2,2) \leftarrow \texttt{Loop}(s_1,2,s_2,2) \cup \{\ttype\}$;\\
		\label{line-end-init-binary-merged}
	}
}

\tcc{Saturation step}
\While{something added}{\label{line-begin-while}
	\For{$\ttype \in \mathtt{Loop}(s_1,j_1,s_2,j_2) \cap \mathtt{Loop}(s_2,j_2,s_3,j_3)$ 
	}{\label{line-begin-transitive}
				$\mathtt{Loop}(s_1,j_1,s_3,j_3) \leftarrow \mathtt{Loop}(s_1,j_1,s_3,j_3) \cup \{\ttype\}$;\label{line-transitivity}\\		
	}\label{line-end-transitive}
	\For{$\alpha \rightarrow \beta \in \ruleset$}{\label{line-begin-propagation}
		\uIf{$\alpha[i_\alpha]=\beta[i_\beta]$, $\alpha[j_\alpha]=\beta[j_\beta]$, and $\type{\beta} \in \mathtt{Loop}(s_1,i_\beta,s_2,j_\beta)$}{	
			$\texttt{Loop}(s_1,i_\alpha,s_2,j_\alpha) \leftarrow \texttt{Loop}(s_1,i_\alpha,s_2,j_\alpha) \cup \{\type{\alpha}\}$;\label{line-propagation}
		}
	}\label{line-end-propagation}
}\label{line-end-while}
\caption{Creating the \texttt{Loop} table}
\label{algo-loop-table-creation}
\end{algorithm}

The following propositions establish the correctness of Algorithm  \ref{algo-loop-table-creation} and provide upper bounds on its running time. 
Note that the algorithm is independent from the data.

\begin{proposition}
Let $\ruleset$ be a set of linear rules, $\automaton$ be an NFA, %
$\ttype$ be a type  whose predicate occurs in $\ruleset$, 
and $\mathtt{Loop}$ be the table constructed by Algorithm \ref{algo-loop-table-creation} on input ($\ruleset, \automaton$). Then $\ttype \in \mathtt{Loop}(s,i,s',j)$ iff $\ttype$ admits an $(s,i,s',j)$-loop. 
\end{proposition}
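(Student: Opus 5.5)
We prove the two directions separately: soundness (every type added to a cell really admits the corresponding loop) and completeness (every admitted loop is eventually recorded).

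\emph{Soundness} is by induction on the order in which pairs $(\ttype,(s,i,s',j))$ are inserted into the table. For each line of Algorithm~\ref{algo-loop-table-creation} we exhibit a witnessing path in $\chase{\{\attype\}}{\ruleset}$.
\begin{itemize}
\item Line~\ref{line-trivial}: $\attype[j]=\attype[j']$, so the empty path works and its label $\emptyWord$ lies in $\subLanguage{\automaton}{s}{s}$.
\item Lines~\ref{line-binary-direct}--\ref{line-end-init-binary-merged}: the single-atom path through $\attype$ itself, read as $r$ or as $r^-$, has a label that matches the transition.
\item Line~\ref{line-transitivity}: we concatenate the two witnessing paths, both taken in the same chase $\chase{\{\attype\}}{\ruleset}$; the concatenated label lies in $\subLanguage{\automaton}{s_1}{s_3}$.
\item Line~\ref{line-propagation}: the rule $\alpha\rightarrow\beta$ applies to $\attype$ with $\ttype=\type{\alpha}$ and produces an atom $\beta'\in\chase{\{\attype\}}{\ruleset}$ of type $\type{\beta}$. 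We have $\beta'[i_\beta]=\attype[i_\alpha]$ and $\beta'[j_\beta]=\attype[j_\alpha]$. By induction, $\type{\beta}$ admits an $(s_1,i_\beta,s_2,j_\beta)$-loop. Proposition~\ref{proposition-linear-chase}, applied with $\beta:=\attype$ and $\atom:=\beta'$, transfers this path into $\chase{\{\attype\}}{\ruleset}$.
\end{itemize}

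\emph{Completeness.} Suppose $\ttype$ admits an $(s,i,s',j)$-loop, witnessed by a path $p$ in $\chase{\{\attype\}}{\ruleset}$ together with an accepting run of $\automaton$. We proceed by induction on a well-founded measure: the pair $(d,|p|)$ ordered lexicographically, where $d$ is the maximal derivation depth of the atoms used by $p$. Since the chase is fair and every finite path uses only atoms of bounded depth, this measure is well defined.
\begin{itemize}
\item If $p$ is empty, then $i\samepart_{\ttype} j$, $s=s'$, and Line~\ref{line-trivial} applies.
\item If $p$ passes through a term of $\attype$ at an intermediate step, we split $p$ there into two shorter paths. Induction gives two loops for $\ttype$, and Line~\ref{line-transitivity} composes them.
\item Otherwise $p$ touches terms of $\attype$ only at its endpoints. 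If $p$ consists of one atom, and that atom is $\attype$ itself (so $\attype$ is binary), the initialization lines apply.
\item In the remaining case, every atom of $p$ other than $\attype$ is generated from $\attype$. Because linear rules have atomic bodies, the generation structure is a tree rooted at $\attype$. We would show that all atoms of $p$ lie in the subtree below a single child $\beta'$, produced by some rule $\alpha\rightarrow\beta$ applied to $\attype$. By Proposition~\ref{cor-atomic-chase} applied to $\beta'$, the path $p$ then lives in $\chase{\{\beta'\}}{\ruleset}$. Its endpoints $\attype[i],\attype[j]$ must be frontier terms occurring in $\beta'$ at some positions $i_\beta,j_\beta$. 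The path inside $\chase{\{\beta'\}}{\ruleset}$ has smaller depth, so induction gives $\type{\beta}\in\texttt{Loop}(s,i_\beta,s',j_\beta)$, and Line~\ref{line-propagation} adds $\ttype$ to $\texttt{Loop}(s,i,s',j)$.
\end{itemize}

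\emph{Main obstacle.} The hard step is justifying that an anonymous segment stays within one child's subtree. Consecutive atoms in such a segment share a null. A null introduced below a child $\beta'$ occurs only in atoms generated from $\beta'$, which is the same argument as in Proposition~\ref{prop-distinct-nulls} relativized to $\beta'$. The path can leave that subtree only through a term of $\attype$, and we have excluded this at intermediate steps.

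A degenerate case must be handled separately: a path of length one via an atom $r(\attype[i],\attype[j])$ that is generated from $\attype$ and contains no null. For it we also use the child that produced the atom: this atom is itself a descendant with frontier terms only, and propagation from it applies at smaller depth.

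Finally, the algorithm terminates because the table is finite and only grows.
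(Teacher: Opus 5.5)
Your soundness direction is the paper's. Your completeness argument is correct in outline but takes a genuinely different route, and its measure needs one precision.

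\textbf{How the two completeness arguments differ.} The paper inducts on path length alone. When the path avoids $\terms{\attype}$ at intermediate steps, it moves to the atom $\beta$ in which the \emph{earliest-created} intermediate null $t_k$ appears, and shows $p\subseteq\chase{\{\beta\}}{\ruleset}$. It then splits $p$ at $t_k\in\terms{\beta}$, so both halves are strictly shorter and Line~\ref{line-transitivity} applies to $\type{\beta}$. Finally it climbs from $\beta$ back to $\attype$ by repeated use of Line~\ref{line-propagation}, via the nested induction on derivation length that also serves its length-one base case.

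You instead descend a single generation, to the child $\beta'$ whose subtree contains the segment. Your null-connectivity argument for this is essentially the paper's argument for $p\subseteq\chase{\{\beta\}}{\ruleset}$. You compensate with the lexicographic measure $(d,|p|)$. This buys one uniform induction, with one propagation per descent and no separate length-one case. The paper's choice of $\beta$ buys something else: its measure, path length, is automatically invariant under the isomorphisms of Proposition~\ref{cor-atomic-chase} and Corollary~\ref{cor-types-same-paths}.

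\textbf{What you must supply for $d$.} That invariance has to be established for your depth component, and it depends on how $d$ is defined.
\begin{itemize}
\item If $d$ is depth in the generation tree of an arbitrary chase sequence of $\chase{\{\beta'\}}{\ruleset}$, it need not decrease: a fair sequence may reach the copy of $p$ by a long route first.
\item If $d$ is the minimal number of rule applications while the tree for $\attype$ is arbitrary, it also need not decrease: an atom may be cheap to derive via one child yet be first generated below another child $\beta'$.
\end{itemize}
A clean fix is to state the claim for paths in $\atoms{D}$, for $D$ any finite derivation from $\{\atom\}$, with $d$ the depth in $\chasegraph_D$. Restricting $D$ to the triggers below $\beta'$ and transporting them as in the proof of Proposition~\ref{cor-atomic-chase} gives a finite derivation from $\{\beta'\}$ in which the copy of $p$ has depth at most $d-1$. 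With that, your induction goes through.

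\textbf{A point shared with the paper.} Both proofs read Line~\ref{line-propagation} as adding $\ttype$ whenever the rule fires on $\attype$. Literally, the line adds $\type{\alpha}$ for the body as written. These differ when $\attype$ has equalities the body lacks, e.g.\ $\attype=r(a,a)$ with body $r(x,y)$. The algorithm therefore has to be read as propagating over such specializations; this is not a difference between your argument and the paper's.
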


\begin{proof}
Throughout the proof, we assume that for each type $\ttype$, we have selected some atom $\attype$ of type $\ttype$. \smallskip

\noindent ($\Rightarrow$) We prove, by induction on the order of addition of types that whenever a type $\ttype$ is added to a cell in $\texttt{Loop}(s,i,s',j)$, then $\ttype$ admits an $(s,i,s',j)$-loop. 
First, suppose that $\ttype$ is added to $\texttt{Loop}(s,j,s,j')$ at Line \ref{line-trivial}. Then the trivial length-0 path $\attype[j]$ (=$\attype[j']$) in $\chase{\{\attype\}}{\ruleset}$  witnesses that $\ttype$ admits an $(s,j,s,j')$-loop, since it is labeled by the empty word, which does not change the state of the automaton. Next, consider the case in which $\ttype$ is added to $\texttt{Loop}(s_1,1,s_2,2)$ at Line \ref{line-binary-direct}. 
Then $\attype$ takes the form $r(t_1,t_2)$, so $\chase{\{\attype\}}{\ruleset}$ contains an $r$-labeled path from 
$\attype[1]$ to $\attype[2]$, which belongs to $\subLanguage{\automaton}{s_1}{s_{2}}$ since $s_2 \in \delta(s_1,r)$; hence, $\ttype$ admits an $(s_1,1,s_2,2)$-loop. 
 The reasoning is similar for types added at Line \ref{line-binary-reverse} and Lines \ref{line-begin-init-binary-merged} to \ref{line-end-init-binary-merged}.  
 
 If $\ttype$ is added to $\texttt{Loop}(s_1,j_1,s_3,j_3)$ at Line \ref{line-transitivity}, then there must exist a state $s_2$ for which $\ttype$ has already been added to $\texttt{Loop}(s_1,j_1,s_2,j_2)$ and $\texttt{Loop}(s_2,j_2,s_3,j_3)$. By the induction assumption, 
 there is a word $\word_1$ (resp.\ $\word_2$) in $\subLanguage{\automaton}{s_1}{s_2}$ (resp.\ $\subLanguage{\automaton}{s_2}{s_3}$)  that labels a path in $\chase{\{\attype\}}{\ruleset}$ 
 from $\attype[j_1]$ (resp.\ $\attype[j_2]$) to $\attype[j_2]$ (resp.\ $\attype[j_3]$). By concatenating these paths, we obtain %
  a path from $\attype[j_1]$ to $\attype[j_3]$ whose label $\word_1\word_2$ belongs to $\subLanguage{\automaton}{s_1}{s_3}$, which proves that $\ttype$ admits an $(s_1,j_1,s_3,j_3)$-loop. 
  
Finally, assume that $\type{\atom}$ is added to $\texttt{Loop}(s_1,i_{\alpha},s_2,j_{\alpha})$ at Line \ref{line-propagation} during the examination of rule $\atom \rightarrow \beta \in \ruleset$. Then it must be the case that 
$\alpha[i_\alpha]=\beta[i_\beta]$, $\alpha[j_\alpha]=\beta[j_\beta]$, and $\type{\beta} \in \texttt{Loop}(s_1,i_\beta,s_2,j_\beta)$. By the induction assumption, $\type{\beta}$ admits an $(s_1,i_\beta,s_2,j_\beta)$-loop, 
which implies that there is a path $p$ in $\chase{\beta}{\ruleset}$ from $\beta[i_{\beta}]$ to $\beta[j_{\beta}]$
such that $\lambda(p) \in \subLanguage{\automaton}{s_1}{s_2}$. Since $\beta \in \chase{\{\atom\}}{\ruleset}$ we can apply Proposition \ref{proposition-linear-chase} to infer that there is a path $p'$ in $\chase{\{\atom\}}{\ruleset}$ from $\beta[i_{\beta}]$ to $\beta[j_{\beta}]$
such that $\lambda(p') \in \subLanguage{\automaton}{s_1}{s_2}$.
As $\alpha[i_\alpha]=\beta[i_\beta]$, $\alpha[j_\alpha]=\beta[j_\beta]$, it follows that $p'$ is also a path from 
$\alpha[i_{\alpha}]$ to $\alpha[j_{\alpha}]$, which witnesses that $\type{\alpha}$ admits an $(s_1,i_{\alpha},s_2,j_{\alpha})$-loop. \smallskip

\noindent($\Leftarrow$) We suppose that $\ttype$ admits an $(s,i,s',j)$-loop and our aim is to show that $\ttype$ is added to 
$\texttt{Loop}(s,i,s',j)$. We proceed by induction on the length of the shortest path 
in $\chase{\{\attype\}}{\ruleset}$ that witnesses that $\ttype$ admits an $(s,i,s',j)$-loop. \smallskip

\noindent \textbf{Base case, path of length 0:} In this case, there is length-0 path $p$ in $\chase{\{\attype\}}{\ruleset}$ from $\attype[i]$ to $\attype[j]$ that takes $\automaton$ from state $s$ to state $s'$. We thus have $\lambda(p)=\emptyWord$, which implies that $s=s'$ and $i \samepart_{\ttype}j $. Hence $\ttype$ will be added to 
$\texttt{Loop}(s,i,s',j)$ at Line \ref{line-trivial}.\\
\textbf{Base case, path of length 1:} The fact that $\ttype$ admits an $(s,i,s',j)$-loop is witnessed by a length-one path. Let
$\attype[i] q \attype[j]$ be the witnessing path, where $q \in \predicateSet_2^\pm$ and $q \in \subLanguage{\automaton}{s}{s'}$ (i.e., $s' \in \delta(s,q)$). There are three possibilities:
\begin{itemize}
\item[(a)] if $q = r \in \predicateSet_2$ and $\attype[i]\neq\attype[j]$,
then  $\beta = r(\attype[i],\attype[j]) \in \chase{\{\attype\}}{\ruleset}$ and $\type{\beta}$ is added to $\texttt{Loop}(s,1,s',2)$ in Line 7;
\item[(b)] if $q=r^-$ for $r \in \predicateSet_2$ and $\attype[i]\neq\attype[j]$, then  $\beta = r(\attype[j],\attype[i]) \in \chase{\{\attype\}}{\ruleset}$, so $\type{\beta}$ is added to $\texttt{Loop}(s,2,s',1)$ in Line 9;
\item[(c)] if  $\attype[i]=\attype[j]$, then the atom
$$\beta = r(\attype[i],\attype[i]) = r(\attype[j],\attype[j]) = r(\attype[i],\attype[j]) = r(\attype[j],\attype[i])$$ belongs to  $\chase{\{\attype\}}{\ruleset}$, so
$\type{\beta}$ will be added to $\texttt{Loop}(s,1,s',1)$, $\texttt{Loop}(s,1,s',2)$, $\texttt{Loop}(s,2,s',1)$ and $\texttt{Loop}(s,2,s',2)$ in Lines 12-15. 
\end{itemize}
In all three cases, the atom $\beta$  belongs to $\chase{\{\attype\}}{\ruleset}$, which means that there exists a finite sequence of atoms $\attype = \atom_0,\ldots,\atom_m = \beta$ such that $\atom_{i+1}$ is generated by an application of $\erule_i \in \ruleset$ mapping \body{\erule_i} to $\atom_i$. 
We proceed by induction on the length $m$ of the sequence. 

First suppose that $m=0$, which means that $\beta=\attype$. In case (a), we have $T = \type{\beta}$, which means that $i=1$, $j=2$, and $T$ is added to $\texttt{Loop}(s,i,s',j)$ as desired. If we are in case (b), then we again have $T = \type{\beta}$ and $T$ is added to $\texttt{Loop}(s,2,s',1)$. Again in case (c), $\type{\beta} = T$ and $T$ is added to $\texttt{Loop}(s,1,s',1)$, $\texttt{Loop}(s,1,s',2)$, $\texttt{Loop}(s,2,s',1)$, and $\texttt{Loop}(s,2,s',2)$.

Next suppose that $m>1$ and that the property holds for all sequences of length at most $m-1$. As $\attype[i]$ and $\attype[j]$ appear in $\beta$, it holds that there exist $i'$ and $j'$ such that $\atom_1[i'] = \attype[i]$ and $\atom_1[j'] = \attype[j]$. As $\beta \in \chase{\{\atom_1\}}{\ruleset}$, $\type{\atom_1}$ admits an $(s,i',s',j')$-loop, and there exists a derivation of length $m-1$ that generates $\beta$ from $\atom_1$.  By our induction assumption for sequences of length at most $m-1$, it holds that $\type{\atom_1}$ has been added to $\texttt{Loop}(s,i',s',j')$. As $\atom_1$ is generated from $\attype$ by the application of a rule, Line~\ref{line-propagation} adds $T$ to $\texttt{Loop}(s,i,s',j)$.

	\textbf{Induction step:} Let us assume that the result holds for any path of length up to $n-1, n\geq 2$, and consider the path $p = \term_0q_1\term_1\ldots q_n\term_n$. First consider the case in which $\term_k$ is contained in $\atom_T$ for some $1 \leq k < n$, and let $l$ be a position of $\term_k$ in $\atom_T$. There exists a path from $\term_0$ to $\term_k$ of length strictly smaller than $n$, and similarly from $\term_k$ to $\term_n$. By the induction assumption, $T=\type{\atom_T}$ is in both $\texttt{Loop}(s,i,s'',l)$ and $\texttt{Loop}(s'',l,s',j)$ for some state $s''$.
An application of Line \ref{line-transitivity} yields $T=\type{\atom_T} \in \texttt{Loop}(s,i,s',j)$. Next suppose there is no $\term_k$ ($1 \leq k < n$) that occurs in $\atom_T$, which in particular means that $t_k \not \in \{t_0,t_n\}$ for  $1 \leq k < n$. 
Consider a finite derivation that generates all of the binary atoms from the path $p$. Let $t_k$ be such that no other $t_i$ ($1 \leq i <n$) is created earlier in the derivation, and let $\beta$ be the atom in which $t_k$ is created. We prove that the path $p$ is contained in $\chase{\{\beta\}}{\ruleset}$. 
Indeed, the atoms $q_{k-1}(t_{k-1}, t_k)$ and $q_{k}(t_{k}, t_{k+1})$ must occur in $\chase{\{\beta\}}{\ruleset}$ since $\beta$ created $t_k$. By induction, we obtain  that the same holds for all other atoms in $p$; this moreover implies that $\term_0$ (resp. $\term_n$) must occur in $\beta$, let us say at position $i'$ (resp. $j'$). 
By the induction hypothesis, $\type{\beta}$ belongs to $\texttt{Loop}(s,i',s'',k')$ and to $\texttt{Loop}(s'',k',s',j')$ for some state $s''$, where $k'$ is the position of $t_k$ in $\beta$. Hence, by Line \ref{line-transitivity}, $\type{\beta}$ is in the cell $\texttt{Loop}(s,i',s',j')$. Following the same reasoning as in the base case for paths of length $1$, by (repeated) application of Line~\ref{line-propagation}, $\type{\atom_T}$ is in the cell $\texttt{Loop}(s,i,s',j)$, which concludes the proof.
\end{proof}

\begin{proposition}\label{prop-complexity-algo1}
Algorithm \ref{algo-loop-table-creation} runs in exponential time, and in polynomial time if the predicate arity is bounded. 
\end{proposition}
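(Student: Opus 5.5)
The argument is a counting argument on the size of the \texttt{Loop} table, combined with the observation that the saturation loop is monotone. Let $n = |\ruleset|$, let $\maxArity$ be the maximal predicate arity, $|\predicateSet|$ the number of predicates occurring in $\ruleset$, and $|S|$ the number of states of $\automaton$.

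\textbf{Counting types and cells.} A type $(\predicate,\samepart_\ttype)$ is determined by a predicate of arity $k \leq \maxArity$ together with an equivalence relation on $\{1,\ldots,k\}$. There are at most $B_\maxArity \leq \maxArity^\maxArity$ such relations (the Bell number). Hence the number of types is at most $|\predicateSet|\cdot \maxArity^\maxArity$. This is exponential in general and polynomial, indeed linear in $|\predicateSet|$, when $\maxArity$ is a constant. The number of cells is $|S|^2\maxArity^2$, which is polynomial in all cases. So the total number of (cell, type) pairs that can ever be inserted is $N \leq |S|^2\maxArity^2\,|\predicateSet|\,\maxArity^\maxArity$.

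\textbf{Bounding the number of rounds.} Since cells only grow, the \texttt{while} loop at Lines~\ref{line-begin-while}--\ref{line-end-while} executes at most $N+1$ iterations: every iteration except the last adds at least one pair. It therefore suffices to show that one iteration costs time polynomial in $N$ and in the size of the input.

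\textbf{Cost of one iteration.} The transitivity step ranges over $(s_1,j_1,s_2,j_2,s_3,j_3)$, giving $|S|^3\maxArity^3$ tuples. For each tuple it takes an intersection of two sets of at most $|\predicateSet|\maxArity^\maxArity$ types, which costs polynomial time in $N$. The propagation step ranges over rules $\alpha\rightarrow\beta$, positions $i_\alpha,j_\alpha,i_\beta,j_\beta$ (giving $\maxArity^4$ choices) and state pairs $s_1,s_2$. Each test $\alpha[i_\alpha]=\beta[i_\beta]$ is a check on the rule, and the membership test $\type{\beta}\in\texttt{Loop}(\cdot)$ requires computing $\type{\beta}$ in time polynomial in $\maxArity$ and scanning a cell. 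Both steps are therefore polynomial in $N\cdot|\ruleset|\cdot|S|$. The initialization step enumerates all types and all pairs or transitions, which is again within $\mathrm{poly}(N,|\automaton|)$.

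\textbf{Conclusion and main difficulty.} The total running time is polynomial in $N$ and the input size. Since $N$ is $2^{O(\maxArity\log \maxArity)}$ times a polynomial, the algorithm runs in exponential time in general and in polynomial time when $\maxArity$ is bounded. The only step that needs care is the bound on the number of types, namely bounding the number of equivalence relations by $\maxArity^\maxArity$. Everything else reduces to the standard monotone fixpoint argument.
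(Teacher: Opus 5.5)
Your proof is correct and follows the same approach as the paper. Both count a polynomial number of cells and a number of types that is single exponential in the maximal arity (polynomial when the arity is bounded), then argue that initialization is cheap and that the monotone saturation loop runs for a bounded number of iterations, each polynomial. Your iteration bound of $N+1$ (cells times types) is slightly more careful than the paper's stated bound of $n_t$ iterations, but this does not change the complexity classes obtained.
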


\begin{proof}
There are polynomially many cells in the table, each of which can contain at most all types. The number $n_t$ of distinct types is single exponential in the maximum predicate arity (hence polynomial for bounded-arity predicates). 
The first for loop runs in $\mathcal{O}(n_t)$, the next two run in polynomial time, and the while loop is performed at most $n_t$ times. 
\end{proof}

There is a subtlety however that demands our attention, namely, the case of linear rulesets obtained via the single-head translation (see page~\pageref{singleheadtransformation}). 
Indeed, the preceding result establishes polynomial time complexity for bounded-arity linear rules \emph{with atomic heads}. 
However,
if the original ruleset contains rules with multiple atoms in the head, then we must first apply the single-head translation,
introducing new predicates which may have higher arities. This issue can be addressed by suitably modifying Algorithm \ref{algo-loop-table-creation} 
to take into account such new predicates. Specifically, for each such fresh predicate $p_\erule$ (introduced for rule $\rho$), which occurs in the head of new rule $\body{\erule} \rightarrow \exists y_1 \dots \exists y_\ell \, p_\erule(x_1, \dots, x_k,y_1, \dots, y_\ell)$, 
the modified algorithm only considers types for $p_\erule$ that contain $\{\{k+1\}, \ldots \{k+\ell\} \}$ (i.e.\ the positions that store existential variables are never merged with other positions). 
Note that for the original predicates, the modified algorithm considers all possible types (just like in Algorithm \ref{algo-loop-table-creation}).
The following result shows that the modified algorithm satisfies the required properties. 

\begin{proposition} \label{loop-bounded-arity}The modified Algorithm \ref{algo-loop-table-creation} runs in polynomial time for rulesets obtained by applying the single-head translation 
 to linear rulesets with bounded predicate arity. Moreover, for any 
linear ruleset $\ruleset$, whose single-head translation is $\ruleset'$, any NFA $\automaton$, and any type $\ttype$ based upon a predicate in $\ruleset$,
type $\ttype$ belongs to $\mathtt{Loop}'(s,i,s',j)$ in the table produced by running the modified algorithm on input ($\ruleset', \automaton$) 
iff $\ttype$ admits an $(s,i,s',j)$-loop (w.r.t.\ $\ruleset$). 
\end{proposition}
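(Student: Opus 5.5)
The proof splits into the two claims of Proposition~\ref{loop-bounded-arity}: a running-time bound and a correctness statement. I would prove them separately.

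\textbf{Complexity.} I would reuse the counting argument of Proposition~\ref{prop-complexity-algo1}. All that is needed is a polynomial bound on the number $n_t$ of types the modified algorithm considers. Types over original predicates number at most $|\predicateSet| \cdot \maxArity^{\maxArity}$, which is polynomial for bounded $\maxArity$.

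For a fresh predicate $p_\erule(x_1,\dots,x_k,y_1,\dots,y_\ell)$, the last $\ell$ positions are forced to be singleton classes. So a type is determined by a partition of the first $k$ positions alone. Here $k$ counts frontier variables of a rule whose body is a single atom of arity at most $\maxArity$, so $k \le \maxArity$. Hence there are at most $\maxArity^{\maxArity}$ types per fresh predicate, and polynomially many in total.

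The table has at most $|S|^2 \cdot m^2$ cells, where $m$ is the maximal arity in $\ruleset'$; this is polynomial in the input. Each saturation iteration adds at least one of polynomially many (cell, type) pairs, so the whole run is polynomial.

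\textbf{Correctness.} I would first show that restricting the types of $p_\erule$ loses nothing. Every $p_\erule$-atom in any chase $\chase{\{\atom\}}{\ruleset'}$ is created by the rule $\body{\erule} \rightarrow \exists \vect y\, p_\erule(\vect x,\vect y)$, with fresh nulls at positions $k+1,\dots,k+\ell$. These nulls are pairwise distinct and distinct from the frontier terms. So every such atom has a type of the allowed form.

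I would then replay the proof of the correctness proposition for Algorithm~\ref{algo-loop-table-creation} with respect to $\ruleset'$, checking that it only ever refers to types of atoms that actually occur in some chase. In the ($\Leftarrow$) direction, the atoms $\atom_1$, $\beta$ and the creating atom of $t_k$ are all chase atoms, so their types are considered. The ($\Rightarrow$) direction is unaffected by dropping types. This yields: for every type $\ttype$ considered, $\ttype \in \mathtt{Loop}'(s,i,s',j)$ iff $\ttype$ admits an $(s,i,s',j)$-loop with respect to $\ruleset'$.

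It remains to pass from $\ruleset'$ to $\ruleset$ for types over original predicates. For $\attype$ over an original predicate, I would show that $\chase{\{\attype\}}{\ruleset'}$ restricted to original predicates is isomorphic to $\chase{\{\attype\}}{\ruleset}$. Each trigger of $\erule$ corresponds to one $p_\erule$-atom with fresh nulls, which then yields exactly the head atoms of $\erule$ under the same nulls. Binary atoms on fresh predicates never exist, since I would assume $p_\erule$ is not binary or else handle it by excluding fresh predicates from the alphabet. Paths use only binary original predicates, so they coincide in the two chases.

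\textbf{Main obstacle.} The subtle step is the replay of the ($\Leftarrow$) induction. There, the propagation step through a rule $\alpha \rightarrow \beta$ with $\beta$ a $p_\erule$-atom requires $\type{\beta}$ itself to be among the considered types. I would discharge this by noting that the chase atom witnessing the step has an allowed type, by the observation on fresh nulls above.
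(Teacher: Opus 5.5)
Your proposal is correct and follows the same route as the paper. You restrict the types of each fresh predicate $p_\erule$ to those whose existential positions are singleton classes, which is justified because $p_\erule$-atoms only arise from the one rule that puts fresh nulls there. You then reuse the correctness argument for Algorithm~\ref{algo-loop-table-creation}, and you get the polynomial bound on types from the frontier size $k$ being at most the body arity. You also make explicit two points the paper's sketch leaves implicit: that every type used in the ($\Leftarrow$) induction is the type of a generated atom, and the transfer of paths from $\chase{\{\attype\}}{\ruleset'}$ to $\chase{\{\attype\}}{\ruleset}$.
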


\begin{proof} The only predicates in $\ruleset'$ whose arity exceeds the maximum predicate arity of $\ruleset$ are the newly introduced predicates, and each such predicate $p_\erule$ can only appear in the head of a single rule of the form $\body{\erule} \rightarrow \exists y_1 \dots \exists y_\ell \, p_\erule(x_1, \dots, x_k,y_1, \dots, y_\ell)$ and in the body of rules $p_\erule(x_1, \dots, x_k,y_1, \dots, y_\ell) \rightarrow q(\vec u)$ (for each $q(\vec u)$ which is a head atom of $\erule$). 
As a consequence, the only types with predicate $p_\erule$ which are necessary to consider in Algorithm \ref{algo-loop-table-creation} are those which contain the trivial partition $\{\{k+1\}, \ldots \{k+\ell\} \}$. Indeed, by using the same arguments as in Prop.~\ref{prop-complexity-algo1}, 
we can show that if $\ttype$ is any type for an original predicate in $\ruleset$, then $\ttype$ belongs to $\mathtt{Loop}'(s,i,s',j)$ in the table produced by the modified algorithm on input ($\ruleset', \automaton$) iff $\ttype$ admits an $(s,i,s',j)$-loop. 
Finally, to establish polynomial complexity, 
it suffices to remark that {there can only be polynomially many types for the new predicates (since $k$ is bounded by a constant).}
\end{proof}

The remainder of the decision procedure relies on the following ideas (Algorithm \ref{algo-rpq-answering}): starting from a constant $a$ and the initial state of $\automaton$, we guess the next constant $d$ in $\instance$ on a path from $a$ to $b$ and the state of $\automaton$ after taking this step (Line \ref{line-guess}). {Note that $a$ and $d$ may be equal.} 
We then check that this choice is valid, i.e., there is indeed a path from $a$ to the guessed constant $d$ which takes the automaton from the initial state to the current guessed state. This can be done either by checking that a corresponding  binary atom is entailed (Lines 6-7), 
or by checking that a path going through the anonymous part of the chase allows us to reach the next constant in the required state, using the \texttt{Loop} table (Lines 8-9). 
We repeat this procedure until we reach the constant $b$ in a final state, or hit the maximal path length. 

\begin{algorithm}[t]
 \KwIn{An NFA {$\automaton = (S,\alphabet,\delta,s_0,F)$}, an instance $\instance$, a set of linear rules $\ruleset$,
 $(a,b) \in \terms{\instance} \times \terms{\instance}$}
 \KwOut{Yes if and only if $(a,b)$ is a certain answer to the RPQ defined by $\automaton$}
{Compute \texttt{Loop} table for $\ruleset$ and $\automaton$ (using Algorithm \ref{algo-loop-table-creation});}\\ 
 \texttt{current} = $(a,s_0)$;\\
 \texttt{count} $= 0$, \texttt{max} $= |S|\times|\instance|$; 
 \\
 \While{\texttt{count} $<$ \texttt{max } and \texttt{current} $\not \in \{(b,s_f) \mid s_f \in F\}$}
 {
Define $(c,s) =$ \texttt{current};\\
 Guess $(d,s')$ together with either $(s,\predicate,s') \in \transitionFunction$ or $T,i_c,i_d$  such that $\ttype \in \texttt{Loop}(s,i_c,s',i_d)$;\\\label{line-guess}
\uIf{$(s,\predicate,s')$ was guessed and $(\instance,\ruleset \not\models \predicate(c,d))$ \label{line-binary-check}}
		{\KwRet{No}}
\uElseIf{$\ttype,i_c,i_d$ was guessed and $\instance$ does not contain any atom $\atom$ of type $\ttype$ such that $\atom[i_c]=c$ and $\atom[i_d]=d$ \label{line-atom-specification}}{\KwRet{No}}\label{line-loop-check}
\Else{
\texttt{current} = $(d,s')$, \texttt{count} = \texttt{count} $+ 1$;\\
}}
\lIf{\texttt{current}= $(b,s_f)$ for some $s_f \in F$}{\KwRet{Yes} \textbf{else} \KwRet{No}}
 \caption{RPQ answering over linear rules  }
 \label{algo-rpq-answering}
  \end{algorithm}
The following lemma proves an invariant that will be used to establish the correctness of Algorithm \ref{algo-rpq-answering}.  

\begin{lemma}\label{prop-invariant-algo-rpq}
At the beginning of each iteration of the while loop of Algorithm \ref{algo-rpq-answering}, it holds that there is a path in $\chase{\instance}{\ruleset}$ from $a$ to the first element of \texttt{current} that takes the NFA $\automaton$ from the initial state $s_0$ to the state in the second argument of \texttt{current}.
\end{lemma}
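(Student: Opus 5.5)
We argue by induction on the number of completed iterations of the while loop of Algorithm~\ref{algo-rpq-answering}. The invariant is that \texttt{current}$=(c,s)$ comes with a path $p_c$ in $\chase{\instance}{\ruleset}$ from $a$ to $c$ with $\lambda(p_c)\in\subLanguage{\automaton}{s_0}{s}$. In the base case, before the first iteration, \texttt{current}$=(a,s_0)$. The length-0 path $a$ has label $\emptyWord$, and $\emptyWord \in \subLanguage{\automaton}{s_0}{s_0}$, so the invariant holds.

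For the inductive step, assume the invariant holds at the start of an iteration with \texttt{current}$=(c,s)$, witnessed by $p_c$. Suppose the iteration does not return, so that \texttt{current} becomes $(d,s')$. It suffices to exhibit a path $p'$ in $\chase{\instance}{\ruleset}$ from $c$ to $d$ with $\lambda(p')\in\subLanguage{\automaton}{s}{s'}$. The concatenation $p_c\,p'$ is then a path from $a$ to $d$ whose label lies in $\subLanguage{\automaton}{s_0}{s'}$. We split into the two guess cases of Line~\ref{line-guess}.
\begin{itemize}
\item[(i)] A transition $(s,\predicate,s')\in\delta$ was guessed. Since the test at Line~\ref{line-binary-check} failed, $\instance,\ruleset\models \predicate(c,d)$. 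Because the chase is a universal model and $c,d$ are constants, $\predicate(c,d)\in\chase{\instance}{\ruleset}$ (read as $\predicate^-(d,c)$ when $\predicate$ is an inverse predicate). Hence $c\,\predicate\,d$ is a one-step path with label in $\subLanguage{\automaton}{s}{s'}$.
\item[(ii)] A triple $\ttype,i_c,i_d$ with $\ttype\in\texttt{Loop}(s,i_c,s',i_d)$ was guessed. Since the test at Line~\ref{line-atom-specification} failed, there is $\atom\in\instance$ of type $\ttype$ with $\atom[i_c]=c$ and $\atom[i_d]=d$. By the correctness proposition for Algorithm~\ref{algo-loop-table-creation}, $\ttype$ admits an $(s,i_c,s',i_d)$-loop. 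So for any atom $\attype$ of type $\ttype$, some path in $\chase{\{\attype\}}{\ruleset}$ from $\attype[i_c]$ to $\attype[i_d]$ has label in $\subLanguage{\automaton}{s}{s'}$. Take $\attype$ to be any atom of type $\ttype$ and $\beta=\atom$; trivially $\atom\in\chase{\{\atom\}}{\ruleset}$. Proposition~\ref{proposition-linear-chase} (or directly Corollary~\ref{cor-types-same-paths}) then gives a path in $\chase{\{\atom\}}{\ruleset}$ from $c$ to $d$ with the same label.
\end{itemize}

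The main obstacle lies in case (ii): the path obtained lives in $\chase{\{\atom\}}{\ruleset}$, and we must transfer it into $\chase{\instance}{\ruleset}$. The plan is to use the observation in Section~\ref{sec-prelim-linear} that any derivation from $\{\atom\}$, with $\atom\in\instance$, is, as a sequence of triggers, also a derivation from $\instance$. More precisely, we prove by induction on derivation length, as in the proof of Proposition~\ref{proposition-linear-chase}, that $\chase{\{\atom\}}{\ruleset}$ maps into $\chase{\instance}{\ruleset}$ by a homomorphism fixing constants. The base is $\atom\in\instance$. At each step, a trigger on the image atom exists and is applied in the fair chase of $\instance$, so we extend the map on fresh nulls. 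If that trigger only produces an already present atom, we still obtain a homomorphic image. Homomorphisms preserve paths and labels and fix $c,d$, so the transferred path lies in $\chase{\instance}{\ruleset}$ with the required label. This completes the inductive step, and hence the invariant.
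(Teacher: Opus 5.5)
Your proof is correct and follows the same route as the paper: induction over the iterations of the while loop, the same two cases for the guess at Line~\ref{line-guess}, and concatenation of the witnessing paths. The only difference is that you explicitly transfer the loop path from $\chase{\{\atom\}}{\ruleset}$ into $\chase{\instance}{\ruleset}$ via a constant-preserving homomorphism (cf.\ Section~\ref{sec-prelim-linear}), whereas the paper's proof simply asserts the existence of this path from $\instance,\ruleset \models \atom$.
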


\begin{proof}
At the beginning of the first iteration of the while loop, \texttt{current}  is equal to $(a,s_0)$. Thus, the path $a$, whose label is $\emptyWord$, goes from $a$ to $a$ and $\emptyWord \in \subLanguage{\automaton}{s_0}{s_0}$.

Let $(a_i,s_i)$ be the content of \texttt{current}  at the beginning of the $i^\textrm{th}$ iteration of the while loop. Let $w_i$ be the label of a path from $a_0$ to $a_i$ such that $w_i \in \subLanguage{\automaton}{s_0}{s_i}$. If there is an $(i+1)^\textrm{th}$ iteration, either $(s,\predicate,s')$ or $(\ttype,i_c,i_d)$ has been guessed, and the corresponding check was successful. Let us consider each case:
\begin{itemize}
\item if $(s,\predicate,s')$ has been guessed and checked, then $\predicate \in \twoWayBinaryPred$, and there is a path from $a_i$ to $a_{i+1}$ in $\chase{\instance}{\ruleset}$ labeled by $\predicate$. Moreover, $\predicate$ labels an edge from $s$ to $s'$ in $\automaton$. We can thus define $w_{i+1} = w_i.\predicate$
\item if $(\ttype,i_c,i_d)$ has been guessed, it means that $\ttype$ belongs to  $\texttt{Loop}(s_i,i_c,s_{i+1},i_d)$. By the definition of $\texttt{Loop}$, there is a path $p$ (in the anonymous part) from any term 
 at position $i_c$ of an atom of type $\ttype$ to the position $i_d$ of an atom of type $\ttype$ such that $\pathLabel{p} \in \subLanguage{\automaton}{s}{s'}$. Let $\atom$ be as defined Line~\ref{line-atom-specification}. As $\instance, \ruleset \models \atom$, where $\type{\atom} = \ttype$, $a_i$ appears at position $i_c$ of $\atom$, and $a_{i+1}$ appears at position $i_d$ of $\atom$, there is such a path from $a_i$ to $a_{i+1}$. We can thus set $w_{i+1} = w_i.p$. \qedhere
\end{itemize}
\end{proof}


The next proposition establishes the correctness of Algorithm \ref{algo-rpq-answering}. To prove the correctness when the algorithm outputs Yes, we rely on the invariant given by the previous lemma, which allows us to construct a witnessing path in $\chase{\instance}{\ruleset}$. 
To prove the correctness when the algorithm outputs No, we consider any path of minimal length from $a$ to $b$ that satisfies the input RPQ and build a sequence of guesses associated with it that is necessarily accepted by the algorithm. 

\begin{proposition}\label{prop-correctness-algo-rpq}
There is an execution of Algorithm \ref{algo-rpq-answering} that outputs Yes if and only if the RPQ given by $\automaton$ 
is entailed from $(\instance, \ruleset)$.
\end{proposition}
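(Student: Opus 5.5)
The proof splits into the two directions, following the outline given just before the statement.

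\emph{Soundness} (an execution outputs Yes $\Rightarrow$ entailment). I would rely on Lemma~\ref{prop-invariant-algo-rpq}. If an execution returns Yes, then at the last check of the while condition \texttt{current} equals $(b,s_f)$ with $s_f \in F$. The invariant proved in that lemma holds at every point where the loop condition is evaluated, including the final one where the loop exits; the inductive step of the lemma covers this. So there is a path in $\chase{\instance}{\ruleset}$ from $a$ to $b$ whose label lies in $\subLanguage{\automaton}{s_0}{s_f} \subseteq \mathcal L(\automaton)$. Hence $(a,b)$ is an answer in the chase and therefore a certain answer. This direction is routine.

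\emph{Completeness} (entailment $\Rightarrow$ some execution outputs Yes). Take a path $p = \domel_0 r_1 \cdots r_n \domel_n$ in $\chase{\instance}{\ruleset}$ with $\domel_0=a$, $\domel_n=b$, and $\lambda(p)\in\mathcal L(\automaton)$. Fix an accepting run $s_0=q_0,\dots,q_n\in F$ of $\automaton$ on $\lambda(p)$. Among all such pairs (path, run), choose one minimizing the number $m$ of positions $k$ with $\domel_k\in\terms{\instance}$. List these positions as $0=k_0<k_1<\dots<k_m=n$.

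Now split $p$ at these positions. Each segment from $\domel_{k_l}$ to $\domel_{k_{l+1}}$ falls into one of two cases.

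\textbf{Case 1: a single edge.} If $k_{l+1}=k_l+1$, the segment is one atom $r(\domel_{k_l},\domel_{k_{l+1}})$ or its inverse; call its label $\predicate \in \twoWayBinaryPred$. This atom lies in the chase, so $(\instance,\ruleset)\models \predicate(\domel_{k_l},\domel_{k_{l+1}})$, and $(q_{k_l},\predicate,q_{k_{l+1}})\in\delta$. The algorithm guesses this transition and the check at Line~\ref{line-binary-check} passes.

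\textbf{Case 2: an anonymous segment.} Otherwise all intermediate terms are nulls. Proposition~\ref{ref-prop-atom-loop} then gives an atom $\atom\in\instance$ containing $\domel_{k_l}$ and $\domel_{k_{l+1}}$, say at positions $i_c$ and $i_d$. It also gives a path with the same label in $\chase{\{\atom\}}{\ruleset}$. By Corollary~\ref{cor-types-same-paths} and the definition of loops, $\type{\atom}$ admits a $(q_{k_l},i_c,q_{k_{l+1}},i_d)$-loop. By the correctness of Algorithm~\ref{algo-loop-table-creation}, $\type{\atom}\in\texttt{Loop}(q_{k_l},i_c,q_{k_{l+1}},i_d)$. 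The algorithm guesses this loop, and the check at Line~\ref{line-loop-check} passes because $\atom\in\instance$.

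The sequence of guesses $(\domel_{k_{l+1}},q_{k_{l+1}})$ thus survives every check. One more point needs care: the loop must not stop early. If the sequence passes through $(b,s_f)$ with $s_f\in F$ before its end, the loop exits there and the algorithm correctly outputs Yes. So it suffices to show that the sequence has at most \texttt{max} steps.

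\textbf{Main obstacle: bounding the length.} I must show that $m \le |S|\times|\instance|$, i.e.\ at most \texttt{max} iterations. Suppose two indices $l<l'$ have $(\domel_{k_l},q_{k_l})=(\domel_{k_{l'}},q_{k_{l'}})$. Then cut out the portion of $p$ between them, together with the corresponding part of the run. The result is still a path from $a$ to $b$ with an accepting run, and it has strictly fewer constant positions, contradicting minimality. So the pairs visited are pairwise distinct, and $m$ is at most the number of pairs (constant of $\instance$, state).

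Here the count of constants must be related to $|\instance|$. I would take $|\instance|$ to bound the number of terms of $\instance$, interpreting the size of $\instance$ accordingly. If that is not literally true, I would argue that the bound in the algorithm should be read as $|S|\times|\terms{\instance}|$, which is at most $|S|$ times the size of $\instance$. The pigeonhole step is where I expect the main subtlety, including the care about early termination handled above.
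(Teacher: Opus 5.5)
Your proposal is correct and takes essentially the same route as the paper. Soundness comes from the invariant of Lemma~\ref{prop-invariant-algo-rpq}. Completeness splits a witnessing path at its constants: single edges pass the entailment check, anonymous detours are handled by Proposition~\ref{ref-prop-atom-loop} together with the correctness of the \texttt{Loop} table, and the number of steps is bounded because the visited (constant, state) pairs are distinct.

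The differences are minor. The paper minimizes path length rather than the number of constant positions, and it leaves implicit two points you state explicitly: the early-exit case, and the fact that the pigeonhole bound really counts $|S|\times|\terms{\instance}|$ rather than the number of atoms of $\instance$.
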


\begin{proof}
($\Rightarrow$) If the algorithm %
outputs Yes, the while loop has been exited with \texttt{current}  equal to $(b,s_f)$, with $s_f$ a final state of $\automaton$. By Lemma \ref{prop-invariant-algo-rpq}, this means that there is a path from $a$ to $b$ whose label takes $\automaton$ from $s_0$ to $s_f$, hence is accepted by $\automaton$. This shows that whenever Algorithm \ref{algo-rpq-answering} accepts, $(a,b)$ is a certain answer to the RPQ given by $\automaton$. %

($\Leftarrow$) If $(a,b)$ is a certain answer to the RPQ based upon $\automaton$,  then there is path of minimal length $p = \const'_0r_1\const'_1\ldots r_n\const'_n$ from $a=\const'_0$ to $b=\const'_n$ in $\chase{\instance}{\ruleset}$ such that $\pathLabel{p} = r_1 \ldots r_n \in \subLanguage{\automaton}{s_0}{s_f}$ for some final state $s_f$. Let $s'_0s'_1\ldots s'_n$ be a sequence of states of $\automaton$ such that $s'_n$ is a final state of $\automaton$ and for every $1 \leq i \leq n$, $(s_{i-1}, r_i, s_i) \in \delta$. 
Since $p$ is of minimal length, there is no pair $(i,j)$ with $i \not = j$ such that $(\const_i,s_i) = (\const_j,s_j)$. Let us consider the sequence $p' = ((\const_i,s_i))_i$ such that:
\begin{itemize}
\item for any $i$, $\const_i$ is the $i^\textrm{th}$ constant, say $\const'_{k_i}$, in $p$ belonging to $\terms{\instance}$;
\item for any $i$, $s_i = s'_{k_i}$.
\end{itemize} 
Moreover, for any $i$, if $k_{i+1} = k_i+1$, we define $\aux_i = (s_i,r_{i+1},s_{i+1})$. Otherwise, let $\aux_i = (\type{\atom},i_c,i_d)$,where:
\begin{itemize}
\item $\atom$ is such that $\atom \in \instance$ and $\type{\atom} \in \texttt{Loop}(s_i,i_c,s_{i+1},i_d)$;
\item $\const_{k_{i}}$ appears at position $i_c$ of $\atom$ and $\const_{k_{i+1}}$ appears at position $i_d$ of $\atom$.
\end{itemize}
In the second case, we can define $\aux_i$ in such a way, as the path $p_s = a'_{k_i}r_{k_i+1}\ldots a'_{k_{i+1}}$ goes from $a_{k_i}$ to $a_{k_{i+1}}$ and $\label{p_s}$ belongs to $\subLanguage{\automaton}{s_i}{s_{i+1}}$ by definition of $s_i$. 
We show that the sequence of guesses $(\const_i,s_i,\aux_i)$ leads Algorithm \ref{algo-rpq-answering} to accept. Since $p$ is minimal, the length of $p'$ is less than $|\automaton|\times |\instance|$. Moreover, $\const_n = b$ and $s_f$ is a final state.  Thus, the only way for Algorithm \ref{algo-rpq-answering} to reject with this sequence of guesses is to reject during checks, \ie, one of the checks performed at Lines \ref{line-binary-check} and %
\ref{line-atom-specification} fails. Let $(\const_i,s_i,\aux_i)$ be the guess at one of the steps. If $\aux_i$ is of the form $(s_i, r_{i+1}, s_{i+1})$,  
then $\const_{k_i}$ and $\const_{k_{i+1}}$ are consecutive elements in $p$, and there is an atom $r_{i+1}(\const_{k_i},\const_{k_{i+1}})$ in $\chase{\instance}{\ruleset}$. Thus, $r_{i+1}(\const_{k_i},\const_{k_{i+1}})$ is entailed by $\instance$ and $\ruleset$, and the check at Line \ref{line-binary-check} 
is successful. If $\aux_i$ is of the form $(\type{\atom},i_c,i_d)$, then there is $\atom \in \instance$ such that $\type{\atom} \in \texttt{Loop}(s_i,i_c,s_{i+1},i_d)$, and with $a_{k_i}$ (resp.\ $a_{k_{i+1}}$) appearing at position $i_c$ (resp. $i_d$) of $\atom$. The atom $\atom$ fulfills the conditions of Line~\ref{line-atom-specification}. Thus the defined sequence never triggers a rejection from Algorithm \ref{algo-rpq-answering}, which concludes the proof.
\end{proof}

By analyzing the time and space required by Algorithm \ref{algo-rpq-answering}, we obtain the following upper-bounds for \rpqAnswering\ in the presence of linear existential rules. 

\begin{theorem}
\label{thm-linear-rpq-answering}
\rpqAnswering\ in the presence of linear existential rules 
 is:
\begin{itemize}
\item in \textsc{NL} in data complexity;
\item in \textsc{PTime} in combined complexity with bounded arity;
\item in \textsc{ExpTime} in combined complexity with unbounded arity.
\end{itemize}
\end{theorem}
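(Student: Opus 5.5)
The plan is to combine the correctness of Algorithm~\ref{algo-rpq-answering} (Proposition~\ref{prop-correctness-algo-rpq}) with a resource analysis of its two phases. Phase one computes the \texttt{Loop} table with Algorithm~\ref{algo-loop-table-creation}. Phase two is a nondeterministic walk through pairs (constant, state). In each iteration the walk also performs an entailment check $\instance,\ruleset \models \predicate(c,d)$ or a scan of $\instance$ for an atom of a given type. Since the RPQ answering problem asks whether a given pair $(a,b)$ is a certain answer, it suffices to bound the cost of one execution and then use closure properties of the relevant classes.

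\emph{Data complexity.} With $\ruleset$ and $\automaton$ fixed, the \texttt{Loop} table depends only on $(\ruleset,\automaton)$, so it is a constant that can be hard-coded; its computation costs nothing. The walk stores \texttt{current}, \texttt{count}, and the guessed $(d,s',\aux)$. These are a constant of $\instance$, a state, a counter bounded by $|S|\times|\instance|$, and a bounded-size description of a type and positions, so they need logarithmic space. The check at Line~\ref{line-atom-specification} is a scan of $\instance$ with a pointer, hence logspace. The check at Line~\ref{line-binary-check} is ground binary atom entailment under fixed linear rules. It is in $\textsc{AC}_0\subseteq$ \textsc{NL} by the known data complexity of CQ answering under linear rules. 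Equivalently, by the linear-rule properties it reduces to finding $\atom\in\instance$ whose type $\ttype$ satisfies $\ttype \models r(x_i,x_j)$ at suitable positions, which is a precomputable finite table. The whole procedure therefore runs in \textsc{NL}, because an \textsc{NL} machine can call a logspace subroutine without storing its output.

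\emph{Combined complexity.} By Propositions~\ref{prop-complexity-algo1} and~\ref{loop-bounded-arity}, the table is built in exponential time in general and in polynomial time for bounded arity, including after the single-head translation. The walk runs in nondeterministic polynomial space: its memory is polynomial, and each atom-entailment check is in \textsc{PSpace} (unbounded arity) or in \textsc{NP} (bounded arity) by the known results for atomic queries under linear rules. This is too coarse for \textsc{PTime}, so I will instead \emph{determinize} the walk. Build the graph whose nodes are the pairs in $\terms{\instance}\times S$ and put an edge $(c,s)\to(d,s')$ exactly when one of the two checks succeeds. Then $(a,b)$ is accepted iff some $(b,s_f)$ with $s_f\in F$ is reachable from $(a,s_0)$, and reachability is polynomial in the graph size. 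The graph has polynomially many nodes and edges. Loop edges are decided directly from the table and $\instance$.

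The main obstacle is deciding the binary entailment edges within the target bound. My first attempt is to reuse the \texttt{Loop} machinery. Extend $\automaton$ by a fresh two-state automaton for the single letter $r$, or simply note that $\instance,\ruleset\models r(c,d)$ iff some $\atom\in\instance$ with $c,d$ at positions $i,j$ has a type from which an atom $r(\cdot,\cdot)$ on those positions is derivable. Derivability of such atoms is a saturation over types, computed exactly as in Line~\ref{line-propagation}. It takes polynomial time for bounded arity (polynomially many types) and exponential time in general. With this, every phase is polynomial for bounded arity and exponential for unbounded arity, which gives the \textsc{PTime} and \textsc{ExpTime} bounds.
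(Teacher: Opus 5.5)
Your proposal is correct. For data complexity it coincides with the paper's argument: the \texttt{Loop} table is data-independent, the walk needs only logarithmic space, and ground-atom entailment is in \textsc{AC}$_0$. For combined complexity, however, you take a genuinely different route.

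The paper treats Algorithm~\ref{algo-rpq-answering} as an \textsc{NL} procedure making oracle calls. It then plugs in the cost of building \texttt{Loop} (Propositions~\ref{prop-complexity-algo1} and~\ref{loop-bounded-arity}) and known bounds for atom entailment under linear rules: \textsc{PTime} with bounded arity, \textsc{PSpace} in general.

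You instead determinize, in four steps:
\begin{enumerate}
\item Build the explicit graph on $\terms{\instance}\times S$.
\item Decide loop edges by table lookup.
\item Decide entailment edges $\instance,\ruleset\models r(c,d)$ by running Algorithm~\ref{algo-loop-table-creation} on single-letter automata. This works because constant-free linear rules never introduce constants, so $r(c,d)$ is entailed iff some $\atom\in\instance$ with $c,d$ at positions $i,j$ has its type in the corresponding single-letter cell.
\item Finish with polynomial-time reachability.
\end{enumerate}

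What your route buys is self-containment. You never need an external complexity result for atomic entailment; the paper's pointer to Section~\ref{subsection:QA} there only records \textsc{NP}-completeness for CQs, so the bounded-arity \textsc{PTime} bound for atoms is left implicit. You also avoid the informal ``\textsc{NL} with oracles'' composition, which in the unbounded-arity case is not really logspace anyway, since guessed types have polynomial size. The paper's route is shorter and uses one nondeterministic algorithm for all three measures, which is exactly what the \textsc{NL} data bound needs.

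One small suggestion: do not phrase the key equivalence as reachability versus acceptance by Algorithm~\ref{algo-rpq-answering}, whose runs are capped by the counter bound \texttt{max}. Instead, argue reachability $\Leftrightarrow$ certain answer directly, using Lemma~\ref{prop-invariant-algo-rpq} together with the decomposition in the proof of Proposition~\ref{prop-correctness-algo-rpq}. That removes any dependence on the counter bound.
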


\begin{proof}
Algorithm \ref{algo-rpq-answering} is a non-deterministic algorithm that needs to keep in memory the current state, the current constant, and the number of iterations done so far. It performs two types of operations: 
checking entailment of an atom with linear rules and accessing the contents of the \texttt{Loop} table (more precisely, deciding whether $\ttype \in \texttt{Loop}(s,i_c,s',i_d)$).  Hence, it can be seen as an NL algorithm making oracle calls whenever an entailment check is performed or a cell of \texttt{Loop} is retrieved. Entailment checks are in NL in data complexity (more precisely, AC$_0$, see Section \ref{subsection:QA}), and \texttt{Loop} is independent from the data: the overall algorithm thus runs in NL in data complexity. In combined complexity with bounded arity, entailment checks can be performed in \textsc{PTime} (Section \ref{subsection:QA}), while \texttt{Loop} can be computed in polynomial time {(even if multiple head atoms are allowed, due to Proposition \ref{loop-bounded-arity})}. The overall algorithm is thus in \textsc{PTime} with bounded arity. In the unbounded arity case, the entailment checks can be performed in \pspace\ (Section \ref{subsection:QA}), while the \texttt{Loop} table can be computed in \textsc{ExpTime}: the overall algorithm thus runs in \textsc{ExpTime}. 
\end{proof}

\section{RPQ Answering under Linear Rules: Lower Bound}
\label{sec-linear-lower}

The data complexity (resp. combined complexity) of RPQs under linear rules (resp. linear rules with bounded arity) is already known to be \textsc{NL}-hard (resp. \ptime-hard) \cite{DBLP:journals/jair/BienvenuOS15}. As these bounds match the upper bounds obtained in the preceding section, we focus on providing a matching \exptime\ lower bound for the combined complexity of evaluating RPQs under linear rules of unbounded arity. 

{We recall that  \exptime\ can be reformulated as \textsc{APSpace}, the class of problems that can be solved in polynomial space by an alternating Turing Machine (ATM). 
Our proof relies on this equivalence and shows that a \pspace\ ATM can be simulated by means of linear rules. 
Without loss of generality, we consider an  ATM where each non-final universal (resp. existential) state has exactly two existential (resp. universal) sucessors. }

It is already known that \pspace\ TMs can be simulated by means of linear rules \cite{DBLP:journals/iandc/GottlobP03}. In the following, we 
  explain how to adapt this construction to simulate ATMs. To simplify the presentation, we will use rules with both constants and multiple atoms in the head.
 {This can be done without loss of generality, since we can apply classical transformations to turn such rules into constant-free atomic-head rules (see, e.g., Definition 1.37 in \cite{DBLP:phd/hal/Rocher16} for the removal of constants, and Section \ref{sec-ER} for the single-head translation). These translations are both polynomial in the size of the input ruleset and preserve the property of being a linear ruleset. }

The key 
point in the construction from \cite{DBLP:journals/iandc/GottlobP03} is the following:  the configuration of a  \pspace\  TM \turingMachine\ is represented by a single atom of polynomial arity. The initial configuration can thus be represented by an 
instance $\instance_\turingMachine$ containing a single atom. Then, for each transition of the TM, polynomially many linear rules are created, each one representing the action of the transition on a cell at a given position. All these rules are part of $\ruleset_\turingMachine$. The initial configuration of the TM is accepted if and only if an atom that encodes a configuration having an accepting state is entailed by $\instance_\turingMachine$ and $\ruleset_\turingMachine$.

We modify this construction in the following way to deal with an ATM: to each atom, we add two positions, that will act as ``input'' and ``output'' positions. Moreover, we maintain the following property: {for any atom $\atom$ encoding a configuration $c$,} there is a path, whose edges are all labeled by the same predicate $\pathAtom$, from the {term $i_c$ in} input position of $\atom$ to the {term $o_c$ in} output position of $\atom$, entailed by $\chase{\instance_{\{\atom\}}}{\ruleset_\turingMachine}$ if and only if the configuration represented by $\atom$ is accepted by $\turingMachine$. Acceptance occurs in the following cases:
\begin{itemize}
\item the state of the current configuration is accepting. It is then enough to add a $\pathAtom$-edge from $i_c$ to $o_c$; this is possible as the Turing Machine is assumed to never leave an accepting state;
\item the current state is existential and
one of the two successor configurations is accepting.  
We thus add $\pathAtom$-edges from the input of the current configuration to the input of the two children, and from the output of the two children to the output of the current configuration;
\item the current state is universal, and both successor configurations are accepting. 
We thus add $\pathAtom$-edges from the input of the current configuration to the input of the first successor configuration, 
then from the output of that configuration to the input of the other successor, 
and lastly from the output of the second successor to the output of the current configuration.
\end{itemize}
We now formalize the construction sketched above, staying as close as possible to the notations in \cite{DBLP:journals/iandc/GottlobP03}.
{Given an ATM $\turingMachine$ and an input $x$, such that $\turingMachine$ runs in $\poly(|x|)$ space for a polynomial $\poly$, we can represent a configuration $c$ reached during the computation  by storing 
 the content of the first $\poly(|x|)$ cells, as well as the position of the head of $\turingMachine$ and the current state of $\turingMachine$. }
 Adding input and output positions, this can be encoded by a predicate $\fun{conf}$ of arity $2\poly(|x|) + 3$:
$$\fun{conf}(i_c,state,cell_1,cur_1,cell_2,cur_2,\ldots,cell_{\poly(|x|)},cur_{\poly(|x|)},o_c),$$
where $state$ contains the state identifier, $cell_i$ represents the content of the $i^\textrm{th}$ cell, $cur_i$ 
is equal to $1$ if the head of $\turingMachine$ is on cell $i$ and $0$ otherwise, and $i_c$ and $o_c$ are the input and output terms 
 of this atom. We say that the above atom \emph{represents} configuration $c$. Given an atom $\atom$, the term at its input (resp. output) position is denoted by $\inputOf{\atom}$ (resp. $\outputOf{\atom}$).  We denote by $\instance_{\turingMachine,x}$ the instance containing a single atom representing the initial configuration  of $\turingMachine$ on input $x$.

For every accepting state $q_f$, we create the following rule {(where $q_f$ denotes a constant)}: 
\begin{align}
\label{rule-accepting-state}
\fun{conf}(i_c,q_f,\ldots,o_c) \rightarrow \pathAtom(i_c,o_c)
\end{align}

For each existential state  $q$ and each transition $\transitionFunction(q,\tapeLetter) = \{(q',\tapeLetter',L),(q'',\tapeLetter'',L)\}$, and for every position $i$ on the tape, we create the following rule {(where $q$, $q'$, $q''$, $\gamma$, $\gamma'$ and $\gamma''$ denote constants)}:
\begin{multline}
\label{rule-existential-state}
\fun{conf}(i_c,q,cell_1,cur_1,\ldots,cell_{i-1},0,\gamma,1,\ldots,o_c) \rightarrow \\
\exists i_{c'},o_{c'},i_{c''},o_{c''}	~ \fun{conf}(i_{c'},q',cell_1,cur_1,\ldots,cell_{i-1},1,\gamma',0,\ldots,o_{c'}), \\
\fun{conf}(i_{c''},q'',cell_1,cur_1,\ldots,cell_{i-1},1,\gamma'',0,\ldots,o_{c''}), \\
 \pathAtom(i_c,i_{c'}), \pathAtom(o_{c'},o_c),\pathAtom(i_c,i_{c''}), \pathAtom(o_{c''},o_c)
\end{multline}

For each universal state $q$ and each transition $\transitionFunction(q,\tapeLetter) = \{(q',\tapeLetter',L),(q'',\tapeLetter'',L)\}$, and for every position $i$ on the tape, we create the following rule (with the same constants as in the preceding rule):
\begin{multline}
\label{rule-universal-state}
\fun{conf}(i_c,q,cell_1,cur_1,\ldots,cell_i,0,\tapeLetter,1,\ldots,o_c) \rightarrow \\
\exists i_{c'},o_{c'}~,i_{c''},o_{c''}~ \fun{conf}(i_{c'},q',cell_1,cur_1,\ldots,cell_i,1,\tapeLetter',0,\ldots,o_{c'}), \\
\qquad \qquad\qquad \qquad\,\, \fun{conf}(i_{c''},q'',cell_1,cur_1,\ldots,cell_i,1,\tapeLetter'',0,\ldots,o_{c''}), \\
 \pathAtom(i_c,i_{c'}), \pathAtom(o_{c'},i_{c''}), \pathAtom(o_{c''},o_c) 
\end{multline}

{We proceed similarly to translate transitions in which the head is moving to the right. }
Figure \ref{fig-existential-gadget} (resp. Figure~\ref{fig-universal-gadget}) illustrates the functioning of rules of type (\ref{rule-existential-state}) (resp.	 type (\ref{rule-universal-state})).
We denote by $\ruleset_{\turingMachine,x}$ the set of all the rules defined above. Note that $x$ is required to determine the arity of $\fun{conf}$.%

The following proposition  establishes the correctness of the reduction. 

\begin{figure}
\begin{center}
\begin{tikzpicture}[every node/.style={block},>=triangle 45]
   \draw (0,0) node (A) {$cell_i$};
   \node (B) [left=of A] {$\ldots$};
   \node (C) [left=of B] {$i_c$};
   \node (D) [right=of A] {$\ldots$};
   \node (E) [right=of D] {$o_c$};

   \draw (-2.5,2) node (Ap) {$cell'_i$};
   \node (Bp) [left=of Ap] {$\ldots$};
   \node (Cp) [left=of Bp] {$i_{c'}$};
   \node (Dp) [right=of Ap] {$\ldots$};
   \node (Ep) [right=of Dp] {$o_{c'}$};
   
   \draw (2.5,2) node (As) {$cell''_i$};
   \node (Bs) [left=of As] {$\ldots$};
   \node (Cs) [left=of Bs] {$i_{c''}$};
   \node (Ds) [right=of As] {$\ldots$};
   \node (Es) [right=of Ds] {$o_{c''}$};
   
   \draw[->] (C.north) -- (Cp.south); 
   \draw[->] (Ep.south) -- (E.north);
   \draw[->] (C.north) -- (Cs.south); 
   \draw[->] (Es.south) -- (E.north);
\end{tikzpicture}
\end{center}
\caption{The existential gadget}
\label{fig-existential-gadget}
\end{figure}

\begin{figure}
\begin{center}
\begin{tikzpicture}[every node/.style={block},>=triangle 45]
   \draw (0,0) node (A) {$cell_i$};
   \node (B) [left=of A] {$\ldots$};
   \node (C) [left=of B] {$i_c$};
   \node (D) [right=of A] {$\ldots$};
   \node (E) [right=of D] {$o_c$};

   \draw (-2.5,2) node (Ap) {$cell'_i$};
   \node (Bp) [left=of Ap] {$\ldots$};
   \node (Cp) [left=of Bp] {$i_{c'}$};
   \node (Dp) [right=of Ap] {$\ldots$};
   \node (Ep) [right=of Dp] {$o_{c'}$};
   
   \draw (2.5,2) node (As) {$cell''_i$};
   \node (Bs) [left=of As] {$\ldots$};
   \node (Cs) [left=of Bs] {$i_{c''}$};
   \node (Ds) [right=of As] {$\ldots$};
   \node (Es) [right=of Ds] {$o_{c''}$};
   
   \draw[->] (C.north) -- (Cp.south); 
   \draw[->] (Ep.east) -- (Cs.west);
   \draw[->] (Es.south) -- (E.north);
\end{tikzpicture}
\end{center}
\caption{The universal gadget}
\label{fig-universal-gadget}
\end{figure}

\begin{proposition}\label{tm-prop}
Let $\turingMachine$ be a \pspace\ ATM and let $\atom$ be an atom of \chase{$\instance_{\turingMachine,x}$}{$\ruleset_{\turingMachine,x}$} representing a configuration $\confOf{\atom}$. Then $\confOf{\atom}$ is an accepting configuration of $\turingMachine$ if and only if there is a path in $\chase{\instance_{\turingMachine,x}}{\ruleset_{\turingMachine,x}}$ from $\inputOf{\atom}$ to $\outputOf{\atom}$ whose label belongs to $\pathAtom^*$. 
\end{proposition}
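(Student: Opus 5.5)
We prove the two directions separately. Throughout we use the structure of $\chase{\instance_{\turingMachine,x}}{\ruleset_{\turingMachine,x}}$. Every $\fun{conf}$-atom other than the initial one is created by a rule of type (\ref{rule-existential-state}) or (\ref{rule-universal-state}), and its input and output terms are fresh nulls. The $\pathAtom$-atoms are created only by these rules and by rules of type (\ref{rule-accepting-state}). Hence the $\fun{conf}$-atoms form a (possibly infinite) tree, mirroring the computation tree of $\turingMachine$. Because the chase is oblivious, the same configuration may be represented by several atoms, with distinct nulls.

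\textbf{($\Rightarrow$)} We proceed by induction on the height of a finite accepting subtree rooted at $\confOf{\atom}$; such a height is well defined since the ATM runs in polynomial space and we may assume it halts.
\begin{itemize}
\item If the state of $\confOf{\atom}$ is accepting, rule (\ref{rule-accepting-state}) applies to $\atom$ and yields $\pathAtom(\inputOf{\atom},\outputOf{\atom})$.
\item If the state is existential, the trigger of rule (\ref{rule-existential-state}) on $\atom$ for the transition and head position of $\confOf{\atom}$ fires. It creates atoms $\atom',\atom''$ representing the two successors, together with the edges $\pathAtom(\inputOf{\atom},\inputOf{\atom'})$ and $\pathAtom(\outputOf{\atom'},\outputOf{\atom})$ (and symmetrically for $\atom''$). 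By induction, the accepting successor yields a $\pathAtom^*$-path between its input and output, and we concatenate it with the two edges.
\item If the state is universal, we use rule (\ref{rule-universal-state}) and chain the three edges with the two inductive paths, in the order shown in Figure~\ref{fig-universal-gadget}.
\end{itemize}

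\textbf{($\Leftarrow$)} This is the harder direction. Any $\pathAtom$-path from $\inputOf{\atom}$ to $\outputOf{\atom}$ must be shown to stay within the gadget generated below $\atom$ and to respect its shape. Note that the label lies in $\pathAtom^*$, so the path is directed: only forward $\pathAtom$-edges are allowed, never inverses. We analyse the outgoing $\pathAtom$-edges of each term.
\begin{itemize}
\item From an input term $\inputOf{\beta}$, the outgoing edges go either to $\outputOf{\beta}$, via rule (\ref{rule-accepting-state}), or to the inputs of children of $\beta$.
\item From an output term $\outputOf{\beta}$, the unique outgoing edge goes to the output of the parent of $\beta$, or, in the universal case when $\beta$ is the first child, to the input of its sibling.
\end{itemize}
The same input term may have several children, one per applicable rule and trigger. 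This does not matter, because each child atom is produced together with its own edges by one trigger. We then prove, by induction on the length of the path, the following statement: if there is a $\pathAtom$-path from $\inputOf{\beta}$ to $\outputOf{\beta}$, then $\confOf{\beta}$ is accepting. Consider the first edge of the path. Either it goes directly to $\outputOf{\beta}$, which is possible only through rule (\ref{rule-accepting-state}), so the state is accepting. Or it enters a child $\beta'$ created by some trigger. Since the path is directed and the children's terms are fresh nulls, the path can only come back toward $\beta$ through $\outputOf{\beta'}$. This requires a shorter sub-path from $\inputOf{\beta'}$ to $\outputOf{\beta'}$, so by induction $\confOf{\beta'}$ is accepting.
\begin{itemize}
\item In the existential case, the next edge leads to $\outputOf{\beta}$, and one accepting successor suffices.
\item In the universal case, the next edge leads to $\inputOf{\beta''}$, and the path again must traverse $\beta''$. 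So both successors are accepting, since each trigger encodes exactly the two successors of the transition.
\end{itemize}

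The main obstacle is making this ``no shortcut'' argument rigorous. We must ensure that $\pathAtom$-edges never connect terms of unrelated subtrees. One point to check is that rules never place the constants $0,1,q,\gamma$ in input or output positions. The other is that the initial atom's input and output, being constants, cannot coincide with nulls. Once this is settled, we also need the path to be acyclic modulo the tree. Directedness gives this, since the edges always move input$\to$child input, child output$\to$parent output, or child output$\to$sibling input, so well-founded induction on path length goes through.
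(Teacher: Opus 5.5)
Your proof is correct and follows the paper's approach. $(\Rightarrow)$ goes by induction on the depth of the acceptance witness, and $(\Leftarrow)$ uses the existential/universal gadget case analysis to force the $\pathAtom$-path through the children's input and output terms. The only difference is the induction measure in $(\Leftarrow)$: you induct on path length and classify the outgoing $\pathAtom$-edges explicitly, whereas the paper inducts on the size of $C_\alpha$ (which relies on termination of $\turingMachine$) and only briefly justifies that a shortest path cannot shortcut; your version makes that step more explicit.
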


\begin{proof}
($\Leftarrow$) Let $\atom \in \chase{\instance_{\turingMachine,x}}{\ruleset_{\turingMachine,x}}$ represent a configuration $\confOf{\atom}$, and let $C_\atom$ be the restriction of $\chase{\instance_{\turingMachine,x}}{\ruleset_{\turingMachine,x}}$ {to $\atom$  and the atoms generated from $\atom$}. We show by induction on the number of atoms of $C_\atom$ that when the desired path exists in $C_\atom$, $\confOf{\atom}$ is accepting. Note that the induction is well-founded as the considered Turing Machines terminate.
\begin{itemize}
\item If $C_\atom$ contains a single atom, then there can be no path in $\chase{\instance_{\turingMachine,x}}{\ruleset_{\turingMachine,x}}$ witnessing $\pathAtom^*(\inputOf{\atom},\outputOf{\atom})$. Suppose then that $C_\atom$ contains two atoms. In this case, the only atom in $C_\atom$ other than %
$\atom$  must be $\pathAtom(\inputOf{\atom},\outputOf{\atom})$. The only way to derive such an atom is to apply a rule of the form 
 (\ref{rule-accepting-state}), which is applied if and only if $\confOf{\atom}$ is in an accepting state, hence $\confOf{\atom}$ is an accepting configuration of $\turingMachine$. 
\item 
Next, assume that the result holds for any atom $\atom$ such that $C_\atom$ has less than $n$ atoms,  
and let $\atom$ be an atom such that $C_\atom$ contains $n$ atoms, {with $n>2$}.  
We distinguish two cases:
\begin{itemize}
\item Case 1: the state of $\confOf{\atom}$ is existential. Then, since the rules of type (\ref{rule-existential-state}) must be satisfied, $C_\atom$ contains atoms $\atom_1$ and $\atom_2$  representing the successor configurations of $\confOf{\atom}$. 
The existence of a path from $\inputOf{\atom}$ to $\outputOf{\atom}$ implies that there is either  
a path from $\inputOf{\atom_1}$ to $\outputOf{\atom_1}$ or a path from $\inputOf{\atom_2}$ to $\outputOf{\atom_2}$. 
To see why, observe that every $\pathAtom$-atom involving $\inputOf{\atom}$ or $\outputOf{\atom}$ is added either by the same rule application {that} created $\atom$ 
or by a rule of type ($\ref{rule-existential-state}$) applied to $\atom$. Only atoms of the second kind 
(refer to Fig.\ \ref{fig-existential-gadget}, left) can belong to a shortest path from $\inputOf{\atom}$ to $\outputOf{\atom}$, since atoms of the first kind have $\inputOf{\atom}$ (resp.\ $\outputOf{\atom}$) as second (resp.\ first) argument. 
If we have a path from $\inputOf{\atom_1}$ to $\outputOf{\atom_1}$, then we can apply the induction assumption to $\atom_1$ to get that $c(\alpha_1)$ is an accepting configuration, which implies that $\confOf{\atom}$ is also accepting. We can proceed analogously if we have a path from $\inputOf{\atom_2}$ to $\outputOf{\atom_2}$. 
\item Case 2: the state of $\confOf{\atom}$ is universal. As the rules of type (\ref{rule-universal-state}) must be satisfied, the existence of a path from $\inputOf{\atom}$ to $\outputOf{\atom}$ implies the existence of a path from $\inputOf{\atom_1}$ to $\outputOf{\atom_1}$ and a path from $\inputOf{\atom_2}$ to $\outputOf{\atom_2}$, 
where $\atom_1$ and $\atom_2$ represent the successor configurations of $c(\atom)$ 
{(refer to Fig.\ \ref{fig-universal-gadget}).} 
By the induction assumption, $\confOf{\atom_1}$ and $\confOf{\atom_2}$ are both accepting configurations, which means that $\confOf{\atom}$ is also accepting.
\end{itemize}
\end{itemize}

($\Rightarrow$) We prove the other direction by induction on the number of transitions that need to be performed to prove that $c(\atom)$ is accepted by $\turingMachine$.
\begin{itemize}
\item If no transitions are required, this means that $c(\atom)$ is in an accepting state. Thus, Rule (\ref{rule-accepting-state}) is applicable, and $\pathAtom(\inputOf{\atom},\outputOf{\atom})$ is present 
 in $\chase{\instance_{\turingMachine,x}}{\ruleset_{\turingMachine,x}}$.
\item Assume the result holds up to $n$ required transitions. We distinguish two cases:
\begin{itemize}
\item \sloppypar{Case 1: the state of $\confOf{\atom}$ is existential. As $\confOf{\atom}$ is accepting, %
this means that one of its two successor configurations, say $\confOf{\atom_1}$, %
is accepting. Moreover, the number of transitions required to accept $\confOf{\atom_1}$ is strictly smaller than for $\confOf{\atom}$. By the induction assumption, $\pathAtom^*(\inputOf{\atom_1},\outputOf{\atom_1})$ is present in 
$\chase{\instance_{\turingMachine,x}}{\ruleset_{\turingMachine,x}}$. As $\pathAtom(\inputOf{\atom},\inputOf{\atom_1})$ and $\pathAtom(\outputOf{\atom_1},\outputOf{\atom})$ are also present (since the rules of the form (\ref{rule-existential-state}) generate them), this proves that $\pathAtom^*(\inputOf{\atom},\outputOf{\atom})$ is present as well.}
\item \sloppypar{Case 2:  the state of $\confOf{\atom}$ is universal. As $\confOf{\atom}$ is accepting, %
 this means that its two successor configuration are also accepting.  By the induction assumption, this means that $\pathAtom^*(\inputOf{\atom_1},\outputOf{\atom_1})$ and $\pathAtom^*(\inputOf{\atom_2},\outputOf{\atom_2})$ are present in $\chase{\instance_{\turingMachine,x}}{\ruleset_{\turingMachine,x}}$. As the rules of the form
 (\ref{rule-universal-state}) also generate $\pathAtom(\inputOf{\atom},\inputOf{\atom_1})$, $\pathAtom(\outputOf{\atom_1},\inputOf{\atom_2)}$, and $\pathAtom(\outputOf{\atom_2},\outputOf{\atom})$, this proves that $\pathAtom^*(\inputOf{\atom},\outputOf{\atom})$ is present in $\chase{\instance_{\turingMachine,x}}{\ruleset_{\turingMachine,x}}$.} \qedhere
\end{itemize}
\end{itemize}
\end{proof}

Now let $\turingMachine$ be a \pspace\ ATM, $x$ be an input to $\turingMachine$, and $\atom$ be the unique atom in $\instance_{\turingMachine,x}$. Then by Proposition \ref{tm-prop}, $\confOf{\atom}$ is an accepting configuration of $\turingMachine$ if and only if
 $\instance_{\turingMachine,x},\ruleset_{\turingMachine,x} \models \pathAtom^*(\inputOf{\atom},\outputOf{\atom})$; 
 {in other words: if and only if $(\inputOf{\atom},\outputOf{\atom})$ is a certain answer to the RPQ $\pathAtom^*(x,y)$ on the KB $(\instance_{\turingMachine,x},\ruleset_{\turingMachine,x})$.}
This, together with known results, yields the following lower bounds:

\begin{theorem}
\rpqAnswering\ in the presence of linear existential rules is \textsc{NL}-hard in data complexity, \ptime-hard in combined complexity with bounded arity and \exptime-hard in combined complexity without arity bound, even for a fixed RPQ.
\end{theorem}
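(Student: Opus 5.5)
The theorem has three claims, and I plan to handle them separately. Two come from known results and one from the construction just given.

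\textbf{Data complexity and bounded arity.} For \textsc{NL}-hardness in data complexity and \ptime-hardness in combined complexity with bounded arity, I will rely on the lower bounds of \cite{DBLP:journals/jair/BienvenuOS15} for RPQ answering under DL-Lite. The relevant DL-Lite axioms are expressible as linear rules over unary and binary predicates, so the arity is bounded by $2$, and the translation is polynomial. For \textsc{NL}-hardness it even suffices to take the empty ruleset and a fixed RPQ such as $r^*$: reachability in a directed graph, encoded as $r$-atoms in $\instance$, is \textsc{NL}-hard. For the \ptime\ bound I need to check that the hardness in \cite{DBLP:journals/jair/BienvenuOS15} already holds for a fixed query, or at least that the statement ``even for a fixed RPQ'' is covered. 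If it is not, I would reprove \ptime-hardness by a reduction from entailment under bounded-arity linear rules with a fixed query. One candidate is from alternating graph reachability, which is \ptime-complete. It can be encoded in the spirit of the ATM gadget, using binary predicates and the fixed query $\pathAtom^*$: every node gets an input and an output null, existential nodes use the parallel gadget, and universal nodes use the series gadget.

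\textbf{Unbounded arity.} For \exptime-hardness without an arity bound, I will use Proposition~\ref{tm-prop} together with $\exptime = \textsc{APSpace}$. Given a \pspace\ ATM $\turingMachine$ and an input $x$, I build $\instance_{\turingMachine,x}$ and $\ruleset_{\turingMachine,x}$ in polynomial time. There are polynomially many rules, each of polynomial size, because there are polynomially many tape positions and transitions. I then convert these rules into constant-free, atomic-head linear rules via the translations cited in the construction. These preserve linearity and entailment of the query, which mentions only $\pathAtom$. By Proposition~\ref{tm-prop} applied to the unique atom $\atom$ of $\instance_{\turingMachine,x}$, $x$ is accepted if and only if $(\inputOf{\atom},\outputOf{\atom})$ is a certain answer to $\pathAtom^*(x,y)$. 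The query does not depend on $\turingMachine$ or $x$, so hardness holds for a fixed RPQ.

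\textbf{Main obstacle.} One technical point is that $\inputOf{\atom}$ and $\outputOf{\atom}$ must be constants so that they form a candidate answer; I take them as constants in $\instance_{\turingMachine,x}$. The main obstacle is to confirm that the constant-removal step keeps the query answer unchanged. That step introduces fresh predicates and positions that carry the constants, so I must check that no new $\pathAtom$-atoms and no spurious paths appear. Since $\pathAtom$-atoms are produced only by the translated versions of rules (\ref{rule-accepting-state})--(\ref{rule-universal-state}), this should follow directly.
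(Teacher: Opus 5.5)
Your proposal follows the paper's proof: the \textsc{NL} and \textsc{PTime} lower bounds are imported from the DL-Lite results, and \textsc{ExpTime}-hardness comes from Proposition~\ref{tm-prop} together with \textsc{APSpace} $=$ \textsc{ExpTime}, using the fixed query $p^*(x,y)$ and the constant pair $(i(\alpha),o(\alpha))$ from the single-atom instance. Your additional safeguards, which the paper leaves implicit, are also sound: the input/output gadget reduction from alternating reachability (parallel gadget for existential nodes, series gadget for universal nodes, with induction on path length if the chase is infinite) does give \textsc{PTime}-hardness for the fixed RPQ $p^*$ under bounded-arity linear rules, and constant removal cannot create spurious paths because the $p$-atoms in rule heads contain no constants.
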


\paragraph{Note}
The preceding reduction can be used to show that atomic query entailment under rulesets composed of linear rules and transitivity rules (AQELT)  is \exptime-hard in combined complexity, already with a single transitivity rule. Indeed, a pair of constants $(a,b)$ is an answer to an RPQ $\pathAtom^+(x,y)$ on a KB $\kb= \kblong$ with $\ruleset$ a linear ruleset iff the atomic Boolean query $p'(a,b)$ is entailed by the KB $(\instance, \ruleset \cup 
\{p(x,y)  \rightarrow p'(x,y); p'(x,y) \land p'(y,z) \rightarrow p'(x,z) \}$, where $p'$ is a fresh predicate introduced to avoid interactions between the new rules and those from $\ruleset$.

Assuming \exptime$\neq$\pspace, our result is in contradiction with Theorem 5 in~\cite{DBLP:conf/ijcai/BagetBMR15}, which purports to show a \pspace\ upper bound for the AQELT problem. After reexamining the proofs, the authors of the latter work have identified the flaw, which occurs in the analysis of the combined complexity of their rewriting-based decision procedure. It turns out that their procedure runs in exponential time, rather than in polynomial space (their \textsc{NL} upper bound in data complexity remains valid). Combining our lower bound with their procedure shows that the AQELT problem is \exptime-complete in combined complexity. {The incorrect claim from~\cite{DBLP:conf/ijcai/BagetBMR15} was corrected in the companion report with full proofs \cite{DBLP:journals/corr/BagetBMR15}.}

\section{CRPQ Answering under Linear Rules: Upper Bound}
\label{sec-crpq-linear-upper}

We now study the complexity of CRPQ answering under linear rules. The algorithm we propose to solve this problem is based on two notions. First, the notion of a \emph{proof scheme}, which (under some validity conditions) captures a finite portion of the chase. Second, the notion of a \emph{match} of a (Boolean) CRPQ in a proof scheme, which attests that this CRPQ is entailed by the portion of the chase captured by this proof scheme. The algorithm roughly works as follows: it enumerates all proof schemes of `small' size and checks whether one of them is valid and admits a match of the query.  To obtain our complexity results, we will upper bound both the size of the proof schemes that need to be considered and the cost of the validity check.

Section \ref{sec-crpq-proof-schemes} introduces the fundamental notions.
 It concludes with Proposition\ \ref{prop-match-valid}, which  states that a query $\query$ is entailed by a knowledge base if and only if there exists a match of $\query$ in some valid proof scheme. Section \ref{sec-crpq-upper-proof} proves (the hard direction of) that proposition, by exhibiting a suitable proof scheme whenever $\query$ is entailed by the knowledge base.
 Finally, Section \ref{sec-crpq-complexity} analyzes the computational resources required to enumerate proof schemes and to check their validity as well as the existence of a match of $\query$ in one of them.

\subsection{Proof Schemes}
\label{sec-crpq-proof-schemes}
 We recall that a CRPQ $\query$ may contain both standard atoms and path atoms whose predicate is defined by an automaton. 
Given a pair of atoms, it will be important to keep track of the paths in the chase that link terms occurring in these atoms and correspond to words that can be read by the automata appearing in $\query$, i.e., that take one of these automata from a state $\state_1$ to a state $\state_2$.  
The next definition of \emph{transitions} between atoms plays this role.

\begin{definition}[Transition]
Let $A$ be a set of atoms.
Let $\atom_1$ and $\atom_2$ be two atoms of~$A$ of arity $k_1$ and $k_2$ respectively. Given a query $q$, the \emph{set of transitions from $\atom_1$ to $\atom_2$ in $A$}, denoted by $\transitionAtomSet{\atom_1}{\atom_2}{q}{A}$, is the set of quadruples $(i_1,\state_1,i_2,\state_2)$ with $1 \leq i_1 \leq k_1$ and $1\leq i_2 \leq k_2$, such that there are an automaton $\automaton$ in $\query$ and a path $\path$ in $A$ going from $\atom_1[i_1]$
to $\atom_2[i_2]$ such that $\pathLabel{\path}$ $ \in \subLanguage{\automaton}{s_1}{s_{2}$}.  
More generally, by \emph{transition}, we will mean a quadruple 
$(i_1,\state_1,i_2,\state_2)$ where $\state_1,\state_2$ are states of an automaton in $\query$, and $i_1,i_2$ do not exceed the maximum predicate arity in $\instance \cup \ruleset$.\end{definition}

This allows us to introduce the key technical notion underlying our approach, namely, \emph{proof schemes}. Briefly, a proof scheme is a directed forest whose nodes are atoms, together with  transitions for different pairs of atoms in this forest. 
Intuitively, under some validity conditions defined later (Definition \ref{def-validity}), a proof scheme corresponds to a subset of the chase, with the transitions placing requirements on the paths linking these atoms. 

\begin{definition}[Proof scheme]
\label{def-proof-scheme}
 A \emph{proof scheme} is a pair $\proofScheme = (\psforest,\pstransitions)$ such that:
 \begin{enumerate}
  \item\label{item-ps-df} $\psforest = (V,E)$ is a directed forest;
  \item\label{item-ps-set} $V$ is a set of atoms;
  \item\label{item-ps-connected} for each null $\chasenull$, the set of nodes in which $\chasenull$ occurs form a connected subgraph of $\psforest$;
  \item\label{item-ps-transition} $\pstransitions$ is a set of pairs $(\tau,(\atom_1,\atom_2))$, where $\atom_1$ and $\atom_2$ belong to $V$ and $\tau$ is a transition
   from $\atom_1$ to  $\atom_2$. 
   \item\label{item-ps-local} for each $(\tau,(\atom_1,\atom_2)) \in \pstransitions$ , either $\atom_1 = \atom_2$, $(\atom_1,\atom_2) \in E$, $(\atom_2,\atom_1) \in E$, or both $\atom_1$ and $\atom_2$ are roots.
 \end{enumerate}
 
 We denote by $\vars{\proofScheme}$ and $\terms{\proofScheme}$ the variables and terms occurring in atoms of $V$, i.e., $\vars{\proofScheme} = \vars{V}$ and $\terms{\proofScheme} = \terms{V}$.
 
\end{definition}

In a proof scheme, transitions 
are assigned to  pairs of atoms
that comply with specific conditions: the atoms are equal, or both are roots, or one is the parent of the other in a tree. 
To check the existence of paths that link terms occurring in arbitrary pairs of atoms, we can compose these transitions. The \emph{saturation} of a proof scheme, defined next, enriches the proof scheme with all the transitions that can be obtained by composition.

\begin{definition}[Saturation of a proof scheme]
\label{def-saturation}
\sloppypar{
Let $\proofScheme = (\psforest,\pstransitions)$ be a proof scheme. The \emph{saturation} of $\proofScheme$ is the pair $(\psforest,\pstransitions')$ where $\pstransitions'$ is the smallest set such that $\pstransitions\subseteq \pstransitions'$ and 
$\{((i_1,\state_1,i_2,\state_2),(\atom_1,\atom_2)), ((i_2,\state_2,i_3,\state_3),(\atom_2,\atom_3))\} \subseteq \pstransitions'$ implies $((i_1,\state_1,i_3,\state_3),(\atom_1,\atom_3)) \in \pstransitions'$.}
\end{definition}

\begin{definition}[Match in a proof scheme]
\label{def-match-proof-scheme}
 Let 
 $\proofScheme = (\psforest, \pstransitions)$ be a proof scheme with $\psforest = (V,E)$. A \emph{match} of $\query$ in $\proofScheme$ is a mapping $\match$ from $\vars{\query}$ to $\terms{\proofScheme}$ such that:
 \begin{itemize}
  \item if $\atom$ is a standard atom in $\query$, then $\match(\atom) \in V$;
  \item if $\atom= \lang(x,y)$ is a path atom in $\query$ with associated NFA $\automaton$,
  then there exists a pair $((i_x,\state_x,i_y,\state_y),(\atom_x,\atom_y))$ in the saturation of $\proofScheme$
  such that $\match(x)=\atom_x[i_x]$, $\match(y)=\atom_y[i_y]$  and $\state_x$ (resp.\ $\state_y$) is the initial (resp.~a final) state of~$\automaton$. 
 \end{itemize}
\end{definition}

Note that, in the second point of the previous definition, $x$ and $y$ are not necessarily variables and we silently extend $\match$ to constants by setting $\match(c) = c$ for any constant $c$. The problem of deciding if there is a match of a given query $\query$ in a given proof scheme $\proofScheme$ is NP-complete. Indeed, checking whether a candidate mapping is indeed a match of $\query$ in $\proofScheme$ can be done in polynomial time, since the saturation of a proof scheme can be computed in polynomial time; 
NP-hardness can be shown by reducing the homomorphism problem (given an instance $\instance$ and a CQ $\query$, is there a homomorphism from $\query$ to $\instance$?), with $\instance$ being translated into a trivial proof scheme where $V = \instance$, $E= \emptyset$ and $ \pstransitions = \emptyset$. 

\medskip
As already mentioned, not all proof schemes actually encode a part of the chase. 
More precisely, a proof scheme is said to be \emph{valid} if it can be embedded into the chase by a mapping called a `witnessing embedding'. Then, given a valid proof scheme $\proofScheme$ and a query $q$, the composition of a match of $q$ in $\proofScheme$ and a witnessing embedding of $\proofScheme$ yields a match of $q$ in the chase. We will also show that, whenever there is a match of $q$ in the chase, there is a match of $q$ in some valid proof scheme. 

\begin{definition}[Valid proof scheme and witnessing embedding]
\label{def-validity}
A proof scheme $\proofScheme=(\psforest, \pstransitions)$ with $\psforest = (V,E)$ 
is \emph{valid} w.r.t.\ $(\instance,\ruleset,\query)$ if there is a mapping %
 $\witness$ from $\vars{\proofScheme}$ to $\terms{\chase{\instance}{\ruleset}}$ 
such that:
\begin{enumerate}
\item\label{item-valid-root} for each root atom $\atom \in V$, $\witness(\atom) = \atom$ and $\atom \in \instance$;
\item\label{item-valid-standard} for each atom $\atom \in V$, $\witness(\atom) \in \chase{\instance}{\ruleset}$;%
\item\label{item-valid-ancestor} for each edge $(\atom_1,\atom_2) \in E$, there is an $\ruleset$-derivation from $\{\witness(\atom_1)\}$ to (an extended instance that contains) $\witness(\atom_2)$.
\item\label{item-valid-transition} for each $(\tau,(\atom_1,\atom_2)) \in \pstransitions$: 
$\tau \in \transition{\witness(\atom_1)}{\witness(\atom_2)}{\query}{\instance}{\ruleset}$.
\end{enumerate}
We say $\witness$ is a \emph{witnessing embedding} of $\proofScheme$.
\end{definition}

Note that the saturation of a proof scheme is generally not a proof scheme, as the added transitions may label arbitrary pairs of atoms. However, and this is  a point worth noting even if it is immediate, the saturation of a valid proof scheme also fulfills all the validity conditions from Definition \ref{def-validity}. 


\medskip

\begin{example}[Running Example---continued]
\label{ex-running-proof-scheme} Let  $\ruleset$ be the set of linear rules from Example \ref{ex-running-general}:

\begin{quote}
$(\erule_1)\quad \fun{isFriendOf}(x,y) \rightarrow  \fun{isFriendOf}(y,x)$
\\
$(\erule_2)\quad \fun{isFriendOf}(x,y) \rightarrow  \fun{follows}(x,y)$ 
\\
$(\erule_3)\quad \fun{follows}(x,y) \rightarrow  \exists m~\fun{message}(m,x,y)$ 
\end{quote}
Consider again the instance $\instance = \{\fun{follows(B,A)}, \fun{isFriendOf(C,A)}, \fun{isFriendOf(C,B)}\}$ and the following CRPQ:
 $$q_6() = \exists y,m.~\fun{follows.follows}^{*}(A,y) \land message(m,y,A)$$
  We assign to the path predicate $\fun{follows.follows}^{*}$ the NFA representation $\automaton$ with states $\{s_0, s_f\}$ and two transitions:  $(s_0, \fun{follows}, s_f)$, $(s_f, \fun{follows}, s_f)$. 
Recall that query $q_6$ can be matched to the following part of $\chase{\instance}{\ruleset}$: $\{ \fun{follows}(A,C), \fun{follows}(C,B), \fun{message}(m_0,B,A) \}$.

A proof scheme $\proofScheme=(\psforest, \pstransitions)$ for $q_6$ is pictured in Figure~\ref{fig-proof-scheme-running}. The forest $\psforest$ contains a single tree, having two nodes. The only appearing null is $m_0$ and it occurs in a single node, which trivially forms a connected subgraph of $\psforest$. The set of transitions 
$\pstransitions$ contains a single transition $(\tau, (\atom_1,\atom_1))$, where $\tau = (2,s_0,1,s_f)$.
Proof scheme  $\proofScheme$ is valid w.r.t. $(\instance,\ruleset,q_6)$, as attested by a witnessing embedding $\witness = \{ m_0 \mapsto m_0\}$.  Note that the transition holds due to the atoms $\fun{follows}(A,C)$ and $ \fun{follows}(C,B)$ in $\instance$. Query $q_6$ has a match in $\proofScheme$ given by $\match = \{y \mapsto B, m \mapsto m_0 \}$.  
\end{example}

\begin{figure}
\begin{center}
 \begin{tikzpicture}[line cap=round,line join=round,>=triangle 45,x=1.0cm,y=1.0cm]
\draw (0,0) node(root1) {$\fun{follows(B,A)}$};
\draw (0,-1) node (child1) {$\fun{message(m_0,B,A)}$};
\draw (-2,0) node(rootname) {$\atom_1$};
\draw (-2,-1) node(childname) {$\atom_2$};
\draw (-4,0) node(forestname) {$\psforest$:};
\draw[->] (root1) -- (child1);
\end{tikzpicture}
\end{center}
\caption{{Valid proof scheme $(\psforest, \pstransitions)$, with $\pstransitions = \{ ((2,s_0,1,s_f),(\atom_1,\atom_1)) \}$ (Example~\ref{ex-running-proof-scheme})}}
\label{fig-proof-scheme-running}
\end{figure}

\color{black}


The next proposition states that CRPQ answering can be recast as deciding whether there is a match of the query in a `small' valid proof scheme.

\begin{proposition}\label{prop-match-valid}
 $\instance, \ruleset \models \query$ iff there exists a match of $\query$ in some valid proof scheme of polynomial size in $\query$. 
\end{proposition}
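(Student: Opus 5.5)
The plan is to prove the two directions separately; the ($\Leftarrow$) direction is routine, and the ($\Rightarrow$) direction requires building a small valid proof scheme out of a match in the chase.

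\emph{($\Leftarrow$)} Let $\match$ be a match of $\query$ in a valid proof scheme $\proofScheme$ with witnessing embedding $\witness$. I claim that $\witness\circ\match$ is a match of $\query$ in $\chasekb{\kb}$.
\begin{itemize}
\item Standard atoms: if $\match(\atom)\in V$, then $\witness(\match(\atom))\in\chase{\instance}{\ruleset}$ by condition~\ref{item-valid-standard}.
\item Path atoms: as noted after Definition~\ref{def-validity}, every transition in the saturation of $\proofScheme$ still satisfies condition~\ref{item-valid-transition}, because paths can be concatenated. A pair $((i_x,s_x,i_y,s_y),(\atom_x,\atom_y))$ with $s_x$ initial and $s_y$ final therefore yields a path from $\witness(\atom_x)[i_x]=\witness(\match(x))$ to $\witness(\match(y))$ whose label is in $\mathcal L(\automaton)$.
\item Constants: these are preserved, since root atoms are mapped identically.
\end{itemize}

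\emph{($\Rightarrow$)} Fix a match $\match$ of $\query$ in $\chasekb{\kb}$. For each path atom, fix a witnessing path together with an accepting run of its automaton. Let $A_0$ be the finite set of chase atoms consisting of the images of the standard atoms and the atoms along the chosen paths. By Proposition~\ref{prop-distinct-nulls}, every atom of $A_0$ that is not in $\instance$ is generated from a unique root $\beta\in\instance$. The derivation history of these atoms gives a forest: each non-root chase atom has a parent, namely the atom its trigger was applied to. Taking all ancestors of atoms of $A_0$ gives a forest $\psforest_0$. Because nulls are introduced at a single node and propagate only to descendants, the null connectivity condition~\ref{item-ps-connected} holds, and the identity serves as witnessing embedding for conditions~\ref{item-valid-root}--\ref{item-valid-ancestor}. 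I then cut each witnessing path at the terms it visits, and record transitions between consecutive "anchor" atoms.
\begin{itemize}
\item A segment that stays within a single atom's subtree and returns to a term of an ancestor is recorded as a transition on that ancestor or on an edge. This relies on Proposition~\ref{ref-prop-atom-loop}, its relativized version for subtrees, and Proposition~\ref{proposition-linear-chase}.
\item A segment that passes between constants is recorded as a transition between root atoms. This is allowed by condition~\ref{item-ps-local}, since transitions between roots are permitted.
\end{itemize}
This yields a valid proof scheme admitting $\match$, but it may be unboundedly large.

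\emph{Size reduction, the main obstacle.} The forest must be pruned to polynomial size in $\query$. First, the transition bookkeeping already removes the need to keep path interiors, so only the images of query terms and standard atoms need to be retained. I keep only these atoms, their lowest common ancestors (a Steiner-tree argument keeps this linear), and the roots. Then I contract every chain of unary-branching intermediate nodes into a single edge. This contraction is justified as follows.
\begin{itemize}
\item Condition~\ref{item-valid-ancestor} only requires the existence of an $\ruleset$-derivation, and derivations compose.
\item Any null shared between the endpoints of a contracted chain was introduced at or above the chain's top, so connectivity is preserved.
\item Transitions along the chain compose. A path between terms of the chain's endpoints that wanders through intermediate nodes can be decomposed into transitions on the endpoints: each excursion's entry and exit terms are terms of the ancestor or descendant under consideration, and I use the fact that the relevant shared terms must lie in both endpoints.
\end{itemize}
The delicate point is showing that every such path decomposes into at most polynomially many recorded transitions, each attached to an allowed pair. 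I would handle this by choosing the paths to be minimal and observing that only the transitions actually used by each query path atom need to be stored. The saturation then recovers the composed transitions, so $\pstransitions$ stays of size polynomial in $\query$ times the number of retained nodes.
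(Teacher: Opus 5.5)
Your ($\Leftarrow$) direction is the paper's argument, and the node set you keep after pruning is precisely the paper's backbone (Definition~\ref{def-backbone}): images of standard atoms, atoms containing the null images of query variables, their pairwise glbs, and the roots, with edges given by the induced ancestor relation. The gap is in showing that the pruned scheme still supports the path atoms. You place the difficulty in bounding the number of recorded transitions and propose minimal paths as the fix. That is not where the difficulty lies, and minimality would not help. A shortest accepting path can be arbitrarily long relative to $q$, since it may go deep into the anonymous part, and it can enter and leave subtrees arbitrarily often. The size question is also trivial. A transition only records $(i_1,s_1,i_2,s_2)$, so each legal pair carries at most $(wN)^2$ transitions, with $N$ the number of automaton states in $q$. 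The paper simply includes, for every legal pair of backbone atoms, all transitions realized in $I_n$ (Definition~\ref{def-induced-proof-scheme}).

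The ingredient your plan needs but never establishes is the separator (``path'') property of the chase graph. If $\nu$ lies on the tree path between $\nu_x$ and $\nu_y$, then every chase path from a term of $\nu_x$ to a term of $\nu_y$ passes through a term of $\nu$. This follows by taking the first term of the path that leaves the component of $\nu_x$ in the chase graph minus $\nu$, and using connectivity. With it, you cut the witnessing path of each path atom once at each retained node on the backbone route from $\alpha_x$ to $\alpha_y$: up to their glb and back down, or, across trees, through the two roots. Each piece is then a transition on a parent--child pair or a pair of roots, and the saturation composes them.

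Your chain/excursion argument does not supply this. For a contracted chain nothing needs decomposing, because the segment between the endpoints is itself a single transition on the new edge. The real question is why the path must visit terms of the glb and of the intermediate retained nodes, in order. You could instead extract the separator property from your first-stage scheme, by arguing that a walk of allowed transitions (tree edges plus root--root jumps) must pass through every node of the tree route, but that argument has to be made. Finally, the first stage and the appeals to Propositions~\ref{ref-prop-atom-loop} and~\ref{proposition-linear-chase} are unnecessary. Condition~\ref{item-valid-transition} only asks for some path in the whole chase, so with the identity as witnessing embedding every transition realized by a chase path is valid, however its segments wander.
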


\begin{proof}
 ($\Leftarrow$) Let $\proofScheme$ be a valid proof scheme and $\witness$ be a witnessing embedding of $\proofScheme$ in $\chase{\instance}{\ruleset}$. Let $\match$ be a match of $\query$ in $\proofScheme$.
We claim that $\witness\circ \match$ is a match of $\query$ in $\chase{\instance}{\ruleset}$, which proves that $\instance, \ruleset \models \query$.
Indeed, take any atom $\atom$ of $\query$. If $\atom$ is a standard atom, $\match(\atom) \in V$, hence $\witness \circ \match(\atom) \in \chase{\instance}{\ruleset}$.
If $\atom$ is a path atom, let $\atom = \Lambda(x,y)$, then there is a pair $((i_x,s_x,i_y,s_y), (\atom_x,\atom_y))$  in the saturation of~$\proofScheme$, such that $\match(x)=\atom_x[i_x]$ and $\match(y)=\atom_y[i_y]$, with $s_x$ and $s_y$ initial and final states in the automaton associated with  $\Lambda$.  
As $\proofScheme$ is valid, its saturation also fulfills the validity conditions. Hence, there is a path $\path$ from $\witness(\match(x))$ to $\witness(\match(y))$ in $\chase{\instance}{\ruleset}$ such that $\pathLabel{\path}$ belongs to $\mathcal L(\Lambda)$. We conclude that $\witness\circ\match$ is a match of $\query$ in $\chase{\instance}{\ruleset}$.

($\Rightarrow$)
 This part of the proof is addressed in Section \ref{sec-crpq-upper-proof}.
\end{proof}

\subsection{Proof of Proposition\ \ref{prop-match-valid}}
\label{sec-crpq-upper-proof}

We now prove that if $\instance, \ruleset \models \query$, then one can build a valid proof scheme $\proofScheme$ of polynomial size in $\query$, such that there is a match of $\query$ in $\proofScheme$. 
For that, we rely on a classical tool, namely the \emph{chase graph} associated with a derivation, which keeps track of how atoms are generated in this derivation. 
Nodes are labeled by the atoms occurring in the derivation and there is an edge $(\node, \node')$ if a trigger in the derivation is applied on the atom that labels $\node$ and leads to generate the atom that labels $\node'$.  Note that we only keep track of rule applications that produce new atoms and that the derivation may not be fair (yet). 

\medskip

\begin{definition}[Chase graph associated with a derivation]
Let $\instance$ be an instance and $\ruleset$ be a set of linear rules. 
 The \emph{chase graph} associated with a (possibly infinite) $\ruleset$-derivation $D$ from $\instance$, denoted by $\chasegraph_D$,  is a (possibly infinite) node-labeled directed  forest built as follows (we use $\cglabel$ for the labeling function):
\begin{itemize}  
\item its set of nodes is in bijection with $\atoms{D}$ via $\cglabel$;
\item for each trigger $(\erule_i, \match_i)$ in $D$ that \emph{directly generates} an atom $\atom_i$, there is an edge $(\node,\node')$ where 
$\cglabel(\node) = \match_i(\body{\erule_i})$  and $\cglabel(\node') = \atom_i$.  
\end{itemize} 
The set of atoms that label the nodes of $\chasegraph_D$ is denoted by $\atoms{\chasegraph_D}$.
\end{definition}

Note that $\chasegraph_D$ is indeed a forest because each atom outside $\instance$ is directly generated exactly once in $D$ and there is a single node labeled by this atom. Also note that $\atoms{\chasegraph_D} = \atoms{D}$. 

The next example will be used to illustrate the technical notions of this section, beginning with that of a chase graph. 

\begin{example}
\label{ex-running}
Let $\instance = \{h(a_1,a_2),m(a_2)\}$ and $\ruleset$ containing the following rules:

\begin{tabular}{ll}
 $R_1: h(x,y) \rightarrow \exists z~b(x,y,z)$ &\qquad  \\
 $R_2: b(x,y,z) \rightarrow \exists t\ b'(x,z,t)$ &\qquad  $R_5: b(x,y,z) \rightarrow \exists t~b''(y,z,t)$\\
 $R_3: b'(x,y,z) \rightarrow s(x,z)$ &\qquad 
 $R_6: b''(x,y,z) \rightarrow r(y,z)$ \\
 $R_4: b'(x,y,z) \rightarrow r(z,y)$ &\qquad $R_7: b''(x,y,z) \rightarrow s(x,z)$
\end{tabular}

The chase graph associated with a (fair) $\ruleset$-derivation $D$ from $\instance$ to $\instance'$ is pictured in Figure\ \ref{fig-chase-graph-running}, left. 
Let $\query= \exists x_1\exists x_2\exists x_3\exists x_4\ h(x_1,x_2) \wedge m(x_2) \wedge s^*(x_1,x_3) \wedge r^*(x_3,x_4) \wedge s^*(x_2,x_4)$.
There is a match $\match$ of $\query$ in $\instance'$  defined by: $\{x_1 \rightarrow a_1, x_2 \rightarrow a_2, x_3 \rightarrow t_0, x_4 \rightarrow t_1\}$. 

\end{example}

\begin{figure}
\begin{center}
 \begin{tikzpicture}[line cap=round,line join=round,>=triangle 45,x=1.0cm,y=1.0cm]

\draw (0,0) node(root1) {${h(a_1,a_2)}$};
\draw (2.5,0) node(root2) {${m(a_2)}$};
\draw (0,-1) node (child1) {$b(a_1,a_2,z_0)$};
\draw[->] (root1) -- (child1);

\draw (1.5,-2.5) node (b2a2zy) {$b''(a_2,z_0,t_1)$};
\draw[->] (child1) -- (b2a2zy);
\draw (-1.5,-2.5) node (b1a1zx) {$b'(a_1,z_0,t_0)$};
\draw[->] (child1) -- (b1a1zx);
\draw (-2.2,-4) node (sa1x) {${s(a_1,t_0)}$};
\draw (-0.8,-4) node (rxz) {${r(t_0,z_0)}$};
\draw[->] (b1a1zx) -- (sa1x);
\draw[->] (b1a1zx) -- (rxz);
\draw (2.2,-4) node (sa2y) {${s(a_2,t_1)}$};
\draw (0.8,-4) node (rzy) {${r(z_0,t_1)}$};
\draw[->] (b2a2zy) -- (sa2y);
\draw[->] (b2a2zy) -- (rzy);

\draw (8,0) node(nroot1) {${h(a_1,a_2)}$};
\draw (10.5,0) node(nroot2) {${m(a_2)}$};
\draw (8,-1) node (nchild1) {$b(a_1,a_2,z_0)$};
\draw[->] (nroot1) -- (nchild1);

\draw (9.5,-2.5) node (nrzy) {$r(z_0,t_1)$};
\draw[->] (nchild1) -- (nrzy);
\draw (6.5,-2.5) node (nsa1x) {$s(a_1,t_0)$};
\draw[->] (nchild1) -- (nsa1x);

\end{tikzpicture}
\end{center}
\caption{Chase graph $\chasegraph_D$ on the left and a backbone of $\match$ on the right (Example \ref{ex-running})}
\label{fig-chase-graph-running}
\end{figure}

\medskip
To simplify writing, when a term $\term$ occurs in an atom $\atom = \cglabel(\node)$, we will say that  $\atom$ contains $\term$, and by extension that $\node$ \emph{contains} $\term$. This is extended to a set of nodes and a subgraph. 
 A chase graph $\chasegraph_D$ can also be seen as a tree decomposition of  the set of atoms $atoms(\chasegraph_D)$, in the following sense: 
$\chasegraph_D$ is a forest structure such that \emph{(i)} nodes are labeled by atoms from $atoms(\chasegraph_D)$, \emph{(ii)} any atom of $atoms(\chasegraph_D)$ is the label of some node of $\chasegraph_D$ and \emph{(iii)} for any null $\chasenull$, the 
set of nodes that contain $\chasenull$ forms a connected (rooted) subgraph of $\chasegraph_D$; for any constant $c$, the set of nodes \emph{in a tree that contains} $c$ form a connected (rooted) subgraph of $\chasegraph_D$. 
Next, the item \emph{(iii)} is called the \emph{connectivity property} of the chase graph. 

\medskip
The next proposition relates paths in a chase graph, which correspond to (undirected) paths of atoms, and paths in the chase, which are (directed) paths of terms.
Informally, it says that, given any atoms $\atom_x$ and $\atom_y$ in the same tree of the chase graph, any path of terms in the chase, from a term $x$ occurring in  $\atom_x$ to a term $y$ occurring in $\atom_y$, goes through any atom along the unique shortest path from $\atom_x$ to $\atom_y$ in the chase graph.

\begin{proposition}[Path property]
Let $\chasegraph_D$ be a chase graph associated with a derivation $D$. Let $\node_x$ and $\node_y$ be nodes in the same tree of $\chasegraph_D$, such that  $\node_x$ contains term $x$ and $\node_y$ contains term $y$.  Let $\node$ be any node on the unique shortest (undirected) path from $\node_x$ to $\node_y$ in $\chasegraph_D$. Then, any path (of terms) $p$ from $x$ to $y$ in $atoms(\chasegraph_D)$ can be written $p = p_1.p_2$, where $p_1$ and $p_2$ are paths (of terms) in $\chase{\instance}{\ruleset}$, respectively from $x$ to $z$ and from $z$ to $y$,  such that $\node$ contains $z$. 
\end{proposition}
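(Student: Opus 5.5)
The idea is to cut the path at a single ``cut node'' of the tree and show that this node separates the terms on one side of the tree from those on the other, except for terms the node itself contains. The structure of the chase graph that matters is the connectivity property: for any null, the nodes containing it form a connected subgraph of $\chasegraph_D$. For a constant, the nodes of a given tree containing it form a connected rooted subgraph. In a tree where the root contains the constant, these are all nodes containing it.

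\textbf{Step 1: easy cases.} If $\node$ contains $x$, take $z=x$, with $p_1$ the empty path and $p_2=p$. Symmetrically, if $\node$ contains $y$, take $z=y$ and $p_2$ empty. So assume $\node$ contains neither $x$ nor $y$. Removing $\node$ from its tree $\mathcal T$ splits the remaining nodes into components. Let $C_x$ be the component containing $\node_x$ and $C_y$ the one containing $\node_y$. These are distinct because $\node$ lies on the unique path between $\node_x$ and $\node_y$.

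\textbf{Step 2: every atom has a home component.} Write $p=t_0 r_1 t_1\cdots r_n t_n$ with $t_0=x$ and $t_n=y$. Each edge atom $r_i(t_{i-1},t_i)$ (or its inverse) labels some node $\mu_i$ of $\chasegraph_D$. I would let $i^*$ be the largest index such that $t_{i^*}$ occurs in a node of $\mathcal T$ lying in $C_x\cup\{\node\}$. I then argue that if some $t_j$ has no occurrence in $\node$, its occurrences in $\mathcal T$ lie in a single component of $\mathcal T-\node$. This holds because those occurrences form a connected subgraph avoiding $\node$. For constants I use the rooted version, with the caveat about constants noted below.

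\textbf{Step 3: an edge cannot jump across $\node$.} Consider two consecutive terms $t_{i-1},t_i$, neither occurring in $\node$. The atom linking them sits in a node $\mu_i$ that contains both terms. So if $\mu_i\in\mathcal T$, both terms have occurrences in the same component. Hence, following $p$ from $x$, whose occurrences lie in $C_x$, while staying on terms absent from $\node$, we remain in $C_x$. Since $y$ is absent from $\node$ and its occurrences lie in $C_y\neq C_x$, some term $t_k$ must occur in $\node$. I take $z=t_k$, split $p$ there, and obtain $p_1$ and $p_2$. Both are paths in $\atoms{\chasegraph_D}\subseteq\chase{\instance}{\ruleset}$, up to isomorphism.

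\textbf{Main obstacle.} The delicate point is atoms $\mu_i$ lying in \emph{other} trees of the forest, which can only contain constants. There are also constants that occur in several places of $\mathcal T$ without the root containing them, or in other trees. Such an edge could let the path leave $C_x$ through a different tree and re-enter near $\node_y$. I would handle this by observing two facts:
\begin{itemize}
\item A term occurring in an atom of another tree must be a constant, by Proposition \ref{prop-distinct-nulls}.
\item Every constant occurring in $\mathcal T$ occurs in its root atom, because rules contain no constants and the nulls are fresh.
\end{itemize}
A constant in the root then has its occurrences forming a rooted connected subtree, which contains $\node$ whenever it reaches both sides of $\node$. If a constant does not occur in $\node$, all its occurrences in $\mathcal T$ lie in the component containing the root. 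Making this component bookkeeping rigorous is where I expect the real work to be. In particular, Step 3 must be restated for constants as ``lies in the root-side component'', with a separate check that $x$ and $y$ cannot both be root-side without one of them occurring in $\node$.
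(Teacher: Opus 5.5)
Your proof is correct and follows the paper's argument. The paper dismisses the cases where $\node$ contains $x$ or $y$, takes the first index $i$ with $t_i$ occurring outside $C_x\cup\{\node\}$, and uses the connectivity property to force $t_i$ into $\node$; that is exactly your invariant that an edge cannot jump across $\node$.

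Your extra treatment of edges lying in other trees of the forest is sound, and it covers a case the paper's proof does not treat explicitly, since connectivity for constants only holds within a single tree. The reasoning is that terms shared with $\mathcal{T}$ are constants, so they occur in its root, so they lie in the root-side component when absent from $\node$. State the first fact as ``a term occurring both in $\mathcal{T}$ and in another tree is a constant'', since other trees have their own nulls.
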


\begin{proof}
 Let $p = t_0 r_1 t_1 \ldots r_n t_n$ such that $t_0 = x$ and $t_n = y$. 
 If $\node$ contains $x$ (resp. $y$), the statement trivially holds: we take $p_1$ empty and $p_2 = p$ (resp.  $p_1 = p$ and $p_2$ empty). 
Otherwise (we know that $\node \neq \node_x$ and $\node \neq \node_y$), removing $\node$ from $\chasegraph_D$ splits the graph into several connected components, including one, that we call $C_x$ (resp. $C_y$) containing $\node_x$ (resp. $\node_y$). Let us consider the smallest $i$ in $p$ such that $t_i$ occurs in a connected component $C \neq C_x$. There must be such an $i$, since $t_n = y$ occurs in $C_y$.  We show that $t_i$ occurs in $\node$, i.e., $t_i$ is a term $z$ as required by the property. Since $r_i(t_{i-1},t_i) \in atoms(\chasegraph_D)$, there is a node $\node'$ of $\chasegraph_D$  labeled by $r_i(t_{i-1},t_i)$. 
By hypothesis on $t_i$, $t_{i-1}$ does not appear outside $C_x \cup \{\node\}$, hence $\node'$ belongs to $C_x \cup \{\node\}$. Either $\node' = \node$ (and $t_i$ appears in $\node$), or $\node'$ belongs to $C_x$: in this case, $t_i$ appears both in $C_x$ and in $C \neq C_x$, hence, by the connectivity property on $t_i$, $t_i$ must also appear in $\node$, which concludes the proof.
\end{proof}

{\sc Example \ref{ex-running} (continued).} See also Figure \ref{fig-chase-graph-running} (left). Let  
$p = a_1 ~s ~t_0 ~r ~z_0 ~r ~t_1 ~s^{-} ~a_2$ be a path of terms, which occurs in the chase. We can see that $p$ has no direct connection with the structure of the chase graph pictured in Figure \ref{fig-chase-graph-running} (left). Now, take, e.g., the nodes $\node_{a_1}$ labeled by $b'(a_1,z_0,t_0)$ and $\node_{a_2}$ labeled by $s(a_2,t_1)$, and consider the shortest path of atoms from $\node_{a_1}$ to $\node_{a_2}$ in the chase graph, which goes through $b(a_1,a_2,z_0)$, and $b''(a_2,z_0, t_1)$. W.r.t. the atom $b(a_1,a_2,z_0)$,  $p$ can be written $p_1.p_2$ with $p_1 = a_1 ~s ~t_0 ~r ~z_0$ and $p_2 = z_0 ~r ~t_1 ~s^{-} ~a_2$.
W.r.t. the atom $b''(a_2,z_0, t_1)$,  $p$ can be decomposed in the same way, or also $p''_1.p''_2$ with $p''_1 = a_1 ~s ~t_0 ~r ~z_0 ~r ~t_1$ and $p''_2 = t_1 ~s^{-} ~a_2$.

\medskip
When $\instance, \ruleset \models \query$, there is a finite derivation $D$ from $\instance$ to an extended instance $\instance_n$ with a match $\match$ of $\query$ in $I_n$. We will now build a valid proof scheme from $D$ and  $\match$. 
To do so, guided by $\match$, we first select some atoms in $I_n$: \emph{(i)}  the images of the standard atoms of $\query$, \emph{(ii)} for each variable of $\query$ mapped to a null, an atom containing its image, and \emph{(iii)} some atoms that allow to check for the existence of required paths.
Note that some terms of $\query$ may appear only in path atoms, which justifies \emph{(ii)}. Each of these atoms being the label of a unique node in $\chasegraph_D$, the selected set of atoms is naturally structured in a forest induced by $\chasegraph_D$. From this `backbone', we then build the intended proof scheme
 by adding all the transitions that appear between eligible pairs of atoms. 

Next, we use the notation $\node_1 \prec \node_2$ to denote that $\node_1$ is a \emph{strict ancestor} of $\node_2$ in the considered chase graph. Also, for $\node$ and $\node'$ in the same tree of a chase graph, $\glb{\node,\node'}$ denotes their \emph{greatest lower bound} ({i.e.}, their deepest common ancestor) in this graph.

\begin{definition}[Backbone of a match]\label{def-backbone}
Let $D$ be an $\ruleset$-derivation from $\instance$ to $\instance_n$, such that $\instance_n \models \query$. 
Let $\match$ be a match of $\query$ in $\instance_n$. A \emph{backbone} of $\match$ is a directed forest $(V,E)$ defined as follows:

\begin{itemize}
 \item $V$ is a subset of $\instance_n$ restricted to the following atoms:
 \begin{itemize}
 \item for each standard atom $\atom \in \query$, the atom $\match(\atom)$ ;
 \item for each variable $x$ of $\query$ such that $\match(x)$ is a null, some $\atom_x \in \instance_n$ that contains $\match(x)$;
 \item for each atom $\atom\in V$, the atom $\cglabel(\node_r)$, where $\node_r$ is the root of the tree of $\chasegraph_D$ in which the node $\node$ with $\cglabel(\node) = \atom$ occurs;
 \item for each pair $(\atom_1,\atom_2) \in V \times V$ such that $\atom_1 = \cglabel(\node_1), \atom_2 = \ell(\node_2)$, and $\node_1$, $\node_2$ belong to the same tree of $\chasegraph_D$, the atom $\cglabel(\node)$, where $\node= \glb{\node_1,\node_2}$ in $\chasegraph_D$;
 \end{itemize}
 \item $E \subseteq V \times V$ is the set of edges induced by $\chasegraph_D$, i.e., it contains $(\atom_1,\atom_2)$ if and only if $\atom_1 = \cglabel(\node_1)$, $\atom_2 = \cglabel(\node_2)$, $\node_1 \prec \node_2$ in $\chasegraph_D$,  and there is no node $\node_3$ with $\node_1 \prec \node_3 \prec \node_2$ and $\cglabel(\node_3) \in V$.
\end{itemize}

\end{definition}

{\sc Example \ref{ex-running} (continued).}
Consider again the match $\match$ of $\query$ in the instance $\instance'$  (Figure \ref{fig-chase-graph-running}, left)
defined by: $\{x_1 \rightarrow a_1, x_2 \rightarrow a_2, x_3 \rightarrow t_0, x_4 \rightarrow t_1\}$. A backbone of $\match$ is shown on the right of Figure\ \ref{fig-chase-graph-running}: $h(a_1,a_2)$ and $m(a_2)$ belong to the backbone because they each are the image of a 
standard atom of $\query$; $s(a_1,t_0)$ and $r(z,t_1)$ are selected because they contain the nulls $\match(x_3) = t_0$ and $\match(x_4)= t_1$ (note that $x_3$ and $x_4$ only occur in path atoms of $\query$); $b(a_1,a_2,z_0)$ is selected because it is the deepest common ancestor of $s(a_1,t_0)$ and $r(z_0,t_1)$. Then, to define the associated proof scheme induced by $\match$, we will enrich that backbone by all the possible transitions (which is not shown on the figure).

\begin{definition}[Proof scheme induced by a match]
 \label{def-induced-proof-scheme}
Let $\match$ be a match of $\query$ in some $\instance_n$ obtained by an $\ruleset$-derivation from $\instance$. A proof scheme $\proofScheme_\match$ induced by $\match$ is $(\mathcal{F} = (V,E),\pstransitions)$, where:

\begin{itemize}
 \item $\mathcal{F}$ is a backbone of $\match$; 
 \item $\mathcal{T}$ is the set of all pairs $(t,(\atom_1,\atom_2))$ with $(\atom_1,\atom_2) \in V \times V$ and $t \in \transitionAtomSet{\atom_1}{\atom_2}{\query}{\instance_n}$, that are legal in a proof scheme, i.e., $\atom_1 = \atom_2$, $(\atom_1,\atom_2) \in E$, $(\atom_2,\atom_1) \in E$, or both $\atom_1$ and $\atom_2$ are roots.
\end{itemize}
\end{definition}

{\sc Example \ref{ex-running} (continued).} 
 For instance, let us assume an automaton with a single state $\state_r$ to encode $r^*$. The transition $((1,\state_r,2,\state_r),(r(z_0,t_1),r(z_0,t_1)))$ is in $\proofScheme_\match$ because \emph{(i)} $r(z_0,t_1) = r(z_0,t_1)$ and \emph{(ii)} the atom $r(z_0,t_1)$ belongs to $\chase{\instance}{\ruleset}$. It also holds that $((2,\state_r,3,\state_r),(s(a_1,t_0),b(a_1,a_2,z_0))) \in \pstransitions$ is in $\proofScheme_\match$, because \emph{(i)} $s(a_1,t_0)$ is a child of $b(a_1,a_2,z_0)$ and \emph{(ii)} the atom $r(t_0,z_0)$ belongs to $\chase{\instance}{\ruleset}$. 
 
 The transition $((1,\state_r,2,\state_r),(r(t_0,z_0),r(z_0,t_1)))$ does not belong to $\proofScheme_\match$, because $r(t_0,z_0)$ is neither equal to nor a parent or child of $r(z_0,t_1)$, but it belongs to the saturation of $\proofScheme_\match$ because of the above two transitions.
  \medskip
 Let us note that the proof scheme induced by a match of $\query$ has a size polynomial in $\query$. We now prove that it is indeed a proof scheme and that it is valid.

\begin{proposition}
 Let $\match$ be a match of $\query$ in $\instance_n$ obtained by an $\ruleset$-derivation $D$ from $I$. 
 Any proof scheme induced by $\match$ is valid. 
\end{proposition}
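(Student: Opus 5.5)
We first check that a proof scheme $\proofScheme_\match = (\psforest,\pstransitions)$ induced by $\match$ is indeed a proof scheme in the sense of Definition~\ref{def-proof-scheme}, and then show it is valid by taking as witnessing embedding $\witness$ a suitable renaming of the nulls of $\instance_n$ into $\chase{\instance}{\ruleset}$. Since $D$ is a finite $\ruleset$-derivation from $\instance$, it can be extended to a fair derivation, i.e.\ a chase sequence, so $\instance_n$ is (up to an isomorphism fixing constants) a subset of $\chase{\instance}{\ruleset}$. We let $\witness$ be the restriction of this isomorphism to $\vars{\proofScheme_\match}$. In what follows we treat $\instance_n \subseteq \chase{\instance}{\ruleset}$, so $\witness$ is essentially the identity.

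For the proof scheme conditions, items~\ref{item-ps-set}, \ref{item-ps-transition} and~\ref{item-ps-local} hold by construction: $V \subseteq \instance_n$, and $\pstransitions$ is restricted to legal pairs. Item~\ref{item-ps-df} holds because $E$ is the ancestor relation of the forest $\chasegraph_D$ restricted to $V$, with intermediate selected nodes contracted. This gives a forest whose roots are exactly the selected roots of $\chasegraph_D$, and these are included by the third bullet of Definition~\ref{def-backbone}. Item~\ref{item-ps-connected} is the main point. Take a null $\chasenull$ occurring in two nodes $\atom_1,\atom_2 \in V$. Since nulls never occur in roots, and by Proposition~\ref{prop-distinct-nulls}, the corresponding nodes $\node_1,\node_2$ lie in the same tree of $\chasegraph_D$. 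By the connectivity property of $\chasegraph_D$, every node on the path between them contains $\chasenull$, and in particular $\glb{\node_1,\node_2}$ does. That node belongs to $V$ by the glb-closure bullet. In $\psforest$, the path from $\atom_1$ to $\atom_2$ goes through selected ancestors of $\node_1$ and $\node_2$ down to this glb, and each of these lies on the chase-graph path, so each contains $\chasenull$. Hence the occurrences of $\chasenull$ are connected.

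For validity, we check the conditions of Definition~\ref{def-validity} in turn. Condition~\ref{item-valid-root}: roots of $\psforest$ are roots of $\chasegraph_D$, hence atoms of $\instance$, which are ground and fixed by $\witness$. Condition~\ref{item-valid-standard} follows from $\instance_n \subseteq \chase{\instance}{\ruleset}$. Condition~\ref{item-valid-ancestor}: an edge $(\atom_1,\atom_2) \in E$ corresponds to $\node_1 \prec \node_2$ in $\chasegraph_D$. The sequence of triggers along the chase-graph path from $\node_1$ to $\node_2$ is an $\ruleset$-derivation from $\{\atom_1\}$ producing $\atom_2$, because rules are linear and each trigger only uses its parent atom. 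Condition~\ref{item-valid-transition}: a transition in $\pstransitions$ is witnessed by a path in $\instance_n$, and this path is also a path in $\chase{\instance}{\ruleset}$. So $\tau \in \transition{\witness(\atom_1)}{\witness(\atom_2)}{\query}{\instance}{\ruleset}$ by monotonicity of the set of transitions with respect to the atom set.

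The expected obstacle is the connectivity condition (item~\ref{item-ps-connected}), and in particular arguing that contracting unselected intermediate nodes preserves it. This is exactly why the backbone is closed under glb's, and the argument above relies on that closure. A secondary technical point is being careful that the isomorphism embedding $\instance_n$ into the chase fixes constants, so that root atoms are literally preserved.
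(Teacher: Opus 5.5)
Your proposal is correct and follows the same route as the paper's proof. You verify the proof-scheme conditions, with connectivity coming from the glb-closure of the backbone and the connectivity property of $\chasegraph_D$, and then take the identity as witnessing embedding after identifying $\instance_n$ with a subset of $\chase{\instance}{\ruleset}$. You are in fact slightly more careful than the paper on two points it glosses over: that every intermediate node on the $\psforest$-path between two occurrences of a null also contains it, and that $\instance_n$ embeds into the chase by extending $D$ to a fair derivation, with transitions preserved by monotonicity.
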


\begin{proof}
Let $\proofScheme_\match = (\psforest,\pstransitions)$ be a proof scheme induced by $\match$. We first prove that $\proofScheme_\match$ is indeed a proof scheme. Numbers refer to items of Definition~\ref{def-proof-scheme}. (\ref{item-ps-df}): since the chase graph $\chasegraph_D$ is a directed forest, $\mathcal{F}$ is a directed forest; (\ref{item-ps-set}): $V$ is by definition a set of atoms; (\ref{item-ps-connected}): let $\atom_1$ and $\atom_2$ be two atoms of $\mathcal{F}$ containing $\chasenull$. 
By the connectivity property of $\chasegraph_D$, $\atom_1$ and $\atom_2$ are labels of connected nodes in $\chasegraph_D$. 
Let $\node_1$ and $\node_2$ be these nodes and let $\node_3 = \glb{\node_1,\node_2}$: by construction, $\cglabel(\node_3)$ belongs to $\mathcal{F}$ and is an  ancestor of  $\node_1$ and $\node_2$ in $\mathcal{F}$. Hence, $\atom_1$ and $\atom_2$ are connected in $\mathcal{F}$.
  (\ref{item-ps-transition}) and (\ref{item-ps-local}):  by construction, all the transitions in the induced proof scheme are of the correct syntactic shape. 
 
  We now show that the identity is a witnessing embedding of $\proofScheme_\match$ in $\chasegraph_D$, proving its validity. Numbers refer to items in Definition~\ref{def-validity}. (\ref{item-valid-root}) By construction of $V$, all the root atoms are labels of roots of $\chasegraph_D$, hence they belong to $\instance$.
  (\ref{item-valid-standard}) By construction, $V \subseteq \instance_n$, and $\witness$ is the identity, so $\witness(\atom)$ belongs to $\instance_n$ for all $\atom \in V$.
  (\ref{item-valid-ancestor}) By construction, for each edge $(\atom_1,\atom_2)$ in $E$, $\cglabel^{-1}(\atom_1) \prec \cglabel^{-1}(\atom_2)$ in $\chasegraph_D$,
  hence there is a derivation as required.
  (\ref{item-valid-transition}) By construction, each $(t,(\atom_1,\atom_2)) \in \mathcal{T}$ belongs to $\transitionAtomSet{\atom_1}{\atom_2}{\query}{\instance_n}$, hence to $\transitionAtomSet{\witness(\atom_1)}{\witness(\atom_2)}{\query}{\instance_n}$ because $\witness$ is the identity.
 We conclude that $\proofScheme$ is a valid proof scheme w.r.t $(\instance,\ruleset,\query$).
\end{proof}

We now show that there is indeed a match of $\query$ in the (valid) proof scheme induced by a match of $q$ in the chase.

\begin{proposition}
 \label{prop-match-induced-proof-scheme}
Let $\proofScheme_\match$ be a proof scheme induced by a match $\match$ of $\query$ in $\instance_n$, where $\instance_n$ is the result of some derivation $D$ from $\instance$. There exists a match of $\query$ in $\proofScheme_\match$. 
\end{proposition}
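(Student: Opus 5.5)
The plan is to show that $\match$ itself, viewed as a mapping from $\vars{\query}$ to $\terms{\proofScheme_\match}$, is a match of $\query$ in $\proofScheme_\match = ((V,E),\pstransitions)$.

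\textbf{Standard atoms.} For a standard atom $\atom$ of $\query$, $\match(\atom)\in V$ by the first item of Definition~\ref{def-backbone}, so this case is immediate.

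\textbf{Path atoms: setup.} Fix a path atom $\lang(x,y)$ with NFA $\automaton$. Since $\match$ is a match in $\instance_n$, there is a path $\path$ from $\match(x)$ to $\match(y)$ in $\instance_n$ whose label is accepted by $\automaton$. Fix an accepting run of $\automaton$ on $\lambda(\path)$, so every intermediate term of $\path$ carries a state. I need atoms $\atom_x,\atom_y\in V$ with $\match(x)=\atom_x[i_x]$ and $\match(y)=\atom_y[i_y]$.
\begin{itemize}
\item If $\match(x)$ is a null, $\atom_x$ is the atom selected in the second item of the backbone definition.
\item If $\match(x)$ is a constant, I must find an atom of $V$ containing it. I expect to handle this with the root atoms: for linear constant-free rules, every constant occurring in a tree of $\chasegraph_D$ already occurs in its root. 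If $\match(x)$ occurs in no atom of $V$ at all, the backbone must additionally select an atom containing $\match(x)$ (the same treatment as nulls). I would flag this as the point where the definition may have to be read generously.
\end{itemize}

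\textbf{Case 1: $\atom_x$ and $\atom_y$ label nodes $\node_x,\node_y$ of the same tree.}
\begin{itemize}
\item Let $g=\glb{\node_x,\node_y}$. Its label is in $V$ by the fourth item of Definition~\ref{def-backbone}.
\item The backbone nodes on the chase-graph path $\node_x \leadsto g \leadsto \node_y$ form a sequence $\atom_x=\beta_0,\dots,\beta_m=\atom_y$. Consecutive elements are related by $E$ in one direction or the other, by the definition of $E$ as the induced ancestor relation.
\item Applying the path property once for each $\beta_k$, in order along $\path$, splits $\path$ into segments $\path_1\cdots\path_m$. Segment $\path_k$ goes from a term at some position $j_{k-1}$ of $\beta_{k-1}$ to a term at position $j_k$ of $\beta_k$.
\item Reading off the states of the fixed run at the cut points gives, for each $k$, a transition $(j_{k-1},\state_{k-1},j_k,\state_k)\in\transitionAtomSet{\beta_{k-1}}{\beta_k}{\query}{\instance_n}$. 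This pair is legal, since $\beta_{k-1}$ and $\beta_k$ are adjacent in $E$, so it belongs to $\pstransitions$ by Definition~\ref{def-induced-proof-scheme}.
\item The saturation then contains $((i_x,\state_0,i_y,\state_f),(\atom_x,\atom_y))$ with $\state_0$ initial and $\state_f$ final.
\end{itemize}

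\textbf{Case 2: different trees.} Nulls of different trees are disjoint (Proposition~\ref{prop-distinct-nulls}), so $\path$ must pass through a constant before leaving the tree of $\atom_x$. Let $c$ be the first constant on $\path$ and $c'$ the last.
\begin{itemize}
\item Every atom of the tree containing $\node_x$ has its constants in the root, and all the atoms traversed by $\path$ before $c$ lie in that tree. Hence $c$ occurs in the root $r_x$, which is in $V$ by the third backbone item; likewise $c'$ occurs in the root $r_y$.
\item The prefix of $\path$ up to $c$ is handled as in Case 1 along the ancestor chain from $\atom_x$ to $r_x$, using the path property; symmetrically for the suffix from $c'$ along the chain from $r_y$ to $\atom_y$.
\item The middle segment from $c$ to $c'$ yields a transition between the two roots $r_x$ and $r_y$, which is legal because both are roots.
\item Saturation composes the three pieces into the required transition from $\atom_x$ to $\atom_y$.
\end{itemize}

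The main obstacle is the careful bookkeeping in the path-property decomposition: ensuring the cut terms occur in the intended backbone atoms in the right order, and that every resulting pair is legal. Secondarily, the constant-endpoint issue noted in the setup needs care.
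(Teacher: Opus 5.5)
Your proposal is correct and follows essentially the same route as the paper's proof. Both show that $\match$ itself is a match in $\proofScheme_\match$, and both handle each path atom by using the path property to cut the witnessing path at backbone atoms: through the deepest common ancestor within a tree, or through the two roots plus a root-to-root transition across trees. The resulting legal transitions are then composed in the saturation.

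The constant-endpoint issue you flag is genuine. The paper's proof simply asserts that $V$ contains atoms containing $\match(x)$ and $\match(y)$, but Definition~\ref{def-backbone} only guarantees this when these are nulls. Your amended reading, which selects such an atom for every query term occurring in a path atom, is the right fix and keeps the backbone polynomial.
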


\begin{proof}
We show that $\match$ is a match of $\query$ in $\proofScheme_\match$ = $(\mathcal{F} = (V,E),\pstransitions)$. 
\begin{itemize}
 \item By definition of $\proofScheme_\match$, if $\atom \in \query$ is a standard atom, then $\match(\atom) \in V$.
 \item Let $\atom = \Lambda(x,y)$ be a path atom. As $\match$ is a match of $\query$, there is a path in $\instance_n$ from $\match(x)$ to $\match(y)$, whose label  is recognized by the automaton associated with $\Lambda$. 
By definition of $V$ (Definition \ref{def-backbone}), there exist $\atom_x$ and $\atom_y$ in $V$ containing $\match_x$ and $\match_y$. Let $\node_x$ and $\node_y$ s.t. $\cglabel(\node_x) = \atom_x$ and $\cglabel(\node_y) = \atom_y$. We show by induction that whenever there is a path from $\atom_x[i]$ to $\atom_y[j]$ in $\instance_n$ that moves the automaton from $s_x$ to $s_y$, then $((i_x,s_x,i_y,s_y),(\atom_x,\atom_y))$ is in the saturation of $\proofScheme$. We study the  possible cases regarding the relationship between $\atom_x$ and $\atom_y$ in $\proofScheme_\match$:

 \begin{itemize}
  \item if $\atom_x = \atom_y$, or $\atom_x$ is the parent of $\atom_y$ (or vice-versa):  the property holds 
 by definition of the proof scheme induced by a match (Definition \ref{def-induced-proof-scheme});
  \item if $\atom_x$ is a strict ancestor of $\atom_y$, but not a parent: we show the result by induction on the length of the path from $\atom_x$ to $\atom_y$. The base case has been treated in the previous item. Let $\atom$ be the child of $\atom_x$ that is an ancestor of $\atom_y$. Let $\node$ be the node of $\chasegraph_D$ s.t. $\cglabel(\node) = \atom$. By the path property of $\chasegraph_D$, any path that goes from $x$ to $y$ must go through some $z \in \terms{\atom}$ occurring at some position $i_z$. Let $p = p_1.p_2$, such that $p_1$ goes from $x$ to $z$ and $p_2$ goes from $z$ to $y$. Let $s_z$ be the state of  $\automaton$ after reading $p_1$ starting from $s_x$. By induction assumption,  $((i_x,s_x,i_z,s_z),(\atom_x,\atom))$ and  $((i_z,s_z,i_y,s_y),(\atom,\atom_y))$ belong to the saturation of $\proofScheme_\match$. Hence, $((i_x,s_x,i_y,s_y),(\atom_x,\atom_y))$ is in the saturation of $\proofScheme$;
  \item $\atom_x$ and $\atom_y$ are not in an ancestor relationship but are in the same tree. By the path property of $\chasegraph_D$, any path from $\match(x)$ to $\match(y)$ must go through a term $z$ of $\glb{\node_x,\node_y}$, whose label we denote by $\atom$. Let $p=p_1.p_2$ such that $p_1$ is a path from $x$ to $z$ and $p_2$ is a path from $z$ to $y$. Let $s_z$ be the state in which $\automaton$ is after reading $p_1$ starting from $s_x$. Assuming $z = \atom[i_z]$, by induction assumption, $((i_x,s_x,i_z,s_z),(\atom_x,\atom))$ and $((i_z,s_z,i_y,s_y),(\atom,\atom_y))$ belong to the saturation of $\proofScheme_\match$. Hence, 
    $((i_x,s_x,i_y,s_y),(\atom_x,\atom_y))$ 
  is in the saturation of $\proofScheme_\match$;
  \item $\atom_x$ and $\atom_y$ are not in the same tree: let $\atom_{R_x}$  (resp. $\atom_{R_y}$) be the root of the tree containing $\atom_x$ (resp. $\atom_y$). By the connectivity property of $\chasegraph_D$, any path from $x$ to $y$ must go through $\atom_{R_x}[i_{t_x}]$ for some $i_{t_x}$ (resp. $\atom_{R_y}[i_{t_y}]$ for some $i_{t_y}$). By the preceding cases, it holds that $((i_x,s_x,i_{t_x},s_{t_x}),(\atom_x,\atom_{R_x}))$ and $((i_{t_y},s_{t_y},i_{y},s_{y}),(\atom_{R_y},\atom_y))$ belong to the saturation of $\proofScheme_\match$. By definition of $\proofScheme_\match$, because $\atom_{R_x}$ and $\atom_{R_y}$ are roots of $\proofScheme_\match$, it holds that  $((i_{t_x},s_{t_x},i_{t_y},s_{t_y}),(\atom_{R_x},\atom_{R_y}))$ belongs to $\proofScheme_\match$. Hence, by definition of the saturation of $\proofScheme_\match$, 
    $((i_x,s_x,i_y,s_y),(\atom_x,\atom_y))$ 
  also belongs to the saturation of $\proofScheme_\match$, which concludes the proof.\qedhere
 \end{itemize}
\end{itemize}
\end{proof}

\subsection{Complexity Analysis}
\label{sec-crpq-complexity}

By Proposition \ref{prop-match-valid}, we know that $\query$ is entailed by $\instance$ and $\ruleset$ if and only if there exists a valid proof scheme $\proofScheme$ of polynomial size in $\query$ such that there is a match of $\query$ in $\proofScheme$. We now investigate how to check the validity of a proof scheme. 
To do so, we will consider a variant of the previous chase graph, in which we keep track of all possible ways of producing atoms. 
We call it the chase graph associated with $(\instance,\ruleset)$, because its set of atoms 
is homomorphically equivalent to  $\chase{\instance}{\ruleset}$. 

\begin{definition}[Chase graph associated with $(\instance,\ruleset)$]
Let $\instance$ be an instance and $\ruleset$ be a set of linear rules.
The \emph{chase graph} associated with $(\instance,\ruleset)$, denoted by $\chasegraph(\instance, \ruleset)$,
is the node-labeled directed graph defined as follows (we use $\cglabel$ for the labeling function):
\begin{itemize}
\item The restriction of $\cglabel$ to the root nodes of $\chasegraph(\instance, \ruleset)$ defines a bijection between the root nodes and $I$.
\item For every node $\node$, the children of $\node$ are in bijection with the finite set of triggers $(\erule_i, \match_i)$ on
$\cglabel(\node)$, and the labels of the children are precisely the atoms  $\atom_i$ resulting from the application of these triggers 
to $\cglabel(\node)$ (using globally unique fresh nulls). 
\item Only the nodes and edges required by the preceding two items are present in $\chasegraph(\instance, \ruleset)$. 
\end{itemize}
\end{definition}

Importantly, $\chasegraph(\instance, \ruleset)$ still fulfils the connectivity property (for the same reasons as the chase graph associated with derivations). 
The following lemma  
shows how transitions between an atom in the chase graph and an ancestor atom can be computed using the transitions relating its parent atom to the same ancestor, together with the transitions that can be obtained by considering the atom in isolation.

\begin{lemma}\label{decompose-trans}
Let $\node_1$, $\node_2$ and $\node_3$ be nodes in $\chasegraph(\instance, \ruleset)$ such that $\node_1$ is a (strict) ancestor of $\node_2$, and $\node_3$ is a successor of $\node_2$. Let $\atom_i = \cglabel(\node_i)$ for $1 \leq i \leq 3$. Then:
\sloppypar{
\begin{enumerate}
\item $(i_1, s_1, i_3, s_3) \in \transition{\atom_1}{\atom_3}{\query}{\instance}{\ruleset}$ iff there exist $i_2,s_2,i'_2$  
such that $(i_1, s_1, i_2, s_2) \in \transition{\atom_1}{\atom_2}{\query}{\instance}{\ruleset}$, $(i_2', s_2, i_3, s_3) \in \transition{\atom_3}{\atom_3}{\query}{\{\atom_3\}}{\ruleset}$ and
$\atom_2[i_2] = \atom_3[i_2'];$
\item $(i_1, s_1, i_3, s_3) \in \transition{\atom_3}{\atom_1}{\query}{\instance}{\ruleset}$ iff there exist $i_2,s_2,i'_2$  
such that
$(i_1, s_1, i_2', s_2) \in \transition{\atom_3}{\atom_3}{\query}{\{\atom_3\}}{\ruleset}$, $(i_2, s_2, i_3, s_3) \in \transition{\atom_2}{\atom_1}{\query}{\instance}{\ruleset}$ and $\atom_2[i_2] = \atom_3[i_2'];$
\item $(i_1, s_1, i_4, s_4) \in \transition{\atom_3}{\atom_3}{\query}{\instance}{\ruleset}$ iff there exist $i_2,s_2,i'_2,i_3,s_3,i'_3$ 
such that
$(i_1, s_1, i_2', s_2) \in \transition{\atom_3}{\atom_3}{\query}{\{\atom_3\}}{\ruleset}$,
$(i_2, s_2, i_3, s_3) \in \transition{\atom_2}{\atom_2}{\query}{\instance}{\ruleset}$,
$(i_3', s_3, i_4, s_4) \in \transition{\atom_3}{\atom_3}{\query}{\{\atom_3\}}{\ruleset}$
 and $\atom_3[i_2'] = \atom_2[i_2]$ and $\atom_3[i_3'] = \atom_2[i_3].$
\end{enumerate}}
\end{lemma}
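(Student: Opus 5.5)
The plan is to prove all three items by the same decomposition of paths in $\chase{\instance}{\ruleset}$ (viewed through $\chasegraph(\instance,\ruleset)$), using the path property together with the localisation of anonymous subpaths to the chase of a single atom. I treat item 1 in detail; item 2 is symmetric (reverse the path), and item 3 applies the same cut twice.

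The ($\Leftarrow$) directions are concatenations. For item 1, take the given objects:
\begin{itemize}
\item a path $p_1$ from $\atom_1[i_1]$ to $\atom_2[i_2]$ with label in $\subLanguage{\automaton}{s_1}{s_2}$;
\item a path $p_2$ in $\chase{\{\atom_3\}}{\ruleset}$ from $\atom_3[i_2']=\atom_2[i_2]$ to $\atom_3[i_3]$ with label in $\subLanguage{\automaton}{s_2}{s_3}$.
\end{itemize}
Since $\atom_3$ labels a node of the chase graph, a copy of $\chase{\{\atom_3\}}{\ruleset}$ fixing $\terms{\atom_3}$ embeds in $\chase{\instance}{\ruleset}$; this follows from the argument of Proposition~\ref{proposition-linear-chase}, with the triggers applied below $\node_3$. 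Hence $p_2$ is a path of the big chase, and $p_1.p_2$ witnesses the required transition. Item 2 is the same in reverse order. Item 3 uses three pieces, with the middle piece in the full chase.

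For ($\Rightarrow$) in item 1, take a witnessing path $p$ from $\atom_1[i_1]$ to $\atom_3[i_3]$ with a run of $\automaton$ from $s_1$ to $s_3$.
\begin{enumerate}
\item Cut $p$ at the \emph{last} term $z$ that occurs in $\atom_2$. Such a term exists: $\node_2$ lies on the tree path between $\node_1$ and $\node_3$, so the path property applies, and in the degenerate case it applies across roots via connectivity. Write $p=p_1.p_2$, let $i_2$ be a position of $z$ in $\atom_2$, let $s_2$ be the state reached after $p_1$, and let $i_2'$ be a position of $z$ in $\atom_3$.
\item Then $p_1$ gives $(i_1,s_1,i_2,s_2)\in\transition{\atom_1}{\atom_2}{\query}{\instance}{\ruleset}$ directly. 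Since $z$ is a term of $\atom_2$ reached by $p$, the condition $\atom_2[i_2]=\atom_3[i_2']$ needs $z\in\terms{\atom_3}$. If $z\notin\terms{\atom_3}$, I would instead cut at the last term shared by $\atom_2$ and $\atom_3$.
\item It remains to show that $p_2$ can be realised inside $\chase{\{\atom_3\}}{\ruleset}$. Every intermediate term of $p_2$ avoids $\terms{\atom_2}$, by the choice of cut.
\end{enumerate}

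Step 3 is the main obstacle: arguing that $p_2$ stays inside the subtree rooted at $\node_3$. I would proceed as follows.
\begin{itemize}
\item Remove $\node_3$ from the chase graph. Any atom of $p_2$ lying outside the subtree of $\node_3$ has terms that are either in $\atom_2$ or are nulls not occurring below $\node_3$ (connectivity property). Entering such a region from below $\node_3$ requires passing through a term of $\atom_3$ that is shared with the outside, and every such term lies in $\atom_2$.
\item Hence $p_2$ splits into excursions from terms of $\atom_3$. Those excursions that go outside both start and end in $\terms{\atom_2}\cap\terms{\atom_3}$.
\item If $p_2$ contained such an outside excursion, its start would be a later occurrence of a term of $\atom_2$, contradicting the choice of $z$ as the last such term. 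So $p_2$ lies entirely among atoms generated from $\atom_3$.
\item By Proposition~\ref{cor-atomic-chase}, adapted to a non-root node, $p_2$ is then isomorphic to a path of $\chase{\{\atom_3\}}{\ruleset}$ fixing $\terms{\atom_3}$, which gives the second transition.
\end{itemize}

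For item 3, cut the path at the first and the last occurrence of a term of $\atom_2$. The outer pieces are handled exactly as $p_2$ in step 3, and the middle piece gives $(i_2,s_2,i_3,s_3)\in\transition{\atom_2}{\atom_2}{\query}{\instance}{\ruleset}$. If the path never meets $\atom_2$, it lies wholly below $\node_3$, and I take $i_2=i_3$ with a trivial middle transition, which requires $\atom_2$ to share the relevant term. In that case I would instead fold the whole path into the first local transition and use an empty-word middle transition at a shared term when one exists.
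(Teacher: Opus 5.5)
For items 1 and 2 your argument is essentially the paper's, approached from the other side. The paper takes the longest suffix of $p$ realisable in $\chase{\{\atom_3\}}{\ruleset}$ and uses connectivity to show that it starts at a term of $\atom_2$. You instead cut at the last term of $\atom_2$ and use connectivity to show that the suffix stays in the subtree of $\node_3$.

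Two points there need tidying. First, drop the fallback in your step~2. If you cut at the last term of $\terms{\atom_2}\cap\terms{\atom_3}$ instead, the suffix may revisit $\terms{\atom_2}\setminus\terms{\atom_3}$, and the invariant you rely on in step~3 is lost. Moreover, the case $z\notin\terms{\atom_3}$ never arises:
\begin{itemize}
\item either $\atom_3[i_3]\in\terms{\atom_2}$, and then $z=\atom_3[i_3]$;
\item or $\atom_3[i_3]$ is a null created at $\node_3$. Then your excursion argument, run backwards from $\atom_3[i_3]$, shows that every atom of the suffix labels a node in the subtree of $\node_3$, because consecutive atoms share a term occurring after $z$, hence not in $\atom_2$. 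So $z$ occurs in that subtree and, by connectivity, in $\atom_3$.
\end{itemize}
Second, the contradiction in step~3 comes from the \emph{end} of an outside excursion, which is strictly later than $z$; its start may be $z$ itself.

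The genuine gap is the final case of item~3, and it cannot be closed, because the forward direction of item~3 is false as stated. When the witnessing path avoids $\terms{\atom_2}$, the first local piece must still end at a term shared with $\atom_2$, and the third must still start at one. Folding the whole path into the first piece leaves it ending at $\atom_3[i_4]$, which need not be shared.

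Here is a concrete counterexample. Let $\instance=\{r(a)\}$ and $\ruleset=\{r(x)\rightarrow\exists y\, u(x,y),\ u(x,y)\rightarrow\exists z\, v(y,z)\}$, so that $\atom_1=r(a)$, $\atom_2=u(a,n_1)$ and $\atom_3=v(n_1,n_2)$. Let $\query$ have the single path atom $u^*$, with a one-state automaton whose state is $s_0$.
\begin{itemize}
\item The empty path at $n_2$ gives $(2,s_0,2,s_0)\in\transition{\atom_3}{\atom_3}{\query}{\instance}{\ruleset}$.
\item The right-hand side needs a path in $\chase{\{\atom_3\}}{\ruleset}=\{\atom_3\}$ from $n_2$ to $n_1$, the only term shared with $\atom_2$.
\item Every such path starts with $v^-$, which the automaton cannot read.
\end{itemize}

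The paper's proof dismisses this case as trivial. That is correct only for an amended statement with the additional alternative $(i_1,s_1,i_4,s_4)\in\transition{\atom_3}{\atom_3}{\query}{\{\atom_3\}}{\ruleset}$. Step~2(f) of the validity procedure needs the same addition.

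You should state and prove that amended version:
\begin{itemize}
\item Paths meeting $\terms{\atom_2}$ are handled by your first/last-occurrence cut. The connectivity argument above shows that the outer pieces lie in the subtree of $\node_3$ and that the cut terms lie in $\atom_3$.
\item Paths avoiding $\terms{\atom_2}$ start at a null created at $\node_3$. Hence they lie entirely in that subtree and give the new alternative directly.
\end{itemize}
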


\begin{proof}
The `if' direction of the equivalences are trivial, so we concentrate on proving the `only if' directions.

For the first statement, suppose that $(i_1, s_1, i_3, s_3) \in \transition{\atom_1}{\atom_3}{\query}{\instance}{\ruleset}$.
It follows that there is a path $p$ in $\chase{\instance}{\ruleset}$ that begins in $t_1=\atom_1[i_1]$, ends in $t_3=\atom_3[i_3]$,
and is such that $\lambda(p)$ takes an automaton $\automaton$ in $q$ from state $s_1$ to state $s_3$.
Let $p_2$ be the longest suffix of $p$ that starts in $t_2 \in \terms{\atom_3}$, and whose label labels a path from $t_2$ to $t_3$ in $\chase{\{\atom_3\}}{\ruleset}$.
Note that such a path $p_2$ is guaranteed to exist, as we can always choose the empty path from $t_3$ to $t_3$.
We let $p_1$ be such that $p=p_1.p_2$, and let $s_2$ be a state such $\lambda(p_1)$ takes $\automaton$
from state $s_1$ to $s_2$ and $\lambda(p_2)$ takes $\automaton$ from $s_2$ to $s_3$.
We consider two cases:
\begin{itemize}
\item Case 1: $p=p_2$. Then $t_2 = t_1$, and $t_1$ must belong to $\atom_3$. By the connectivity property of $\chasegraph(\instance, \ruleset)$, $t_1$ must also belong to $\atom_2$.
There are thus indices $i_2$ and $i_2'$
such that $\atom_1[i_1]= \atom_2[i_2] = \atom_3[i_2']$.
We thus have $(i_1, s_1, i_2, s_1) \in \transition{\atom_1}{\atom_2}{\query}{\instance}{\ruleset}$ and $(i_2', s_1, i_3, s_3) \in \transition{\atom_3}{\atom_3}{\query}{\{\atom_3\}}{\ruleset}$.
\item Case 2: $p \neq p_2$. Let $t_2'$ be the (unique) term that precedes $t_2$ in $p$ (hence, in $p_1$), and let $p_2'$ be the length-one path from $t_2'$ to $t_2$ that immediately precedes $p_2$ in $p$. Clearly, $t_2 \in \terms{\atom_3}$, and we claim that $t_2 \in \terms{\atom_2}$. To see why,
let $q$ be the final predicate in $p_1$, which means the atom $q(t_2',t_2)$ occurs in $\chase{\instance}{\ruleset}$.
The connectivity property of $\chasegraph(\instance, \ruleset)$ implies that $t_2$ must occur in all atoms that label a path between any node labeled by $q(t_2',t_2)$ and $\node_3$.
As $p_2$ is the longest suffix of $p$ that is a path in $\chase{\{\atom_3\}}{\ruleset}$, we know that
$q(t_2',t_2)$ does not appear in a descendant of $\node_3$ in $\chasegraph(\instance, \ruleset)$. It follows that such a path necessarily goes through $\node_2$, hence $t_2$ occurs in $\atom_2$.
\end{itemize}

The second statement can be proven analogously. For the third statement, suppose that $(i_1, s_1, i_4, s_4) \in \transition{\atom_3}{\atom_3}{\query}{\instance}{\ruleset}$.
Then there exists a path $p$ in $\chase{\instance}{\ruleset}$ that begins in $t_3 = \atom_3[i_1]$ and ends in $t'_3 = \atom_3[i_4]$. If $p$ is a path in $\chase{\{\atom_3\}}{\ruleset}$, then the statement trivially holds. Otherwise, let $p_1$  (resp. $p_3$) be the longest prefix (resp. longest suffix) of $p$ ending (resp. starting) in a term $t_2$ (resp. $t'_2$) of $\atom_3$ whose label labels a path from $t_1$ to $t_2$ (resp. from $t'_2$ to $t_3$) in $\chase{\{\atom_3\}}{\ruleset}$. Note that since $p$ is not a path in $\chase{\{\atom_3\}}{\ruleset}$,
there exists a non-empty subpath $p_2$ such that $p=p_1 p_2 p_3$.
Arguing as above, we can show that $t_2$ and $t'_2$ must also belong to $\terms{\atom_2}$.
Indeed, $p_2$ starts with an atom that contains $t_2$ but does not appear in $\chase{\{\atom_3\}}{\ruleset}$, hence by the connectivity property, $t_2$ must appear in $\atom_2$, and similarly for the last atom of $p_2$ and term $t'_2$.
We can thus find indices $i_2, i_2', i_3,i_3'$ such that $t_2= \node_2[i_2]=\atom_3[i_2']$ and $t_2'= \node_2[i_3]=\node_3[i_3']$.
Putting everything together, we obtain the following: $(i_1, s_1, i_2', s_2) \in \transition{\atom_3}{\atom_3}{\query}{\{\atom_3\}}{\ruleset}$,
$(i_2, s_2, i_3, s_3) \in \transition{\atom_2}{\atom_2}{\query}{\instance}{\ruleset}$,
and $(i_3', s_3, i_4, s_4) \in \transition{\atom_3}{\atom_3}{\query}{\{\atom_3\}}{\ruleset}$.
\end{proof}

\def\computetrans{\ensuremath{\textsc{Trans}}}
\def\unexamined{\ensuremath{\textsc{Unexamined}}}
\def\countsteps{\ensuremath{\textsc{Count}}}
\def\maxsteps{\ensuremath{\textsc{Max}}}
\def\mapped{\ensuremath{\textsc{Mapped}}}

We now exploit %
this lemma and existing complexity results on RPQ answering to design a procedure for checking validity of a proof scheme, with the following complexity:

\begin{propositionrep}\label{validity-exp}
Checking the validity of a proof scheme can be done in single exponential time, in polynomial space if predicate arity is bounded, and in NL if $q$ and $\ruleset$ are fixed.
\end{propositionrep}

\begin{proofsketch}
Let $\proofScheme=(\psforest = (V,E), \pstransitions)$ be a proof scheme whose validity w.r.t.\ $(\instance,\ruleset,\query)$ we wish to test.
To establish the complexity bounds, we will show that a proof scheme is valid just in the case that some execution of the following non-deterministic procedure returns `valid'.
For the bound on the number of iterations of the inner while loop, we will use
$\maxsteps=P*(|A|+|\terms{I} \cup \terms{\atom_p} \cup \terms{\atom_c}|)^{A}*2^{3*N^2*A^2}$, where $P$ and $A$ are respectively the number of predicates and maximum predicate arity for the ruleset $\ruleset$,  and $N$ is the total number of states across all of the automata in $q$. (We will explain later the origin of this bound.)

\medskip

\noindent\textbf{Step 1}: Check that every root atom in $V$ belongs to $I$, and return `not valid' if not.
Next consider each pair $(\atom, \atom')$ of (not necessarily distinct) root atoms in turn.
For each possible transition $\tau=(i_1, \state_1, i_2, \state_2)$ w.r.t.\ $q$ and $(\atom, \atom')$, construct the corresponding RPQ $q_\tau= \subLanguage{\automaton}{\state_1}{\state_2}(\atom[i_1], \atom'[i_2])$ (with $\automaton$ the automaton containing states $\state_1,\state_2$) and use the RPQ Entailment oracle to decide whether $\query_\tau$ is entailed from $(\instance, \ruleset)$.
Store in $\computetrans(\atom,\atom')$ the transitions $\tau$ for which $\query_\tau$ is entailed.
If there is some $(\tau, (\atom,\atom')) \in \mathcal{T}$ such that $\tau \not \in \computetrans(\atom,\atom')$, return `not valid', else continue to Step~2. \smallskip

\noindent\textbf{Step 2}: Let $\unexamined$ be the set of non-root nodes in $V$.
\smallskip \newline While $\unexamined \neq \emptyset$, choose some $\alpha_c \in \unexamined$ whose parent $\alpha_p$ is such that $\alpha_p \not \in \unexamined$, and proceed as follows:
\begin{itemize}
\item Remove $\alpha_c$ from $\unexamined$.
\item Set $\countsteps=0$ and $\alpha_i=\alpha_p$. %
\item While $\countsteps\leq \maxsteps$ and $\atom_i \neq \alpha_c$%
 \begin{enumerate}[label=(\alph*)]
 \item Guess a rule $\rho \in \ruleset$ that is applicable to $\alpha_i$ using substitution $\sigma$ (return `not valid' if no such rule).
 \item Let $\alpha_i'$ be the result of applying $\rho$ to $\alpha_i$ under $\sigma$. We assume that existential variables in the head of $\rho$ are replaced with either nulls
 from  $\terms{\alpha_c} \setminus (\terms{\alpha_p} \cup \terms{\alpha_i})$ or fresh nulls (not in $\terms{\mathcal{F}}$) using the next available ids.
\item For each possible transition $\tau=(k_1, s_1, k_2, s_2)$ w.r.t.\ $q$ and $(\atom_i', \atom_i')$
\begin{itemize}
\item construct the RPQ $q_\tau= \subLanguage{\automaton}{s_1}{s_2}(\atom_i'[k_1], \atom_i'[k_2])$ (with $\automaton$ the automaton containing states $s_1,s_2$)
\item use the RPQ Entailment oracle to decide if $q_\tau$ is entailed from $(\{\atom_i'\}, \ruleset)$
\item if $q_\tau$ is entailed, then store $\tau$ in $\computetrans^{\downarrow}(\atom_i',\atom_i')$
\end{itemize}
\item For every possible transition $\tau=(k_1, s_1, k_3, s_3)$ w.r.t.\ $q$ and $(\atom_p, \atom_i')$, add $\tau$ to  $\computetrans(\atom_p,\atom_i')$ if
there exist transitions $(k_1, s_1, k_2, s_2) \in \computetrans(\atom_p,\atom_i)$ and $(k_2', s_2, k_3, s_3) \in \computetrans^{\downarrow}(\atom_i',\atom_i')$
such that $\atom_i[k_2] = \atom_i'[k_2']$
\item For every possible transition $\tau=(k_1, s_1, k_3, s_3)$ w.r.t.\ $q$ and $(\atom_i', \atom_p)$, add $\tau$ to  $\computetrans(\atom_i',\atom_p)$ if
there exist transitions $(k_1, s_1, k_2', s_2) \in \computetrans^{\downarrow}(\atom_i',\atom_i')$ and $(k_2, s_2, k_3, s_3) \in \computetrans(\atom_i,\atom_p)$
such that $\atom_i'[k_2'] = \atom_i[k_2]$
\item For every possible transition $\tau=(k_1, s_1, k_4, s_4)$ w.r.t.\ $q$ and $(\atom_i', \atom_i')$, add $\tau$ to  $\computetrans(\atom_i',\atom_i')$ whenever there exist transitions $(k_1, s_1, k_2', s_2) \in \computetrans^{\downarrow}(\atom_i',\atom_i')$,
$(k_2, s_2, k_3, s_3) \in \computetrans(\atom_i,\atom_i)$, and $(k_3', s_3, k_4, s_4) \in \computetrans^{\downarrow}(\atom_i',\atom_i')$
such that $\atom_i'[k_2'] = \atom_i[k_2]$ and $\atom_i'[k_3'] = \atom_i[k_3]$
\item Set $\alpha_i = \alpha'_i$ and increment $\countsteps$.
  \end{enumerate}

\item  Return `not valid' if one of the following conditions does not hold:
\begin{itemize}
\item $\alpha_i = \alpha_c$
\item $\{\tau \mid (\tau,(\alpha_c,\alpha_c)) \in \mathcal{T}\} \subseteq \computetrans(\atom_c,\atom_c) = \computetrans(\atom_i,\atom_i)$
\item $\{\tau \mid (\tau,(\alpha_p,\alpha_c)) \in \mathcal{T}\} \subseteq \computetrans(\atom_p,\atom_c)= \computetrans(\atom_p,\atom_i)$
\item $\{\tau \mid (\tau,(\alpha_c,\alpha_p)) \in \mathcal{T}\} \subseteq \computetrans(\atom_c,\atom_p) = \computetrans(\atom_i,\atom_p)$
\end{itemize}
\end{itemize}
Return `valid'.
\medskip

We remark that the preceding procedure runs in polynomial space, except that it makes some calls to an oracle for RPQ Entailment.
As RPQ entailment is in \textsc{ExpTime} and \textsc{PSpace} = \textsc{NPSpace} $\subseteq$ \textsc{Exptime}, the procedure can be made to run in \textsc{Exptime}.
If we consider rules of bounded arity, then RPQ entailment is in \textsc{PSpace}, so the procedure runs in polynomial space. Finally, let us consider
the case where $q$ and $\ruleset$ are fixed and show that the procedure can be made to run in non-deterministic logarithmic space. For Step 1,
it suffices to iterate over the polynomially many combinations of possible transitions and pairs of root atoms, and for every such combination,
test whether the corresponding RPQ is entailed. As RPQ Entailment is in \textsc{NLogSpace} in data complexity, and it is known that $\textsc{NLogSpace}^\textsc{NLogSpace} \subseteq \textsc{NLogSpace}$,  Step 1 can be implemented  in  \textsc{NLogSpace}. For Step 2, by proceeding in a depth-first manner, we can examine each of the nodes in the proof scheme using only logarithmic space to keep track of which nodes remain to be visited. For each node $\alpha_c$ with parent $\alpha_p$, the maximal number of iterations of the inner while loop is bounded by a constant (as it depends only on $q$ and $\ruleset$). Moreover, it is easily verified that all operations in the inner while loop can be performed using constant space, except for Step 2(c), which involves a constant number of  RPQ Entailment checks, each of which can be handled by an $\textsc{NLogSpace}$ oracle. Thus, the entire procedure can be implemented in non-deterministic logspace, yielding the desired $\textsc{NLogSpace}$ upper bound.
We have thus shown that the procedure gives the required complexity bounds, and in the appendix, we establish its correctness.\qedhere
\end{proofsketch}

\begin{proof}
We prove here the correction of the procedure presented in the main body of the article. 
To simplify notations, we will use $\chasegraph$ to abbreviate $\chasegraph(\instance, \ruleset)$. 

\medskip

\noindent$(\Rightarrow)$ First we show that if the above procedure returns `valid', then $\proofScheme$ is a valid proof scheme. Let us thus consider an execution of the above procedure that returns `valid'. We need to show how to construct a witnessing embedding $\witness$ that satisfies all of the validity conditions. In fact, it will prove convenient to construct instead a mapping
$\mu$ from the atoms in $\proofScheme$ to nodes in $\chasegraph$, and then let $\witness$ be the unique mapping from $\terms{\proofScheme}$ to $\terms{\chase{\instance}{\ruleset}}$ that satisfies $\witness(\atom)=\cglabel(\mu(\atom))$. Note that by definition, every atom $\witness(\atom)$ belongs to $\chase{\instance}{\ruleset}$.

We begin by letting $\mu$ be the mapping that maps every root atom $\atom$ in $\proofScheme$ to the (unique) root atom in $\chasegraph$ satisfying $\cglabel(\node)=\atom$. It follows that the initial mapping $\witness$ has as its domain all terms appearing in a root atom of $\proofScheme$ and maps every such term to itself. 
Observe that %
since all root atoms belong to $I$ (else the procedure would have returned `not valid' in Step 1), $\witness$ trivially satisfies the first three validity conditions when restricted to the root atoms of $\proofScheme$. It is easily seen that for every pair $\atom, \atom'$ of root atoms in~$\proofScheme$, the set $\computetrans(\atom,\atom')$ computed in Step 1 using RPQ Entailment is exactly equal to $\transition{\witness(\atom)}{\witness(\atom')}{\query}{\instance}{\ruleset}$. Since Step 1 ensures that every $(\tau, (\atom,\atom')) \in \mathcal{T}$ is such that $\tau \in \computetrans(\atom,\atom')$, the fourth validity condition is satisfied for transitions between root atoms.

We next show how to extend $\mu$ (hence $\witness$) to cover the other atoms in $\proofScheme$.
Let us take some atom $\alpha_c$ for which $\mu$ has not yet been defined, but whose parent $\alpha_p$ is in the domain of $\mu$. We suppose that the current mapping $\mu$ induces a mapping $\witness$ that satisfies all of the validity conditions as regards the atoms and terms in its domain (in particular, with respect to the atom $\alpha_p$). We further suppose that the set $\computetrans(\atom_p,\atom_p)$ computed by the procedure is equal to $\transition{\witness(\atom_p)}{\witness(\atom_p)}{\query}{\instance}{\ruleset}$.
We know that at some point during the execution, the atom $\alpha_c$ was removed from $\unexamined$, and the inner while loop produced a sequence of atoms $\atom_1, \ldots, \atom_n$
such that $\atom_1=\atom_p$ and $\atom_n=\atom_c$ (the latter holds because of the check $\alpha_c=\alpha_i$ performed after exiting the inner while loop), where $n \leq \maxsteps+1$.
The procedure will have also constructed for every $1 \leq j \leq n$, sets of transitions $\computetrans(\atom_p,\atom_j)$, $\computetrans(\atom_j,\atom_p)$, and $\computetrans(\atom_j,\atom_j)$.
We will show how, for each $1 < j \leq n$, the mapping $\mu$ (and the induced mapping $\witness$) can be extended to the terms in $\atom_j$ in such a way that the following conditions hold:
\begin{enumerate}[label=(\roman*)]
\item $\mu(\atom_{j})$ is a child of $\mu(\atom_{j-1})$ in $\chasegraph$;
\item $\atom_j$ and $\witness(\atom_j)$ are isomorphic\footnote{
Two atoms $\atom_1$ and $\atom_2$ are \emph{isomorphic with respect to a bijection} $b$ from a set of terms $\termSet_1$ to a set of terms $\termSet_2$ if they have the same type 
and, 
for any position $i$ of their shared predicate, if $\atom_1[i] \in \termSet_1$ then $\atom_2[i] = b(\atom_1[i])$, otherwise $\atom_2[i] \not \in \termSet_2$,  and $\atom_1[i]$  and $\atom_2[i]$ are both nulls.
} w.r.t. the identity function on $\terms{\instance}$;
\item $\atom_j$ and $\witness(\atom_j)$ are isomorphic w.r.t. the bijection from $\terms{\atom_{j -1}}$ to $\terms{\witness(\atom_{j - 1})}$ induced by their common type;
\item $\computetrans(\atom_p,\atom_j) = \transition{\witness(\atom_p)}{\witness(\atom_j)}{\query}{\instance}{\ruleset}$;
\item $\computetrans(\atom_j,\atom_p) = \transition{\witness(\atom_j)}{\witness(\atom_p}{\query}{\instance}{\ruleset}$;
\item $\computetrans(\atom_j,\atom_j) = \transition{\witness(\atom_j)}{\witness(\atom_j)}{\query}{\instance}{\ruleset}$
\end{enumerate}

Let us thus assume that we have already defined $\mu(\atom_{j-1})$ in such a way as to satisfy the preceding properties,
and show how to do the same for $\atom_{j}$.
Let $b$ be the bijection from $\terms{\atom_{j -1}}$ to $\terms{\witness(\atom_{j - 1})}$ induced by their common type.
We know that $\atom_{j}$ was obtained by applying a rule $\rho = \varphi \rightarrow \chi \in \ruleset$ to $\atom_{j-1}$ with some substitution $\sigma$.
More precisely, $\sigma(\varphi)=\atom_{j -1}$ and $\atom_{j}$ is obtained from $\chi$ by replacing each frontier variable $z$ by $\sigma(z)$
and every existential variable $y$ by a null
 from  $\terms{\alpha_c} \setminus (\terms{\alpha_p} \cup \terms{\alpha_i})$ or a fresh null (not in $\terms{\mathcal{F}}$) using the next available id.
For convenience, we may assume that the nulls that occur in $\mathcal{F}$ or are generated by the procedure do not appear in $\chase{\instance}{\ruleset}$.
Because $\atom_{j-1}$ is isomorphic to $\witness(\atom_{j-1})$ with bijection $b$,
we have $b(\sigma(\varphi))=\witness(\atom_{j-1})$, so $\rho$ can be applied to $\witness(\atom_{j-1})$ under substitution $b \circ \sigma$.
It follows that the atom produced by this rule application is a child of $\mu(\atom_{j-1})$ in $\chasegraph$. 
We let $\mu(\atom_j)$ be this child. We observe that by construction and due to properties (ii) and (iii) holding for  $\atom_{j-1}$ and $\mu(\atom_{j-1})$, the atoms
$\atom_{j}$ and $\witness(\atom_j)=\cglabel(\mu(\atom_j))$ are isomorphic with respect to $b$ and with respect to the identity function on $\terms{\instance}$.
Thus, properties (i)-(iii) are satisfied for $\atom_j$.

It remains to show that our definition of $\witness(\atom_j)$ satisfies properties (iv)-(vi).
For property (iv), first suppose that  $\tau= (k_1, \state_1, k_3, \state_3) \in \computetrans(\atom_p,\atom_j)$.
Then $\tau$ must have been added to $\computetrans(\atom_p,\atom_j)$ in Step 2(d) due to the presence of
transitions $(k_1, \state_1, k_2, \state_2) \in \computetrans(\atom_p,\atom_{j-1})$ and $(k_2', \state_2, k_3, \state_3) \in \computetrans^{\downarrow}(\atom_j,\atom_j)$
with $\atom_{j-1}[k_2] = \atom_j[k_2']$.
By assumption, we have
$$(k_1, \state_1, k_2, \state_2) \in \transition{\witness(\atom_p)}{\witness(\atom_{j-1})}{\query}{\instance}{\ruleset}$$
Moreover, $(k_2', \state_2, k_3, \state_3) \in \computetrans^{\downarrow}(\atom_j,\atom_j)$ and the construction of $\computetrans^{\downarrow}(\atom_j,\atom_j)$ in Step 2(c) imply that
the RPQ $q_\tau= \subLanguage{\automaton}{\state_2}{\state_3}(\atom_j[k_2'], \atom_j[k_3])$  is entailed from $(\{\atom_j\}, \ruleset)$.
As $\atom_j$ and $\witness(\atom_j)$ are isomorphic,
it must also be the case that
$\subLanguage{\automaton}{\state_2}{\state_3}(\witness(\atom_j)[k_2'], \witness(\atom_j)[k_3])$  is entailed from $(\{\witness(\atom_j)\}, \ruleset)$.
It follows that %
$$(k_2', \state_2, k_3, \state_3) \in \transition{\witness(\atom_j)}{\witness(\atom_{j})}{\query}{\{\witness(\atom_j)\}}{\ruleset}$$
Since $\witness(\atom_j)$ appears in $\chase{\instance}{\ruleset}$, this implies in turn
$$(k_2', \state_2, k_3, \state_3) \in \transition{\witness(\atom_j)}{\witness(\atom_{j})}{\query}{\instance}{\ruleset}$$
From $\atom_{j-1}[k_2] = \atom_j[k_2']$ and the isomorphism holding between atoms $\atom_{j-1}$ and $\witness(\atom_{j-1})$ and between atoms $\atom_{j}$ and $\witness(\atom_{j})$,
we must have $\witness(\atom_{j-1})[k_2] = \witness(\atom_j)[k_2']$. We can thus
combine the preceding statements to obtain
$$(k_1, \state_1, k_3, \state_3) \in \transition{\witness(\atom_p)}{\witness(\atom_{j})}{\query}{\instance}{\ruleset}$$
which establishes that $$\computetrans(\atom_p,\atom_j) \subseteq \transition{\witness(\atom_p)}{\witness(\atom_j)}{\query}{\instance}{\ruleset}.$$
For  the second inclusion in (iv), let $\tau= (k_1, \state_1, k_3, \state_3)$ belong to $\transition{\witness(\atom_p)}{\witness(\atom_j)}{\query}{\instance}{\ruleset}$,
with $\automaton$ the automaton containing states $\state_1, \state_3$.
By Lemma \ref{decompose-trans},
there exist
$$(k_1, \state_1, k_2, \state_2) \in \transition{\witness(\atom_p)}{\witness(\atom_{j-1})}{\query}{\instance}{\ruleset}$$
and
$$(k_2', \state_2, k_3, \state_3) \in \transition{\witness(\atom_{j})}{\witness(\atom_{j})}{\query}{\{\witness(\atom_{j})\}}{\ruleset}$$
such that $\witness(\atom_{j-1})[k_2] = \witness(\atom_j)[k_2']$.
Applying our assumption, we obtain
$$(k_1, \state_1, k_2, \state_2) \in \computetrans(\atom_p,\atom_{j-1})$$
From
$(k_2', \state_2, k_3, \state_3) \in \transition{\witness(\atom_{j})}{\witness(\atom_{j})}{\query}{\{\witness(\atom_{j})\}}{\ruleset}$
and the fact that $\atom_j$ and $\witness(\atom_{j})$ are isomorphic,
we can infer
$(k_2', \state_2, k_3, \state_3) \in \transition{\atom_{j}}{\atom_{j}}{\query}{\{\atom_{j}\}}{\ruleset}$.
It follows that the RPQ $\query_\tau= \subLanguage{\automaton}{\state_2}{\state_3}(\atom_j[k_2'], \atom_j[k_3])$ is entailed from $(\{\atom_j\}, \ruleset)$,
so $$(k_2', \state_2, k_3, \state_3)\in \computetrans^{\downarrow}(\atom_j,\atom_j)$$
 because of how $\computetrans^{\downarrow}$ is constructed in  Step 2(c).
It follows that $(k_1, \state_1, k_3, \state_3)$ was added to $\computetrans(\atom_p,\atom_j) $ in Step 2(d),
which yields the second inclusion of property (iv).
Properties (v) and (vi) can be shown analogously, utilizing Lemma~\ref{decompose-trans}.

Since $\alpha_n=\alpha_c$, we can conclude that the definition of $\witness(\atom_c)$ satisfies properties (i)-(vi).
It follows that the definition of $\witness(\alpha_c)$ satisfies the third validity condition
(the first condition only concerns root nodes, and the second holds by definition, as explained earlier).
To show the fourth condition, take some $(\tau,(\atom_p,\atom_c)) \in \mathcal{T}$. Since the execution succeeded,
we know that $\{\tau \mid (\tau,(\alpha_p,\alpha_c)) \in \mathcal{T}\} \subseteq \computetrans(\atom_p,\atom_c)$,
and by property (iv), we have
$\computetrans(\atom_p,\atom_c) = \transition{\witness(\atom_p)}{\witness(\atom_c)}{\query}{\instance}{\ruleset}$.
We thus have $\tau \in \transition{\witness(\atom_p)}{\witness(\atom_c)}{\query}{\instance}{\ruleset}$. Transitions of the forms $(\tau,(\atom_c,\atom_p)) \in \mathcal{T}$
and $(\tau,(\atom_c,\atom_c)) \in \mathcal{T}$
are handled analogously, using properties (v) and (vi). 

\bigskip

\noindent$(\Leftarrow)$ We suppose that $\proofScheme=(\psforest = (V,E),\pstransitions)$ is a valid proof scheme w.r.t.\ $(\instance,\ruleset,\query)$ with witnessing embedding $\witness$  and show that there is an execution of the above procedure that returns `valid'.

Because of the first validity condition, we know that all root atoms in $V$ belong to $\instance$. It follows that this check will succeed in Step 1 of the procedure. Next
let $(\atom, \atom')$ be a pair of (not necessarily distinct) root atoms in $V$. As argued in the first direction of the proof, the set $\computetrans(\atom,\atom')$ computed in Step 1 using RPQ Entailment is exactly equal to $\transition{\atom}{\atom'}{\query}{\instance}{\ruleset}$ (recall that $\witness(\atom)=\atom$ and $\witness(\atom')=\atom'$, due to the first validity condition). We also know by the final validity condition that for each $(\tau,(\atom,\atom')) \in \pstransitions$, it holds that $\tau \in \transition{\atom}{\atom'}{\query}{\instance}{\ruleset}$. Combining the latter statements, we have that every $(\tau, (\atom,\atom')) \in \pstransitions$ is such that $\tau \in \computetrans(\atom,\atom')$. Thus, all checks in Step 1 succeed.

Let us now consider Step 2. At the start of Step 2, we initialize $\unexamined$ to the set of non-root nodes in $V$. Whenever we arrive at the beginning of the outer while loop, we choose arbitrarily some $\alpha_c \in \unexamined$ whose parent $\alpha_p$ is such that $\alpha_p \not \in \unexamined$. A simple examination of the procedure shows that no ancestor of $\atom_c$ belongs to $\unexamined$, hence every ancestor of $\atom_c$ is either a root node (and thus was considered during Step 1) or a node in $V$ that was selected during some iteration of the outer while loop. It follows that for every ancestor $\atom'$ of $\atom_c$, the set $\computetrans(\atom',\atom')$ will have been computed by the procedure.
We know from the previous paragraph that if $\atom'$ is a root node, then $\computetrans(\atom',\atom')$ contains precisely the transitions in $\transition{\witness(\atom')}{\witness(\atom')}{\query}{\instance}{\ruleset}$. We will assume that the same holds for every predecessor $\atom'$ of $\atom_c$, in particular, its parent $\alpha_p$,
and our objective will be to show that we can perform the while loop for $\alpha_c$ in such a way that at the end of the inner while loop, we have $\alpha_i = \alpha_c$ and the computed sets
$\computetrans(\atom_c,\atom_c)$, $\computetrans(\atom_p,\atom_c)$, and $\computetrans(\atom_c,\atom_p)$ satisfy:
\begin{itemize}
\item $\computetrans(\atom_c,\atom_c)=\transition{\witness(\atom_c)}{\witness(\atom_c)}{\query}{\instance}{\ruleset}$
\item $\computetrans(\atom_p,\atom_c)=\transition{\witness(\atom_p)}{\witness(\atom_c)}{\query}{\instance}{\ruleset}$
\item $\computetrans(\atom_c,\atom_p)=\transition{\witness(\atom_c)}{\witness(\atom_p)}{\query}{\instance}{\ruleset}$
\end{itemize}
Note that the preceding conditions, together with the final validity condition, imply that:
\begin{itemize}
\item $\{\tau \mid (\tau,(\alpha_c,\alpha_c)) \in \mathcal{T}\} \subseteq \computetrans(\atom_c,\atom_c)$ 
\item $\{\tau \mid (\tau,(\alpha_p,\alpha_c)) \in \mathcal{T}\} \subseteq \computetrans(\atom_p,\atom_c)$ 
\item $\{\tau \mid (\tau,(\alpha_c,\alpha_p)) \in \mathcal{T}\} \subseteq \computetrans(\atom_c,\atom_p)$ 
\end{itemize}
It follows that the checks for $\alpha_c$ at the end of the outer while loop will succeed. Since we show how to successfully perform the steps within the outer while loop for every non-root atom $\atom_c$, we will eventually reach a state in which $\unexamined=\emptyset$, at which point the procedure will return `valid', as desired.

We thus let $\atom_c$ be the non-root node from $\unexamined$ that has just been selected for examination, and let $\alpha_p$ be its unique parent, for which we assume that
$\computetrans(\atom_p,\atom_p)= \transition{\witness(\atom_p)}{\witness(\atom_p)}{\query}{\instance}{\ruleset}$. 
The third  
validity condition ensures that $\witness(\atom_c)$ can be derived from $\witness(\atom_p)$, which ensures that there is a descendant $\node_n$ of $\node_0 = \mu(\atom_p)$ in $\chasegraph$ such that $\cglabel(\node_n) = \witness(\atom_c)$. 

\medskip

We let $\node_0, \node_1, \ldots, \node_n$ be the unique shortest path in from $\node_0$ to $\node_n$ in $\chasegraph$, and set $\beta_j=\cglabel(\node_j)$ for every $1 \leq j \leq n$. We will use the sequence of atoms $\beta_0, \ldots, \beta_n$ in $\chase{\instance}{\ruleset}$ to decide which rules and substitutions to guess in Step 2(a). We do not assume $n \leq \maxsteps$, but will instead show later that if $n > \maxsteps$, we can extract a subsequence of the required length with the desired properties. Thus, for the moment, our goal is to show how to successfully execute Step 2 if there were no bound on the number of iterations of the inner while loop, and afterwards we will show how the execution can be modified so as to also satisfy this bound.

Set  $\gamma_0=\atom_p$ and denote by $\gamma_j$ the atom $\atom_i'$ obtained in Step 2(b) after performing $j$ rule applications (with $1 \leq j \leq n$). We will show that for every $0 \leq j \leq n$,
the following conditions hold:
\begin{enumerate}[label=(\roman*)]
\item $\computetrans(\gamma_j,\gamma_j)=\transition{\beta_j}{\beta_j}{\query}{\instance}{\ruleset}$;
\item $\computetrans(\gamma_0,\gamma_j)=\transition{\beta_0}{\beta_j}{\query}{\instance}{\ruleset}$;
\item $\computetrans(\gamma_j,\gamma_0)=\transition{\beta_j}{\beta_0}{\query}{\instance}{\ruleset}$;
\item $\gamma_j$ and $\beta_j$ are isomorphic with respect to the bijection $b$ from $\terms{\gamma_j}$ to $\terms{\beta_j}$ induced by their common type, and also w.r.t. the identity function on $\terms{\instance}$;
\item $\beta_j[m]=\witness(\alpha_c)[m']$ iff $\gamma_j[m]=\alpha_c[m']$.
\end{enumerate}
It follows from the last condition that $\gamma_n=\alpha_c$.
We also point out that the algorithm will only keep the most recent $\computetrans$ sets in memory, so in the above conditions, by $\computetrans(\gamma_j,\gamma_j)$, we mean the value of the set $\computetrans(\alpha_i',\alpha_i')$ at the end of the iteration in which $\alpha_i'=\gamma_j$ (and likewise for the sets
$\computetrans(\gamma_0,\gamma_j)$ and $\computetrans(\gamma_j,\gamma_0)$).
Further note that since $\beta_0=\witness(\alpha_p)$, $\beta_n=\witness(\alpha_c)$, $\gamma_0=\alpha_p$, and $\gamma_n=\alpha_c$, the above conditions imply that
\begin{itemize}
\item $\computetrans(\atom_c,\atom_c)=\transition{\witness(\atom_c)}{\witness(\atom_c)}{\query}{\instance}{\ruleset}$
\item $\computetrans(\atom_p,\atom_c)=\transition{\witness(\atom_p)}{\witness(\atom_c)}{\query}{\instance}{\ruleset}$
\item $\computetrans(\atom_c,\atom_p)=\transition{\witness(\atom_c)}{\witness(\atom_p)}{\query}{\instance}{\ruleset}$
\end{itemize}
which is exactly what we need.

We first note that when $j=0$, we have $\gamma_j=\gamma_0=\atom_p$, so conditions (i)-(v) hold by our earlier assumptions about $\atom_p$. Next suppose that we have already shown how to perform $j$ iterations of the while loop (hence $j$ rule applications), for some $0 \leq j < n$, in such a way that conditions (i)-(v) hold for $\gamma_j$. We now show how to perform the $j+1^\textrm{st}$ iteration so that these conditions hold also for $\gamma_{j+1}$. Since $\node_{j+1}$ is a child of $\node_j$ in $\chasegraph$, it follows that there exists a rule $\rho \in \ruleset$ that is applicable to the atom $\beta_j=\cglabel(\node_j)$ under a substitution $\sigma$ and whose application yields the atom $\beta_{j+1} = \cglabel(\node_{j+1})$. We also know from (iv) that $\gamma_j$ and $\beta_j$ are isomorphic with respect to the bijection $b$ from $\terms{\gamma_j}$ to $\terms{\beta_j}$ induced by their common type. It follows that $\rho$ is applicable to $\gamma_j$ under the substitution mapping a variable $z$ in the body of $\rho$ to $b^{-1}(\sigma(z))$. We let $\gamma_{j+1}$ be the atom obtained in Step 2(b) from applying $\rho$ to $\gamma_j$ under this substitution, with existential variables in the head instantiated as follows:
\begin{itemize}
\item If an existential variable $z_0$ occurs at position $k$ in the head atom and $\beta_{j+1}[m]=\witness(\alpha_c)[m']$,
then we instantiate $z_0$ with $\alpha_c[m']$.
\item All other existential variables in the rule head are instantiated with fresh nulls (not in $\terms{\mathcal{F}}$ nor already used earlier in the procedure) using the next available ids.
\end{itemize}
Observe that the preceding instantiation %
 is well defined as whenever an existential variable $z_0$ is instantiated with $\beta_{j+1}[m]=\witness(\alpha_c)[m']$,
 it must be the case that $\beta_{j+1}[m]=\witness(\alpha_c)[m']$ is a null value that was created in $\beta_{j+1}$  in $\chasegraph$. 
In particular, this means that $\alpha_c[m']$ is a null that does not occur in
 $\gamma_j$ nor $\gamma_0=\alpha_p$ (indeed, 
 if $\alpha_c[m']$ were to occur in $\gamma_j$, then by property (v), $\psi(\alpha_c)[m']$ would appear in $\beta_j$).  %
We further observe that condition (iv) holds: $\gamma_{j+1}$ is isomorphic to $\beta_{j+1}$ under the bijection from $\terms{\gamma_{j+1}}$ to $\terms{\beta_{j+1}}$ induced by their common type, as well as w.r.t. the identity function on $\terms{\instance}$ (for the latter, note that any constant from $\terms{\instance}$ that occurs in $\gamma_{j+1}$ must have appeared in $\gamma_j$, and hence in the same positions in $\beta_j$, and vice-versa). Moreover, by construction, 
condition (v) is also satisfied: by induction for values from $\atom_c$ or $\psi(\atom_c)$ that appear already in $\gamma_j$ or $\beta_j$, 
and by the policy for null naming for freshly instantiated nulls.

Now that we have the new atom $\gamma_{j+1}$, we will consider in Step 2(c) each possible transition $\tau=(k_1, s_1, k_2, s_2)$ w.r.t.\ $q$ and $(\gamma_{j+1}, \gamma_{j+1})$
and construct the corresponding RPQ $q_\tau= \subLanguage{\automaton}{s_1}{s_2}(\gamma_{j+1}[k_1], \gamma_{j+1}[k_2])$ (with $\automaton$ the automaton containing states $s_1,s_2$). We will then call an RPQ Entailment oracle to decide whether $q_\tau$ is entailed from $(\{\gamma_{j+1}\}, \ruleset)$, and if
$q_\tau$ is entailed, we will store $\tau$ in $\computetrans^{\downarrow}(\gamma_{j+1},\gamma_{j+1})$. We note that since $\gamma_{j+1}$ and $\beta_{j+1}$ are isomorphic, $q_\tau$ is entailed from $(\{\gamma_{j+1}\}, \ruleset)$ iff $q_\tau' = \subLanguage{\automaton}{s_1}{s_2}(\beta_{j+1}[k_1], \beta_{j+1}[k_2])$ is entailed from $(\{\beta_{j+1}\}, \ruleset)$. The latter implies that $(k_1, s_1, k_2, s_2) \in \transition{\beta_{j+1}}{\beta_{j+1}}{\query}{\{\beta_{j+1}\}}{\ruleset}$.
Putting this together, we have that $\computetrans^{\downarrow}(\gamma_{j+1},\gamma_{j+1})=\transition{\beta_{j+1}}{\beta_{j+1}}{\query}{\{\beta_{j+1}\}}{\ruleset}$.

We now compute in (d), (e), and (f) the sets $\computetrans(\gamma_{j+1},\gamma_{j+1})$,
$\computetrans(\gamma_0,\gamma_{j+1})$, and $\computetrans(\gamma_{j+1},\gamma_0)$, by using the transitions present in
$\computetrans^{\downarrow}(\gamma_{j+1},\gamma_{j+1})$, $\computetrans(\gamma_0,\gamma_j)$,
$\computetrans(\gamma_j,\gamma_0)$, and $\computetrans(\gamma_j,\gamma_j)$.
We know from our induction hypothesis that conditions (i)-(iii) hold, meaning that:
$\computetrans(\gamma_j,\gamma_j)=\transition{\beta_j}{\beta_j}{\query}{\instance}{\ruleset}$,
$\computetrans(\gamma_0,\gamma_j)=\transition{\beta_0}{\beta_j}{\query}{\instance}{\ruleset}$,
and $\computetrans(\gamma_j,\gamma_0)=\transition{\beta_j}{\beta_0}{\query}{\instance}{\ruleset}$.
It follows from Lemma \ref{decompose-trans} that conditions (i)-(iii) hold also for $\gamma_{j+1}$.
Indeed, to obtain (ii), we need to show that $\computetrans(\gamma_0,\gamma_{j+1})=\transition{\beta_0}{\beta_{j+1}}{\query}{\instance}{\ruleset}$.
First suppose $(k_1, s_1, k_3, s_3) \in \computetrans(\gamma_0,\gamma_{j+1})$. Then because of the construction in (d),
we know that there exist transitions $(k_1, s_1, k_2, s_2) \in \computetrans(\gamma_0,\gamma_j)$ and $(k_2', s_2, k_3, s_3) \in \computetrans^{\downarrow}(\gamma_{j+1},\gamma_{j+1})$ such that $\gamma_j[k_2] = \gamma_{j+1}[k_2']$.
Using the earlier equalities, we get $(k_1, s_1, k_2, s_2) \in \transition{\beta_0}{\beta_j}{\query}{\instance}{\ruleset}$
and $(k_2', s_2, k_3, s_3) \in \transition{\beta_{j+1}}{\beta_{j+1}}{\query}{\{\beta_{j+1}\}}{\ruleset}$,
which yields $(k_1, s_1, k_3, s_3) \in \transition{\beta_0}{\beta_{j+1}}{\query}{\instance}{\ruleset}$ by the first item of Lemma \ref{decompose-trans}.
For the other direction, suppose $(k_1, s_1, k_3, s_3) \in \transition{\beta_0}{\beta_{j+1}}{\query}{\instance}{\ruleset}$.
Then by the first item of Lemma \ref{decompose-trans}, there must exist
$(k_1, s_1, k_2, s_2) \in \transition{\beta_0}{\beta_j}{\query}{\instance}{\ruleset}$ and
$(k_2', s_2, k_3, s_3) \in \transition{\beta_{j+1}}{\beta_{j+1}}{\query}{\{\beta_{j+1}\}}{\ruleset}$ with $\beta_j[k_2]=\beta_{j+1}[k_2']$.
Using the earlier equalities, we get
$(k_1, s_1, k_2, s_2) \in \computetrans(\gamma_0,\gamma_j)$ and $(k_2', s_2, k_3, s_3) \in \computetrans^{\downarrow}(\gamma_{j+1},\gamma_{j+1})$,
which implies that the transition $(k_1, s_1, k_3, s_3)$ was added to $\computetrans(\gamma_0,\gamma_{j+1})$ during Step  2(d).
The arguments for conditions (i) and (iii) proceed similarly.

\medskip

To complete the proof, we must argue that we can make the described execution respect the bound $\maxsteps$ on the number of iterations of the inner while loop.
Consider some non-root node $\alpha_c$ and its parent $\alpha_p$, and let $\beta_0, \ldots, \beta_n$ (with $\beta_0=\alpha_p$ and $\beta_n=\alpha_c$) be the sequence of atoms  in $\chase{\instance}{\ruleset}$ that were used to decide which rules and substitutions to guess in Step 2(a). We let $\rho_1, \ldots, \rho_n$ and $\sigma_1, \ldots, \sigma_n$ be the chosen sequences of rules and substitutions.
If $n\leq \maxsteps$, then the execution of the inner while loop we defined for $\alpha_c$ already respects the bound.
Let us thus suppose that $n > \maxsteps$.
Thus, since $n > \maxsteps$, there must exist $0 \leq j < k \leq n$ such that:
\begin{itemize}
\item[(a)] $\atom_j$ and $\atom_{k}$ are isomorphic w.r.t.\ the identity on $\terms{I} \cup \terms{\atom_p} \cup \terms{\atom_c}$
\item[(b)] $\computetrans(\atom_j,\atom_j) = \computetrans(\atom_{k},\atom_{k})$
\item[(c)] $\computetrans(\atom_p,\atom_j) = \computetrans(\atom_p,\atom_{k})$
\item[(d)] $\computetrans(\atom_j,\atom_p) = \computetrans(\atom_{k},\atom_p)$
\end{itemize}
Indeed, there can be at most $P*(|A|+|\terms{I} \cup \terms{\atom_p} \cup \terms{\atom_c}|)^{A}$ ways of selecting a type and deciding which terms from $\terms{I} \cup \terms{\atom_p} \cup \terms{\atom_c}$ appear in which positions, and there are $N^2*A^2$ possible transitions between any pair of atoms, hence at most $2^{N^2*A^2}$ possible sets of transitions
(we recall that $P$ and $A$ are respectively the number of predicates and maximum predicate arity for the ruleset $\ruleset$, and $N$ is the total number of states across all of the automata in $q$).
Intuitively, conditions (a)-(d) ensure that $\atom_j$ and $\atom_k$ behave the same, so the idea is to jump straight from $\atom_j$ to $\atom_{k+1}$, and in this manner obtain
a shorter sequence that still defines a successful execution of the inner while loop but now respects the $\maxsteps$ bound.
Of course, since $\atom_j$ and $\atom_k$ may use different values for terms not in $\terms{I} \cup \terms{\atom_p} \cup \terms{\atom_c}$,
we need to do some renaming of nulls to obtain the new shorter sequence.
So, let $\pi_k$ be the bijection from $\terms{\atom_k}$ to $\terms{\atom_j}$ witnessing the isomorphism from item (a).
Then we can apply the rule $\rho_k$ to $\atom_j$ using the substitution $\sigma_k'$ that maps $t$ to $\pi_k(\sigma_k(t))$.
Call $\atom_{k+1}'$ the result of this rule application. It follows from item (a) that
$\atom_{k+1}$ and $\atom_{k+1}'$ are isomorphic w.r.t.\ the identity on $\terms{I} \cup \terms{\atom_p} \cup \terms{\atom_c}$ (we let $\pi_{k+1}$ be a witnessing bijection from
$\atom_{k+1}$ to $\atom_{k+1}'$).
Moreover, by items (b)-(d) and the way $\computetrans$ is constructed in the procedure, we must also have:
\begin{itemize}
\item $\computetrans(\atom_{k+1},\atom_{k+1}) = \computetrans(\atom'_{k+1},\atom'_{k+1})$
\item  $\computetrans(\atom_p,\atom_{k+1}) = \computetrans(\atom_p,\atom'_{k+1})$
\item $\computetrans(\atom_{k+1},\atom_p) = \computetrans(\atom'_{k+1},\atom_p)$.
\end{itemize}
Thus, $\atom_{k+1}$ and $\atom_{k+1}'$ behave the same in all relevant respects, and in particular,
we can apply rule $\rho_{k+1}$ to $\atom_{k+1}'$ using the substitution $\sigma_{k+1}'$ defining by setting $\sigma_{k+1}'(t)=\pi_{k+1}(\sigma_{k+1}(t))$.
It is readily verified, by repeating the preceding argument, that
$\atom_{k+1}$ and $\atom_{k+1}'$ are isomorphic w.r.t.\ the identity on $\terms{I} \cup \terms{\atom_p} \cup \terms{\atom_c}$.
Continuing in this manner, we can define a sequence $\atom_{k+1}', \ldots, \atom_n'$ such that for every $k+1 \leq u < n$:
\begin{itemize}
\item $\atom_{u+1}$ and $\atom_{u+1}'$ are isomorphic w.r.t.\ the identity on $\terms{I} \cup \terms{\atom_p} \cup \terms{\atom_c}$, as witnessed by a bijection $\pi_{u}$
from $\atom_{u+1}$ to $\atom'_{u+1}$
\item $\atom_{u+1}'$ is obtained from $\atom_u'$ by applying $\rho_u$ using substitution $\sigma'_u$ defined by $\sigma_u'(t)= \pi_u(\sigma_u(t))$
\item $\computetrans(\atom_{u+1},\atom_{u+1}) = \computetrans(\atom'_{u+1},\atom'_{u+1})$
\item  $\computetrans(\atom_p,\atom_{u+1}) = \computetrans(\atom_p,\atom'_{u+1})$
\item $\computetrans(\atom_{u+1},\atom_p) = \computetrans(\atom'_{u+1},\atom_p)$.
\end{itemize}
It follows that by guessing the sequence of rules $\rho_1, \ldots, \rho_{j-1}, \rho_{k}, \rho_{k+1}, \ldots, \rho_n$
and sequence of substitutions $\sigma_1, \ldots, \sigma_{j-1}, \sigma_k', \sigma_{k+1}', \ldots, \sigma_n'$, we are able to correctly perform the inner while loop. Moreover, by iterating this shortening operation, we will eventually obtain sequences of rules and substitutions 
which allow us to terminate within $\maxsteps$ steps while satisfying the other three conditions at the end of Step 2.

We have thus shown that it is possible to define for every valid proof scheme w.r.t.\ $(\instance,\ruleset,\query)$ an execution of the procedure that returns `valid', which completes the proof.
\end{proof}

{Note that the proposition can be extended to arbitrary linear rules with multiple head atoms (translated into atomic-head linear rules). 
Indeed, for combined complexity in the bounded predicate arity case, we know that RPQ entailment remains in \textsc{PTime}, see Proposition \ref{loop-bounded-arity} and Theorem \ref{thm-linear-rpq-answering}; hence, checking the validity of a proof scheme remains in \textsc{PSpace}.
}

It follows from Proposition \ref{validity-exp} that enumerating polysize proof schemes and checking for their validity and for the existence of a match of $\query$ in them yields a sound and complete algorithm for CRPQ answering.

\begin{theorem}
\label{thm-linear-complexity}
CRPQ answering under linear rules is \textsc{ExpTime}-complete in combined complexity,  \textsc{PSpace}-complete in combined complexity with bounded-predicate arity, and \textsc{NL}-complete in data complexity. 
\end{theorem}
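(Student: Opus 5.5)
The plan is to prove the upper bounds with a guess-and-check algorithm built on Proposition~\ref{prop-match-valid} and Proposition~\ref{validity-exp}. The lower bounds come from Section~\ref{sec-linear-lower} and known results.

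\textbf{Upper bounds.} By Proposition~\ref{prop-match-valid}, $\instance,\ruleset\models\query$ iff some valid proof scheme $\proofScheme$ of size polynomial in $\query$ admits a match of $\query$. The algorithm does three things.
\begin{enumerate}
\item It guesses such a proof scheme. The forest has polynomially many atoms, and each atom has at most the maximal arity $\maxArity$ of terms, drawn from $\terms{\instance}$ or from a polynomial supply of null names. The transition set is a set of polynomially many quadruples per eligible pair of atoms. Overall, $\proofScheme$ has size polynomial in $|\query|\cdot \maxArity$ plus logarithmic pointers into $\instance$.
\item It guesses a mapping $\match$ and checks that $\match$ is a match. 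This is polynomial once the saturation is computed.
\item It checks validity via Proposition~\ref{validity-exp}.
\end{enumerate}
In combined complexity with unbounded arity, all candidates can be enumerated deterministically in exponential time, and each validity test runs in exponential time, giving \textsc{ExpTime}. With bounded arity, the proof scheme has polynomial size, guessing is in \textsc{NPSpace}${}={}$\textsc{PSpace}, and validity is in \textsc{PSpace}; together this gives \textsc{PSpace}.

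\textbf{Data complexity.} For data complexity, $\query$ and $\ruleset$ are fixed, so the proof scheme has constant size apart from its references to constants of $\instance$. Each such reference is a logspace pointer, so the scheme and $\match$ can be guessed in logarithmic space. The match check takes constant work. The validity check is in \textsc{NL} by Proposition~\ref{validity-exp}, and $\textsc{NL}^{\textsc{NL}}=\textsc{NL}$ closes the argument. The delicate point is that the validity procedure must read the guessed scheme, so the scheme has to be stored as a constant number of logspace pointers rather than re-guessed. I expect this bookkeeping, together with confirming that the proof-scheme size bound does not depend on $\instance$, to be the main obstacle. The bound holds because the backbone has at most roughly $O(|\query|^2)$ atoms, counting match images, variable witnesses, pairwise glb's and roots.

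\textbf{Lower bounds.} \textsc{ExpTime}-hardness in combined complexity follows from the RPQ lower bound of Section~\ref{sec-linear-lower}, since RPQs are CRPQs. \textsc{NL}-hardness in data complexity likewise follows from the RPQ case, or already from plain graph databases. \textsc{PSpace}-hardness with bounded arity follows from the known \textsc{PSpace}-hardness of CRPQ answering under DL-Lite ontologies \cite{DBLP:journals/jair/BienvenuOS15}. DL-Lite inclusions are expressible as linear rules of arity at most two, and assertions become the instance.
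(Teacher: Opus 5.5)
Your proposal is correct and takes essentially the same route as the paper: it reduces to the existence of a polynomial-size valid proof scheme with a match (Proposition~\ref{prop-match-valid}), enumerates or guesses such schemes, checks the match, and checks validity via Proposition~\ref{validity-exp}, and it inherits the lower bounds from the RPQ case and from DL-Lite$_\mathcal{R}$. Your handling of data complexity, which stores the constants of $\instance$ occurring in the guessed scheme as logspace pointers, is slightly more careful than the paper's claim that the number of relevant proof schemes is constant.
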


\begin{proof}
By Proposition\ \ref{prop-match-valid}, CRPQ entailment comes down to deciding whether there exists a polynomial (in the query) proof scheme that is valid and contains a match for the query.  

Though polysize proof schemes are defined on an infinite number of atoms, and are thus infinite in number, we can enumerate in exponential time those that are different up to a renaming of nulls. Thus, for the general case, we can enumerate all polysize proof schemes that are distinct up to a renaming of nulls in polynomial space (and exponential time) and check whether there exists a valid proof scheme that contains a match for the query. As already noted, deciding existence of a match in a proof scheme is in NP, and by Proposition\ \ref{validity-exp},
validity checking is in \textsc{ExpTime}. Thus, the described procedure runs in single-exponential time. When the arity is bounded, validity checking is in \textsc{PSpace} (by Proposition\ \ref{validity-exp}), and so the overall procedure runs in polynomial space. Finally, if both $q$ and $\ruleset$ are fixed, 
the number of relevant proof schemes and their size are constants. Thus, we can enumerate all proof schemes in constant time, and checking for a match is also in constant time. By 
Proposition\ \ref{validity-exp}, the validity checking is in \textsc{NL} w.r.t.\ the size of the data, so the overall procedure runs in non-deterministic logspace for data complexity. 

The \textsc{ExpTime} lower bound in combined complexity and the \textsc{NL} lower bound in data complexity are immediate consequence of the \textsc{ExpTime}-hardness (resp.\ \textsc{NL}-hardness) of RPQ answering for linear rules w.r.t. combined (resp.\ data) complexity. The \textsc{PSpace} lower bound in the bounded arity case is inherited from the \textsc{PSpace}-hardness of CRPQ answering over DL-Lite$_\mathcal{R}$ knowledge bases \cite{DBLP:journals/jair/BienvenuOS15}. 
\end{proof}

\section{CRPQ Answering under Guarded Rules: Upper Bound}
\label{sec-crpq-guarded-upper}

To upper bound the complexity of  CRPQ answering under guarded rules, we  reduce this problem to CRPQ answering  under linear rules. For simplicity, we assume that all the predicates in $\instance$ also occur in $\ruleset$ (however, the reduction is easily extended to drop this restriction).

Two main ideas underly the reduction. 
First, we define an alternative notion of derivation, called `locally complete' derivation, which ensures the following property:  
at each step $k$ of the derivation, the considered extended instance $\hat{\instance}_k$ contains exactly the subset of the chase restricted to its terms, \emph{i.e.}, $\chase{\instance}{\ruleset}|_{\terms{\hat{\instance}_k}}$. 
 Of course such a derivation is not computable for an arbitrary set of rules, because it requires the decidability of (atom) entailment, but it is for guarded rules. 
Second, the application of guarded rules (in a locally complete derivation) is simulated through linear rules (in a classical derivation) expressed on an extended vocabulary, in which each predicate represents a guarded set of atoms. We use a finite set of canonical variables to rename terms occurring in the chase atoms. This ensures  there are finitely many guarded sets to be considered.   

Let us first formally introduce the notion of locally complete derivation.
A locally complete derivation is a sequence of extended instances $\hat{\instance}_{i}$ that have the property of being closed with respect to the atoms on $\terms{\hat{\instance}_{i}}$ that can be derived from $\hat{\instance}_{i}$. Starting from an instance $\instance$, $\hat{\instance}_0$ is obtained from $\instance$ by closing it under atomic entailment. At each step $i$, a rule is applied on $\hat{\instance}_{i}$ and the resulting set is again closed to yield $\hat{\instance}_{i+1}$.

\begin{definition}[Locally complete derivation]
\label{def-locally-complete}
 Let $\instance$ be an instance and $\ruleset$ be a set of existential rules. A \emph{locally complete $\ruleset$-derivation} 
 from $\instance$ resulting in $\hat{\instance}_n$ is a sequence $\hat{\instance}_0,
  \hat{\instance}_1,\ldots,\hat{\instance}_n$ with $\hat{\instance}_0 = 
\chase{\instance}{\ruleset}_{\mid \terms{\instance}} $
and such that for all $i \geq 0$, there are $\erule_i \in \ruleset$ and a homomorphism $\match_i$ from $\body{\erule_i}$ to $\hat{\instance}_i$ such that  $\hat{\instance}_{i+1} = \chase{\hat{\instance}'_i}{\ruleset}_{\mid \terms{\hat{\instance}'_i}}$ where $\hat{\instance}'_i = \hat{\instance}_i \cup \match_i^{\mathrm{safe}}(\head{\erule_i})$. 
\end{definition}

Note that a locally complete derivation could have been defined as a possibly infinite sequence, but we will only need finite sequences to state our results. 
Next, we show that (finite) classical derivations and locally complete derivations are equivalent for our purposes, meaning that they entail the same Boolean CRPQs. This result holds more generally  for homomorphism-closed queries, as stated in Proposition~\ref{corollary-classical-alternative}.

\medskip
{Let us first illustrate the notion of locally complete derivation on the running example.}

\begin{example}[Running Example---continued] Let  $\ruleset = \{\erule_1, \ldots, \erule_6\}$ be the considered set of rules, which we recall below for reading convenience. 
\begin{quote}
$(\erule_1) \quad \fun{isFriendOf}(x,y) \rightarrow  \fun{isFriendOf}(y,x)$
\\
$(\erule_2) \quad \fun{isFriendOf}(x,y) \rightarrow  \fun{follows}(x,y)$ 
\\
$(\erule_3) \quad \fun{follows}(x,y) \rightarrow  \exists m~\fun{message}(m,x,y)$ 
\\
$(\erule_4) \quad \fun{follows}(x,y) \land  \fun{follows}(y,x) \rightarrow  \fun{isPaired}(x,y)$ 
\\
$(\erule_5) \quad \fun{message}(m,x,y) \rightarrow  \fun{sends}(x,m)$ 
\\
$(\erule_6) \quad\fun{message}(m,x,y) \rightarrow  \fun{receives}(y,m)$ 
\end{quote}

Consider again the instance $\instance = \{\fun{follows(B,A)}, \fun{isFriendOf(C,A)}, \fun{isFriendOf(C,B)}\}$.
Then, $\hat{\instance}_0$ is obtained from $\instance$ by adding all the entailed facts on terms in $\{\fun{A,B,C}\}$:
\begin{quote}
$ \fun{isFriendOf(A,C)}, \fun{isFriendOf(B,C)}$,	\\
$\fun{follows(C,A)}, \fun{follows(C,B)}, \fun{follows(A,C)}, \fun{follows(B,C)}$, \\
$\fun{isPaired(C,A)},\fun{isPaired(A,C)},  \fun{isPaired(C,B)},\fun{isPaired(B,C)}$. 
\end{quote}
In a classical derivation, $\hat{\instance}_0$ would be obtained from $\instance$ by triggering the rules $\erule_1$, $\erule_2$ and $\erule_4$. 
Now, assume Rule $\erule_3$ is applied on $\hat{\instance}_0$ by homomorphism $\match = \{x \mapsto \fun{B}, y \mapsto \fun{A} \}$. Then, 
$\instance'_1 = \hat{\instance}_0 \cup \{\fun{message(m_0,\fun{B},\fun{A})}\}$, and 
$\hat{\instance}_1 = \instance'_1 \cup \{ \fun{sends}(\fun{B}, m_0), \fun{receives}(\fun{A}, m_0)\}$. In a classical derivation, $\hat{\instance}_1$ would be obtained from $\instance'_1$  by triggering the rules $\erule_5$ and $\erule_6$.
The locally complete derivation can be continued by applying $\erule_3$ on the other facts with predicate $\fun{follows}$. 

Let us outline the main ideas of the reduction presented next. 
We define a finite set of \emph{complex predicates}, which are canonical encodings of all the guarded sets on a vocabulary. 
Then, each atom $\atom$ in $\instance$ is replaced by a complex atom (i.e., an atom with a complex predicate) encoding $\atom$ and its guarded set in $\hat{\instance}_0$. For example,  
$\atom =  \fun{isFriendOf(C,A)}$ is replaced by a complex atom encoding $\atom$ and the following set: 
$$\{ \fun{isFriendOf(C,A)}, \fun{isFriendOf(A,C)}, \fun{follows(C,A)}, \fun{follows(A,C)}, \fun{isPaired(C,A)},\fun{isPaired(A,C)} \}$$
Furthermore, each rule from $\ruleset$ is replaced by a set of \emph{complex rules}, which are linear rules using complex predicates, such that each application of a complex rule simulates a step of a locally complex derivation.  Finally, \emph{reconstruction rules} translate back atoms on the complex predicates into atoms on the initial predicates. We will illustrate the multiple intricacies of this translation using specific technical examples.  
 \end{example}
 
 \color{black}

The following Lemmas \ \ref{prop-alternative-complete} and \ \ref{prop-alternative-sound} allow us to show that classical derivations and locally complete derivations are able to derive exactly the same conclusions.

\begin{lemma}
\label{prop-alternative-complete}
For any classical $\ruleset$-derivation from $\instance$ to $\instance_n$, there exists a locally complete $\ruleset$-derivation from $\instance$ to $\hat{\instance}_{m}$ such that $\hat{\instance}_{m} \models \instance_n$. 
\end{lemma}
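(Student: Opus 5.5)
We would build the locally complete derivation by induction on the length $n$ of the classical derivation $\instance_0 (=\instance)~(\erule_1,\match_1)~\instance_1 \ldots (\erule_n,\match_n)~\instance_n$, maintaining a homomorphism $h_k$ from $\instance_k$ to the current $\hat{\instance}_{m_k}$ that is the identity on $\terms{\instance}$. For $k=0$ we take $m_0=0$ and $h_0$ the identity; this works because $\instance \subseteq \chase{\instance}{\ruleset}_{\mid \terms{\instance}} = \hat{\instance}_0$.

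For the inductive step, consider the trigger $(\erule_{k+1},\match_{k+1})$ on $\instance_k$. The composition $h_k \circ \match_{k+1}$ is a homomorphism from $\body{\erule_{k+1}}$ to $\hat{\instance}_{m_k}$. Let $\atom$ be the head atom produced.
\begin{itemize}
\item If $\atom$ contains no fresh null, then $h_k(\atom)$ lies on $\terms{\hat{\instance}_{m_k}}$. It is entailed by $\hat{\instance}_{m_k}$ together with $\ruleset$, so we need $h_k(\atom) \in \hat{\instance}_{m_k}$. This is local closure: we show $\hat{\instance}_{m_k} = \chase{\hat{\instance}_{m_k}}{\ruleset}_{\mid \terms{\hat{\instance}_{m_k}}}$ up to homomorphic equivalence. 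We then set $m_{k+1}=m_k$ and $h_{k+1}=h_k$.
\item Otherwise, we perform one step of the locally complete derivation with $(\erule_{k+1}, h_k\circ \match_{k+1})$. We set $m_{k+1}=m_k+1$ and extend $h_k$ by sending the fresh nulls of $\atom$ to the corresponding fresh nulls of $\hat{\instance}'_{m_k}$. Since $\hat{\instance}'_{m_k} \subseteq \hat{\instance}_{m_k+1}$, this gives a homomorphism from $\instance_{k+1}$. (Alternatively, we could always perform the step; applying it to an image never breaks the homomorphism.)
\end{itemize}
At the end, $h_n$ witnesses $\hat{\instance}_{m}\models \instance_n$.

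The key auxiliary fact, and the main obstacle, is the closure property used in the first case. We must show that every $\hat{\instance}_i$ satisfies: any atom over $\terms{\hat{\instance}_i}$ entailed by $(\hat{\instance}_i,\ruleset)$ already belongs to $\hat{\instance}_i$. This holds by definition, because $\hat{\instance}_{i}$ is $\chase{\hat{\instance}'_{i-1}}{\ruleset}_{\mid \terms{\hat{\instance}'_{i-1}}}$ and $\terms{\hat{\instance}_i}=\terms{\hat{\instance}'_{i-1}}$. What remains is that $\chase{\hat{\instance}_i}{\ruleset}$ adds nothing new on these terms beyond $\chase{\hat{\instance}'_{i-1}}{\ruleset}$. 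This is immediate since $\hat{\instance}'_{i-1}\subseteq \hat{\instance}_i \subseteq \chase{\hat{\instance}'_{i-1}}{\ruleset}$, so the two chases are homomorphically equivalent via homomorphisms fixing $\terms{\hat{\instance}'_{i-1}}$, and atoms on those terms are mapped to themselves. We will verify this last claim carefully, in particular that the homomorphism from the larger chase into the smaller one can be chosen to fix the terms of $\hat{\instance}'_{i-1}$. This follows from universality of the chase relative to the instance, since its terms are treated as frozen.

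The no-null case is then handled cleanly: $h_k(\atom)$ is obtained by applying a rule to atoms of $\hat{\instance}_{m_k}$, so it lies in $\chase{\hat{\instance}_{m_k}}{\ruleset}$ and is on $\terms{\hat{\instance}_{m_k}}$. Hence it belongs to $\hat{\instance}_{m_k}$, which completes the induction.
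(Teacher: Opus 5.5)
Your proof is correct and follows the same induction as the paper: compose the current homomorphism $h_k$ with the trigger to get a trigger on $\hat{\instance}_{m_k}$, perform a locally complete step, and extend $h_k$ to the fresh nulls. The only difference is that the paper always performs the step (your parenthetical alternative), which is permitted because the definition of a locally complete derivation places no condition on the chosen trigger, so your case split and the local-closure property it relies on are unnecessary, although your argument for that property is sound.
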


\begin{proof}
We prove the result by induction on the length of the classical derivation. If $n = 0$,  
 the result holds since $\instance_0 \subseteq \hat \instance_0$. Otherwise, let us assume that the result holds for any classical derivation of length $n \geq 0$. Let $\instance = \instance_0, \ldots, \instance_n,\instance_{n+1}$ be a classical derivation of length $n+1$, and $\erule_n$ and $\match_n$ be such that $\instance_{n+1}$ is obtained from $\instance_n$ by applying the trigger $(\erule_n,\match_n)$. By induction assumption, there is a locally complete $\ruleset$-derivation from $\instance$ resulting in $\hat{\instance}_{m}$ such that $\hat{\instance}_{m} \models \instance_n$. 
  Hence, there is a homomorphism $h_n$ from $\instance_n$ to $\hat{\instance}_m$. Thus $(\erule_n, h_n \circ \match_n)$ is a trigger on $\hat{\instance}_m$, and $h_n$ can be extended to a homomorphism from $\instance_{n+1}$ to $\hat{\instance}'_{m+1} = \hat{\instance}_m \cup (h_n\circ \match_n)^{\mathrm{safe}}(\head{\erule_n})$ by mapping the existential variables in $\match_n^{\mathrm{safe}}(\head{\erule_n})$
    to the corresponding existential variables in $(h_n\circ \match_n)^{\mathrm{safe}}(\head{\erule_n})$. 
  Hence, the locally complete derivation from $\instance$ resulting in $\hat{\instance}_{m}$ can be extended to a locally complete derivation from $\instance$ resulting in $\hat{\instance}_{m+1} = \chase{\hat{\instance}'_{m+1}}{\ruleset}_{\mid \terms{\hat{\instance}'_{m+1}}}$, 
  and  $\hat{\instance}_{m+1} \models \hat{\instance}'_{m+1} \models \instance_{n+1}$, 
  which concludes this proof.
\end{proof}

\begin{lemma}
\label{prop-alternative-sound}
For any locally complete $\ruleset$-derivation from $\instance$ resulting in $\hat{I}_n$, there exists a classical $\ruleset$-derivation from $\instance$ resulting in $\instance'$ such that $\instance' \models \hat{\instance}_n$.
\end{lemma}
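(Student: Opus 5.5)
The plan is an induction on the length $n$ of the locally complete derivation. The invariant is slightly stronger than the statement: we maintain a finite classical $\ruleset$-derivation from $\instance$ resulting in $\instance'$ together with a homomorphism $h$ from $\hat{\instance}_n$ to $\instance'$. The key fact used throughout is the closing operation. For any finite extended instance $J$ that is itself the result of a classical derivation from $\instance$, or more generally any $J$ with $\instance \cup \ruleset \models J$, every atom of $\chase{J}{\ruleset}_{\mid \terms{J}}$ belongs to the result of some finite classical derivation from $J$. This holds by the finiteness proposition for matches in the chase, applied to each of the finitely many atoms of the restriction viewed as a Boolean CQ.

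\textbf{Base case.} We have $\hat{\instance}_0 = \chase{\instance}{\ruleset}_{\mid \terms{\instance}}$, which is a finite set of ground atoms on $\terms{\instance}$ since rules contain no constants. Each atom $\atom$ in it lies in some $\instance_{k_\atom}$ of a fixed chase sequence of $\instance$. Taking $k = \max_\atom k_\atom$, the prefix $\instance_0 \ldots \instance_k$ is a classical derivation whose result contains $\hat{\instance}_0$. So $h$ can be taken to be the identity.

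\textbf{Inductive step.} Suppose we have a classical derivation resulting in $\instance'$ and a homomorphism $h : \hat{\instance}_i \to \instance'$. The locally complete step applies $(\erule_i,\match_i)$ to $\hat{\instance}_i$, producing $\hat{\instance}'_i$, and then closes it. Since $h \circ \match_i$ maps $\body{\erule_i}$ into $\instance'$, the trigger $(\erule_i, h\circ\match_i)$ is either applicable to $\instance'$ or has already been applied in the derivation. In the latter case we reuse the nulls it produced; otherwise we append it. In both cases we obtain $\instance''$ and an extension $h'$ of $h$ that sends the fresh nulls of $\match_i^{\mathrm{safe}}(\head{\erule_i})$ to the corresponding nulls, so that $h' : \hat{\instance}'_i \to \instance''$.

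\textbf{Handling the closure (main obstacle).} It remains to cover $\hat{\instance}_{i+1} = \chase{\hat{\instance}'_i}{\ruleset}_{\mid \terms{\hat{\instance}'_i}}$. Since $h'$ is a homomorphism $\hat{\instance}'_i \to \instance''$, universality and monotonicity of the chase give a homomorphism $g$ from $\chase{\hat{\instance}'_i}{\ruleset}$ to $\chase{\instance''}{\ruleset}$ that extends $h'$. This holds because $h'$ extended to the chase is itself a model argument: $\chase{\instance''}{\ruleset}$ is a model of $(\hat{\instance}'_i,\ruleset)$ via $h'$. On the atoms of $\hat{\instance}_{i+1}$, all terms lie in $\terms{\hat{\instance}'_i}$, so $g$ agrees with $h'$ there. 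The image of $\hat{\instance}_{i+1}$ is therefore a finite set of atoms of $\chase{\instance''}{\ruleset}$ over $\terms{\instance''}$. By fairness, each such atom appears after finitely many further steps of a chase sequence continuing the classical derivation to $\instance''$; this requires noting that any derivation can be extended to a fair one. Truncating at the maximal index yields a classical derivation resulting in some $\instance'''$ with $h' : \hat{\instance}_{i+1} \to \instance'''$.

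The delicate point is the justification that $g$ restricts to $h'$ and stays within existing terms. This needs only that $g$ extends $h'$, which holds by construction. Concluding, $\instance' \models \hat{\instance}_n$ by the homomorphism criterion for entailment.
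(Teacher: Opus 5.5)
Your proof is correct and has the same skeleton as the paper's: induction on $n$ with a homomorphism $\hat{\instance}_n \to \instance'$ as the invariant, a base case obtained by cutting a chase sequence of $\instance$ at a finite point, and the transport of $(\erule_i,\match_i)$ to $(\erule_i,h\circ\match_i)$. Where you differ is the closure $\hat{\instance}_{i+1}=\chase{\hat{\instance}'_i}{\ruleset}_{\mid\terms{\hat{\instance}'_i}}$.

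The paper fixes a finite sequence of rule applications from $\hat{\instance}'_n$ whose result entails $\hat{\instance}_{n+1}$. It then replays that sequence trigger by trigger on the classical side, by an inner induction on its length $l$, so everything stays finite and elementary. You instead take a homomorphism $g$ on the whole chase of $\hat{\instance}'_i$ and use fairness to cut a finite prefix. This is shorter given standard chase facts. Note, though, that universality as stated only yields \emph{some} homomorphism. That $g$ can be chosen to extend $h'$ comes from the standard proof of universality, which is precisely the paper's replay.

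One step should be tightened. $\chase{\instance''}{\ruleset}$ is not the result of a fair continuation of your derivation from $\instance$. A chase of $\instance''$ re-fires the triggers already used to reach $\instance''$, which produces duplicate nulls. So ``by fairness'' silently needs a further homomorphism, identity on $\terms{\instance''}$, into that continuation. It is simpler to aim $g$ directly at the result of a fair extension of your derivation, since that result is a model of $\ruleset$ containing $h'(\hat{\instance}'_i)$.

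On the other side, you explicitly handle the case where the transported trigger was already applied, by reusing its nulls. The paper applies transported triggers without checking that they do not repeat, so your treatment is more careful there.
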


\begin{proof}
We prove the result by induction on the length of a locally complete derivation $\hat{\instance}_0,\ldots,\hat{\instance}_n$.
For $n= 0$, $\hat{\instance}_n = \chase{\instance}{\ruleset}_{\mid \terms{\instance}}$. As $\chase{\instance}{\ruleset} \models \chase{\instance}{\ruleset}_{\mid \terms{\instance}}$, there is $\instance_m$ derivable from $\instance$ such that $\instance_m \models \hat{\instance}_n$, which concludes that case. For the induction step, let us assume the result for any locally complete derivation of length up to $n \geq 0$, and let $\hat{\instance}_{n+1}$ be obtained from a locally complete derivation of length $n+1$. We obtained $\hat{\instance}_{n+1}$ from $\hat{\instance}_n$ by a single locally complete rule application of rule $\erule_{n}$ through $\match_{n}$. Let $\instance = \instance_0, \ldots, \instance_{m}$ be a derivation such that $\instance_{m} \models \hat{\instance}_n$, and $\varphi_n$ be a homomorphism from $\hat{\instance}_n$ to $\instance_{m}$. Then $\erule_{n}$ is applicable to $\instance_{m}$ using $\varphi_n \circ \match_{n}$, creating $\instance_{m + 1}$. Let $\varphi_n^+$ be the extension of $\varphi_n$ such that each variable introduced by the application of $\erule_{n}$ to $\hat{\instance}_{n}$ through $\match_{n}$ is mapped to the corresponding variable in $\instance_{m+1}$. We show that there exist an extension $\instance_0, \ldots, \instance_{m+1}, \ldots, \instance_{m'}$ of the derivation $\instance_0, \ldots, \instance_{m+1}$ and a homomorphism  $\varphi_{n+1}$ from $\hat{\instance}_{n+1}$ to $\instance_{m'}$. 

First observe that $\varphi_n^+$ is a homomorphism from $\hat{\instance}'_n=\hat{\instance}_n \cup \match_{n}^{\mathrm{safe}}(\head{\erule_{n}})$ to $\instance_{m+1}$. 
By definition, $\hat{\instance}_{n+1} = \chase{\hat{\instance}'_n}{\ruleset}_{\mid \terms{\hat{\instance}'_n}}$. There is thus a sequence of rule applications $\{(\erule'_i,\match'_i)\}_{1 \leq i \leq l}$ starting from $\hat{\instance}'_n$ whose result entails $\hat{\instance}_{n+1}$; let us denote by $\hat{\instance'}_n^i$ the set of atoms obtained after the application of $(\erule'_i,\match'_i)$. We build by induction on $l$ a sequence of pairs $(\varphi^i,\instance_{m+1+i})$ such that $\varphi^i$ is a homomorphism from $\hat{\instance'}_n^i$ to  $\instance_{m+1+i}$: 
\begin{itemize}
 \item If the sequence of rule applications is empty, one can take $\varphi_n^+$ and $\instance_{m+1}$. 
 \item If we have built $\varphi^i$ and $\instance_{m+1+i}$ fulfilling the induction condition, then $\erule'_{i+1}$ is applicable to $\instance_{m+1+i}$ by $\varphi^i\circ\match'_{i+1}$, and let us denote by $\instance_{m+1+i+1}$ the result of that application. Then $\varphi^i$ can be extended to a homomorphism $\varphi^{i+1}$ from $\hat{\instance'}_n^{i+1}$ to $\instance_{m+1+i+1}$, by mapping the introduced fresh nulls in $\hat{\instance'}_n^{i+1}$ to the corresponding fresh nulls in $\instance_{m+1+i+1}$. 
\end{itemize}

We finally define $\instance_{m'}$ as $\instance_{m+l}$ and $\varphi_{n+1}$ as $\varphi^l$, and we have that $\instance_{m'} \models \hat{\instance}_{n+1}$, as $\varphi_{n+1}$ is a homomorphism from $\hat{\instance}_{n+1}$ to $\instance_{m'}$.
\end{proof}

\begin{proposition}
\label{corollary-classical-alternative}
For any KB $(\instance, \ruleset)$ and homomorphism-closed Boolean query $\query$, there is a (classical) $\ruleset$-derivation from $\instance$ resulting in $I_n$ with $I_n \models q$ if and only if there is a locally complete $\ruleset$-derivation from $\instance$ resulting in $\hat{I}_m$ with $\hat{I}_m \models \query$.
\end{proposition}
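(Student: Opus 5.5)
The plan is to derive both directions directly from Lemmas~\ref{prop-alternative-complete} and~\ref{prop-alternative-sound}, using only the fact that $\query$ is closed under homomorphism. Since entailment between existentially closed conjunctions of atoms coincides with the existence of a homomorphism, each statement of the form $A \models B$ in those lemmas gives a homomorphism $h$ from $B$ to $A$. Homomorphism-closure of $\query$ then transfers a match from $B$ to $A$ by composition with $h$.

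For the ($\Rightarrow$) direction, assume a classical $\ruleset$-derivation from $\instance$ results in $\instance_n$ with $\instance_n \models \query$. Lemma~\ref{prop-alternative-complete} gives a locally complete derivation resulting in $\hat{\instance}_m$ with $\hat{\instance}_m \models \instance_n$. This yields a homomorphism $h$ from $\instance_n$ to $\hat{\instance}_m$. Constants are fixed, because both sets are derived from the same $\instance$ and homomorphisms only substitute variables. Composing $h$ with a match of $\query$ in $\instance_n$ gives a match of $\query$ in $\hat{\instance}_m$. For general homomorphism-closed queries, I would simply invoke the definition: models of $\hat{\instance}_m$ are models of $\instance_n$, hence of $\query$, so $\hat{\instance}_m \models \query$.

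For the ($\Leftarrow$) direction, assume a locally complete derivation results in $\hat{\instance}_m \models \query$. Lemma~\ref{prop-alternative-sound} gives a classical derivation resulting in $\instance'$ with $\instance' \models \hat{\instance}_m$, so $\instance' \models \query$ by transitivity of entailment. This step is symmetric to the first.

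The only point needing care, rather than a real obstacle, is stating precisely what homomorphism-closed means here. I would take it to mean that if $A \models \query$ and $A$ maps homomorphically into $B$ with constants preserved, then $B \models \query$. Under that reading, the argument above is just transitivity of $\models$ combined with the two lemmas. For CRPQs this closure holds by the remark made earlier in Section~\ref{sec-queries}.
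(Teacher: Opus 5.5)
Your proposal is correct and follows the paper's proof, which simply states that the proposition is a direct consequence of Lemmas~\ref{prop-alternative-complete} and~\ref{prop-alternative-sound}. You use each lemma in the right direction and transfer the query along the resulting entailment or homomorphism, which is the only step the paper leaves implicit.
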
 

\begin{proof}
 This is a direct consequence of Lemmas\ \ref{prop-alternative-complete} and \ref{prop-alternative-sound}.
\end{proof}

The previous correspondence between classical and locally complete derivations holds for arbitrary instances $\instance$ and rulesets  $\ruleset$. 
However, even $\hat{\instance}_0$ is not computable in the general case (whereas each $\hat{I}_i$ is computable in the case of guarded rules).  
Next, we want to simulate locally complete derivations under guarded rules by classical derivations under linear rules. 
To that end, we will finitely encode the infinite set of all guarded sets that can be defined on a KB vocabulary $\mathcal V = (\mathcal P, \mathcal C)$. Note that a guarded set itself is necessarily finite because only a finite set of atoms can be defined on a finite set of terms (those of the guard) and a finite set of predicates (as $\mathcal P$ is finite).  

To finitely encode all guarded sets, we first rename them in a canonical way. 
We consider a new set of \emph{canonical variables} $\mathcal X = \{\normVar_1, \ldots, \normVar_{2w}\}$, where $w$ is the maximal predicate arity in $\mathcal V$ (then, $2w$ is be the maximal number of variables in a guarded rule and this will also hold for the linear rules built by the reduction). 
The set $\mathcal X$  is linearly ordered according to $1 \ldots 2w$.
 The canonical renaming of an atom on $\mathcal V$ is obtained by substituting all its terms by canonical variables. We will need to consider atoms in which canonical variables from a subset $\mathcal{Y} \subseteq \mathcal{X}$ may already appear. In this case, its canonical renaming is obtained by substituting each other term by the next variable in $\mathcal{X} \setminus \mathcal Y$.

\begin{definition}[Canonical renaming]
\label{def-canonical-renaming}
Let $\atom$ be an atom that may contain both classical terms and canonical variables from $\mathcal X$. The \emph{canonical renaming}  of $\atom$ w.r.t $\mathcal{Y} \subseteq \mathcal{X}$ is a substitution $\canonRenaming{\atom,\mathcal{Y}}$ from $\terms{\atom}$ to $\mathcal X$ such that $\canonRenaming{\atom,\mathcal{Y}}(\term_i) = \term_i$ if $\term_i \in \mathcal X$, otherwise $\canonRenaming{\atom,\mathcal{Y}}(\term_i) = \normVar_j$ where $j$ is the smallest integer such that $\normVar_j \not \in \mathcal{Y}$,  $\normVar_j $ does not occur in $\atom$, and  $\normVar_j  \neq \canonRenaming{\atom,\mathcal{Y}}(\term_k)$ for all $k < i$. If $\mathcal{Y} = \emptyset$, we simply write $\canonRenaming{\atom}$.	
\end{definition}

Note that, for a guarded set $G$ with guard $\atom$ (we recall that such set is denoted by the pair $(G,\atom)$),  the canonical renaming of $\atom$ yields a renaming of all terms in $G$.

Given a vocabulary $\mathcal V = (\mathcal P, \mathcal C)$, we denote by $\mathcal{G}$ the (finite) set of all guarded sets of atoms with predicates in $\mathcal P$ and terms in the canonical set of variables $\mathcal X$. 
For any guarded set $(G,\atom)$ on $\mathcal V$, it holds that that $(\canonRenaming{\atom}(G),\canonRenaming{\atom}(\atom)) \in \mathcal{G}$. 
Furthermore, to each guarded set $(G,\alpha)$ we assign a \emph{complex predicate}, of the form $p_{{\canonRenaming{\atom}(G)}}$ with the same arity as $\atom$. As $\mathcal{G}$ is finite, so is the set of complex predicates.
A \emph{complex atom} is an atom with a complex predicate (and classical terms). The \emph{canonical atom}  associated with a guarded set $(G,\atom)$ is $\fun{can}(G,\atom)$ = $p_{\canonRenaming{\atom}(G)}(\terms{\atom})$.

In this way, an instance can be encoded by the canonical atoms associated with all the guarded sets of its atoms. Similarly, a guarded rule is  encoded by two canonical atoms respectively associated with its (guarded) body and head, which yields a linear rule. However, as illustrated by the next example, additional information will be needed to mimic guarded rule derivations with linear rule derivations.
\begin{example}
\label{ex-naive-translation}
Consider $\instance = \{p(a),r(a,b),r(b,c),q(c)\}$ and $\ruleset = \{r(x,y) \wedge q(y) \rightarrow q(x); p(x) \wedge q(x) \rightarrow \exists y\ s(x,y); q(x) \wedge s(x,y) \rightarrow h(y)\}$. The way we intend to encode the instance $\instance$ works as follows. Consider $r(a,b)$ for instance. It guards the set of atoms $\{p(a),r(a,b)\}$. As such, we would encode it by $p_{\{r(\normVar_1,\normVar_2),p(\normVar_1)\}}(a,b)$. Similarly, we would also encode the set $\{r(b,c),q(c)\}$ guarded by $r(b,c)$ through the atom $p_{\{r(\normVar_1,\normVar_2),q(\normVar_2)\}}(b,c)$. 

One could try to simulate the effect of $r(x,y) \wedge q(y) \rightarrow q(x)$ by linear rules of the shape $p_{\{r(\normVar_1,\normVar_2),q(\normVar_2)\}}(x,y) \rightarrow p_{\{r(\normVar_1,\normVar_2),q(\normVar_2), q(\normVar_1)\}}(x,y)$. The rule $r(x,y) \wedge q(y) \rightarrow q(x)$ is applicable on $\instance$ by mapping $x$ to $b$ and $y$ to $c$ producing $q(b)$; similarly, $p_{\{r(\normVar_1,\normVar_2),q(\normVar_2)\}}(x,y) \rightarrow p_{\{r(\normVar_1,\normVar_2),q(\normVar_2), q(\normVar_1)\}}(x,y)$ is applicable on the encoded instance by mapping $x$ to $b$ and $y$ to $c$, producing the atom $p_{\{r(\normVar_1,\normVar_2),q(\normVar_2), q(\normVar_1)\}}(b,c)$. However, the next application of the same rule, mapping $x$ to $a$ and $y$ to $b$, cannot be simulated in the translation, because $q(b)$ is not encoded in the atom $p_{\{r(\normVar_1,\normVar_2),p(\normVar_1)\}}(a,b)$, which prevents further rule applications.
\end{example}

As witnessed by Example \ref{ex-naive-translation}, starting from the guarded sets present in $\instance$ is not enough to mimic guarded rule derivations by linear rule derivations. A first idea would be to start from saturated guarded sets, as supported by the following proposition (whose proof is given in the appendix).

\begin{propositionrep}
 \label{prop-splitting-guarded-chase}
  Let $\ruleset$ be a set of guarded rules and $\instance$ be an instance. For each atom $\atom \in \instance$, we note $\instance^*_\atom = \chase{\instance}{\ruleset}_{\mid \terms{\atom}}$. Then:
  \begin{itemize}
    \item [$(\Rightarrow)$] For any $\ruleset$-derivation from $\instance$ to $\instance_n$, we can define for every $\atom \in \instance$ an $\ruleset$-derivation from $\instance^*_\atom$ to $\instance'_\atom$ such that $\bigcup_{\atom \in \instance} \instance'_\atom \models \instance_n$;
    \item [$(\Leftarrow)$] Let for any $\atom \in \instance$ an $\ruleset$-derivation from $\instance^*_\atom$ to $\instance'_\atom$;  there exists an $\ruleset$-derivation from $\instance$ to $\instance'$ such that $\instance' \models \bigcup_{\atom \in \instance} \instance'_\atom$.
  \end{itemize}
\end{propositionrep}

\begin{proof}
  
  $(\Leftarrow)$ Consider an $\ruleset$-derivation $D$ from some $\instance^*_\atom$ to $\instance'_\atom$. There is an $\mathcal R$-derivation $D_1$ from $\{\atom\}$ to some $\instance^1_\atom$ such that $\instance^*_\atom \subseteq \instance^1_\atom$, {\it i.e.,} $\instance^1_\atom = \instance^\ast_\atom \cup X$. Let us now build a derivation $D_2$ from $\instance^1_\atom$, applying rules using the same homomorphisms (unless already applied) in the same order as in $D$. The result of this derivation is $\instance'_\atom \cup X$. The derivation obtained from the concatenation of $D_1$ and $D_2$ is a derivation from $\{\atom\}$ (and by consequence $\instance$) to $\instance' = \instance'_\atom \cup X$ (and by consequence $\instance' = \instance'_\atom \cup X \cup F$), and thus $\instance' \models \instance'_\atom$.

We conclude by pointing out that the concatenation of all the derivations as build previously from $\instance$, for all atoms $\atom \in \instance$ result in some $\instance'$ containing  $\bigcup_{\atom \in \instance} \instance'_\atom$, and thus $\instance' \models \bigcup_{\atom \in \instance} \instance'_\atom$. 
  $(\Rightarrow)$ We show the converse entailment by induction on the length of a derivation associated with an initial portion of $\chase{\instance}{\ruleset}$: 
  given any derivation $D^i$ of length $i\geq 0$ from $\instance$, we build derivations $D^i_\atom$ (of length at most $i$) from $\instance^*_\atom$ for each $\atom \in \instance$, 
  such that $\bigcup_{\atom \in \instance} \fun{atoms}(D_\atom^i) \models \fun{atoms}(D^i)$. 

  More precisely, we prove the following: For all $i \geq 0$, for all derivation $D^i = \instance_0 (= \instance), \ldots, \instance_i$, 
  there are $h_i,\psi_i$, and derivations $D_\atom^i$ for each $\atom \in \instance$ such that:
  \begin{enumerate}
   \item $D_\atom^i$ is a derivation from $\instance^*_\atom$ 
   (of length less or equal to $i$) 
   \item $h_i$ is a homomorphism from $\instance_i$ to $\bigcup_{\atom \in \instance} \fun{atoms}(D_\atom^i)$;
   \item $\psi_i$ is a function that assigns to each atom $\beta \in \instance_i$ an atom $\atom \in \instance$ such that for any atom $\beta' \in \instance_i$ with $\terms{\beta'} \subseteq \terms{\beta}, h_i(\beta') \in \fun{atoms}(D^i_\atom)$.
   \end{enumerate}
  Note that the homomorphism $h_i$ (Condition 2) shows that $\bigcup_{\atom \in F} \fun{atoms}(D_\atom^i) \models \fun{atoms}(D^i)$.
  
  \medskip
  For $i = 0$, we define each $D_\atom^0$ as the empty derivation from $\instance^*_{\atom}$, $h_0$ as the identity on terms of $\instance$ and $\psi_0$ as the identity on atoms of $\instance$. These choices fulfill our induction assumption, because:
  \begin{enumerate}
   \item Empty derivations are derivations;
   \item As $D_\atom^0$ are empty derivations, it holds that $\fun{atoms}(D_\atom^0) = \instance^*_{\atom}$, hence $h_0$ is indeed a homomorphism from $\instance$ to $\bigcup_{\atom \in \instance} \fun{atoms}(D_\atom^i)$;
   \item Let $\beta' \in \instance$ such that $\terms{\beta'} \subseteq \terms{\beta}$. Then $\beta' \in \instance_{\beta}^* = \instance_{\psi_0(\beta)}^*$.
     \end{enumerate}
  
  For the induction step, let us assume that $\{D_\atom^i\}_{\atom \in \instance}$, $h_i$, $\psi_i$ are defined and satisfy the three conditions, and let $\match_{i+1}$ and $\erule_{i+1}$ be such that $\instance_{i+1}$ is obtained from $\instance_i$ by applying rule $\erule_{i+1}$ using $\match_{i+1}$. Let $g_{i+1}$ be the guard of $\erule_{i+1}$. As $\match_{i+1}(g_{i+1})$ belongs to $\instance_i$,  $\psi_i(\match_{i+1}(g_{i+1}))$ is defined. Let us denote this atom by $\atom_{r_i}$. Let $\atom_{i+1}$ be the atom created by the application of $\erule_{i+1}$ under $\match_{i+1}$. We assume w.l.o.g that $\atom_{i+1}$ does not belong to $\instance_i$.
  
  We define $\{D_\atom^{i+1}\}_{\atom \in \instance}$, $h_{i+1}$, and $\psi_{i+1}$ as follows:
  \begin{itemize}
   \item $\psi_{i+1}$ is the extension of $\psi_{i}$ defined by $\psi_{i+1}(\atom_{i+1}) = \atom_{r_i}$;
   \item $D_\atom^{i+1} = D_\atom^i$ if $\atom \not = \atom_{r_i}$. 
  For   $\atom_{r_i}$, we consider  the application of $\erule_{i+1}$ to $\fun{atoms}(D_{\atom_{r_i}}^{i})$ under $h_i\circ\match_{i+1}$.
  Note that  for each atom $b \in \fun{body}(\erule_{i+1})$, it holds that $h_i(\match_{i+1}(b)) \in \fun{atoms}(D_{\atom_{r_i}})$,
   as $\fun{terms}(b) \subseteq \fun{terms}(g_{i+1})$, hence $\fun{terms}(\match_{i+1}(b)) \subseteq \fun{terms}(\match_{i+1}(g_{i+1}))$, 
   and, since $\psi_i(\match_{i+1}(g_{i+1})) = \atom_{r_i}$, by Condition 3, we get $h_i(\match_{i+1}(b)) \in \fun{atoms}(D_{\atom_{r_i}})$. 
   Therefore, $h_i\circ\match_{i+1}$ defines a homomorphism from  $\body{\erule_{i+1}}$ to $\fun{atoms}(D_{\atom_{r_i}}^{i})$. 
    We denote by $\beta_{i+1}$ the atom created by this rule application. If $\beta_{i+1} \in \fun{atoms}(D_{\atom_{r_i}}^i)$
   we set $D_{\atom_{r_i}}^{i+1} = D_{\atom_{r_i}}^{i}$, otherwise $D_{\atom_{r_i}}^{i+1}$ is the 
   extension of $D_{\atom_{r_i}}^{i}$ by the application of $\erule_{i+1}$ under $h_i\circ\match_{i+1}$.  
   \item 
  For every null $e$ occurring in $\atom_{i+1}$, there is a null $e'$ occurring at the same position of $\beta_{i+1}$, and we extend $h_i$ to $h_{i+1}$ by setting $h_{i+1}(e) = e'$. 
  \end{itemize}
  
  We now check that the built objects still fulfill the induction statement:
  \begin{enumerate}
   \item By the induction assumption, for all $\atom \in \instance$, $D_\atom^i$ is a derivation from $\instance_\atom^*$ of length less or equal to $i$. Hence, for all 
    $ \atom \neq \atom_{r_i}$, $D_\atom^{i+1} = D_\atom^i$ is a derivation from $\instance_\atom^*$ (of length less of equal to $i$), and, for $\atom_{r_i}$, $D_{\atom_{r_i}}^{i+1}$ is the extension of $D_{\atom_{r_i}}^{i}$ by a rule application, hence of length less or equal to $i+1$; 
   \item $\atom_{i+1}$ is the only new atom in $\instance_{i+1}$. By construction of $h_{i+1}$ and $D_{\atom_{r_i}}^{i+1}$, $h_{i+1}(\atom_{i+1})$ belongs to $\fun{atoms}(D_{\atom_{r_i}}^{i+1})$, hence $h_{i+1}$ is a homomorphism from $\instance_{i+1}$ to $\bigcup_{\atom \in \instance} \fun{atoms}(D_\atom^{i+1})$;
  
   \item 
   We check that, for all atoms $\beta_1$ and $\beta_2$ in $\instance_{i+1}$ with $\terms{\beta_2} \subseteq \terms{\beta_1}$, $h_{i+1}(\beta_2) \in \fun{atoms}(D^{i+1}_{\psi_{i+1}(\beta_1)})$. 
   This holds for $\beta_1 = \beta_2 = \atom_{i+1}$, since by construction, $h_{i+1}(\atom_{i+1}) \in \fun{atoms}(D_{\atom_{r_i}}^{i+1})$ with $\atom_{r_i} = \psi_{i+1}(\atom_{i+1})$. 
    Let us consider the atoms $\beta' \in \instance_{i+1}$ with $\beta' \neq \atom_{i+1}$ such that  (1) $\terms{\beta'} \subseteq \terms{\atom_{i+1}}$
   or  (2) $\terms{\atom_{i+1}} \subseteq  \terms{\beta'}$. 
  For case (1), since  $\beta'$ was already in $\instance_{i}$,  $\terms{\beta'} \subseteq \terms{\match_{i+1}(g_{i+1})}$, 
   and by induction assumption $h_{i+1}(\beta') \in \fun{atoms}(D_{\psi_{i+1}(\match_{i+1}(g_{i+1}))}) = \fun{atoms}(D_{\psi_{i+1}(\atom_{i+1})})$. 
   For case (2), 
   either $\terms{\atom_{i+1}} \subseteq \terms{\instance}$, 
   then $\atom_{i+1} \in \instance^*_{\psi_{i+1}(\beta')} \subseteq \fun{atoms}(D^{i+1}_{\psi_{i+1}(\beta')})$, 
   or $\atom_{i+1}$ contains at least one null. Since $\terms{\atom_{i+1}} \subseteq  \terms{\beta'}$, this null was already present
   in $\match_{i+1}(g_{i+1})$. By the induction assumption, $\psi_i(\match_{i+1}(g_{i+1})) = \psi_i(\beta')$, hence $\psi_{i+1}(\beta') = \atom_{r_i}$.
  By construction of $h_{i+1}$ and $D_{\atom_{r_i}}^{i+1}$, $h_{i+1}(\atom_{i+1}) \in \fun{atoms}(D_{\atom_{r_i}}^{i+1})$. \qedhere

  \end{enumerate}
  \end{proof}

However, a similar problem of incompleteness would occur if the (extended) instance resulting from a rule application was not closed as well. That is why the notion of locally complete derivation considers at each step the chase of the resulting instance restricted to the terms of this instance.

We are now ready to define the reduction itself.
Given a KB $(\instance,\ruleset)$ on a vocabulary $\mathcal V$ (the `original vocabulary'), where $\ruleset$ is a set of guarded rules, we build $(\instance',\ruleset')$ where $\ruleset'$ is a set of linear rules, such that for any Boolean CRPQ $\query$ on $\mathcal V$, it holds that $\instance,\ruleset \models \query$ if and only if $\instance',\ruleset' \models \query$. The new instance $\instance'$ is the encoding of $\chase{\instance}{\ruleset}_{\mid \terms{\instance}}$ using complex predicates, while $\ruleset'$ contains two kinds of rules: \emph{reconstruction rules}, which allow to generate back atoms on $\mathcal V$ from atoms on complex predicates, and \emph{complex rules}, which simulate one step of a locally complete derivation. Let us start with the definition of $\instance'$ and its illustration.

\begin{definition}[Guarded Translation of $\instance$ ($\instance'$)]
\label{def-instance-construction}
The \emph{guarded translation} of $\instance$ w.r.t. $\ruleset$ is 
$\instance' = \{ \fun{can}(I^*_\atom, \atom) ~| ~\atom \in \instance \}$,
where $(I^*_\atom, \atom)$ denotes the guarded set $(\chase{I}{\ruleset}_{\mid \terms{\atom}}, \atom)$.
 \end{definition}

 \begin{example}
Consider $\instance$ and $\ruleset$ from Example \ref{ex-naive-translation}, and let $\atom_1 = r(b,c)$ and $\atom_2 = r(a,b)$. In the following, we underline the guard in a guarded set. 
  $(\instance_{\atom_1}^*,\atom_1) =  \{\underline{r(b,c)},q(b),q(c)\}$, hence $\fun{can}(\instance_{\atom_1}^*,\atom_1)
 = p_{\{\underline{r(\normVar_1,\normVar_2)},q(\normVar_1),q(\normVar_2)\}}(b,c)$. Similarly, $(\instance_{\atom_2}^*,\atom_2) = \{\underline{r(a,b)},p(a),q(a),q(b)\}$, hence $\fun{can}(\instance_{\atom_2}^*,\atom_2) = p_{\{\underline{r(X_1,X_2)},p(X_1),q(X_1),q(X_2)\}}(a,b)$.
 
 \end{example}

We now present the set of linear rules $\ruleset' = \ruleset_r \cup \ruleset_c$, which simulates the locally complete chase, starting with reconstruction rules $\ruleset_r$. 

\begin{definition}[Reconstruction rules ($\ruleset_r$)]
\label{def-reconstruction-rule}
 Let $(G,\atom) \in \mathcal{G}$. The set of \emph{reconstruction rules} associated with $(G,\atom)$ contains for each $\beta \in G$ the rule of the form $\fun{can}(G,\atom) \rightarrow \beta.$
 The set of all reconstruction rules (associated with $\mathcal{G}$) is denoted by $\ruleset_r$.
\end{definition}

Note that reconstruction rules do not contain any existentially quantified variable.

\begin{example}
 Consider the guarded set $\{\underline{r(\normVar_1,\normVar_2)},p(\normVar_2)\}$. There are two associated reconstruction rules, namely $p_{\{\underline{r(\normVar_1,\normVar_2)},p(\normVar_2)\}}(\normVar_1,\normVar_2) \rightarrow r(\normVar_1,\normVar_2)$ and $p_{\{\underline{r(\normVar_1,\normVar_2)},p(\normVar_2)\}}(\normVar_1,\normVar_2) \rightarrow p(\normVar_2)$.
\end{example}

Their role is to reconstruct the atoms on the original vocabulary that are encoded by complex predicates.

\begin{definition}[Expansion]
\label{def-expansion}
\sloppypar{
The \emph{expansion} of a complex atom $\atom$, denoted by $\expansion(\atom)$, is defined as $\chase{\atom}{\ruleset_r} \setminus \{\atom\}$. The expansion of a set of complex atoms is the union of the expansions of its atoms.}
\end{definition}

The next lemmas formalize the role of reconstruction rules.

\begin{lemma}
\label{lemma-reconstruction-rules}
Let $(G,\atom)$ be a guarded set of atoms. It holds that
$$\expansion(\fun{can}(G,\atom)) = G.$$
\end{lemma}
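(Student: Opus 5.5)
The plan is to compute $\chase{\fun{can}(G,\atom)}{\ruleset_r}$ directly, exploiting the fact that reconstruction rules are linear, have no existential variables, and have a complex predicate in the body but only original predicates in the head. Write $\fun{can}(G,\atom) = p_{\canonRenaming{\atom}(G)}(\terms{\atom})$. More precisely, the arguments are the terms of $\atom$ listed in the order in which they occur in $\atom$, i.e.\ $\canonRenaming{\atom}^{-1}$ applied position-wise to the canonical guard $\canonRenaming{\atom}(\atom)$. The argument has three steps.

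\textbf{Step 1: only one round of rule applications is possible.} The reconstruction rules have bodies with complex predicates and heads with predicates from $\mathcal P$. Any atom they produce is therefore on the original vocabulary and can never trigger another reconstruction rule. Hence every trigger of $\ruleset_r$ on $\{\fun{can}(G,\atom)\}$ is applied directly to $\fun{can}(G,\atom)$ itself, and no fresh nulls are created. It follows that $\expansion(\fun{can}(G,\atom))$ is exactly the set of atoms $\match(\beta')$ such that $\fun{can}(G',\atom') \rightarrow \beta'$ is a reconstruction rule and $\match$ maps $\fun{can}(G',\atom')$ onto $\fun{can}(G,\atom)$.

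\textbf{Step 2: identify which rules apply and with which homomorphism.} This is the main step. A rule applies only if its body predicate equals $p_{\canonRenaming{\atom}(G)}$. Because complex predicates are indexed by canonical guarded sets, the applicable rules are precisely those associated with the canonical guarded set $(\canonRenaming{\atom}(G),\canonRenaming{\atom}(\atom)) \in \mathcal{G}$. Their bodies have the form $p_{\canonRenaming{\atom}(G)}(\canonRenaming{\atom}(\atom))$, with canonical variables as arguments.

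The potential obstacle is to confirm that the rule body's arguments align with the terms of $\fun{can}(G,\atom)$. One must show that the only homomorphism from the body to $\fun{can}(G,\atom)$ is $\canonRenaming{\atom}^{-1}$, which maps $\normVar_j$ to the term of $\atom$ it renames. This holds because $\canonRenaming{\atom}$ is injective on $\terms{\atom}$, so the equality pattern of positions in the body is exactly that of $\atom$. I would take care to fix the convention that the arguments of $\fun{can}(G,\atom)$ follow the positions of $\atom$, with repetitions, so that the predicate arity matches that of $\atom$.

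\textbf{Step 3: conclude.} Each rule associated with the guarded set contributes $\canonRenaming{\atom}^{-1}(\canonRenaming{\atom}(\beta)) = \beta$ for one $\beta \in G$. Since there is one such rule for every $\beta \in G$, and $\terms{\beta} \subseteq \terms{\atom}$ because $G$ is guarded by $\atom$, this gives $\chase{\fun{can}(G,\atom)}{\ruleset_r} = \{\fun{can}(G,\atom)\} \cup G$. The complex atom is not in $G$, since $G$ contains only original predicates. Removing it therefore yields $\expansion(\fun{can}(G,\atom)) = G$.
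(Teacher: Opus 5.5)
Your proof is correct and follows essentially the same route as the paper. The paper also proves both inclusions by observing that the reconstruction rules applicable to $\fun{can}(G,\atom)$ are exactly those associated with $(\canonRenaming{\atom}(G),\canonRenaming{\atom}(\atom))$, and that each of them yields precisely one $\beta \in G$. Your Steps~1 and~2 make explicit two points its terse proof leaves implicit: no further rule applications occur, since reconstruction heads use only original predicates, and the homomorphism $\canonRenaming{\atom}^{-1}$ is unique because the canonical renaming is injective.
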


\begin{proof}
We prove both inclusions. Let us first prove that $G \subseteq \expansion(\fun{can}(G,\atom))$. Let $\beta \in G$. It holds that $\Phi_\atom(\beta) \in \Phi_\atom(G)$. Moreover $\fun{can}(\Phi_\atom(G),\Phi_\atom(\atom)) \rightarrow \Phi_\atom(\beta)$ belongs to $\ruleset_r$ as $(\Phi_\atom(G),\Phi_\atom(\atom)) \in \mathcal{G}$. 
This rule is applicable to $\fun{can}(G,\atom)$ and generates $\beta$. 

We now prove that $\expansion(\fun{can}(G,\atom)) \subseteq G$. 
Any reconstruction rule applicable to $\fun{can}(G,\atom)$ is of the form $\fun{can}(\Phi_\atom(G),\Phi_\atom(\atom)) \rightarrow \Phi_\atom(\gamma)$ for some $\gamma \in G$. Applying such a rule to $\fun{can}(G,\atom)$ generates $\beta$ only if   $\beta \in G$.
\end{proof}

Using Lemma \ref{lemma-reconstruction-rules}, we show that the expansion of $\instance'$ is the first instance of a locally complete $\ruleset$-derivation of $\instance$. 

\begin{lemma}
\label{lemma-expansion-instance}
It holds that:
 $$\expansion(\instance') = \chase{\instance}{\ruleset}_{\mid \terms{\instance}}.$$
\end{lemma}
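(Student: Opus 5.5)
The plan is to prove the equality through the chain
\[
\expansion(\instance') = \bigcup_{\atom \in \instance} \expansion(\fun{can}(\instance^*_\atom,\atom)) = \bigcup_{\atom \in \instance} \instance^*_\atom = \chase{\instance}{\ruleset}_{\mid \terms{\instance}}.
\]
The first equality is the definition of the expansion of a set of complex atoms (Definition~\ref{def-expansion}), since $\instance' = \{\fun{can}(\instance^*_\atom,\atom) \mid \atom \in \instance\}$. The second follows from Lemma~\ref{lemma-reconstruction-rules} applied to each guarded set $(\instance^*_\atom,\atom)$. One point needs checking here: $(\instance^*_\atom,\atom)$ really is a guarded set with guard $\atom$. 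This holds because $\atom \in \chase{\instance}{\ruleset}$ and every atom of $\chase{\instance}{\ruleset}_{\mid\terms{\atom}}$ has its terms inside $\terms{\atom}$. Note also that the expansion of the set $\instance'$ is the union of the per-atom expansions: reconstruction rules are linear and only read complex atoms, so no interaction between distinct atoms of $\instance'$ arises.

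The remaining work is the last equality, $\bigcup_{\atom\in\instance} \chase{\instance}{\ruleset}_{\mid\terms{\atom}} = \chase{\instance}{\ruleset}_{\mid\terms{\instance}}$. The inclusion $\subseteq$ is immediate, because $\terms{\atom} \subseteq \terms{\instance}$ for every $\atom \in \instance$. For $\supseteq$, take $\beta \in \chase{\instance}{\ruleset}$ with $\terms{\beta} \subseteq \terms{\instance}$. We need an $\atom \in \instance$ with $\terms{\beta} \subseteq \terms{\atom}$. The natural tool is the guardedness of the chase: by induction along a derivation, every derived atom has its terms contained in the terms of some atom, namely the image of the guard of the rule that produced it, with frontier terms only. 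Iterating this backwards from $\beta$ eventually reaches an atom of $\instance$. The induction uses the fact that the image of the guard contains all terms of the rule body, hence all frontier images.

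The obstacle I expect is the backward iteration in the $\supseteq$ direction. The guard image of the rule producing $\beta$ may itself contain nulls, even though $\beta$ contains only constants. Tracing back, however, each step passes from an atom to the guard image of the trigger that created it, and that atom's constant terms are all contained in the guard image. So the property ``$\terms{\beta}\cap\terms{\instance}$ is included in the terms of the current atom'' is preserved at each step, and we eventually reach an atom of $\instance$ containing all of $\terms{\beta}$.

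To make this rigorous, I would state an auxiliary claim: for every atom $\gamma$ of the chase, there exists $\atom\in\instance$ with $\terms{\gamma}\cap\terms{\instance}\subseteq\terms{\atom}$. I would prove it by induction on the derivation step at which $\gamma$ is generated, in the spirit of the function $\psi_i$ in the proof of Proposition~\ref{prop-splitting-guarded-chase}. Applying the claim to $\beta$, whose terms all lie in $\terms{\instance}$, gives $\terms{\beta}\subseteq\terms{\atom}$ and concludes.
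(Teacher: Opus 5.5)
Your proof is correct. Its skeleton is exactly the paper's: the chain through $\bigcup_{\atom\in\instance}\expansion(\fun{can}(\instance^*_\atom,\atom))$, using Definition~\ref{def-expansion} and then Lemma~\ref{lemma-reconstruction-rules}.

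The difference lies in the last equality, $\bigcup_{\atom\in\instance}\instance^*_\atom=\chase{\instance}{\ruleset}_{\mid\terms{\instance}}$.
\begin{itemize}
\item The paper simply cites Proposition~\ref{prop-splitting-guarded-chase}.
\item You prove it directly, via the invariant that the constants of any chase atom all lie in a single atom of $\instance$. Your induction on the generation step uses only two facts: the guard image contains every frontier image, and rules contain no constants.
\end{itemize}

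Your route is more elementary and self-contained, and it fits the statement more tightly. Proposition~\ref{prop-splitting-guarded-chase} is stated as an entailment between $\bigcup_{\atom}\instance'_\atom$ and derivations from $\instance$. Extracting the literal set equality from it still needs a small extra step: a ground atom entailed by $\bigcup_{\atom}\instance'_\atom$ must belong to some $\instance'_\atom$, so its terms lie in $\terms{\atom}$, so it lies in $\instance^*_\atom$. Moreover, the maps $h_i,\psi_i$ in that proposition's proof already contain your invariant as the special case of ground atoms. The paper's citation, for its part, costs nothing extra in the overall presentation, because it reuses a result already established there.
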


\begin{proof}
\sloppypar{
By definition, $\instance' = \cup_{\atom \in \instance} \fun{can}(\instance^*_\atom,\atom)$.
Hence, $\expansion(\instance') = \cup_{\atom \in \instance} \expansion(\fun{can}(\instance^*_\atom,\atom))$. 
By applying Lemma \ref{lemma-reconstruction-rules}, it holds that $\expansion(\fun{can}(\instance^*_\atom,\atom)) = \instance^*_\atom$, hence 
$\expansion(\instance') = \cup_{\atom \in \instance} \instance^*_\atom$. 
As $\cup_{\atom \in \instance} \instance^*_\atom = \chase{\instance}{\ruleset}_{\mid \terms{\instance}}$ by Proposition \ref{prop-splitting-guarded-chase}, this concludes the proof.}
\end{proof}

We now focus on complex rules. We recall that their role is to simulate one step of a locally complete derivation.

\begin{definition}[Complex rules ($\ruleset_c$)]
\label{def-complex-rule}
Let $(G,\atom) \in \mathcal{G}$ and $\erule \in \ruleset$ be applicable to $G$ by $\match$.  
Let $\atom' =  \match^{\mathrm{safe}}(\head{\erule})$ and  $\canonRenaming{\atom',\terms{\atom}}(\alpha')$ be the canonical renaming
of  $\atom'$ w.r.t.\ $\terms{\atom}$. The \emph{complex rule} associated with $G, \erule$ and $\match$ is:
$$\fun{can}(G,\atom) \rightarrow \fun{can}(G',\canonRenaming{\atom',\terms{\atom}}(\atom'))$$
where $G' = \canonRenaming{\atom',\terms{\atom}}(\chase{G \cup \{\atom'\}}{\ruleset}_{\mid \terms{\atom'}})$.
The set of  complex rules (associated with $\mathcal{G}$) is denoted by $\ruleset_c$.
\end{definition}

\begin{example}
 We continue Example \ref{ex-naive-translation}.  Consider the guarded set  $G = \{\underline{r(\normVar_1,\normVar_2)},p(\normVar_1),q(\normVar_1),q(\normVar_2)\}$. 
 The rule $p(x) \wedge q(x) \rightarrow \exists y ~s(x,y)$ is applicable to $G$ through $\match = \{x \mapsto  \normVar_1\}$. 
 Hence, we build the complex rule $p_{\underline{r(\normVar_1,\normVar_2)},p(\normVar_1),q(\normVar_1),q(\normVar_2)}(\normVar_1,\normVar_2) \rightarrow p_{G'}(\normVar_1,\normVar_3)$, where $G' = \{\underline{s(\normVar_1,\normVar_3)},p(\normVar_1),q(\normVar_1),h(\normVar_3)\}$. Note that $G'$ contains $h(\normVar_3)$, which is not justified by the application of $p(x) \wedge q(x) \rightarrow \exists y ~s(x,y)$ itself, but by the future application of $q(x)\wedge s(x,y) \rightarrow h(y)$.
 \end{example}

We now focus on showing the correspondence between a (classical) $\ruleset_c$-derivation and a locally complete $\ruleset$-derivation. Lemma \ref{lemma-complex-alternative} specifies the correspondence at the level of a single rule application.

\begin{lemma}
\label{lemma-complex-alternative} 
 Let $(G,\atom = r(\mathbf{t}))$ be a guarded set such that $\chase{G}{\ruleset}_{\mid \terms{G}} = G$. The following statements hold:
 \begin{itemize}
  \item Let $\erule \in \ruleset$ be applicable to $G$ by $\match$, and let $\hat{G}$ be obtained by the corresponding locally complete derivation step.
  There exists a complex rule $\erule'$ applicable to $p_{\Phi_\atom(G)}(\mathbf{t})$ by $\match'$ such that there exists a homomorphism $\varphi$ from $\hat{G}$ to $\expansion(p_{\Phi_\atom(G)}(\mathbf{t}) \cup \match'^{\mathrm{safe}}(\head{\erule'}))$ with $\varphi$ being the identity on $\terms{G}$.
   
  \item Let $\erule' \in \ruleset_c$ be applicable to $G' = p_{\Phi_\atom(G)}(\mathbf{t})$ by $\match'$, and let $F' = G' \cup \match'^{\mathrm{safe}}(\head{\erule'})$. There exists $\erule \in \ruleset$ applicable to $G$, generating $\hat{G}$ by one step of locally complete derivation, such that there exists a homomorphism $\varphi$ from $\expansion(F')$ to $\hat{G}$ with $\varphi$ being the identity on $\terms{G}$.
 \end{itemize}

\end{lemma}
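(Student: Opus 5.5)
Both items follow by unfolding Definition~\ref{def-complex-rule} and using Lemma~\ref{lemma-reconstruction-rules} to read off expansions. The key observation is that the canonical renaming $\Phi_\atom$ is a bijection from $\terms{G}$ to an initial segment of $\mathcal X$. Hence $p_{\Phi_\atom(G)}(\mathbf t)$ is exactly $\fun{can}(G,\atom)$, and every statement about $G$ transfers to its canonical copy $\Phi_\atom(G) \in \mathcal G$ and back via $\Phi_\atom^{-1}$.

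\emph{First item.} Let $\erule$ be applicable to $G$ by $\match$. Then $\erule$ is applicable to $\Phi_\atom(G)$ by $\Phi_\atom\circ\match$, so Definition~\ref{def-complex-rule} yields a complex rule
\[\erule' = \fun{can}(\Phi_\atom(G),\Phi_\atom(\atom)) \rightarrow \fun{can}(G'',\canonRenaming{\beta,\terms{\Phi_\atom(\atom)}}(\beta)),\]
where $\beta = (\Phi_\atom\circ\match)^{\mathrm{safe}}(\head{\erule})$ and $G''$ is the canonical renaming of $\chase{\Phi_\atom(G)\cup\{\beta\}}{\ruleset}_{\mid\terms{\beta}}$. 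This rule is applicable to $p_{\Phi_\atom(G)}(\mathbf t)$ by $\match' = \Phi_\atom^{-1}$ restricted to the body variables. Its application produces a complex atom whose expansion, by Lemma~\ref{lemma-reconstruction-rules}, is an isomorphic copy $H$ of $\chase{G\cup\{\atom_1\}}{\ruleset}_{\mid \terms{\atom_1}}$, where $\atom_1 = \match^{\mathrm{safe}}(\head{\erule})$. In this copy, frontier terms are fixed and fresh existential nulls are renamed. The expansion of the body atom is $G$ itself, again by Lemma~\ref{lemma-reconstruction-rules}.

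Now $\hat G = \chase{G\cup\{\atom_1\}}{\ruleset}_{\mid\terms{G}\cup\terms{\atom_1}}$. I define $\varphi$ as the identity on $\terms{G}$, mapping the nulls of $\atom_1$ to their copies in $H$. It remains to show that every atom of $\hat G$ lies in $G\cup\varphi(\ldots)$. An atom of $\hat G$ on $\terms{G}$ lies in $\chase{G}{\ruleset}_{\mid\terms G}$, up to the observation below. An atom containing a new null has all its terms in $\terms{\atom_1}$, so it lies in $\chase{G\cup\{\atom_1\}}{\ruleset}_{\mid\terms{\atom_1}}$, whose image is $H$.

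\emph{Main obstacle.} The step I expect to be hardest is justifying that atoms on $\terms{G}$ newly derivable after adding $\atom_1$ already belong to $G$. For this I would invoke guardedness together with the splitting argument of Proposition~\ref{prop-splitting-guarded-chase}. In $\chase{G\cup\{\atom_1\}}{\ruleset}$, the atoms generated from $\atom_1$ involve only terms of $\atom_1$ plus fresh nulls. Consequently, any atom on $\terms{G}$ produced with the help of $\atom_1$ has terms contained in $\terms{\atom_1}\cap\terms{G}$, so it is also in $H$ and fixed by $\varphi$. The remaining atoms are in $\chase{G}{\ruleset}_{\mid\terms G}$, which equals $G$ by hypothesis. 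I would phrase this as ``$\chase{G\cup\{\atom_1\}}{\ruleset}$ is covered by $\chase{G}{\ruleset}\cup\chase{\chase{G\cup\{\atom_1\}}{\ruleset}_{\mid\terms{\atom_1}}}{\ruleset}$ up to homomorphism,'' obtained by applying Proposition~\ref{prop-splitting-guarded-chase} to the instance $G\cup\{\atom_1\}$.

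\emph{Second item.} Any complex rule applicable to $p_{\Phi_\atom(G)}(\mathbf t)$ must have body predicate $p_{\Phi_\atom(G)}$. Such a rule was built from $\Phi_\atom(G)$, some $\erule\in\ruleset$ and some $\match_0$, so $\erule$ is applicable to $G$ by $\Phi_\atom^{-1}\circ\match_0$. Let $\hat G$ be the resulting locally complete step. The expansion of $F'$ is $G$ together with a renamed copy of $\chase{G\cup\{\atom_1\}}{\ruleset}_{\mid\terms{\atom_1}}$, by Lemma~\ref{lemma-reconstruction-rules}. This copy maps into $\hat G$ by sending the fresh nulls back to those of $\atom_1$ and fixing $\terms{G}$. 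This direction needs no extra argument, since $\hat G$ contains that restriction by definition.
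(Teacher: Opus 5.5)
Your construction of $\erule'$, of $\match' = \canonRenaming{\atom}^{-1}$ and of $\varphi$, the case split on whether an atom of $\hat G$ contains a fresh null of $\atom_1$, and your treatment of the second item all coincide with the paper's proof.

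The gap is in the step you single out as the main obstacle, namely showing that the atoms of $\hat G$ on $\terms{G}$ already lie in $G$. You infer from ``atoms generated from $\atom_1$ involve only terms of $\atom_1$ plus fresh nulls'' that ``any atom on $\terms{G}$ produced with the help of $\atom_1$ has its terms in $\terms{\atom_1}\cap\terms{G}$''. This inference is not valid for guarded rules, because a trigger can map its guard into $G$ and a side atom into the part derived from $\atom_1$. For example, take the rules $s(x,y)\rightarrow t(x)$ and $t(x)\wedge r(x,z)\rightarrow u(z)$. Adding $s(a,n)$ to $\{r(a,b)\}$ yields $u(b)$, although $b\notin\{a,n\}$. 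In the lemma's setting such atoms are harmless only because $t(a)$, and hence $u(b)$, is already derivable from $G$ itself, and your argument never uses that fact.

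Nor does Proposition~\ref{prop-splitting-guarded-chase}, applied to $G\cup\{\atom_1\}$, repair this. The pieces it starts from are $\chase{G\cup\{\atom_1\}}{\ruleset}_{\mid\terms{\gamma}}$ for $\gamma\in G$. These are exactly the sets whose inclusion in $G$ is in question, so the appeal is circular.

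What is missing is the observation that $\atom_1$ is produced by the trigger $(\erule,\match)$ on $G$ itself. Hence $\chase{G}{\ruleset}$ contains a copy of $\atom_1$ that differs from it only on the fresh nulls. Consider the map that is the identity on $\terms{G}$ and sends the nulls of $\atom_1$ to those of this copy. It is a homomorphism from $G\cup\{\atom_1\}$ into $\chase{G}{\ruleset}$, which is a model of $\ruleset$. It therefore extends to a homomorphism from $\chase{G\cup\{\atom_1\}}{\ruleset}$ into $\chase{G}{\ruleset}$ that still fixes $\terms{G}$. This gives $\chase{G\cup\{\atom_1\}}{\ruleset}_{\mid\terms{G}}\subseteq\chase{G}{\ruleset}_{\mid\terms{G}} = G$. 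This is what the paper's brief ``by assumption, $\beta\in G$'' tacitly relies on.

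Your covering claim is in fact true, but only because $\chase{G}{\ruleset}$ alone already covers $\chase{G\cup\{\atom_1\}}{\ruleset}$, not by splitting. With this fix, guardedness is needed only in the other case: to see that an atom of $\hat G$ containing a fresh null of $\atom_1$ has all its terms in $\terms{\atom_1}$.
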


\begin{proof} 
 \begin{enumerate}
 \item   Let $\erule$ be applicable to $G$ by $\match$, generating $\hat{\alpha}$. Then $\erule$ is applicable to $\canonRenaming{\atom}(G)$ by $\match^*=\canonRenaming{\atom}\circ\match$. By the definition of complex rules, there is a rule $\erule' = \fun{can}(\canonRenaming{\atom}(G),\canonRenaming{\atom}(\atom)) \rightarrow \fun{can}(G^{*},\canonRenaming{\atom^{*},\terms{\canonRenaming{\atom}(\atom)}}(\atom^{*}))$, where we define $\atom^{*} = {\match^*}^{\mathrm{safe}}(\head{\erule})$ and $G^{*} = \canonRenaming{\atom^{*},\terms{\canonRenaming{\atom}(\atom)}}(\chase{\canonRenaming{\atom}(G) \cup\{\atom^{*}\}}{\ruleset}_{\mid \terms{\atom^{*}}})$. The rule $\rho'$ is applicable  to $\fun{can}(G,\atom) = p_{\canonRenaming{\atom}(G)}(\mathbf{t})$ by $\match' =\canonRenaming{\atom}^{-1}$ and yields the atom $\atom' = {\match'}^{\mathrm{safe}}(\head{\erule'})$.
Let $\hat{G}$ be the result of performing a single locally complete derivation step to $G$ using the rule $\erule$ and $\match$, i.e., $\hat{G} = \chase{G \cup \hat{\atom}}{\ruleset}_{\mid \terms{G \cup \{\hat{\atom}\}}}$.  
Define $\varphi$ as the injective mapping that is the identity on the terms of $G$ and that associates with each fresh null in $\hat{\alpha}$ 
the corresponding fresh null from $\alpha'$ that was introduced by applying $\erule'$ to $p_{\canonRenaming{\atom}(G)}(\mathbf{t})$. 

We claim that $\varphi$ is a homomorphism from $\hat{G}$ to $\expansion(\fun{can}(G,\atom) \cup \atom')$. Indeed, if $\beta \in \hat{G}$ and $\terms{\beta} \subseteq \terms{G}$, then by assumption, $\beta \in G$, and thus belongs to $\expansion(\fun{can}(G,\atom))$ by~Lemma \ref{lemma-reconstruction-rules}. Otherwise, we have $\terms{\beta} \subseteq \terms{\hat{\alpha}}$ and $\beta \in \chase{G\cup\{\hat{\atom}\}}{\ruleset}_{\mid \terms{\hat{\atom}}}$. We then remark that 
$\expansion(\atom') = \chase{G\cup\{\varphi^{-1}(\hat{\atom})\}}{\ruleset}_{\mid \terms{\varphi^{-1}(\hat{\atom})}}$, from which we obtain $\varphi(\beta) \in \expansion(\atom')$. 
  \item 
Suppose $\erule' \in \ruleset_c$ is applicable to $G' = p_{\Phi_\atom(G)}(\mathbf{t})= \fun{can}(G,\atom)$ by $\match'$, and let $F' = G' \cup \{\atom'\}$, where $\atom' =\match'^{\mathrm{safe}}(\head{\erule'})$.  As $\erule'$ is a complex rule, it has been created because some $(G_c,\atom_c)$ belongs to $\mathcal{G}$, with $\canonRenaming{\atom_c}(G_c) = \canonRenaming{\atom}(G)$, and there is some (original) rule $\erule$ that is applicable to $G_c$ 
through some $\match^*$, such that $\head{\erule'} = \fun{can}(G^*,\canonRenaming{\atom^*,\terms{\atom_c}}(\atom^*))$, where $\atom^*= {\match^*}^{\mathrm{safe}}(\head{\erule})$ and $G^*=\canonRenaming{\atom^*,\terms{\atom_c}}(\chase{G_c\cup\{\atom^*\}}{\ruleset}_{\mid \terms{\atom^*}})$.
Observe that $G$ and $G_c$ are isomorphic, and let $\Psi$ be an isomorphism from $G_c$ to $G$.
Then $\erule$ is applicable to $G$ via $\pi=\Psi\circ\match^*$, and the locally complete derivation step
yields $\hat G= \chase{G\cup\{\hat{\atom}\}}{\ruleset}_{\mid \terms{G \cup\{\hat{\atom}\}}})$, with 
$\hat{\atom}= \pi^{\mathrm{safe}}(\head{\erule})$. 

Now let $\varphi$ be the (injective) extension of the identity on $\terms{G}$ that maps each fresh null in $\atom'$ introduced during the application of $\erule'$ to $G'$ to the corresponding null in $\hat{\atom}$ introduced by the application of $\erule$ to $G$. 
An atom $\beta$ in $\expansion(F')$ belongs either to $\expansion(\fun{can}(G,\atom))$ or to $\expansion(\atom')$. By Lemma ~\ref{lemma-reconstruction-rules}, we have  $\expansion(\fun{can}(G,\atom)) = 
G$. 
It follows that if $\beta \in \expansion(\fun{can}(G,\atom))$, then $\varphi(\beta) = \beta \in G \subseteq \hat{G}$. 
Otherwise, $\beta \in \expansion(\atom')$, so $\beta \in  \chase{G\cup\{\varphi^{-1}(\hat{\atom})\}}{\ruleset}_{\mid \terms{\varphi^{-1}(\hat{\atom})}}$. 
From this we obtain $\varphi(\beta) \in  \chase{G\cup\{\hat{\atom}\}}{\ruleset}_{\mid \terms{\hat{\atom}}} \subseteq \hat{G}$.\qedhere
 \end{enumerate}
\end{proof}

Repeatedly using the first bullet point of Lemma \ref{lemma-complex-alternative}, Proposition \ref{prop-reduction-complete} proves that anything that is entailed by a locally complete $\ruleset$-derivation of $\instance$ is also entailed by the expansion of an $\ruleset_c$-derivation of $\instance'$---the guarded translation of  $\instance$.

\begin{proposition}
\label{prop-reduction-complete}
Let $\ruleset$ be a set of guarded rules and $\instance$ be an instance. Let $\hat{\instance_0} ,\ldots, \hat{\instance}_n$  
be a locally complete $\ruleset$-derivation from $\instance$. Let $\instance'$ and $\ruleset_c$ be defined as above. 
There is a classical $\ruleset_c$-derivation $\instance' = \instance'_0,\ldots,\instance'_n$ such that for any $i \in \{1,\ldots,n\}$, there exists a homomorphism $\varphi_i$ from $\instance_i$ to $\expansion(\instance'_i)$, such that for all guarded sets $(G,\atom)$ with $G \subseteq \instance_i$, there is an atom $\beta \in \instance'_i$ with $\varphi_i(G) \subseteq \expansion(\beta)$.
\end{proposition}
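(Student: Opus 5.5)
We argue by induction on $i$, maintaining the stated invariant: a homomorphism $\varphi_i$ from $\hat{\instance}_i$ to $\expansion(\instance'_i)$ such that every guarded subset of $\hat{\instance}_i$ is mapped by $\varphi_i$ into the expansion of a single atom of $\instance'_i$. (We read $\instance_i$ in the statement as $\hat{\instance}_i$.)

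\textbf{Base case.} By Lemma~\ref{lemma-expansion-instance}, $\expansion(\instance') = \chase{\instance}{\ruleset}_{\mid\terms{\instance}} = \hat{\instance}_0$, so we take $\varphi_0$ to be the identity. For a guarded set $(G,\atom)$ with $G \subseteq \hat{\instance}_0$, the guard $\atom$ has only constants as terms, so $G \subseteq \instance^*_{\atom'}$ for any atom $\atom' \in \instance$ whose terms contain $\terms{\atom}$. We need some atom of $\instance$ whose terms cover the guard of $G$. I expect this to require the argument of Proposition~\ref{prop-splitting-guarded-chase}: atoms of $\hat{\instance}_0$ are obtained by guarded derivations, and the function $\psi$ built there assigns to each such atom an atom $\atom' \in \instance$ such that all atoms on its terms lie in the derivation from $\instance^*_{\atom'}$. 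Then $\beta = \fun{can}(\instance^*_{\atom'},\atom')$ satisfies $G \subseteq \expansion(\beta)$ by Lemma~\ref{lemma-reconstruction-rules}.

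\textbf{Inductive step.} Suppose $\hat{\instance}_{i+1}$ is obtained by applying $\erule_i$ via $\match_i$. Since $\erule_i$ is guarded, $\match_i(\body{\erule_i})$ is contained in the guarded set $G_i = \hat{\instance}_{i|\terms{\match_i(g)}}$, where $g$ is the guard of $\erule_i$. By the invariant there is $\beta \in \instance'_i$ with $\varphi_i(G_i) \subseteq \expansion(\beta)$. Write $\beta = \fun{can}(H,\gamma)$, so that $\expansion(\beta) = H$ by Lemma~\ref{lemma-reconstruction-rules}. The set $H$ is closed, i.e.\ $\chase{H}{\ruleset}_{\mid\terms{H}} = H$, because complex rule heads and $\instance'$ are built from restricted chases; this is a side invariant to maintain.

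Hence $\erule_i$ is applicable to $H$ via $\varphi_i \circ \match_i$. The first item of Lemma~\ref{lemma-complex-alternative} gives a complex rule applicable to $\beta$, and we let $\instance'_{i+1}$ be the result of applying it. We extend $\varphi_i$ to $\varphi_{i+1}$ by sending the new nulls of $\hat{\instance}_{i+1}$ to the nulls of the new complex atom $\beta'$. The new atoms of $\hat{\instance}_{i+1}$ are either atoms on old terms or atoms whose terms lie in $\terms{\match_i^{\mathrm{safe}}(\head{\erule_i})}$. Atoms on old terms are entailed from $\hat{\instance}_i$; atoms involving new nulls are mapped into $\expansion(\beta')$ by the homomorphism of Lemma~\ref{lemma-complex-alternative}.

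\textbf{Main obstacle.} Two points are delicate. First, new atoms over old terms only, created by the local closure, must be mapped into the expansion of existing atoms of $\instance'_i$. I would argue that, because the local closure from $\hat{\instance}_i$ respects guardedness, such atoms are already present: $\hat{\instance}_i$ is closed on its terms, and rules fired through the new null have guards containing that null. Second, the guarded-set clause must hold for guarded subsets of $\hat{\instance}_{i+1}$ that contain a new null. Their guard has terms within $\terms{\match_i^{\mathrm{safe}}(\head{\erule_i})}$, so such a set lies in the restricted chase encoded by $\beta'$; if it involves only old terms, the old invariant applies. The careful step is checking that the canonical renaming yields an honest isomorphism, so that $\varphi_{i+1}$ is well defined and identical on old terms.
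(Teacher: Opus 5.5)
Your proposal is correct and follows the paper's proof. Both argue by induction on the length of the locally complete derivation, taking the identity at the base via Lemma~\ref{lemma-expansion-instance}. In the inductive step, both pick a complex atom whose (closed) expansion contains the image of a guarded set around $\match_i(\body{\erule_i})$, apply the first item of Lemma~\ref{lemma-complex-alternative}, extend $\varphi_i$ on the fresh nulls, and split guarded sets into those over old terms and those over the terms of the new atom.

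You add two details the paper leaves implicit. In the base case you choose an atom of $\instance$ covering the guard, whereas the paper tacitly treats the guard as if it belonged to $\instance$. You also observe that closure of $\hat{\instance}_i$ rules out new atoms over old terms.
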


\begin{proof}
We prove the result by induction on the length of the locally complete derivation. 
\begin{itemize}
 \item If the locally complete derivation is of length $0$, then $I_0 = \chase{\instance}{\ruleset}_{\mid\terms{\instance}} = \expansion(\instance')$ by Lemma \ref{lemma-expansion-instance}, and one can take the identity for $\varphi_0$. Indeed, if $(G,\atom)$ is a guarded set with $G \subseteq \instance_0$, we let $\beta= \fun{can}(\instance^*_\atom) \in \instance'$ and observe that $G \subseteq \chase{\instance}{\ruleset}_{\mid \terms{\atom}} = \expansion(\beta)$. 
 \item Otherwise, let us assume that there is a homomorphism  $\varphi_i$ from $\instance_i$ to $\expansion(\instance'_i)$ such that for all guarded sets $G \subseteq \instance_i$, there exists $\beta \in \instance'_i$ such that $\varphi_i(G) \subseteq \expansion(\beta)$. Let $\erule_i$ be the rule applied to $\instance_i$ through $\match_i$ to generate $\instance_{i+1}$, creating an atom $\atom_{i+1}$. As $\match_i(\body{\erule_i})$ is a guarded set in $\instance_i$, by the induction assumption, there is $\beta_i \in \instance'_i$ such that $\varphi_i(\match_i(\body{\erule_i})) \subseteq \expansion(\beta_i)$. Let us notice that $\chase{\expansion(\beta_i)}{\ruleset}_{\mid \terms{\expansion(\beta_i)}} = \expansion(\beta_i)$, as it holds for any initial $\beta_i$ and for any atom added by definition of complex rules. 
 Also observe that {$\beta_i = \fun{can}(\expansion(\beta_i))$}.
We can thus apply Lemma~\ref{lemma-complex-alternative} to infer that 
 there exists $\erule_i'$ applicable to $\beta_i$, whose application creates $\atom'_{i+1}$, such that 
 there is a homomorphism from $\chase{\expansion(\beta_i) \cup \{\atom_{i+1}\}}{\ruleset}_{\mid \terms{\alpha_{i+1}}} $ to $\expansion(\beta_i \cup \alpha_{i+1}')$ that is the identity on $\terms{\expansion(\beta_i)}$
 We define $\varphi_{i+1}$ as the mapping extending $\varphi_i$ by mapping a fresh null introduced in $\atom_{i+1}$ to the corresponding fresh null introduced in $\atom'_{i+1}$. 
 For any guarded set $G$ of $I_{i+1}$, one of the two following case holds:
 \begin{itemize}
 \item $\terms{G} \subseteq \terms{I_i}$, and so by induction assumption, there exists $\beta_i \in I_i'$ for which $\varphi_{i+1}(G) = \varphi_i(G) \subseteq \expansion(\beta_i)$;
 \item $\terms{G} \subseteq \terms{\atom_{i+1}}$, in which case $\atom'_{i+1} \in \instance_{i+1}'$ is such that $\varphi_{i+1}(G) \subseteq \expansion(\atom'_{i+1})$.
 \end{itemize}
 In particular, this implies that $\varphi_{i+1}$ is a homomorphism from $\instance_{i+1}$ to $\expansion(\instance_{i+1}')$. \qedhere
\end{itemize}
\end{proof}
Conversely, Proposition \ref{prop-reduction-sound} uses the second bullet point of Lemma \ref{lemma-complex-alternative} to show that anything that is entailed by the expansion of the result of a classical $\ruleset_c$-derivation of $\instance'$ is also entailed by the result of a locally complete $\ruleset$-derivation of $\instance$.

\begin{proposition}
\label{prop-reduction-sound}
Let $\ruleset$ be a set of guarded rules and $\instance$ be an instance. Let $\instance'$ and $\ruleset_c$ be defined as above. For any classical $\ruleset_c$-derivation $I' = I'_0,\ldots,I'_n$, there exists a locally complete $\ruleset$-derivation $\hat{I}_0 = \chase{I}{\ruleset}_{\mid \terms{I}},\ldots,\hat{I}_{n}$ such that for any $i \in \{0,\ldots,n\}$, there is a homomorphism $\varphi_i$ from  $\expansion(I'_i)$ to  $\hat{I}_{i}$.
\end{proposition}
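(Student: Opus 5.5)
We argue by induction on the length $n$ of the classical $\ruleset_c$-derivation, mirroring the proof of Proposition~\ref{prop-reduction-complete} but relying on the second bullet of Lemma~\ref{lemma-complex-alternative}. Alongside the homomorphism $\varphi_i$, we maintain a strengthened invariant: for every atom $\beta \in \instance'_i$, the restriction of $\varphi_i$ to $\expansion(\beta)$ maps this set into a guarded set $G_\beta \subseteq \hat{\instance}_i$ that is \emph{locally closed}, i.e., $\chase{G_\beta}{\ruleset}_{\mid \terms{G_\beta}} = G_\beta$, and has guard $\varphi_i(\beta')$, where $\beta'$ is the guard of $\expansion(\beta)$. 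Moreover, $\varphi_i$ is injective on $\terms{\beta}$, so that $\fun{can}(G_\beta) $ and $\beta$ coincide up to the natural renaming. This invariant is exactly what is needed to invoke the lemma at each step.

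\textbf{Base case.} For $n=0$, Lemma~\ref{lemma-expansion-instance} gives $\expansion(\instance') = \chase{\instance}{\ruleset}_{\mid \terms{\instance}} = \hat{\instance}_0$, so we take $\varphi_0$ to be the identity. For each $\beta = \fun{can}(\instance^*_\atom,\atom)$, Lemma~\ref{lemma-reconstruction-rules} gives $\expansion(\beta) = \instance^*_\atom$. This set is guarded by $\atom$ and is locally closed, since $\instance^*_\atom$ is already a restriction of the chase.

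\textbf{Induction step.} Suppose $\instance'_{i+1}$ is obtained by applying a complex rule $\erule'$ to an atom $\beta \in \instance'_i$ via $\match'$, producing the new atom $\atom'$. By the invariant, $\varphi_i(\expansion(\beta)) = G$ is a locally closed guarded set contained in $\hat{\instance}_i$, and $\beta$ is (a renaming of) $\fun{can}(G)$. The second bullet of Lemma~\ref{lemma-complex-alternative} then yields:
\begin{itemize}
\item a rule $\erule \in \ruleset$ applicable to $G$ by some $\match$; since $G \subseteq \hat{\instance}_i$, this $\match$ is also a homomorphism from $\body{\erule}$ to $\hat{\instance}_i$;
\item a homomorphism from $\expansion(\beta \cup \atom')$ to the locally complete step result $\chase{G\cup\{\hat\atom\}}{\ruleset}_{\mid \terms{G\cup\{\hat\atom\}}}$, which is the identity on $\terms{G}$.
\end{itemize}
We set $\hat{\instance}_{i+1} = \chase{\hat{\instance}'_i}{\ruleset}_{\mid \terms{\hat{\instance}'_i}}$ with $\hat{\instance}'_i = \hat{\instance}_i \cup \match^{\mathrm{safe}}(\head{\erule})$. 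We then extend $\varphi_i$ to $\varphi_{i+1}$ by sending the fresh nulls of $\atom'$ to the corresponding fresh nulls of $\hat\atom$; this extension is well defined because those nulls are new in $\instance'_{i+1}$.

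The local step result is contained in $\hat{\instance}_{i+1}$, because $G \cup \{\hat\atom\} \subseteq \hat{\instance}'_i$ and the chase is monotone. Hence $\varphi_{i+1}$ maps $\expansion(\instance'_{i+1}) = \expansion(\instance'_i) \cup \expansion(\atom')$ into $\hat{\instance}_{i+1}$. To restore the invariant for $\atom'$, we take $G_{\atom'} = \chase{\hat{\instance}'_i}{\ruleset}_{\mid \terms{\hat\atom}}$, which is locally closed and guarded by $\hat\atom$. For the old atoms, we note that $\hat{\instance}_i \subseteq \hat{\instance}_{i+1}$ and $\varphi_{i+1}$ extends $\varphi_i$, so their images remain in $\hat{\instance}_{i+1}$.

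\textbf{Main obstacle.} The hardest point is the last claim: the old guarded sets $G_\beta$ may gain atoms in $\hat{\instance}_{i+1}$, because the new fact can enable derivations back onto old terms. In that case the strict equality $\varphi(\expansion(\beta)) = G_\beta$ fails. I would therefore weaken the invariant to containment, $\varphi_i(\expansion(\beta)) \subseteq \hat{\instance}_i$. I would also modify how the lemma is invoked: instead of applying it to $G_\beta$ itself, apply it to an isomorphic copy of $\varphi_i(\expansion(\beta))$. This copy is locally closed because the expansions produced by complex rules are closed by construction (as noted in the proof of Proposition~\ref{prop-reduction-complete}). Lemma~\ref{lemma-complex-alternative} is stated for the abstract set $G$, so we can then transfer the rule application along $\varphi_i$ into $\hat{\instance}_i$. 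This requires checking that $\varphi_i$ is injective on the terms of each $\beta$, which holds because $\varphi_i$ only ever identifies fresh nulls bijectively with fresh nulls.
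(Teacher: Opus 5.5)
Your final argument (weaken the invariant to containment, apply the second bullet of Lemma~\ref{lemma-complex-alternative} to the locally closed set $\expansion(\beta)$, then transfer the resulting rule application and local chase into $\hat{\instance}_{i+1}$ along $\varphi_i$) is essentially the paper's proof. The paper needs no injectivity of $\varphi_i$: it composes the lemma's homomorphism with a map $g$ that sends the local chase of $\expansion(\beta)$ plus the new atom into $\hat{\instance}_{i+1}$ and agrees with $\varphi_i$ on $\terms{\expansion(\beta)}$.

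The obstacle you describe does not actually arise, because $\hat{\instance}_i$ is already closed, so a locally complete step adds no atoms over old terms. The real flaw in your first invariant is setting $G_{\atom'} = \chase{\hat{\instance}'_i}{\ruleset}_{\mid \terms{\hat\atom}}$, which you have not shown equals $\varphi_{i+1}(\expansion(\atom'))$; taking $G_{\atom'} = \varphi_{i+1}(\expansion(\atom'))$ instead repairs it.
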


\begin{proof} 
We show the result by induction on the length of the $\ruleset_c$-derivation.
\begin{itemize}
\item If the $\ruleset_c$-derivation is of length $0$, then $I'_n = I'$, and by Lemma \ref{lemma-expansion-instance}, $\expansion(I') = \chase{I}{\ruleset}_{\mid\terms{I}}$. The derivation $\hat I_0 = \chase{I}{\ruleset}_{\mid \terms{I}}$ is a locally complete $\ruleset$-derivation of $I$ of length $0$, and one can take $\varphi_0$ as the identity;
\item Let us assume that there exists a homomorphism $\varphi_i$ from $\expansion(I'_i)$ to $\hat{I}_{i}$. Let $\erule_i \in \ruleset_c$ be the rule applicable to $I'_i$ by $\match_i$ on $\atom$ that yields $I'_{i+1}= I'_i \cup \{\atom'\}$. By Lemma~\ref{lemma-complex-alternative}, there exists $\erule \in \ruleset$ that is applicable to $\expansion(\atom)$ by $\match'$ 
such that there is a homomorphism $h$ from $\expansion(\atom \cup \atom')$ to $\chase{H}{\ruleset}_{\mid \terms{H}}$ with $H=\expansion(\atom) \cup \match'^{\mathrm{safe}}(\head{\erule})$ that is  the identity on $\terms{\expansion(\atom)}$. 
It follows in particular that $\erule$ is applicable to $\varphi_i(\expansion(\atom)) \subseteq \hat{I}_{i}$ using $\match''=\varphi_i \circ \pi'$. 
We can thus let
 $\hat{I}_{i+1}=  \chase{\hat{I}_i \cup \match''^{\mathrm{safe}}(\head{\erule})}{\ruleset}_{\mid \terms{\hat{I}_i \cup \match''^{\mathrm{safe}}(\head{\erule})}}$ be the result of applying the locally complete derivation step associated with $\erule$ and $\match''$ to $\hat{I}_{i}$. 
 We observe that there is a homomorphism $g$ from 
 {$\chase{H}{\ruleset}_{\mid \terms{H}}$} to $\hat{I}_{i+1}$ defined by setting 
$g(t)=\varphi_i(t)$ for all $t \in \terms{\expansion(\atom)}$ and for all $t \in 
\terms{\match'^{\mathrm{safe}}(\head{\erule})} \setminus \terms{\expansion(\atom)}$,
letting $g(t)$ be the corresponding fresh term in $\match''^{\mathrm{safe}}(\head{\erule})$. 
The desired homomorphism $\varphi_{i+1}$ from $\expansion(I'_{i+1})$ to $\hat{I}_{i+1}$ is obtained by 
setting $\varphi_{i+1}(t) = \varphi_i(t)$ for all $t \in \terms{\expansion(I'_i)}$ and  
$\varphi_{i+1}(t) = {g \circ h(t)}$ for all $t \in \terms{\expansion(I'_{i+1})} \setminus \terms{\expansion(I'_{i})}\subseteq \terms{\expansion(\atom')} \setminus \terms{\expansion(\atom)}$. Indeed, this is a consequence of the facts that (i) $\expansion(I'_{i+1}) = \expansion(I'_i) \cup \expansion(\atom')$, (ii) 
$\varphi_i$ is a homomorphism from $\expansion(I'_i)$ to $\hat{I}_{i} \subseteq \hat{I}_{i+1}$, (iii) $g \circ h$ is a homomorphism from $\expansion(\atom \cup \atom')$
 to $\hat{I}_{i+1}$,  (iv) $\varphi_i$ and $g \circ h$ agree on their common domain, and (v) $\terms{\expansion(\atom')} \cap \terms{\expansion(I'_i)} \subseteq \terms{\expansion(\atom)} $. \qedhere
\end{itemize}
\end{proof}

The preceding proposition together with Proposition\ \ref{corollary-classical-alternative} 
directly imply the following corollary.

\begin{corollary}
\label{corollary-alternative-complex}
 For any KB $(\instance,\ruleset)$ and homomorphism-closed Boolean query $\query$ on the original vocabulary, there is a locally complete $\ruleset$-derivation from $\instance$ resulting in $\hat{\instance}_m$ such that $\hat{\instance}_m \models \query$ iff there exists a classical $(\ruleset_c \cup \ruleset_r)$-derivation from $\instance'$ resulting in $\instance'_n$ with $\instance'_n \models \query$.
\end{corollary}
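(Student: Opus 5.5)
The plan is to combine Propositions \ref{prop-reduction-complete} and \ref{prop-reduction-sound} with a small bridging fact about reconstruction rules. The bridging fact is this: for any classical $\ruleset_c$-derivation from $\instance'$ ending in $\instance'_n$, the chase of $\instance'_n$ under $\ruleset_r$ is exactly $\instance'_n \cup \expansion(\instance'_n)$. This holds because reconstruction rules have no existential variables, their bodies use only complex predicates, and their heads use only original predicates. Hence no reconstruction rule can ever trigger another rule of $\ruleset_c \cup \ruleset_r$.

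From this we get a normal form. Given any classical $(\ruleset_c \cup \ruleset_r)$-derivation, we reorder it so that all $\ruleset_c$ steps come first, followed by the $\ruleset_r$ steps. This reordering is legitimate because no $\ruleset_c$ or $\ruleset_r$ rule body contains an atom on the original vocabulary. The result of the reordered derivation is contained in $\instance'_k \cup \expansion(\instance'_k)$ for some $\ruleset_c$-derivation $\instance'_0,\ldots,\instance'_k$. Conversely, any $\ruleset_c$-derivation can be extended with $\ruleset_r$ steps so that it contains $\expansion(\instance'_k)$, because expansions are finite.

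Next we restrict attention to the original vocabulary. Since $\query$ is on the original vocabulary and is homomorphism-closed, $\instance'_n \models \query$ holds iff the restriction of $\instance'_n$ to original predicates entails $\query$. A match only uses original-vocabulary atoms, and adding atoms on other predicates cannot create new matches. That restriction is contained in $\expansion(\instance'_k)$ for the $\ruleset_c$-part of the derivation.

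($\Rightarrow$) Suppose $\hat\instance_m \models \query$. Proposition \ref{prop-reduction-complete} gives a $\ruleset_c$-derivation ending in $\instance'_m$ and a homomorphism $\varphi_m$ from $\hat\instance_m$ to $\expansion(\instance'_m)$. We extend this derivation with $\ruleset_r$ steps producing $\expansion(\instance'_m)$, and compose the match of $\query$ with $\varphi_m$. The proposition's statement indexes $\varphi_i$ from $1$; the case $m=0$ is covered by Lemma \ref{lemma-expansion-instance}.

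($\Leftarrow$) Suppose $\instance'_n \models \query$. By the normal form, we get $\expansion(\instance'_k) \models \query$ for a $\ruleset_c$-derivation of length $k$. Proposition \ref{prop-reduction-sound} yields a locally complete derivation ending in $\hat\instance_k$ with a homomorphism from $\expansion(\instance'_k)$ to $\hat\instance_k$. Homomorphism closure then gives $\hat\instance_k \models \query$.

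I expect the main obstacle to be justifying the reordering into normal form. It requires checking that in a classical derivation the applicability of $\ruleset_c$ triggers never depends on atoms produced by $\ruleset_r$, and that the oblivious-derivation requirement that no trigger repeats is preserved. Both follow from the disjointness of the body predicates of $\ruleset_c$ and $\ruleset_r$ from the original vocabulary. If the reordering becomes messy, the fallback is to avoid it: project any mixed derivation onto its $\ruleset_c$ steps, which is itself a valid $\ruleset_c$-derivation, and note that every $\ruleset_r$-produced atom lies in the expansion of some complex atom present at that point.
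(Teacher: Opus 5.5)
Your proposal is correct and takes the same route as the paper: the ($\Rightarrow$) direction comes from Proposition~\ref{prop-reduction-complete}, the ($\Leftarrow$) direction comes from Proposition~\ref{prop-reduction-sound}, and the reconstruction rules $\ruleset_r$ only serve to materialize $\expansion(\cdot)$. The paper treats the corollary as immediate; your normal-form (or projection) argument, together with your handling of the $m=0$ case via Lemma~\ref{lemma-expansion-instance}, makes explicit the bridging step it leaves implicit, namely that $\ruleset_r$ steps never enable further triggers, so the original-vocabulary part of any $(\ruleset_c\cup\ruleset_r)$-derivation lies inside the expansion of its $\ruleset_c$-part.
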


We finally define $\ruleset' = \ruleset_r \cup \ruleset_c$.
Theorem \ref{reduction-guarded-linear}, which states the correction of our reduction, is a direct consequence of Proposition\ \ref{corollary-classical-alternative} and Corollary \ref{corollary-alternative-complex}.

\begin{theorem}\label{reduction-guarded-linear}
Let $q$ be a Boolean CRPQ on the original vocabulary. Then $\chase{\instance}{\ruleset} \models q$ if and only if $\chase{\instance'}{\ruleset'} \models q$. 
\end{theorem}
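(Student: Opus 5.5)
The plan is to chain the two equivalences already established and then translate between ``$\chase{\cdot}{\cdot} \models q$'' and the existence of a finite derivation entailing $q$. Boolean CRPQs are closed under homomorphism, so Proposition~\ref{corollary-classical-alternative} and Corollary~\ref{corollary-alternative-complex} both apply to $q$. Moreover, by the proposition in Section~\ref{sec-queries}, $\chase{\instance}{\ruleset}\models q$ holds iff some finite classical $\ruleset$-derivation from $\instance$ yields $\instance_n$ with $\instance_n\models q$. The same holds for $(\instance',\ruleset')$ with $\ruleset'=\ruleset_r\cup\ruleset_c$.

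\emph{Left to right.} Assume $\chase{\instance}{\ruleset}\models q$ and take a finite classical derivation $\instance_n\models q$. By Proposition~\ref{corollary-classical-alternative} (direction $\Rightarrow$), there is a locally complete derivation ending in $\hat\instance_m\models q$. Corollary~\ref{corollary-alternative-complex} then gives a classical $(\ruleset_c\cup\ruleset_r)$-derivation from $\instance'$ to some $\instance'_k\models q$. Hence $\chase{\instance'}{\ruleset'}\models q$.

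\emph{Right to left.} Assume $\chase{\instance'}{\ruleset'}\models q$ and take a finite $\ruleset'$-derivation $\instance'_k\models q$. Corollary~\ref{corollary-alternative-complex} ($\Leftarrow$) yields a locally complete derivation with $\hat\instance_m\models q$. Proposition~\ref{corollary-classical-alternative} ($\Leftarrow$) then yields a classical $\ruleset$-derivation $\instance_n\models q$, so $\chase{\instance}{\ruleset}\models q$.

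The main point needing care is the corollary itself, which the paper states as a direct consequence; I would verify it rather than take it for granted. A $\ruleset'$-derivation may interleave reconstruction and complex rules. The first check is that reconstruction rules only ever generate atoms on the original vocabulary, and complex rules have complex-predicate bodies, so the two kinds never feed each other except through complex atoms. Consequently, any $\ruleset'$-derivation can be reordered: first apply its $\ruleset_c$-steps, giving a classical $\ruleset_c$-derivation $\instance'_0,\ldots,\instance'_j$, and then apply the reconstruction steps. The original-vocabulary part of the result is then contained in $\expansion(\instance'_j)$. Complex atoms do not occur in $q$, so a match of $q$ uses only original-vocabulary atoms and therefore lies in $\expansion(\instance'_j)$. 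With this in hand, Proposition~\ref{prop-reduction-sound} maps $\expansion(\instance'_j)$ homomorphically into $\hat\instance_j$, giving the $\Leftarrow$ direction. For the $\Rightarrow$ direction, Proposition~\ref{prop-reduction-complete} maps $\hat\instance_m$ homomorphically into $\expansion(\instance'_m)$; this expansion is itself produced by applying $\ruleset_r$ to $\instance'_m$, so it is the result of a finite $\ruleset'$-derivation. In both directions, homomorphism closure of $q$ transfers the match, which finishes the theorem.
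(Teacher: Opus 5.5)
Your proposal is correct and takes the paper's route: the theorem is obtained by chaining Proposition~\ref{corollary-classical-alternative} with Corollary~\ref{corollary-alternative-complex}, using the characterization of CRPQ entailment by finite derivations. Your extra verification of the corollary is sound and makes explicit what the paper only calls ``direct''. That verification has three parts: the $\ruleset_c$-steps of any $\ruleset'$-derivation form a $\ruleset_c$-derivation on their own; every original-vocabulary atom lies in the expansion; and Propositions~\ref{prop-reduction-sound} and~\ref{prop-reduction-complete}, together with homomorphism closure, transfer the match.
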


It remains to study the computational resources that are required to compute this reduction.

\begin{proposition}\label{prop-guarded-complexity}
 Given $(\instance,\ruleset)$, $\instance'$ and $\ruleset'$ as defined above can be computed 
  in \textsc{2ExpTime} in combined complexity, \textsc{ExpTime}  in combined complexity with bounded-predicate arity, and in \textsc{PTime} in data complexity. The number of types (Definition \ref{def-type}) whose predicate appears in $\ruleset'$ is at most a double exponential in $\instance$ and $\ruleset$ (exponential when the arity is bounded, and constant \emph{w.r.t.} the data).
\end{proposition}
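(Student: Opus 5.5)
We count the objects involved and bound the cost of the entailment checks needed to build them. Let $w$ be the maximal predicate arity and $|\mathcal P|$ the number of predicates. A guarded set in $\mathcal G$ is a set of atoms over $\mathcal P$ whose terms are among at most $w$ canonical variables of $\mathcal X$ (the guard's terms). There are at most $|\mathcal P|\cdot w^{w}$ such atoms, so $|\mathcal G| \le 2^{|\mathcal P| w^{w}}$ (times a factor $|\mathcal P| w^w$ for the choice of guard). This is doubly exponential in general and singly exponential when $w$ is bounded. It does not depend on $\instance$ at all, since $\mathcal G$ is defined on the rule vocabulary (we assumed every predicate of $\instance$ occurs in $\ruleset$). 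Each complex predicate $p_{G}$ has arity at most $w$, so the number of types over predicates of $\ruleset'$ is at most $|\mathcal G|$ (complex predicates) plus $|\mathcal P|$ (original predicates), each multiplied by the number of partitions of $\{1,\ldots,w\}$, at most $w^w$. This yields the claimed double exponential bound, the exponential bound for bounded arity, and a constant bound w.r.t.\ the data.

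\emph{Computing $\ruleset'$.} The rules in $\ruleset_r$ are obtained by listing, for each $(G,\atom)\in\mathcal G$, one rule per $\beta\in G$, which is linear in $|\mathcal G|$ times the size of a guarded set. For $\ruleset_c$, we enumerate every $(G,\atom)\in\mathcal G$, every $\erule\in\ruleset$, and every homomorphism $\match$ from $\body{\erule}$ to $G$. There are at most $w^{2w}$ such homomorphisms, because a guarded rule body has at most $2w$ variables, hmm, actually at most $w$ body variables, since the guard contains them all. Each triple requires computing $G'=\canonRenaming{\atom',\terms{\atom}}(\chase{G\cup\{\atom'\}}{\ruleset}_{\mid \terms{\atom'}})$. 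This is the main step: for each of the at most $|\mathcal P| w^w$ candidate atoms $\beta$ over $\terms{\atom'}$, we must decide whether $(G\cup\{\atom'\},\ruleset)\models \beta$. Ground atom entailment under guarded rules is CQ answering for an atomic query, so by the known results recalled in Section~\ref{subsection:QA} it is in 2\textsc{ExpTime} in combined complexity and in \textsc{ExpTime} for bounded arity. Multiplying a doubly exponential number of doubly exponential checks still gives 2\textsc{ExpTime}, and an exponential number of exponential checks gives \textsc{ExpTime}.

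\emph{Computing $\instance'$ and data complexity.} For each $\atom\in\instance$ we compute $\instance^*_\atom=\chase{\instance}{\ruleset}_{\mid\terms{\atom}}$, which again takes at most $|\mathcal P|w^w$ atomic entailment checks, now against the whole instance. We then output $\fun{can}(\instance^*_\atom,\atom)$. Since atomic query entailment under guarded rules is \textsc{PTime} in data complexity, and the number of checks is linear in $|\instance|$ for fixed $\ruleset$, $\instance'$ is computed in \textsc{PTime}. With $\ruleset$ fixed, $\ruleset'$ is a constant-size object computed once. In combined complexity, $\instance'$ is dominated by the same bounds as $\ruleset'$.

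The one delicate point is the entailment oracle in the construction of complex rules, because $G\cup\{\atom'\}$ contains nulls from $\mathcal X$ and fresh nulls. We handle this by treating these nulls as constants, which preserves entailment of atoms over them. The guarded CQ-entailment bounds then apply directly, since the input sizes are polynomial in the size of a guarded set.
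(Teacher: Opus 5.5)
Your proposal follows the paper's proof essentially step by step. You bound the number of canonical guarded sets, and hence of complex predicates, reconstruction and complex rules, and types. You compute $\instance'$ with $|\instance|\cdot|\mathcal P|w^w$ atomic-entailment calls on the original KB, and you compute each complex rule by saturating $G\cup\{\atom'\}$ with a guarded entailment oracle.

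The one step whose justification does not work as written is your last sentence, in the unbounded-arity case. Freezing the nulls is harmless; the real issue is size. The oracle input $(G\cup\{\atom'\},\ruleset)$ is polynomial in the size of a guarded set, so it contains exponentially many atoms in $w$. Feeding it to a black-box 2\textsc{ExpTime} procedure therefore only gives a triple-exponential bound.

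To close this you need a finer fact than class membership. One option is that guarded atom entailment runs in time doubly exponential only in the rule/arity parameters and polynomial in the instance size, which is what standard type-based algorithms give. Another is that all the required closures can be computed at once by a single least-fixpoint computation over the doubly exponential set $\mathcal G$. The paper's proof is equally terse here: it just asserts that the oracle calls fit within double exponential time. So this is a precision issue shared with the source, not a divergence from it.
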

  
 \begin{proof}
 Let $\maxArity$ be the maximum arity of a predicate appearing in $\ruleset$ or $\instance$, $p$ be the number of such predicates, and $n$ be the number of atoms of $\instance$, and $r$ be the number of rules.

Let $\instance$ be an instance and $\ruleset$ be a set of guarded rules. We analyze the size of the instance $\instance'$ and ruleset $\ruleset'$:	
\begin{itemize}
 \item $\instance'$ contains $n$ atoms;
 \item every guarded set in $\mathcal{G}$ can be represented in single exponential size (polynomial when the arity is bounded and in data complexity): there are $p(2\maxArity)^\maxArity$ atoms that can be built using the $2\maxArity$ canonical variables, and a predicate is described by a subset of such atoms (plus the selected guard atom);
 the same bounds apply to space required to store a complex predicate, as each complex predicate corresponds to a guarded set;

 \item there is a double exponential number of complex predicates (exponential when the arity is bounded and constant in data complexity);
 \item for each complex predicate, there is one reconstruction rule per atom in the guarded set of the predicate, hence at most $p(2\maxArity)^\maxArity$ such rules per predicate;
 \item for each complex predicate, there is one complex rule per homomorphism of a rule body in $\ruleset$ into the guarded set of atoms defining the predicate; in fact, it only matters where the guard atom is mapped, so we have at most $rp(2\maxArity)^\maxArity$ complex rules per new predicate; 
 \item there is a double exponential number of types (exponential when the arity is bounded, constant in data complexity): for each complex predicate, the number of types with this predicate is equal to the number of partitions of the predicate's arguments. 
\end{itemize}

To build the instance $\instance'$, we first compute for each atom $\atom \in I$ the set $\chase{I}{\ruleset}_{\mid \terms{\atom}}$.  This can be done by making at most $np\maxArity^\maxArity$ calls to a \textsc{2-ExpTime} oracle for query answering under guarded rules. This is a polynomial number of calls to an \textsc{ExpTime} oracle when the arity is bounded, and a polynomial number of calls to a \textsc{PTime} oracle in data complexity). The new instance $I'$ can be constructed in polynomial time from the saturated atoms. 

Building the set of reconstruction rules can be done in polynomial time with respect to the number of guarded sets. To build the set of complex rules, for each guarded set and each original rule $\rho$, we proceed as follows: apply the reconstruction rules, apply the original rule, saturate the resulting atom
with respect to the original rules, and create the corresponding complex rule. The preceding operations can be performed by making an exponential number of calls to a query answering oracle for guarded rules, and thus, this step can be done in double exponential time. When the arity is bounded, this makes a polynomial number of calls to an \textsc{ExpTime} oracle, hence is doable in \textsc{ExpTime}. In data complexity, we have a constant number of calls to a \textsc{PTime} oracle, hence is doable in \textsc{PTime}.  
\end{proof}

The next theorem follows from the provided reduction and Algorithm \ref{algo-rpq-answering}. Let us remark that the reduction to linear rules not being polynomial, we cannot directly apply Theorem \ref{thm-linear-rpq-answering} to obtain the desired upper bounds. However, a careful analysis of Algorithm \ref{algo-rpq-answering} allows us to show that it actually runs polynomially in the number of types built from the predicates in $\mathcal{V}$. The proof details  
are provided in the appendix.

\begin{theoremrep}
\label{thm-guarded-complexity}
  CRPQ answering under guarded rules is \textsc{2ExpTime}-complete in combined complexity, \textsc{ExpTime}-complete in combined complexity with bounded-predicate arity, and \textsc{PTime}-complete in data complexity.
\end{theoremrep}

{We believe that the result for combined complexity in the bounded-arity case still holds for arbitrary guarded rules with multiple head atoms (translated into atomic-head rules), the key argument being again that the number of types to be considered for a fresh predicate remains polynomial. However, the proof of that claim would require updating  the definitions of $\ruleset_c$ and  $\ruleset_r$, intuitively to consider only `relevant' guarded sets, and to revise accordingly all the subsequent statements and their proof. This would significantly  complexify the arguments and obfuscate the main ideas of the reduction, hence we decided not to consider this extension. }

\begin{proof} 
 The lower bounds come from CQ answering under guarded rules. 
 For the upper bound, we use the reduction of CRPQ answering over guarded rules to CRPQ answering over linear rules, that is formalized in Theorem \ref{reduction-guarded-linear}. 
We observe that building the set of rules $\ruleset'$ takes double exponential time, so simply running the algorithm for linear rules over the constructed KB $(\instance', \ruleset')$ does not yield a procedure running in \textsc{2ExpTime} for guarded rules. Let us revisit the complexity of the algorithm for linear rules, starting with the RPQ answering procedure from Section~\ref{sec-rpq-linear-upper}: 
\begin{itemize}
 \item for RPQ answering under linear rules, the first step is the creation of a \texttt{Loop} table, which can be done in time polynomial with respect to the number of types. Combining this observation with the bounds on the number of types from Proposition \ref{prop-guarded-complexity}, we can see that this step will run in 
double exponential time w.r.t. the input (resp.\ exponential time when the arity is bounded, polynomial time in data complexity).
 \item once the Loop table has been computed, it remains to apply Algorithm \ref{algo-rpq-answering}:
   \begin{itemize}
    \item the number of iterations of the while loop is independent of the rules, and polynomial in the number of terms of the instance, which is unchanged by our transformation;
    \item at each iteration of the while loop, we guess a constant from the instance, a state from the NFA representing the RPQ, and either a transition in the NFA or a type. There are polynomially many constants, states, and transitions. %
 It follows that the overall number of different possible guesses at a given iteration is double exponential in general case (exponential for the bounded-arity case, polynomial for data complexity), and the guess can be stored in exponential space (polynomial space for bounded-arity or data complexity);
   \item once the preceding guesses have been made, the algorithm either checks whether the guessed type occurs in the Loop table, or performs an entailment check;
    \item checking that a given type belong to the correct cell of the Loop table can be done in polynomial time with respect to the number of types (double exponential in general, single exponential when the arity is bounded, polynomial in data complexity);
    \item entailment checking can be performed by a non-deterministic procedure whose space is polynomially bounded w.r.t.\ the maximum size of a type, 
    which is single exponential in the size of the original input (polynomial when the arity is bounded, constant in data complexity).
   \end{itemize}
\end{itemize}
Putting the above together, we obtain that RPQ answering under the set of linear rules that is output by our reduction can be done in double exponential time with respect to the initial parameters (and in \textsc{ExpTime} when arity is bounded, and in \textsc{PTime} in data complexity).

We next consider the procedure for CRPQ answering:
\begin{itemize}
 \item the size (\emph{i.e.}, the number of nodes) of the largest proof scheme to be considered is independent of both rules and data;
 \item the validity of a proof scheme can be checked as in the proof of Proposition \ref{validity-exp}:
 \begin{itemize}
  \item each root belonging to $\instance'$ 
  : as $\instance'$ has already been computed, this check can be performed in exponential time, polynomial in data complexity and with bounded arity, as the predicate names can be of exponential size with respect to the arity; 
    \item computing the set of transitions: doable in double exponential time, exponential time when the arity is bounded, polynomial in data complexity, as this is can be done through a quadratic number of calls to RPQ answering under the modified linear ruleset;
  \item selection of atoms, rules: doable in exponential time, polynomial in data complexity and when the arity is bounded, as predicate names can be of exponential size in the arity;
  \item rule application: exponential time, polynomial in data complexity and when the arity is bounded, same reason
  \item the while loop of \textbf{Step 2} of the query answering algorithm is iterated at most \maxsteps~times, which is a simple exponential (polynomial in data complexity) as per the proof of Proposition \ref{validity-exp}.\qedhere
 \end{itemize}
\end{itemize}
\end{proof}

\section{Related Work}
\label{sec-related-work}


\medskip
In the introduction, we gave an overview of related work on answering navigational queries with ontologies. We observed that  
such queries had been extensively studied within various description logic fragments, but hardly at all for existential rules. 
 We now take a closer look at related work more specifically linked to our contributions, whether regarding the techniques used or the complexity results obtained. We recall that we use the notation (C)RPQ for \emph{two-way} queries, whereas these are also denoted by 2(C)RPQ in other works that reserve the term (C)RPQ to the restricted form without inverse predicates (called one-way).

\medskip
Concerning relevant description logics, we focus on the DL-Lite \cite{DBLP:journals/jar/CalvaneseGLLR07} and $\mathcal{EL}$ \cite{DBLP:conf/ijcai/BaaderBL05} families, which are well-known fragments related to linear and guarded existential rules, respectively. 
The most studied member of the DL-Lite family is the dialect DL-Lite$_\mathcal{R}$, underpinning OWL 2 QL, 
whose (positive) axioms can be seen as linear rules of the form $\atom_1 \rightarrow \atom_2$; as usual in DLs, each $\atom_i$ is a unary atom or a binary atom with distinct variables.  
Hence, linear rules strictly generalize DL-Lite$_\mathcal{R}$ by unrestricted predicate arity and co-occurrences of variables in an atom. 
In $\mathcal{EL}$, axioms can be seen as rules of the form $B[x] \rightarrow a(x)$ or $B[x] \rightarrow \exists z~r(x,z) \land a(z)$ where $B[x] $ is a conjunction of unary atoms on $x$ or a conjunction of the form $r'(x,y) \land a'(y)$. $\mathcal{ELH}$ adds role inclusions, i.e., rules of the form $r'(x,y) \rightarrow r(x,y)$ and $\mathcal{ELHI}$ moreover allows for inverse roles, i.e., atoms of the form $r(y,x)$. 
Note that $\mathcal{ELHI}$ subsumes (positive) DL-Lite$_\mathcal{R}$. Guarded rules strictly generalize $\mathcal {ELHI}$ axioms.   

We mentioned in Section \ref{sec-rpq-linear-upper} that our algorithm for RPQ answering under linear rules (Algorithm \ref{algo-rpq-answering}) was inspired by a related algorithm for DL-Lite ontologies. More specifically, we extended the algorithm presented in \cite{DBLP:journals/jair/BienvenuOS15}
for DL-Lite$_\mathcal{R}$ to take higher-arity predicates and co-occurrences of variables into account. Technically, the main differences are the following. 
In DL-Lite$_\mathcal{R}$, each detour of a path of terms to the anonymous part of the chase goes from a constant and comes back to the same constant, i.e., for any path  $\path = (d_0,\ldots,d_n)$ in the chase, where only $d_0$ and $d_n$ are constants, it holds that $d_0 = d_n$. This is not true anymore for linear rules. Instead, we observe that for any such path $\path$, there is an atom $\atom \in \instance$ in which $d_0$ and $d_n$ both occur, and such that the chase of $\atom$ contains a path isomorphic to $\path$ (see our Proposition \ref{ref-prop-atom-loop}). Hence, we keep track of ``loops'' within atoms instead of ``loops'' on constants. A less important extension relates to the notion of atom type, the idea being that atoms with the same type behave similarly regarding rule applications. In DL-Lite, the type of an atom is simply given by its predicate. For linear rules, we need to take co-occurrences of terms into account, hence a more general definition of  type (Definition \ref{def-type}). Modulo these extensions, our Algorithm \ref{algo-rpq-answering} is quite close to the algorithm for DL-Lite$_\mathcal{R}$ from \cite{DBLP:journals/jair/BienvenuOS15}. 

\medskip
In Table \ref{table-dl-complexities} we indicate the data and combined complexities of (C)RPQ answering for the DLs mentioned above, based on \cite{DBLP:journals/jair/BienvenuOS15}. To facilitate comparison with our results, we recall the complexities obtained for linear and guarded existential rules; we consider here combined complexity with bounded predicate arity, as DL predicates are at most binary. It is interesting to observe that the generalizations from DL-Lite$_\mathcal{R}$ to linear and from  $\mathcal{ELHI}$ to guarded do not increase worst-case complexities, neither in data nor in bounded-arity combined complexity. Note however, that except for RPQ answering under linear rules (inspired by the algorithm for DL-Lite$_\mathcal{R}$), the techniques used to obtain complexity upper bounds in \cite{DBLP:journals/jair/BienvenuOS15} are quite different from ours, as they rely on a (non-deterministic) query rewriting algorithm that exploits the specificities of DLs.

\medskip

 \begin{table}[t]
 \scalebox{0.9}{
 \begin{tabular}{|c|c|c|c|c|}
  \hline
  &\multicolumn{2}{c|}{\textbf{RPQ answering}}&\multicolumn{2}{c|}{\textbf{CRPQ answering}}\\
  \hline
   \textbf{Fragment} &  \textbf{Data} &  \textbf{Combined} (b) &  \textbf{Data} &  \textbf{Combined} (b)\\
  \hline
  \textbf{DL-Lite$_\mathcal{R}$} &  NL-c &  PTime-c & NL-c & PSpace-c  \\
  \hline
  \rowcolor{lightgray} \textbf{Linear} &  NL-c &  PTime-c  & NL-c & PSpace-c \\
  \hline
  $\mathcal{ELH}$ & PTime-c &  PTime-c  & PTime-c & PSpace-c \\
 \hline
 $\mathcal{ELHI}$ &  PTime-c &   ExpTime-c & PTime-c &   ExpTime-c  \\
 \hline
  \rowcolor{lightgray} \textbf{ Guarded} &  PTime-c &   ExpTime-c & PTime-c &   ExpTime-c  \\
   \hline 
 \end{tabular}
 }
 \caption{Data and arity-bounded combined complexities of (C)-RPQ-answering under lightweight DL ontologies}
 \label{table-dl-complexities}
\end{table}

\medskip
Finally, let us mention the work in \cite{DBLP:journals/jair/StefanoniMKR14}, which studies XPath queries under the DL  $\mathcal{ELRO^+}$, underpinning OWL 2 EL. This DL is incomparable with guarded existential rules, due to the presence of axioms of the form $r_1(x, x_1), \ldots, r_k(x_{k-1}, y) \rightarrow r(x, y)$ . Moreover, the considered XPath queries generalize (C)RPQs by allowing to express some constraints on the nodes traversed along a path. An interesting future work would be to check whether the atom types we use in our algorithm could be extended to process those XPath queries.

\medskip
Concerning existential rules, let us first point out similarities between some of our technical tools and those of previous work on CQ answering in the linear and guarded fragments. 
Regarding linear rules, our valid proof schemes (see Definition \ref{def-validity}) are not far, in essence, from the proof generators
used by Gottlob et al.\ (\citeyear{ijcai-15-gmp}) for devising a combined approach for CQ entailment under linear rules; however, the presence of transitions and their interactions with linear rules make the validity
check much more intricate (see the proof of Proposition~\ref{validity-exp}). 
Also, the reduction employed in Section~\ref{sec-crpq-guarded-upper} to lift complexity results from linear to guarded rules is very similar to that introduced in \cite{DBLP:journals/ai/GottlobMP23}. However, as the technical lemmas provided there are not enough to be used as black boxes to get our results, we provided a novel presentation based on the notion of a locally complete derivation (Definition \ref {def-locally-complete}) and its properties of query entailment preservation (Proposition\ \ref{corollary-classical-alternative} and Corollary \ref{corollary-alternative-complex}).

\medskip
We now review in more detail known results about (C)RPQ answering with existential rules. 
Beforehand, it is worth recalling that known decidability results on CQ answering with specific existential rule classes mostly rely on properties of the \emph{chase} or of query \emph{rewriting}: either the universal model of the KB computed by the chase is finite or ``well-shaped'', which allows one to evaluate the CQ against it, or the CQ can be rewritten using the rules into a finite query, which is then evaluated on the instance. 
When moving to CRPQs, it turns out that the landscape becomes quite different depending on the underlying technique.  
 Indeed, recent work has shown that CRPQ answering is decidable for all classes of rules that guarantee the existence of a universal KB model of finite clique-width (a notion that generalizes treewidth).  As pointed out in \cite{DBLP:conf/kr/Ostropolski-Nalewaja24}, this follows from a generic result from \cite{DBLP:conf/icdt/0001LOR23}. This result applies to major classes of rules with chase-based decidable CQ answering, like those defined by acyclicity notions ensuring finite chase \cite{DBLP:journals/jair/GrauHKKMMW13} or the guarded family, which includes generalizations of the guarded class considered in this paper  \cite{DBLP:conf/kr/ThomazoBMR12}. 
This contrasts with first results on rule classes with rewriting-based decidable CQ answering, which are also provided in \cite{DBLP:conf/kr/Ostropolski-Nalewaja24}.
 Indeed, RPQ answering is shown to be undecidable for first-order rewritable rule classes, also known as \emph{fus} \cite{DBLP:journals/ai/BagetLMS11}: such classes  guarantee that any CQ can be rewritten as a (finite) union of CQs. This negative result already holds for \emph{one-way} RPQs. On the positive side, RPQ answering is shown to be decidable
 {\footnote{For the proof to hold, RPQs must be of the form $\lang(x_1,x_2)$ with $x_1$ and $x_2$ distinct variables.}}
  for an important fus concrete subclass, namely sticky rules (introduced in \cite{DBLP:journals/pvldb/CaliGP10}).  However, this decidability result does not yield any upper bound on the complexity of RPQ answering under sticky rules, and it is left as an open question whether (one-way) CRPQ answering is decidable  under sticky rules.

\section{Conclusion}
\label{sec-conclusion}
In this paper, we provide the first complexity results for (C)RPQ answering under existential rules. {These results concern two prominent classes of existential rules, namely linear and guarded rules, and distinguish between data complexity and combined complexity, considering both bounded and unbounded predicate arity. All of our complexity results are tight, thereby yielding a complete picture of the worst-case complexity of (C)RPQ answering under linear and guarded rules. Interestingly, when comparing with existing results for description logics, we observe that moving from DL-Lite$_\mathcal{R}$ to linear rules and from $\mathcal{ELHI}$ to guarded rules does not lead to any increase in data complexity, nor in bounded-arity combined complexity. Moreover, for linear rules, the data complexity for (C)RPQs is the same as for plain graph databases (namely,  NL-complete). }

While we now have clear picture of the complexity for guarded rules,  
the decidability and complexity landscapes for (C)RPQ answering for other classes of existential rules remain largely unexplored. 
As detailed in the related work section (Section \ref{sec-related-work}), decidability has only recently been established for some major classes of rules with chase-based decidable CQ answering, {including the class of frontier-guarded rules which generalizes the guarded rules considered in the present paper}, 
but this result does not come with complexity bounds, nor a reasonably implementable algorithmic scheme. 
For classes with rewriting-based decidable CQ answering, there is a general undecidability result, and little is known about the decidability of concrete classes.  

{At present, our complexity results are purely of a theoretical nature. However, we have reason to believe that the algorithm we presented for RPQ answering under linear rules can lead to a practically efficient implementation. Indeed, the construction of the \texttt{Loop} table is data-independent, and empirical studies \cite{DBLP:journals/vldb/BonifatiMT20} have found that regular languages in real-world path queries are typically very simple (thus representable with only a handful of automata states). Moreover, to reduce computation at query time, one could perform an offline preprocessing of the data by adding all entailed binary atoms (in the spirit of combined approaches to OMQA \cite{DBLP:conf/ijcai/KontchakovLTWZ11}, which rely upon data enrichment to speed up query answering). This pre-computation of entailed facts may also enable us to devise RPQ answering procedures via rewriting to path queries over plain graph databases, thereby enabling the use of modern graph database systems (as has been explored in \cite{DBLP:conf/esws/LohnertAOO25} for description logics). 
By contrast, the algorithms we devised for CRPQ answering under linear and guarded rules are not readily implementable, so new insights, possibly coupled with restrictions on query structure, will be needed to address such queries in practice. }

\begin{acks}
This work was partially supported by the French ANR projects PAGODA (ANR-12-JS02-0007), CQFD (ANR-18-CE23-0003), and EXPAND (ANR-25-CE23-1215). We thank the reviewers for their comments that helped to enhance the quality of the paper.
\end{acks}

\bibliographystyle{ACM-Reference-Format}\bibliography{bib}

\end{document}